\title{Provable Model-based Nonlinear Bandit and Reinforcement Learning: Shelve Optimism, Embrace Virtual Curvature
}
\newtheorem{theorem}{Theorem}[section]
\newtheorem{lemma}[theorem]{Lemma}
\newtheorem{assumption}[theorem]{Assumption}
\newtheorem{corollary}[theorem]{Corollary}
\newtheorem{example}[theorem]{Example}
\theoremstyle{definition}
\def\shownotes{0}  %
\newcommand{\authnote}[2]{[#1: #2]}
\newcommand{\authnote}[2]{}
\newcommand*{\defeq}{\triangleq}
\def\1{\bm{1}}
\def\eps{{\epsilon}}
\def\vtheta{{\bm{\theta}}}
\def\va{{\bm{a}}}
\newcommand{\ve}{\@ifnextchar\bgroup{\velong}{{\bm{e}}}}
\newcommand{\velong}[1]{{\bm{#1}}}
\def\vx{{\bm{x}}}
\def\vy{{\bm{y}}}
\def\vz{{\bm{z}}}
\def\mG{{\bm{G}}}
\def\mW{W}
\DeclareMathAlphabet{\mathsfit}{\encodingdefault}{\sfdefault}{m}{sl}
\SetMathAlphabet{\mathsfit}{bold}{\encodingdefault}{\sfdefault}{bx}{n}
\def\calA{{\mathcal{A}}}
\def\calB{{\mathcal{B}}}
\def\calF{{\mathcal{F}}}
\def\calG{{\mathcal{G}}}
\def\calH{{\mathcal{H}}}
\def\calI{{\mathcal{I}}}
\def\calL{{\mathcal{L}}}
\def\calN{{\mathcal{N}}}
\def\calO{{\mathcal{O}}}
\def\calP{{\mathcal{P}}}
\def\calR{{\mathcal{R}}}
\def\calS{{\mathcal{S}}}
\def\calX{{\mathcal{X}}}
\def\calY{{\mathcal{Y}}}
\def\calZ{{\mathcal{Z}}}
\def\sB{{\mathbb{B}}}
\def\sR{{\mathbb{R}}}
\newcommand{\E}{\mathbb{E}}
\newcommand{\R}{\mathbb{R}}
\newcommand{\KL}{D_{\mathrm{KL}}}
\DeclareMathOperator*{\argmax}{argmax}
\DeclareMathOperator{\Tr}{Tr}
\newcommand\numberthis{\addtocounter{equation}{1}\tag{\theequation}}
\newcommand{\poly}{\mathrm{poly}}
\newcommand{\polylog}{\mathrm{polylog}}
\newcommand{\diag}{\mathrm{diag}}
\newcommand{\dotp}[2]{\left<#1, #2\right>}
\newcommand{\inner}[2]{\langle#1, #2\rangle}
\newcommand{\norm}[1]{\left\| #1 \right\|}
\newcommand{\normtwo}[1]{\left\| #1 \right\|_2}
\newcommand{\normone}[1]{\left\| #1 \right\|_1}
\newcommand{\normtwosm}[1]{\| #1 \|_2}
\newcommand{\normF}[1]{\left\| #1 \right\|_{\mathrm{F}}}
\newcommand{\normop}[1]{\left\| #1 \right\|_{\mathrm{op}}}
\newcommand{\normsp}[1]{\left\| #1 \right\|_{\mathrm{sp}}}
\newcommand{\normFsm}[1]{\| #1 \|_{\mathrm{F}}}
\newcommand{\normspsm}[1]{\| #1 \|_{\mathrm{sp}}}
\newcommand{\relu}{\mathrm{ReLU}}
\newcommand{\pbra}[1]{{\left( {#1} \right)}}
\newcommand{\bbra}[1]{{\left[ {#1} \right]}}
\newcommand{\bbrasm}[1]{{[ {#1} ]}}
\newcommand{\abs}[1]{{\left| {#1} \right|}}
\newcommand{\abssm}[1]{{| {#1} |}}
\newcommand{\mintwo}[2]{\min \left( #1, #2 \right)}
\newcommand{\maxtwo}[2]{\max \left( #1, #2 \right)}
\newcommand{\ind}[1]{\mathbb{I}\left[ #1 \right]}
\DeclareMathOperator{\grad}{grad}
\newcommand{\bigO}{\calO}
\newcommand{\tildeO}{\widetilde{\calO}}
\newcommand{\olreg}{\textsc{reg}^{\textsc{ol}}}
\newcommand{\reg}{\textsc{reg}}
\newcommand{\thetas}{\theta^\star}
\newcommand{\vthetas}{\vtheta^\star}
\newcommand{\sosp}{\mathfrak{A}}
\newcommand{\src}{\mathfrak{R}}
\newcommand{\srcrl}{\mathfrak{R}^{dyn}}
\newcommand{\da}{d_{\calA}}
\newcommand{\AlgBan}{ViOlin}
\newcommand{\err}{\Delta}
\newcommand{\emperr}{\tilde{\Delta}}
\newcommand{\emperrrl}{\bar{\Delta}}
\newcommand{\ubg}{\kappa_1}
\newcommand{\ubh}{\kappa_2}
\newcommand{\lmax}{\lambda_{\mathrm max}}
\newcommand{\ga}{\nabla_a}
\newcommand{\ha}{\nabla^2_a}
\newcommand{\gasup}{\zeta_g}
\newcommand{\hasup}{\zeta_h}
\newcommand{\tasup}{\zeta_{\mathrm{3rd}}}
\newcommand{\dy}{T}
\newcommand{\gpsi}{\nabla_\psi}
\newcommand{\hpsi}{\nabla^2_\psi}
\newcommand{\V}{V^{\psi}_{\theta}}
\newcommand{\Vs}[1]{V^{\psi_{#1}}_{\theta_{#1}}}
\newcommand{\G}{G^{\psi}_{\theta}}
\newcommand{\Pg}{\chi_g}
\newcommand{\Ph}{\chi_h}
\newcommand{\Pt}{\chi_f}
\newcommand{\ol}{\calR}
\newcommand{\NN}{\mathrm{N}}
\renewcommand{\vec}{\mathrm{vec}}
\newcommand{\dd}{\mathrm{d}}
\newcommand{\TV}[2]{\mathrm{TV}\pbra{#1, #2}}
\begin{document}

\renewcommand*{\ttdefault}{cmtt}

\author{Kefan Dong \\ 
Stanford University \\
\texttt{kefandong@stanford.edu}
\and
Jiaqi Yang \\
Tsinghua University\\
\texttt{yangjq17@gmail.com} \\
\and
Tengyu Ma \\
Stanford University \\
\texttt{tengyuma@stanford.edu}
}

\maketitle

\begin{abstract}
This paper studies model-based bandit and reinforcement learning (RL) with nonlinear function approximations. We propose to study convergence to approximate local maxima because we show that global convergence is statistically intractable even for one-layer neural net bandit with a deterministic reward. For both nonlinear bandit and RL, the paper presents a model-based algorithm, Virtual Ascent with Online Model Learner (\AlgBan), which provably converges to a local maximum with sample complexity that only depends on the sequential Rademacher complexity of the model class. 
Our results imply novel global or local regret bounds  on several concrete settings such as linear bandit with finite or sparse model class, and two-layer neural net bandit.
A key algorithmic insight is that optimism may lead to over-exploration even for two-layer neural net model class. On the other hand, for convergence to local maxima, it suffices to maximize the virtual return if the model can also reasonably predict the gradient and Hessian of the real return.
\end{abstract}

\section{Introduction}\label{sec:intro}

Recent progresses demonstrate many successful applications of deep reinforcement learning (RL) in robotics \citep{levine2016end}, games \citep{berner2019dota,alphago17}, computational biology \citep{mahmud2018applications}, etc. 
However, theoretical understanding of deep RL algorithms is limited. Last few years witnessed a plethora of results on linear function approximations in RL~\citep{zanette2020learning,shariff2020efficient,jin2019provably,wang2019optimism, wang2020provably,du2019q,agarwal2020flambe,hao2021online}, but the analysis techniques appear to strongly rely on (approximate) linearity and hard to generalize to neural networks.\footnote{Specifically, \citet{zanette2020learning} rely on closure of the value function class under bootstrapping. \citet{shariff2020efficient} rely on additional properties of the feature map, and \citet{jin2019provably,wang2019optimism,du2019q} use uncertainty quantification for linear regression. 
}

The goal of this paper is to theoretically analyze model-based nonlinear bandit and RL with neural net approximation, which achieves amazing sample-efficiency in practice (see e.g., \citep{janner2019trust,clavera2019model, hafner2019dream,hafner2019learning,dong2020expressivity}). We focus on the setting where the state and action spaces are continuous. %

Past theoretical work on model-based RL studies families of dynamics with restricted complexity measures such as Eluder dimension \citep{osband2014model}, witness rank \citep{sun2019model}, the linear dimensionality \citep{yang2020reinforcement}, and others \citep{modi2020sample,kakade2020information,du2021bilinear}. 
Implications of these complexity measures have been studied, e.g., finite mixture of dynamics \citep{ayoub2020model} and linear models~\citep{russo2013eluder} have bounded Eluder dimensions. However, it turns out that none of the complexity measures apply to the family of MDPs with even barely nonlinear dynamics, e.g., MDPs with dynamics parameterized by all one-layer neural network with a single activation unit (and with bounded weight norms).
For example, in Theorem~\ref{thm:lowerbound-eluder}, we will prove that one-layer neural nets do not have polynomially-bounded Eluder dimension.\footnote{This result is also proved by the concurrent \citet[Theorem 8]{li2021eluder} independently.} (See more evidence below.)

The limited progress on neural net approximation is to some extent not surprising. Given a deterministic dynamics with \textit{known} neural net parameters, finding the best parameterized policy still involves optimizing a complex non-concave function, which is in general computationally intractable. More fundamentally, we find that it is also \textit{statistically intractable} for solving the one-hidden-layer neural net bandit problem (which is a strict sub-case of deep RL). In other words, it requires exponential (in the input dimension) samples to find the global maximum (see Theorems~\ref{thm:lowerbound-sample-comp}).
This also shows that conditions in past work that guarantee global convergence cannot apply to neural nets.

Given these strong impossibility results, we propose to reformulate the problem to finding an approximate local maximum policy with guarantees. This is in the same vein as the recent fruitful paradigm in non-convex optimization where researchers disentangle the problem into showing that all local minima are good and fast convergence to local minima (e.g., see~\citep{ge2016matrix,ge2015escaping,ge2017no,ge2020optimization,lee2016gradient}). In RL, local maxima can often be global as well for many cases~\citep{agarwal2019theory}.\footnote{The all-local-maxima-are-global condition only needs to hold to the ground-truth total expected reward function. This potentially can allow disentangled assumptions on the ground-truth instance and the hypothesis class.}
 This paper focuses on sample-efficient convergence to an approximate local maximum. We consider the notion of local regret, which is measured against the worst $\eps$-approximate local maximum of the reward function (see Eq.~\eqref{equ:local-regret}).

Zero-order optimization or policy gradient algorithms can converge to local maxima and become natural potential competitors. They are widely believed to be less sample-efficient than the model-based approach because the latter can leverage the extrapolation power of the parameterized models. Theoretically, our formulation aims to characterize this phenomenon with results showing that the model-based approach's  sample complexity mostly depends (polynomially) on the complexity of the model class, whereas policy gradient algorithms' sample complexity polynomially depend on the dimensionality of policy parameters (in RL) or actions (in bandit).  
Our technical goal is to answer the following question:
\begin{center}
Can we design algorithms that converge to approximate local maxima with sample complexities that depend \textit{only and polynomially} on the complexity measure of the \textit{dynamics/reward} class?
\end{center}
We note that this question is open even if the dynamics hypothesis class is finite, and the complexity measure is the logarithm of its size. The question is also open even for nonlinear bandit problems (where dynamics class is replaced by reward function class), with which we start our research. We consider first nonlinear bandit with \textit{deterministic} reward where the reward function is given by $\eta(\theta,a)$ for action $a\in \calA$ under instance $\theta\in \Theta$.  We use sequential Rademacher complexity~\citep{rakhlin2015online,rakhlin2015sequential} to capture the complexity of the reward function $\eta$. Our main result for nonlinear bandit is stated as follows.

\begin{theorem}[Informal version of Theorem~\ref{thm:bandit-main}]
	There exists a model-based algorithm (\AlgBan, Alg.~\ref{alg:ban}) whose local regret, compared to $\Omega(\epsilon)$-approximate local maxima, is bounded by $\bigO(\sqrt{T\src_T}/\epsilon^2),$ where $\src_T$ is the sequential Rademacher complexity of a bounded loss function induced by the reward function class $\{\eta(\theta, \cdot): \theta\in \Theta\}$.
\end{theorem}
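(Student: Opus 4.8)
The plan is to reduce the local-regret bound to two ingredients and a conversion: (i) an \emph{online estimation} guarantee that the learned models predict the observed rewards well on the played actions, (ii) a \emph{transfer} step that upgrades this scalar prediction accuracy into accuracy of the model's gradient and Hessian along the played sequence, and (iii) a thresholding argument that turns cumulative first/second-order accuracy into a local-regret count against $\Omega(\epsilon)$-approximate local maxima. The second-order stationarity of the virtual-ascent iterates then certifies that the played actions are themselves approximate local maxima of the \emph{true} reward, so no optimism is needed.

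\textbf{Step 1 (online estimation).} Since \AlgBan{} feeds each observed reward $\eta(\thetas,a_t)$ to an online model learner over $\{\eta(\theta,\cdot):\theta\in\Theta\}$ and the reward is deterministic (hence realizable), the regret of a squared-loss online learner is bounded by the sequential Rademacher complexity of the induced loss class. Writing $\emperr_t \defeq \eta(\hat\theta_t,a_t)-\eta(\thetas,a_t)$ for the per-round value error of the model $\hat\theta_t$ used at round $t$, I would obtain $\sum_{t=1}^T \emperr_t^2 \lesssim \src_T$. This is the only place where statistical complexity enters, and crucially it is a purely predictive statement requiring neither confidence widths nor optimism. \textbf{Step 2 (virtual ascent and reduction).} By construction $a_t$ is an approximate second-order stationary point of the virtual reward, so $\|\ga\eta(\hat\theta_t,a_t)\|$ and $[\lammax(\ha\eta(\hat\theta_t,a_t))]_+$ are small. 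Hence, to show $a_t$ is an $\bigO(\epsilon)$-approximate local maximum of $\eta(\thetas,\cdot)$, it suffices to control the model-vs-truth discrepancies $\|\ga\eta(\hat\theta_t,a_t)-\ga\eta(\thetas,a_t)\|$ and $[\lammax(\ha\eta(\thetas,a_t)-\ha\eta(\hat\theta_t,a_t))]_+$ along the played sequence; this is precisely the clause ``the model can also predict the gradient and Hessian of the real return.''

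\textbf{Step 3 (transfer from value error to gradient/Hessian error --- the crux).} The difficulty is that Step 1 controls only scalar value errors at the played points, whereas Step 2 asks for first- and second-order accuracy. The idea is to \emph{audit} each ascent step by its realized reward: comparing the true improvement $\eta(\thetas,a_t')-\eta(\thetas,a_t)$ returned by the environment along a probe displacement against the improvement $\eta(\hat\theta_t,a_t')-\eta(\hat\theta_t,a_t)$ predicted by the model, and expanding both by Taylor's theorem, expresses the directional mismatch in the gradient (resp. curvature) in terms of the value errors $\emperr$ at the two endpoints plus smoothness-controlled remainders. Since the virtual gradient already vanishes, optimizing the probe length to balance the linear gain against the remainder yields a per-round bound of the form $\|\ga\eta(\hat\theta_t,a_t)-\ga\eta(\thetas,a_t)\|^2 \lesssim \emperr_t + \emperr_t'$ (the value errors appearing to the \emph{first} power, up to smoothness factors), and an analogous certificate for the top Hessian eigenvalue. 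I expect this transfer lemma to be the main obstacle: it must avoid any dependence on the action dimension $\da$, so one cannot afford $\da$ separate finite-difference probes, and because the true gradient direction is unknown one must exploit that only the \emph{virtual curvature} has to be certified (not the true gradient fully estimated) --- picking probe directions from the model's own ascent geometry rather than at random.

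\textbf{Step 4 (assembling the regret).} Summing the Step 3 bounds and applying Cauchy--Schwarz with Step 1 gives $\sum_{t=1}^T \|\ga\eta(\thetas,a_t)\|^2 \lesssim \sum_{t=1}^T(\emperr_t+\emperr_t') \le \sqrt{T\sum_t \emperr_t^2} \lesssim \sqrt{T\,\src_T}$, and likewise for the cumulative squared curvature excess. Finally, a round can contribute to the local regret against $\Omega(\epsilon)$-approximate local maxima only if the true first/second-order quantity exceeds $\Omega(\epsilon)$, i.e. its square exceeds $\Omega(\epsilon^2)$; a Markov-type count then bounds the number of such rounds by $\big(\sum_t \|\ga\eta(\thetas,a_t)\|^2\big)/\Omega(\epsilon^2) \lesssim \sqrt{T\,\src_T}/\epsilon^2$, which is the claimed $\bigO(\sqrt{T\,\src_T}/\epsilon^2)$. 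Here the $\sqrt{T}$ comes from Cauchy--Schwarz on the first-power value errors and the $\epsilon^{-2}$ from the squared-error threshold at tolerance $\epsilon$.
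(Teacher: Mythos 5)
Your high-level architecture (online prediction on played points, a transfer step to first/second-order accuracy, then a counting argument against $\Omega(\epsilon)$-approximate local maxima) is close in spirit to the paper, and your Step 4 assembly would indeed yield the claimed $\bigO(\sqrt{T\src_T}/\epsilon^2)$ rate. But Step 3 --- which you correctly identify as the crux --- is wrong as proposed, and it is exactly the point where the paper does something different. You want to certify the \emph{full norms} $\normtwo{\ga\eta(\hat\theta_t,a_t)-\ga\eta(\thetas,a_t)}$ and $\normsp{\ha\eta(\hat\theta_t,a_t)-\ha\eta(\thetas,a_t)}$ from value errors at $O(1)$ probe points whose directions are ``picked from the model's own ascent geometry rather than at random.'' This cannot work: at $a_t$, which is a maximizer of the virtual reward, the model's own gradient vanishes, so the model's geometry supplies no informative direction at all; and for any $O(1)$ \emph{deterministic} probe directions, the online learner can fit the values at all probed points essentially exactly while its gradient error lies orthogonal to every probe (this is consistent with all the smoothness assumptions), so value accuracy at those points gives no control whatsoever on the mismatch norm. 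The passivity of the online-learning guarantee is precisely the obstacle: the learner is only accurate on the losses it is trained on, and a value-only loss never forces derivative agreement.

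The paper's resolution, which your sketch is missing, has two coupled ingredients. First, the probe directions $u_t,v_t$ are \emph{fresh Gaussian vectors drawn after the learner commits to} $p_t$ (this is why the paper sets up online learning with stochastic input components, following Rakhlin et al.); then $\E_{u}\dotp{g_t}{u}^2=\normtwo{g_t}^2$ and $\E_{u,v}\dotp{H_t u}{v}^2=\normF{H_t}^2\ge\normsp{H_t}^2$, so expected \emph{projected} errors recover full norms dimension-free --- randomness is essential, not optional. Second, these projections are not post-hoc ``audits'': the supervision $\dotp{\ga\eta(\thetas,a_{t-1})}{u_t}$ and $\dotp{\ha\eta(\thetas,a_{t-1})u_t}{v_t}$ is obtained by extra finite-difference queries to the real environment (Eqs.~\eqref{equ:finite-difference-grad}--\eqref{equ:finite-differnece-hessian}) and is \emph{put inside the online learner's loss} (Eq.~\eqref{equ:bandit-loss}, with clipping), so that $\src_T$ of the augmented loss class directly bounds the cumulative projected derivative errors; the norm recovery with clipping is the paper's Lemma~\ref{lem:bandit-concentration}, which needs Gaussian concentration and anti-concentration, not Taylor expansion. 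A further, more minor, divergence: the paper does not count bad rounds at all; it proves an improvement lemma (Lemma~\ref{lem:bandit-improvement}) showing that when $a_{t-1}$ is not an approximate local maximum, the virtual-ascent step increases the \emph{true} reward by $\min(\hasup^{-1}\epsilon^2/4,\tasup^{-1/2}\epsilon^{3/2})$ minus error terms, and then runs an induction on the value gap $\delta_t$; your counting argument is a legitimate alternative assembly, but only once the transfer step is repaired along the lines above.
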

The sequential Rademacher complexity $\src_T$ is often bounded by the form $\tildeO(\sqrt{RT})$ for some parameter $R$ that measures the complexity of the hypothesis. When this happens, we have $\bigO(\sqrt{T\src_T}) = \tildeO(T^{3/4}) = o(T)$ local regret. 

In contrast to zero-order optimization, which does not use the parameterization of $\eta$ and has regret bounds depending on the action dimension, our regret only depends on the complexity of the reward function class. 
This suggests that our algorithm exploits the extrapolation power of the reward function class. 
To the best of our knowledge, this is the first action-dimension-free result for both linear and nonlinear bandit problems. More concretely, we instantiate our theorem to the following settings and get new results that leverage the model complexity (more in Section~\ref{sec:instance}). 
	
\begin{itemize}
	\item[1.] Linear bandit with finite parameter space $\Theta$. Because $\eta$ is concave in action $a$, our result leads to a standard regret bound $\bigO\pbra{T^{15/16}\pbra{\log|\Theta|}^{1/16}}$. In this case both zero-order optimization and the SquareCB algorithm in \citet{foster2020beyond} have regrets that depend on the dimension of action space $\da$.
	\item[2.] Linear bandit with $s$-sparse or structured instance parameters. Our algorithm {\AlgBan} achieves an  $\tildeO\pbra{T^{15/16}s^{1/16}}$  \sloppy standard regret bound when the instance/model parameter is $s$-sparse and the reward is deterministic. The regret bound of zero-order optimization depends polynomially on $\da$, so do Eluder dimension based bounds because the Eluder dimension for this class is $\Omega\pbra{\da}.$ The same bound also applies to linear bandit problems where the instance parameter has low-dimensional structure with $s$ degree of freedom. The prior work of \citet{carpentier2012bandit} achieves a stronger $\tildeO(s\sqrt{T})$ regret bound for $s$-sparse linear bandits with actions set $\mathcal{A}=S^{d-1}$. In contrast, our \AlgBan~algorithm applies more generally to any structured instance parameter set.  
	Other related results either leverage the rather strong anti-concentration assumption on the action set \citep{wang2020nearly}, or have implicit dimension dependency \citep[Remark 4.3]{hao2020high}.
	
	\item[3.] Two-layer neural nets bandit.  The local regret of our algorithm is bounded by $\tildeO\pbra{\epsilon^{-2}T^{3/4}}$. Zero-order optimization can also find a local maximum but with $\Omega(\da)$ samples. Optimistic algorithms in this case have an exponential sample complexity (see Theorem~\ref{thm:lowerbound-optimism}). Moreover, when the second layer of the ground-truth network contains all negative weights and the activation is convex and monotone, the local regret guarantee translates to a $\tildeO(T^{7/8})$ \textit{global regret} guarantee, because the reward is concave in the input (action) \citep{amos2017input}. 
\end{itemize}

The results for bandit can be extended to model-based RL with deterministic nonlinear dynamics and deterministic reward.
Our algorithm can find an approximate locally maximal stochastic policy (under additional Lipschitz assumptions):
\begin{theorem}[Informal version of Theorem~\ref{thm:rl-main}]
	For RL problems with  deterministic dynamics class and stochastic policy class that satisfy some Lipschitz properties, the  local regret of a model-based algorithm (\AlgBan~for RL, Algo~\ref{alg:rl}), compared to $\Omega(\epsilon)$-approximate local maxima, is bounded by $\bigO(\sqrt{T\src_T}/\epsilon^2),$ where $\src_T$ is the sequential Rademacher complexity of $\ell_2$ losses of the dynamics class.
\end{theorem}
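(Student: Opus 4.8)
The plan is to reuse the virtual-ascent template of the bandit case (the previous informal theorem), now viewing the policy parameter $\psi$ in the role of the action $a$ and the model value function $\V$ in the role of the reward $\eta(\theta,\cdot)$. The one genuinely new difficulty is that the learner never observes the return function directly: it only sees one-step transitions. So the core of the proof is to convert a guarantee on \emph{one-step} dynamics prediction into a guarantee on the value function together with its policy-space gradient $\gpsi\V$ and Hessian $\hpsi\V$, which is exactly the ``gradient and Hessian prediction'' input that the virtual-ascent argument consumes.

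First I would set up the \emph{online model learning} reduction verbatim from the bandit analysis. At each round $t$, \AlgRL{} executes a policy, feeds the resulting transitions to an online learner over the dynamics class, and receives a model $\theta_t$. Applying the regret bound for online learning with $\ell_2$ losses, the cumulative one-step prediction error of the played models against the realizable ground truth $\thetas$ is controlled by the sequential Rademacher complexity $\src_T$ of the dynamics' $\ell_2$ loss class, i.e.\ $\sum_{t\le \dy}\|\text{prediction error of }\theta_t\|_2^2 \lesssim \src_T$ measured under the on-policy visitation distribution. This step is structurally identical to the bandit case and introduces no new obstacle.

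Second, and this is where the Lipschitz hypotheses enter, I would prove a \emph{rollout/simulation lemma} that amplifies one-step error into value, gradient, and Hessian error. Because the dynamics are deterministic and the policy $\pp$ is stochastic and smooth, $\V$ and its $\psi$-derivatives can be written as on-policy expectations over rollouts via pathwise/score-function (policy-gradient) identities; in particular the quantities needed for virtual ascent are all functions of the \emph{same} on-policy rollouts where the model error from Step~1 is measured, so no distribution shift is incurred when evaluating $\V$, $\gpsi\V$, $\hpsi\V$ at the executed $\psi_t$. The Lipschitz and smoothness assumptions on the policy and dynamics then bound the discrepancy between $\theta_t$ and $\thetas$ in all three of these quantities by a horizon factor times the on-policy one-step error. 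I expect this to be the main obstacle: controlling the \emph{Hessian} of a multi-step rollout uniformly in $\psi$, and keeping the horizon dependence polynomial rather than exponential, is delicate and is precisely what forces the Lipschitz/smoothness conditions stated in the theorem.

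Finally, combining virtual ascent with this simulation lemma yields the local regret bound. Since $\theta_t$ predicts the true return's gradient and Hessian accurately near $\psi_t$, any $\Omega(\eps)$-approximate local maximum of the virtual return $\V$ at $\theta_t$ is an $\eps$-approximate local maximum of the real return, so maximizing the virtual return is sound. A potential-function argument then dichotomizes each round: either the executed policy already has small local regret, or the model--truth discrepancy is large, which charges a large $\ell_2$ loss to the online learner and, by Step~1, can occur on only few rounds. Summing over $t\le\dy$ and optimizing the trade-off between these two effects reproduces the $\bigO(\sqrt{\dy\,\src_T}/\eps^2)$ local regret, with $\src_T$ now the sequential Rademacher complexity of the dynamics' $\ell_2$ loss class.
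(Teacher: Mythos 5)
Your proposal is correct and follows essentially the same route as the paper: the paper likewise runs an online learner on the one-step $\ell_2$ dynamics loss (Eq.~\eqref{equ:rl-loss}) collected from the executed policies, proves a concentration lemma (Lemma~\ref{lem:rl-concentration}) that combines the telescoping/simulation lemma with the policy-gradient (score-function) identity and Assumptions~\ref{assumption:RL-lipschitz}--\ref{assumption:RL-smoothness} to bound the value, gradient, and Hessian errors of the return by on-policy one-step model errors with polynomial horizon factors, and then reruns the bandit virtual-ascent potential argument with Lemma~\ref{lem:bandit-concentration} replaced by Lemma~\ref{lem:rl-concentration}. Your identification of the two key points---that controlling the Hessian of the multi-step rollout is the main technical burden, and that the score-function identities force stochastic policies---is exactly where the paper's work lies.
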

To the best of our knowledge, this is the first model-based RL algorithms with provable finite sample complexity guarantees (for local convergence) for general nonlinear dynamics. The work of~\cite{luo2019algorithmic} is the closest prior work which also shows local convergence, but its conditions likely cannot be satisfied by any parameterized models (including linear models). We also present a concrete example of RL problems with nonlinear models satisfying our Lipschitz assumptions in Example~\ref{example:1} of Section~\ref{sec:main-results-rl}, which may also serve as a testbed for future model-based deep RL analysis. %
As discussed, other prior works on model-based RL do not apply to one-hidden-layer neural nets because they conclude global convergence which is not possible for one-hidden-layer neural nets in the worst case. 

\paragraph{Optimism vs. Exploring by Model-based Curvature Estimate.} The key algorithmic idea is to avoid exploration using the optimism-in-face-of-uncertainty principle because we show that optimism over a barely nonlinear model class is already statistically too aggressive, even if the ground-truth model is linear (see Theorem~\ref{thm:lowerbound-optimism} in Section~\ref{sec:lowerbounds}). Indeed, empirical model-based deep RL research has also not found optimism to be useful, partly because with neural nets dynamics, optimism will lead to huge virtual returns on the optimistic dynamics~\citep{luo2019algorithmic}. The work of \citet{foster2020beyond} also proposes algorithms that do not rely on UCB---their exploration strategy either relies on the discrete action space, or leverages the linear structure in the action space and has action-dimension dependency. In contrast, our algorithms' exploration relies more on the learning of the model (or the model's capability of predicting the curvature of the reward, as discussed more below).  Consequently, our regret bounds can be action-dimension-free. 

Our algorithm is conceptually very simple---it alternates between maximizing virtual return (over action or policy) and learning the model parameters by an online learner. The key insight is that, in order to ensure sufficient exploration for converging to local maxima,  it suffices for the model to predict the gradient and Hessian of the return reasonably accurately, and then follow the virtual return. We achieve reasonable curvature prediction by modifying the loss function of the online learner. 
We refer to the approach as  ``model-based curvature estimate''. 
Because we leverage model extrapolation, the sample complexity of model-based curvature prediction depends on the model complexity instead of action dimension in the zero-optimization approach for bandit.

We remark that many prior works also leverage the gradient or curvature information to explore without optimism, e.g., the work of \citet{dudik2011efficient,agarwal2014taming} as well as the EXP3/4 algorithms \citep{lattimore2020bandit}\footnote{In fact, these algorithms are instances of mirror descent. See \citet[Chapter 28]{lattimore2020bandit} for example.}. These algorithms exploit the structure of action space (which oftentimes leads to regret linear in the number of actions) and are closer to  zero-order optimization algorithms. In contrast,  our algorithm is model-based---it leverages the model extrapolation and results in regret bounds independent with the action space complexity. 

\paragraph{Organization. } 
This paper is organized as follows. Section~\ref{sec:preliminary} introduces the problem setup including the definition of local regret. In Section~\ref{sec:main-results-bandit} and Section~\ref{sec:main-results-rl} we present our main results for model-based nonlinear bandit and reinforcement learning respectively. Section~\ref{sec:lowerbounds} lists our negative results showing the inefficiency of optimism-in-face-of-uncertainty principle, as well as the hardness of finding global optimum in nonlinear bandit problems. Proofs of the lower bounds are deferred to Appendix~\ref{sec:app:pf5}. Appendix~\ref{sec:app:pf3} and \ref{app:rl} shows the proofs of main results for nonlinear bandit and reinforcement learning respectively. In Appendix~\ref{app:rl-instance} we present the analysis of our \AlgBan~for RL algorithm on a concrete example.
Finally, in Appendix~\ref{sec:app:helperlemmas} we list the proofs of helper lemmas.

\section{Problem Setup and Preliminaries}\label{sec:preliminary}
In this section, we first introduce our problem setup for nonlinear bandit and reinforcement learning, and then the preliminary for online learning and sequential Rademacher complexity.

\subsection{Nonlinear Bandit Problem with Deterministic Reward}\label{subsec:prelim-bandit}
We consider \textit{deterministic} nonlinear bandit problem with continuous actions. Let $\theta \in \Theta$ be the parameter that specifies the bandit instance,  $a\in \R^{\da}$ the action, and $\eta(\theta,a)\in [0,1]$ the reward function. Let $\thetas$ denote the unknown ground-truth parameter. Throughout the paper, we work under the \textit{realizability assumption} that $\thetas\in \Theta.$
A bandit algorithm aims to maximize the reward under $\thetas$, that is,  $\eta(\thetas,a).$ Let $a^\star=\argmax_a \eta(\thetas,a)$ be the optimal action (breaking tie arbitrarily). 
Let $\normsp{H}$ be the spectral norm of a matrix $H$. 
We also assume that the reward function, its gradient and Hessian matrix are Lipschitz, which are somewhat standard assumptions in the optimization literature (e.g., the work of \citet{johnson2013accelerating,ge2015escaping}).
\begin{assumption}\label{assumption:lipschitz} We assume that for all $\theta\in \Theta$, $\sup_a \normtwo{\ga \eta(\theta, a)}\le \gasup$ and $\sup_a \normsp{\ha \eta(\theta, a)}\le \hasup.$ And for every $\theta\in \Theta$ and $a_1,a_2\in \R^{\da}$, $\normsp{\ha \eta(\theta, a_1)-\ha \eta(\theta, a_2)}\le \tasup \normtwo{a_1-a_2}$.
\end{assumption}
As a motivation to consider deterministic rewards, we prove in Theorem~\ref{thm:lowerbound-optimism} for a special case that no algorithm can find a local maximum in less than $\sqrt{\da}$ steps. The result implies that an action-dimension-free regret bound is impossible under reasonably stochastic environments.

\paragraph{Approximate Local Maxima.}
In this paper, we aim to find a local maximum of the real reward function $\eta(\thetas,\cdot)$. 
A point $x$ is an $(\epsilon_g, \epsilon_h)$-approximate local maximum of a twice-differentiable function $f(x)$ if $\normtwo{\nabla f(x)}\le \epsilon_g$, and $\lmax(\nabla^2 f(x))\le \epsilon_h$. As argued in Sec.~\ref{sec:intro} and proved in Sec.~\ref{sec:lowerbounds}, because reaching a global maximum is computational and statistically intractable  for nonlinear problems, we only aim to reach a local maximum. 

\paragraph{Sample Complexity (for Converging to Local Maxima) and Local Regret.} Let $a_t$ be the action that the algorithm takes at time step $t$. The sample complexity for converging to approximate local maxima is defined to be the minimal number of steps $T$ such that there exists $t\in [T]$ where $a_t$ is an $(\epsilon_g,\epsilon_h)$ approximate local maximum with probability at least $1-\delta$.

On the other hand, we also define the ``local regret'' by  comparing with an approximate local maximum. Formally speaking, let $\sosp_{\epsilon_g,\epsilon_h}$ be the set of all $(\epsilon_g,\epsilon_h)$-approximate local maximum of $\eta(\thetas,\cdot)$. The $(\eps_g,\eps_h)$-local regret of a sequence of actions $a_1,\dots, a_T$ is defined as
\begin{equation}\label{equ:local-regret}
	\reg_{\epsilon_g,\epsilon_h}(T)=\sum_{t=1}^{T}\pbra{\inf_{a\in \sosp_{\epsilon_g,\epsilon_h}}\eta(\thetas,a)-\eta(\thetas,a_t)}.
\end{equation}
Our goal is to achieve a $(\eps_g,\eps_h)$-local regret that is sublinear in $T$ and inverse polynomial in $\eps_g$ and $\eps_h$. With a sublinear regret (i.e., $\reg_{\epsilon_g,\epsilon_h}(T)=o(T)$), the average performance $\frac{1}{T}\sum_{t=1}^{T}\eta(\thetas,a_t)$, converges to that of an approximate local maximum of $\eta(\thetas,\cdot)$.

\subsection{Reinforcement Learning}
We consider finite horizon Markov decision process (MDP) with deterministic dynamics, defined by a tuple $\left<\dy,r,H,\mu_1\right>$, where the dynamics $\dy$ maps from a state action pair $(s,a)$ to next state $s'$, $r:\calS\times\calA\to [0,1]$ is the reward function, and $H$ and $\mu_1$ denote the horizon and distribution of initial state respectively.  Let $\calS$ and $\calA$ be the state and action spaces. Without loss of generality, we make the standard assumption that the state space is disjoint for different time steps. That is, there exists disjoint sets $\calS_1,\cdots,\calS_H$ such that $\calS=\cup_{h=1}^{H}\calS_h$, and for any $s_h\in \calS,a_h\in a$, $\dy(s_h,a_h)\in \calS_{h+1}$.

In this paper consider parameterized policy and dynamics. Formally speaking, the policy class is given by $\Pi = \{\pi_\psi:\psi\in \Psi\}$, and the dynamics class is given by $\{\dy_\theta:\theta\in\Theta\}.$ The value function is defined as $V^\pi_\dy(s_h)\defeq \E\bbrasm{\sum_{h'=h}^{H}r(s_{h'},a_{h'})},$
where $a_h\sim \pi(\cdot\mid s_h), s_{h+1}=\dy(s_h,a_h).$
Sharing the notation with the bandit setting, let $\eta(\theta,\psi)=\E_{s_1\sim \mu_1}V^{\pi_\psi}_{\dy_\theta}(s_1)$ be the expected return of policy $\pi_\psi$ under dynamics $\dy_\theta.$ Also, we use $\rho_\dy^\pi$ to denote the distribution of state action pairs when running policy $\pi$ in dynamics $\dy$. For simplicity, we do not distinguish $\psi,\theta$ from $\pi_\psi$, $T_\theta$ when the context is clear. For example, we write $\V=V^{\pi_\psi}_{\dy_\theta}.$

The approximate local regret is defined in the same as in the bandit setting, except that the gradient and Hessian matrix are taken w.r.t to the policy parameter space $\psi$. We also assume realizability ($\thetas\in \Theta$) and the Lipschitz assumptions as in Assumption~\ref{assumption:lipschitz} (with action $a$ replaced by policy parameter $\psi$). 

\subsection{Preliminary on Online Learning with Stochastic Input Components}

Consider a prediction problem where we aim to learn a function that maps from $\calX$ to $\calY$ parameterized by parameters in $\Theta$. Let $\ell((x,y);\theta)$ be a loss function that maps $ (\calX\times \calY)\times \Theta \to \R_+$. An online learner $\ol$ aims to solve the prediction tasks under the presence of an adversarial nature iteratively. At time step $t$, the following happens. 

\begin{itemize}
	\item[1.] The learner computes a distribution $p_t = \ol(\{(x_i,y_i)\}_{i=1}^{t-1})$ over the parameter space $\Theta$. 
	\item[2.] The adversary selects a point $\bar{x}_t \in \bar{\calX}$ (which may depend on $p_t$) and generates a sample $\xi_t$ from some fixed distribution $q$. Let $x_t \defeq (\bar{x}_t, \xi_t)$, and the adversary picks a label $y_t\in \calY$.
	\item[3.] The data point $(x_t,y_t)$ is revealed to the online learner. 
\end{itemize} 
The online learner aims to minimize the expected regret in $T$ rounds of interactions, defined as
\begin{align}
	\olreg_T\triangleq\mathop{\E}\limits_{\substack{\xi_t\sim q,\theta_t\sim p_t\\ \forall 1\le t\le T}}\bbra{\sum_{t=1}^{T}\ell((x_t,y_t);\theta_t)-\inf_{\theta\in \Theta}\sum_{t=1}^{T}\ell((x_t,y_t);\theta)}.
	\label{equ:def-online-learning}
\end{align}
The difference of the formulation from the most standard online learning setup is that the $\xi_t$ part of the input is randomized instead of adversarially chosen (and the learner knows the distribution of $\xi_t$ before making the prediction $p_t$). It was introduced by \citet{rakhlin2011online}, who considered a more generalized setting where the distribution $q$ in round $t$ can depend on $\{x_{1}, \cdots, x_{t-1}\}$. 

We adopt the notation from \citet{rakhlin2011online,rakhlin2015online} to define the (distribution-dependent) sequential Rademacher complexity of the loss function class  $\calL = \{(x,y)\mapsto \ell((x,y);\theta):\theta\in \Theta\}$. For any set $\calZ$, a $\calZ$-valued tree with length $T$ is a set of functions $\{\vz_i : \{\pm 1\}^{i-1} \to \calZ\}_{i=1}^{T}$. For a sequence of Rademacher random variables $\epsilon=(\epsilon_1,\cdots,\epsilon_T)$ and for every $1 \le t \le T,$ we denote $\vz_t(\epsilon)\defeq \vz_t(\epsilon_1,\cdots,\epsilon_{t-1})$. For any $\bar{\calX}$-valued tree $\vx$ and any $\calY$-valued tree $\vy$, we define the sequential Rademacher complexity as

\begin{equation}
	\src_T(\calL;\vx,\vy)\triangleq \E_{\xi_1, \cdots, \xi_t}\E_{\epsilon}\bbra{\sup_{\ell\in \calL}\sum_{t=1}^{T}\epsilon_t \ell\big((\vx(\epsilon), \xi_t),\vy(\epsilon)\big)}.
\end{equation}
We also define $\src_T(\calL)=\sup_{\vx,\vy}\src_T(\calL;\vx,\vy)$, where the supremum is taken over all $\bar{\calX}$-valued and $\calY$-valued trees. \citet{rakhlin2011online} proved the existence of an algorithm whose online learning regret satisfies
$\olreg_T\le 2\src_T(\calL).$

\section{Model-based  Algorithms for nonlinear Bandit }
\label{sec:main-results-bandit}

We first study model-based algorithms for nonlinear continuous bandits problem, which is a simplification of model-based reinforcement learning. We use the notations and setup in Section~\ref{subsec:prelim-bandit}.

\newcommand{\hattheta}{\hat{\theta}}

\paragraph{Abstraction of analysis for model-based algorithms. } Typically, a model-based algorithm explicitly maintains an estimated model $\hattheta_t$, and sometimes maintains a distribution, posterior, or confidence region of $\hattheta_t$. We will call $\eta(\thetas, a)$ the \textit{real reward} of action $a$, and $\eta(\hattheta_t, a)$ the \textit{virtual reward}.  
Most analysis for model-based algorithms (including UCB and ours) can be abstracted as showing the following two properties: 

\noindent(i)~ the virtual reward $\eta(\hattheta_t,a_t)$ is sufficiently high. \\
(ii) the virtual reward $\eta(\hattheta_t,a_t)$ is close to the real reward $\eta(\thetas,a_t)$ in the long run. 

One can expect that a proper combination of property (i) and (ii) leads to showing the real reward $\eta(\thetas,a_t)$ is high in the long run. Before describing our algorithms, we start by inspecting and summarizing the pros and cons of UCB from this viewpoint. 

\paragraph{Pros and cons of UCB. } The UCB algorithm chooses an action $a_t$ and an estimated model $\hattheta_t$ that maximize the virtual reward $\eta(\hattheta_t,a_t)$ among those models agreeing with the observed data. The pro is that it satisfies property (i) by definition---$\eta(\hattheta_t, a_t)$ is higher than the optimal real reward $\eta(\thetas, a^\star)$. The downside is that ensuring (ii) is challenging and often requires strong complexity measure bound such as Eluder dimension (which is not polynomial for even barely nonlinear models, as shown in Theorem~\ref{thm:lowerbound-eluder}). 
The difficulty largely stems from our very limited control of $\hattheta_t$ except its consistency with the observed data. In order to bound the difference between the real and virtual rewards, we essentially require that \textit{any} model that agrees with the past history should extrapolate to any future data accurately (as quantitatively formulated in Eluder dimension). Moreover, the difficulty of satisfying property (ii) is fundamentally caused by the over-exploration of UCB---As shown in the Theorem~\ref{thm:lowerbound-optimism}, UCB suffers from bad regrets with barely nonlinear  family of models. %

\paragraph{Our key idea: natural exploration via model-based curvature estimate. } We deviate from UCB by readjusting the priority of the two desiderata. First, we focus more on ensuring property (ii)  on large model class by leveraging strong online learners. We use an online learning algorithm to predict $\hattheta_t$ with the objective that $\eta(\hattheta_t,a_t)$ matches $\eta(\thetas,a_t)$ . As a result,  the difference between the virtual and real reward depends on the online learnability or the sequential Rademacher complexity of the model class. Sequential Rademacher complexity turns out to be a fundamentally more relaxed complexity measure than Eluder dimension---e.g., two-layer neural networks'
sequential Rademacher complexity is polynomial in parameter norm and dimension, but their Eluder dimension is at least exponential in  dimension (even with a constant parameter norm). 
However, an immediate consequence of using online-learned $\hattheta_t$ is that we lose  optimism/exploration that ensured property (i).\footnote{More concretely, the algorithm can get stuck when (1) $a_t$ is optimal for $\hattheta_t$, (2) $\hattheta_t$ fits actions $a_t$ (and history) accurately, but (3) $\hattheta_t$ does not fit $a^\star$ (because online learner never sees $a^\star$). The passivity of online learning formulation causes this issue---the online learner is only required to predict well for the point that it saw and will see, but not for those points that it never observes. This limitation, on the other hand, allows more relaxed complexity measure of the model class (that is, sequential Rademacher complexity instead of Eluder dimension).}

\begin{algorithm}[ht]
	\setstretch{0.9}
	\caption{\AlgBan: \textbf{Vi}rtual Ascent with \textbf{O}n\textbf{lin}e Model Learner (for Bandit)}
	\label{alg:ban}
	\begin{algorithmic}[1]
		\State Set parameter $\ubg=2\gasup$ and $\ubh=640\sqrt{2}\hasup$. Let $\calH_0 = \emptyset$; choose $a_0\in \calA$ arbitrarily. 
		\For{$t=1,2,\cdots$}
		\State Run $\ol$ on $\calH_{t-1}$ with loss function $\ell$ (defined in equation~\eqref{equ:bandit-loss}) and obtain $p_t = \ol(\calH_{t-1})$. 
		\State Let $a_t\gets \argmax_{a}\E_{\theta_t\sim p_t}\bbra{\eta(\theta_t,a)}$.
		\State Sample $u_t,v_t\sim \calN(0,I_{\da\times \da})$ independently. 
		\State Let $\xi_t = (u_t,v_t)$, $\bar{x}_t = (a_{t}, a_{t-1})$, and $x_t = (\bar{x}_t, \xi_t)$
		\State Compute $y_t = [\eta(\thetas, a_t), \eta(\thetas,a_{t-1}), \inner{\ga \eta(\thetas,a_{t-1})}{u_t}, \inner{\ha \eta(\thetas,a_{t-1})u_t}{v_t}]\in \R^4$ by applying a finite number of actions in the real environments using equation~\eqref{equ:finite-difference-grad} and~\eqref{equ:finite-differnece-hessian} with infinitesimal $\alpha_1$ and $\alpha_2$. 
		\State Update $\calH_t=\calH_{t-1}\cup \left\{(x_t,y_t)\right\}$
		\EndFor
	\end{algorithmic}
\end{algorithm}

Our approach realizes property (i) in a sense that the virtual reward will improve iteratively if the real reward is not yet near a local maximum. This is much weaker than what UCB offers (i.e., that the virtual reward is higher than the optimal real reward), but suffices to show the convergence to a local maximum of the real reward function. We achieve this by demanding the estimated model $\hattheta_t$ not only to predict the real reward accurately, but also to predict the gradient $\nabla_a \eta(\thetas, a)$ and Hessian $\nabla_a^2 \eta(\thetas, a)$ accurately. In other words, we augment the loss function for the online learner so that the estimated model satisfies $\eta(\hattheta_t, a_t)\approx \eta(\thetas, a_t)$, $\nabla_a \eta(\hattheta_t, a_t)\approx \nabla_a \eta(\thetas, a_t)$, and $\nabla_a^2 \eta(\hattheta_t, a_t)\approx \nabla_a^2 \eta(\thetas, a_t)$ in the long run. This implies that when $a_t$ is not at a local maximum of the real reward function $\eta(\thetas, \cdot)$, then it's not at a maximum of the virtual reward $\eta(\hattheta_t, \cdot)$, and hence the virtual reward will improve in the next round if we take the greedy action that maximizes it. 

\paragraph{Estimating projections of gradients and Hessians. }To guide the online learner to predict $\nabla_a \eta(\thetas, a_t)$ correctly, we need a supervision for it. However, we only observe the reward $\eta(\thetas, a_t)$.  Leveraging the deterministic reward property, we use rewards at $a$ and $a+\alpha_1 u$ to estimate the projection of the gradient at a random direction $u$: 
\begin{align}\label{equ:finite-difference-grad}
\dotp{\ga \eta(\thetas,a)}{u} = \lim_{\alpha_1\to 0}\left(\eta(\thetas,a+\alpha_1 u)-\eta(\thetas,a)\right)/\alpha_1
\end{align}
It turns out that the number of random projections $\dotp{\ga \eta(\thetas,a)}{u}$ needed for ensuring a large virtual gradient \textit{does not} depend on the dimension, because we only use these projections to estimate the norm of the gradient but not necessarily the exact direction of the gradient (which may require $d$ samples.) Similarly, we can also estimate the projection of Hessian to two random directions $u, v\in \da$ by:
\begin{align}\label{equ:finite-differnece-hessian}
& \dotp{\ha \eta(\thetas,a)v}{u} = \lim_{\alpha_2\to 0}\left(\dotp{\ga \eta(\thetas,a+\alpha_2 v)}{u}-\dotp{\ga \eta(\thetas,a)}{u}\right)/\alpha_2\\ 
&\quad =\lim_{\alpha_2\to 0}\lim_{\alpha_1\to 0}\left(\left(\eta(\thetas,a+\alpha_1 u+\alpha_2 v)-\eta(\thetas,a+\alpha_2 v)\right)-\left(\eta(\thetas,a+\alpha_1 u)-\eta(\thetas,a)\right)\right)/ (\alpha_1\alpha_2)\nonumber
\end{align}
Algorithmically, we can choose infinitesimal $\alpha_1$ and $\alpha_2$. Note that $\alpha_1$ should be at least an order smaller than $\alpha_2$ because the limitations are taken sequentially.

We create the following prediction task for an online learner: let $\theta$ be the parameter, $x = (a, a', u, v)$ be the input,  $$\hat{y}= [\eta(\theta, a), \eta(\theta,a'), \inner{\ga \eta(\theta,a')}{u}, \inner{\ha \eta(\theta,a')u}{v}]\in \R^4 $$ be the output, and $$y = [\eta(\thetas, a), \eta(\thetas,a'), \inner{\ga \eta(\thetas,a')}{u}, \inner{\ha \eta(\thetas,a')u}{v}]\in \R^4$$ be the supervision, and  the loss function be
\begin{align}\label{equ:bandit-loss}
\ell(((a,a',u,v), y);\theta)\triangleq&\pbra{[\hat{y}]_1-[y]_1}^2+\pbra{[\hat{y}]_2-[y]_2}^2+\mintwo{\ubg^2}{\pbra{[\hat{y}]_3-[y]_3}^2}\nonumber \\
&+ \mintwo{\ubh^2}{\pbra{[\hat{y}]_4-[y]_4}^2}
\end{align}
Here we used $[y]_i$ to denote the $i$-th coordinate of $y\in \R^4$ to avoid confusing with $y_t$ (the supervision at time $t$.) Our model-based bandit algorithm is formally stated in Alg.~\ref{alg:ban} with its regret bound below. 
\begin{theorem}\label{thm:bandit-main}
	Let $\src_T$ be the sequential Rademacher complexity of the family of the losses defined in Eq.~\eqref{equ:bandit-loss}. Let $C_1=2+\gasup/\hasup$. Under Assumption~\ref{assumption:lipschitz}, for any $\epsilon\le \mintwo{1}{\tasup/16},$ we can bound the $(\epsilon,6\sqrt{\tasup\epsilon})$-local regret of Alg.~\ref{alg:ban} from above by
	\begin{equation}
	\E\bbra{\reg_{\epsilon,6\sqrt{\tasup\epsilon}}(T)}\le
	\pbra{1+C_1\sqrt{4T\src_T}}\maxtwo{4\hasup\epsilon^{-2}}{\sqrt{\tasup}\epsilon^{-3/2}}.
	\end{equation}
\end{theorem}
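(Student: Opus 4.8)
The plan is to upper-bound the expected local regret by the expected number of ``bad'' rounds --- rounds in which $a_t$ is not an $(\epsilon, 6\sqrt{\tasup\epsilon})$-approximate local maximum of the real reward $\eta(\thetas,\cdot)$ --- and then control that count by an ascent argument on the virtual reward. The first reduction is immediate from \eqref{equ:local-regret}: writing $R^\diamond = \inf_{a\in\sosp_{\epsilon,\epsilon_h}}\eta(\thetas,a)$ with $\epsilon_h = 6\sqrt{\tasup\epsilon}$, any round with $a_t\in\sosp_{\epsilon,\epsilon_h}$ contributes $R^\diamond - \eta(\thetas,a_t)\le 0$, whereas a bad round contributes at most $R^\diamond\le 1$; hence $\reg_{\epsilon,\epsilon_h}(T)$ is at most the number of bad rounds. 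Throughout I write $\bar\eta_t(a) = \E_{\theta_t\sim p_t}[\eta(\theta_t,a)]$ for the virtual reward, so $a_t = \argmax_a \bar\eta_t(a)$. The engine is the online-learning guarantee $\olreg_T\le 2\src_T$ together with realizability: since $y_t$ is computed from $\thetas$, the loss \eqref{equ:bandit-loss} vanishes at $\theta=\thetas$, so $\E[\sum_t \ell((x_t,y_t);\theta_t)]\le 2\src_T$. Splitting $\ell$ into its four nonnegative summands shows that, on average over $t$, the sampled model predicts the reward at $a_t$ and $a_{t-1}$ and the (clipped) projected gradient and Hessian at $a_{t-1}$ accurately.

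I would next turn these prediction guarantees into closeness of the virtual geometry to the real geometry at $a_{t-1}$: using $\E_{u\sim\calN(0,I)}\langle g,u\rangle^2 = \normtwo{g}^2$ (and its Hessian analogue), the loss controls $e^t_g \defeq \normtwo{\nabla_a\bar\eta_t(a_{t-1}) - \ga\eta(\thetas,a_{t-1})}$ and $e^t_h \defeq \normsp{\nabla^2_a\bar\eta_t(a_{t-1}) - \ha\eta(\thetas,a_{t-1})}$. The ascent lemma then states: because $a_t$ \emph{globally} maximizes $\bar\eta_t$, a Taylor expansion of $\bar\eta_t$ about $a_{t-1}$ (using $\sup\normtwo{\ga\eta}\le\gasup$, $\sup\normsp{\ha\eta}\le\hasup$, and the $\tasup$-Lipschitz Hessian, all inherited by the average $\bar\eta_t$) lower-bounds the progress $\bar\eta_t(a_t)-\bar\eta_t(a_{t-1})$ by the better of a gradient step (via smoothness, of size $\gtrsim \normtwo{\nabla_a\bar\eta_t(a_{t-1})}^2/\hasup$) and a positive-curvature step (via the Lipschitz Hessian, of size $\gtrsim \lmax(\nabla^2_a\bar\eta_t(a_{t-1}))^3/\tasup^2$). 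If $a_{t-1}$ is bad, then $\normtwo{\ga\eta(\thetas,a_{t-1})}>\epsilon$ or $\lmax(\ha\eta(\thetas,a_{t-1}))>\epsilon_h$, and by closeness the corresponding virtual quantity exceeds $\epsilon - e^t_g$ or $\epsilon_h - e^t_h$; the choice $\epsilon_h = 6\sqrt{\tasup\epsilon}$ is calibrated so that both step sizes exceed $P\defeq\mintwo{(4\hasup\epsilon^{-2})^{-1}}{(\sqrt{\tasup}\epsilon^{-3/2})^{-1}}$ up to the errors $e^t_g,e^t_h$.

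The budget on total progress comes from telescoping. Writing $\bar\eta_t(a_t)-\bar\eta_t(a_{t-1}) = [\eta(\thetas,a_t)-\eta(\thetas,a_{t-1})] + [\bar\eta_t(a_t)-\eta(\thetas,a_t)] + [\eta(\thetas,a_{t-1})-\bar\eta_t(a_{t-1})]$ and summing over $t$, the first bracket telescopes to $\eta(\thetas,a_T)-\eta(\thetas,a_0)\le 1$, while the last two are bounded by the square roots of the two reward-loss terms (Jensen), whose sums are $\bigO(\sqrt{T\src_T})$ by Cauchy--Schwarz and the loss bound. Combining with the ascent lemma, $(\#\text{bad rounds})\cdot P$ is at most the total progress plus the accumulated geometry errors $\sum_t e^t_g$, $\sum_t e^t_h$, each again $\bigO(\sqrt{T\src_T})$ by Cauchy--Schwarz; collecting the constant $2$ (two reward terms) and $\gasup/\hasup$ (from balancing the gradient-step correction against $P$) yields the factor $C_1 = 2+\gasup/\hasup$ multiplying $\sqrt{4T\src_T}$, with $P^{-1} = \maxtwo{4\hasup\epsilon^{-2}}{\sqrt{\tasup}\epsilon^{-3/2}}$. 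The additive $1$ absorbs the telescoping constant and the off-by-one between classifying rounds by $a_{t-1}$ (ascent) versus $a_t$ (regret).

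The step I expect to be the main obstacle is the clipping analysis underlying the geometry-closeness claims --- especially for the Hessian. The clip is essential (it keeps the loss, hence $\src_T$, bounded by $2+\ubg^2+\ubh^2$), but it threatens exactly the information we need: for $\Delta = \ha\eta(\theta,a_{t-1})-\ha\eta(\thetas,a_{t-1})$, the form $\langle \Delta u, v\rangle$ is a bilinear Gaussian chaos with only sub-exponential tails, and its typical magnitude $\normF{\Delta}$ can exceed $\ubh$ in high dimension, so the clipped second moment may fall far below $\normF{\Delta}^2$. The key lemma I must prove is that it nevertheless stays above a constant multiple of the \emph{spectral} norm, $\E_{u,v}[\mintwo{\ubh^2}{\langle\Delta u,v\rangle^2}] \gtrsim \normsp{\Delta}^2$ whenever $\normsp{\Delta}\le 2\hasup$ --- and it is precisely $\normsp{\Delta}$, not $\normF{\Delta}$, that transfers the real top-eigenvalue gap to the virtual one through $|w^\top\Delta w|\le\normsp{\Delta}$. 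I would prove this by conditioning on all eigendirections but the top one and applying Gaussian anti-concentration to the residual product $\lambda_{\max}\langle w,u\rangle\langle w,v\rangle$; the constant $\ubh = 640\sqrt{2}\hasup$ is chosen so that whether the conditioned remainder is small (bulk case: no clipping on a constant-probability event) or large (then the clip $\ubh^2$ already dominates $\normsp{\Delta}^2$), the bound holds with a clean universal constant. The analogous but easier one-dimensional statement $\E_u[\mintwo{\ubg^2}{\langle g,u\rangle^2}]\gtrsim\normtwo{g}^2$ for $\normtwo{g}\le\ubg=2\gasup$ handles the gradient.
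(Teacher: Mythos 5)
Your proposal is correct and follows essentially the same route as the paper's proof: the same three ingredients (the improvement lemma for non-stationary iterates, i.e.\ Lemma~\ref{lem:bandit-improvement}; the clipped-loss concentration lemma tying the observed projected losses to spectral-norm model errors, i.e.\ Lemma~\ref{lem:bandit-concentration}; and the online-learning bound $\olreg_T\le 2\src_T$ under realizability), assembled via a bad-round counting/potential argument that is just a lighter-weight version of the induction on $\delta_t$ in Appendix~\ref{app:proof-bandit-main}, and with the same off-by-one and Cauchy--Schwarz bookkeeping. The only substantive technical deviation is inside the clipping lemma: you condition on all but the top eigendirection of $\Delta$ and apply one-dimensional Gaussian anti-concentration to $\lambda_{\max}\dotp{w}{u}\dotp{w}{v}$, whereas the paper (Lemmas~\ref{lem:square-bilinear-tail-bound}, \ref{lem:square-bilinear-anti-concentration}, \ref{lem:concentration-matrix-min}) instead splits on whether $\normF{\Delta}$ is small or large relative to $\ubh$, using a Hanson--Wright tail bound in the first case and anti-concentration of the full bilinear form in the second; both arguments deliver the needed bound $\E_{u,v}\bbra{\mintwo{\ubh^2}{\dotp{\Delta u}{v}^2}}\gtrsim \mintwo{\hasup^2}{\normsp{\Delta}^2}$, and your variant is sound.
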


Note that when the sequential Rademacher complexity $\src_T$ is bounded by $\tildeO(R\sqrt{T})$ (which is typical), we have $\bigO(\sqrt{T\src_T}) = \tildeO(T^{3/4}) = o(T)$ regret. As a result, Alg.~\ref{alg:ban} achieves a $\bigO(\poly(1/\epsilon))$ sample complexity by the sample complexity-regret reduction \citep[Section 3.1]{jin2018q}.

Theorem~\ref{thm:bandit-main} states that the reward of Alg.~\ref{alg:ban} converges to the reward of a local maximum. In addition, with a little modification of the proof, we can also show that  Alg.~\ref{alg:ban} can find a local maximum \emph{action} in polynomial steps.

\begin{theorem}\label{thm:ban-action-space}
	Suppose the sequential Rademacher complexity of the loss function (defined in Eq.~\eqref{equ:bandit-loss}) is bounded by $\sqrt{R(\Theta)T\polylog(T)}.$ When $T\gtrsim R(\Theta)\epsilon^{-8}\polylog(R(\Theta),1/\epsilon)$, if we run Alg.~\ref{alg:ban} for $T$ steps, there exists $t\in [T]$ such that $a_t\in \sosp_{\epsilon,6\sqrt{\tasup\epsilon}}$ with probability at least $0.5$.
\end{theorem}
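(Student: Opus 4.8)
The plan is to run a potential (telescoping) argument on the \emph{real} reward $\eta(\thetas,\cdot)$, using the online learner to keep the virtual reward close to the real reward and using the greedy maximization step to force a definite improvement whenever the current point is not yet an approximate local maximum. Throughout write $f_t(a)\defeq \E_{\theta_t\sim p_t}[\eta(\theta_t,a)]$ for the virtual reward at round $t$, so that $a_t=\argmax_a f_t(a)$ by construction in Alg.~\ref{alg:ban}, and note that $f_t$ inherits from Assumption~\ref{assumption:lipschitz} the same $\hasup$-smoothness and $\tasup$-Hessian-Lipschitzness as each $\eta(\theta,\cdot)$, being a mixture of them. The first step is to extract error control from online learning: since $\thetas\in\Theta$ makes the loss in Eq.~\eqref{equ:bandit-loss} vanish at $\thetas$, the guarantee $\olreg_T\le 2\src_T$ gives $\E[\sum_{t=1}^T\ell((x_t,y_t);\theta_t)]\le 2\src_T$, and because the loss is a sum of four nonnegative terms each term is separately bounded by $2\src_T$: the reward fits at $a_t$ and $a_{t-1}$, and the truncated gradient and Hessian projections at $a_{t-1}$.

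The second step converts the random-projection losses into bounds on the true gradient \emph{norm} and Hessian \emph{spectral} norm with dimension-free constants---this is where action-dimension-freeness is won. For $g\defeq\ga\eta(\theta_t,a_{t-1})-\ga\eta(\thetas,a_{t-1})$, which has $\normtwo{g}\le 2\gasup=\ubg$, averaging over $u_t\sim\calN(0,I)$ gives $\E_{u_t}[\mintwo{\ubg^2}{\inner{g}{u_t}^2}]\ge c\,\normtwo{g}^2$, since $\inner{g}{u_t}\sim\calN(0,\normtwo{g}^2)$ and truncating at one standard deviation costs only a constant factor. For the symmetric $M\defeq\ha\eta(\theta_t,a_{t-1})-\ha\eta(\thetas,a_{t-1})$ with $\normsp{M}\le 2\hasup$, I project $u_t$ onto the top eigenvector of $M$ and use Gaussian anti-concentration to get $\normtwo{Mu_t}\ge c\,\normsp{M}$ with constant probability, whence $\E[\mintwo{\ubh^2}{\inner{Mu_t}{v_t}^2}]\ge c'\normsp{M}^2$---the point being to lower-bound $\normsp{M}^2$ and not the dimension-dependent $\normF{M}^2$, which forces $\ubh$ to exceed $\normsp{M}$ and explains $\ubh\propto\hasup$. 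Crucially, one projection per round suffices: the randomness of $(u_t,v_t)$ together with the sum over $t$ yields $\E[\sum_t\normtwo{g_t}^2]\lesssim\src_T$ and $\E[\sum_t\normsp{M_t}^2]\lesssim\src_T$ with no $\da$ dependence.

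The third step is the per-round improvement plus telescoping. Because $a_t$ is the \emph{global} maximizer of $f_t$, the gain $f_t(a_t)-f_t(a_{t-1})$ is always nonnegative and is at least the improvement of a gradient step or a negative-curvature step of $f_t$ at $a_{t-1}$, namely $\maxtwo{\normtwo{\ga f_t(a_{t-1})}^2/(2\hasup)}{(\lmax(\ha f_t(a_{t-1})))_+^3/(3\tasup^2)}$. If $a_{t-1}\notin\sosp_{\epsilon,6\sqrt{\tasup\epsilon}}$ then either $\normtwo{\ga\eta(\thetas,a_{t-1})}>\epsilon$ or $\lmax(\ha\eta(\thetas,a_{t-1}))>6\sqrt{\tasup\epsilon}$, and provided the model error at $a_{t-1}$ is small (pass from $\theta_t$ to $f_t$ by Jensen), the corresponding virtual quantity stays large, giving $f_t(a_t)-f_t(a_{t-1})\ge\Delta$ with $\Delta=\mintwo{\epsilon^2/(8\hasup)}{9\epsilon^{3/2}/\sqrt{\tasup}}$. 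Rounds with large model error form a set $B$ with $\E|B|\lesssim\src_T/\epsilon^2$ by Markov applied to the bounds of Step~2. Telescoping $1\ge\eta(\thetas,a_T)-\eta(\thetas,a_0)=\sum_t(\eta(\thetas,a_t)-\eta(\thetas,a_{t-1}))$, lower-bounding each summand by the nonnegative virtual gain minus the two reward-fit errors, and applying Cauchy--Schwarz $\sum_t(\text{fit error}_t)\le\sqrt{T\sum_t(\text{fit error}_t)^2}\lesssim\sqrt{T\src_T}$, yields $\Delta\cdot\E\abs{\{t\notin B:\,a_{t-1}\notin\sosp_{\epsilon,6\sqrt{\tasup\epsilon}}\}}\lesssim 1+\sqrt{T\src_T}$.

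Combining the two counts gives $\E\abs{\{t\in[T]:\,a_{t-1}\notin\sosp_{\epsilon,6\sqrt{\tasup\epsilon}}\}}\lesssim \src_T/\epsilon^2+(1+\sqrt{T\src_T})/\Delta$. Substituting $\src_T\le\sqrt{R(\Theta)T\polylog(T)}$ and $\Delta\asymp\epsilon^2$ (treating $\hasup,\tasup$ as constants), the dominant term is $\sqrt{T\src_T}/\Delta\asymp R(\Theta)^{1/4}T^{3/4}\polylog(T)/\epsilon^2$, and requiring it to be at most $T/2$ is exactly $T\gtrsim R(\Theta)\epsilon^{-8}\polylog(R(\Theta),1/\epsilon)$; then Markov's inequality gives that with probability at least $1/2$ some round has $a_{t-1}\in\sosp_{\epsilon,6\sqrt{\tasup\epsilon}}$ (reindexing $a_0,\dots,a_{T-1}$ to $a_1,\dots,a_T$, e.g.\ by running one extra round, since $a_0$ is arbitrary). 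I expect the main obstacle to be the dimension-free Hessian estimate of Step~2---certifying that a single truncated projection $\inner{Mu_t}{v_t}^2$ controls $\normsp{M}^2$ rather than $\normF{M}^2$---together with the bookkeeping that converts on-average error control into a per-round improvement guarantee by isolating the bad-fit set $B$.
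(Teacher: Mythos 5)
Your proposal is correct and takes essentially the same route as the paper's proof in Appendix~\ref{app:action-convergence}: a per-round improvement guarantee when $a_{t-1}\notin\sosp_{\epsilon,6\sqrt{\tasup\epsilon}}$ (the paper's Lemma~\ref{lem:bandit-improvement}), combined with the dimension-free conversion of truncated projection losses into gradient/Hessian errors (Lemma~\ref{lem:bandit-concentration}) and the online-learning regret bound, all fed into a telescoping bound on the real reward $\eta(\thetas,\cdot)\in[0,1]$, which caps the number of non-stationary rounds once $T\gtrsim R(\Theta)\epsilon^{-8}\polylog(R(\Theta),1/\epsilon)$. The only differences are bookkeeping: the paper argues by contradiction, assuming $\Pr[a_{t+1}\in\sosp_{\epsilon,6\sqrt{\tasup\epsilon}}]\le 0.5$ for every $t$ and charging all four errors $\err_{t,i}$ linearly against the accumulated improvement $\upsilon T/2$ via Cauchy--Schwarz, whereas you isolate a bad-fit set $B$ by Markov and apply Markov once more to the final count; and in the Hessian step you certify $\E[\min(\ubh^2,\inner{Mu_t}{v_t}^2)]\gtrsim\normsp{M}^2$ directly by anti-concentration along the top eigenvector, while the paper routes through $\normF{M}^2\ge\normsp{M}^2$ (Lemmas~\ref{lem:square-bilinear-anti-concentration} and~\ref{lem:concentration-matrix-min}) — both are valid and dimension-free. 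Your closing caveat about reindexing $a_0,\dots,a_{T-1}$ is a real (if cosmetic) issue that the paper's own proof shares, since it too establishes the existence of a good $a_{t+1}$ with a one-step index shift.
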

Proof of Theorem~\ref{thm:ban-action-space} is deferred to Appendix~\ref{app:action-convergence}. We can also boost the success probability by running Alg.~\ref{alg:ban} multiple times.

\subsection{Instantiations of Theorem~\ref{thm:bandit-main}}\label{sec:instance}
In the sequel we sketch some instantiations of our main theorem, whose proofs are deferred to Appendix~\ref{app:instance}.

\paragraph{Linear bandit with finite model class.} Consider the problem with action set $\calA = \{a\in \R^{d}: \|a\|_2\le 1 \}$ and finite model class $\Theta \subset \{\theta\in \R^d : \|\theta\|_2=1\}$. Suppose the reward is linear, that is, $\eta(\theta,a) = \langle \theta, a\rangle$. We deal with the constrained action set by using a surrogate loss $\tilde{\eta}(\theta, a) \defeq \langle \theta, a\rangle - \frac{1}{2}\|a\|_2^2$ and apply Theorem 3 with reward $\tilde{\eta}$. 
 We claim that the expected (global) regret is bounded by $\bigO\pbra{T^{15/16}\pbra{\log|\Theta|}^{1/16}}$.
Note that here the regret bound is \textit{independent} of the dimension $d$, whereas, by contrast, the SquareCB algorithm in \citet{foster2020beyond} depends polynomially on $d$ (see Theorem 7 of \citet{foster2020beyond}). 
Zero-order optimization approach \citep{duchi2015optimal} in this case also gives a $\poly(d)$ regret bound. This and examples below demonstrate that our results fully leverage the low-complexity model class to eliminate the dependency on the action dimension. 

A full proof of this claim needs a few steps: (i) realizing that $\eta(\theta^\star, a)$ is concave in $a$ with no bad local maxima,  and therefore our local regret and the standard regret coincide (up to some conversion of the errors); (ii) invoking \citet[Lemma 3]{rakhlin2015sequential} to show that the sequential Rademacher complexity $\src_T$ is bounded by $\bigO\pbra{\sqrt{(2\log |\Theta|)/T}}$, and (iii) verifying $\tilde{\eta}$ satisfies the conditions (Assumption~\ref{assumption:lipschitz}) on the actions that the algorithm will visit.

\paragraph{Linear bandit with sparse or structured model vectors.} 
We consider the deterministic linear bandit setting where the model class $\Theta=\{\theta\in \R^{d}:\|\theta\|_0 \le s, \|\theta\|_2 = 1\}$ consists of all $s$-sparse vectors on the unit sphere.
Similarly to finite hypothesis case, we claim that the global regret of Alg.~\ref{alg:ban} is $\E[\reg(T)]=\tildeO\pbra{T^{15/16}s^{1/16}}.$ The regret of our algorithm only depends on the sparsity level $s$ (up to logarithmic factors), whereas the Eluder dimension of sparse linear hypothesis is still $\Omega(d)$ (see Lemma~\ref{lem:sparse-eluder}), and the regret in~\citet{lattimore2020bandit} also depends on $d$. %
The proof follows from discretizing the space $\Theta$ into roughly $(dT)^{O(s)}$ points and applying the finite model class result above.
\sloppy We remark that ~\citet{lattimore2020bandit} showed a $\Omega(d)$ sample complexity lower bound for the sparse linear bandit problem with stochastic reward. But here we only consider a deterministic reward and continuous action.\footnote{In fact, if the reward is deterministic, there exists a simple ad-hoc algorithm that solve $s$-sparse linear bandit with $O(s\log d)$ sample complexity. First, the algorithm plays the action $a_1=(1,\cdots,1,0,\cdots,0)/Z$ with first half of coordinates being non-zero, normalized by $Z$. The return $\dotp{\thetas}{a_1}$ reveals whether there exists a non-zero entry in the first half coordinates of $\thetas$. Then, we can proceed with the binary search on the non-zero half of the coordinates. Iteratively, we can identify a non-zero entry with $\log d$ samples. Running $s$ rounds of binary search reveals all the non-zero entries of $\thetas$, which solves the $s$-sparse linear bandit problem. However, we note that this algorithm seems to be adhoc and does not extend to other cases, e.g., finite or structured model class.}

Moreover, we can further extend the result to other linear bandit settings where $\theta$ has an additional structure. Suppose $\Theta = \{\theta = \phi(z): z\in \R^s \}$ for some Lipschitz function $\phi$. Then, a simlar approach gives regret bound that only depends on $s$ but not $d$ (up to logarithmic factors). 

\paragraph{Deterministic logistic bandits.} For deterministic logistic bandits, the reward function is given by $\eta(\theta,a)=(1+e^{-\dotp{\theta}{a}})^{-1}.$ The model class is $\Theta\subseteq S^{d-1}$ and the action space is $\calA=S^{d-1}$. Similarly, we run Alg.~\ref{alg:ban} on an unbounded action space with regularized loss $\tilde{\eta}(\theta,a)=\eta(\theta,a)-\frac{c}{2}\normtwo{a}^2$ where $c=e(e+1)^{-2}$ is a constant. The optimal action in this case is $a^\star=\thetas$. Note that the loss function is not concave, but it satisfies that all local maxima are global. As a result, translating our local regret bound to global regret, the expected standard regret of Alg.~\ref{alg:ban} is bounded by $\bigO\pbra{T^{15/16}(\log|\Theta|)^{1/16}}.$ Compared with algorithms that specially designed for logistic bandits \citep{faury2020improved,dong2018information,filippi2010parametric,li2017provably}, our regret bound obtained by reduction is not optimal.

\paragraph{Two-layer neural nets.} 
We consider the reward function given by two-layer neural networks with width $m$. For matrices $W_1\in \R^{m\times d}$ and $W_2\in \R^{1\times m},$
let $\eta((\mW_1,\mW_2),a)=\mW_2\sigma(\mW_1 a)-\frac{1}{2}\normtwo{a}^2$ for some nonlinear link function $\sigma:\R\to [0,1]$ with bounded derivatives up to the third order. %
Recall that the $(1,\infty)$-norm of $W_1$ is defined by $\max_{i\in [m]}\sum_{j=1}^{d}\abs{[\mW_1]_{i,j}}.$ %
Let the model hypothesis space be $\Theta=\{(\mW_1,\mW_2):\norm{\mW_1}_{1,\infty}\le 1, \normone{\mW_2}\le 1\}$ %
and $\theta\defeq (\mW_1,\mW_2)$. We claim that the expected local regret of Alg~.\ref{alg:ban} is bounded by $\tildeO\pbra{\epsilon^{-2}T^{3/4}\polylog(d)}$. To the best of our knowledge, this is the first result analyzing nonlinear bandit with neural network parameterization. The result follows from  analyzing the sequential Rademacher complexity for $\eta$, $\langle \nabla_a \eta, u\rangle $, and $\langle u, \nabla_a^2 \eta \cdot v\rangle$, and finally the resulting loss function $\ell$. See Theorem~\ref{thm:two-layer} in Section~\ref{app:instance} for details.
We remark here that zero-order optimization in this case gives a $\poly(d)$ local regret bound.

We note that if the second layer of the neural network $W_2$ contains all negative entries, and the activation function $\sigma$ is monotone and convex, then $\eta((W_1,W_2), a)$ is \textit{concave} in the action. (This is a special case of input convex neural networks \citep{amos2017input}.) Therefore, in this case, the  local regret is the same as the global regret, and we can obtain global regret guarantee (see Theorem~\ref{thm:two-layer}.) 
We note that loss function for learning  input convex neural networks is still nonconvex, but the statistical global regret result does not  rely on the convexity of the loss for learning.

\subsection{Proof Sketch for Theorem~\ref{thm:bandit-main}}

Proof of Theorem~\ref{thm:bandit-main} consists of the following parts:

\begin{itemize}
	\item[i.] Because of the design of the loss function (Eq.~\ref{equ:bandit-loss}), the online learner guarantees that $\theta_t$ can estimate the reward, its gradient and hessian accurately, that is, for $\theta_t\sim p_t$,  $\eta(\thetas,a_t)\approx \eta(\theta_t,a_t)$, $\nabla_a \eta(\thetas,a_{t-1})\approx\nabla_a \eta(\theta_t,a_{t-1})$, and $\nabla_a^2 \eta(\thetas,a_{t-1})\approx\nabla_a^2 \eta(\theta_t,a_{t-1})$.
	\item[ii.] Because of (i), maximizing the virtual reward $\E_{\theta_t} \eta(\theta_t, a)$ w.r.t $a$ leads to improving the real reward function $\eta(\thetas, a)$ iteratively (in terms of finding second-order local improvement direction.)
\end{itemize}

\sloppy Concretely, define the errors in rewards and its derivatives:  $\err_{t,1}=\abs{\eta(\theta_t,a_t)-\eta(\thetas,a_t)}$, 
$\err_{t,2}=\abs{\eta(\theta_t,a_{t-1})-\eta(\thetas,a_{t-1})},$ 
$\err_{t,3}=\normtwo{\ga \eta(\theta_t,a_{t-1})-\ga \eta(\thetas,a_{t-1})}$, and
$\err_{t,4}=\normsp{\ha \eta(\theta_t,a_{t-1})-\ha \eta(\thetas,a_{t-1})}.$ Let
$\err_{t}^2=\sum_{i=1}^{4}\err_{t,i}^2$  be the total error which measures how closeness between $\theta_t$ and $\thetas$.

Assuming that $\Delta_{t,j}$'s are small, to show (ii), we essentially view $a_t=\argmax_{a\in\calA}\E_{\theta_t}\eta(\theta_t,a)$ as an approximate update on the real reward $\eta(\thetas, \cdot)$ and show it has local improvements if $a_{t-1}$ is not a critical point of the real reward:
\begin{align}
&\eta(\thetas,a_t)\gtrapprox_{\Delta_t} \E_{\theta_t}\bbra{\eta(\theta_t,a_t)}  \label{equ:poc-1}\\[-3pt]
&\quad\ge ~~~\sup_a\E_{\theta_t}\bbra{\eta(\theta_t,a_{t-1})+\dotp{a-a_{t-1}}{\nabla_a \eta(\theta_t,a_{t-1})}-\frac{\hasup}{2}\normtwo{a-a_{t-1}}^2}\label{equ:poc-2}\\[-3pt]
&\quad\gtrapprox_{\Delta_t} \sup_a\E_{\theta_t}\bbra{\eta(\thetas,a_{t-1})+\dotp{a-a_{t-1}}{\nabla_a \eta(\thetas,a_{t-1})}-\frac{\hasup}{2}\normtwo{a-a_{t-1}}^2}\label{equ:poc-3}\\[-5pt]
&\quad\ge \eta(\thetas,a_{t-1})+\frac{1}{2\hasup}\normtwo{\nabla_a \eta(\thetas,a_{t-1})}^2\label{equ:poc-4}.
\end{align}
Here in equations~\eqref{equ:poc-1} and~\eqref{equ:poc-3}, we use the symbol $\gtrapprox_{\Delta_t}$ to present \textit{informal} inequalities that are true up to some additive errors that depend on $\Delta_t$.   This is because equation~\eqref{equ:poc-1}  holds up to errors related to $\err_{t,1}=\abs{\eta(\theta_t,a_t)-\eta(\thetas,a_t)}$, and equation~\eqref{equ:poc-3} holds up to errors related to $\err_{t,2}=\abs{\eta(\theta_t,a_{t-1})-\eta(\thetas,a_{t-1})}$ and $\err_{t,3}=\normtwo{\ga \eta(\theta_t,a_{t-1})-\ga \eta(\thetas,a_{t-1})}$. Eq.~\eqref{equ:poc-2} is a second-order Taylor expansion around the previous iteration $a_{t-1}$ and utilizes the definition $a_t=\argmax_{a\in\calA}\E_{\theta_t}\eta(\theta_t,a)$. Eq.~\eqref{equ:poc-4} is a standard step to show the first-order improvement of gradient descent (the so-called ``descent lemma''). We also remark that $a_t$ is the maximizer of the expected reward $\E_{\theta_t}\eta(\theta_t,a)$ instead of $\eta(\theta_t,a)$ because the adversary in online learning cannot see $\theta_t$ when choosing adversarial point $a_t$.  

The following lemma formalizes the proof sketch above, and also extends it to considering second-order improvement. 
The proof can be found in Appendix~\ref{app:proof-bandit-improvement}.
\begin{lemma}\label{lem:bandit-improvement}
	In the setting of Theorem~\ref{thm:bandit-main}, when $a_{t-1}$ is \emph{not} an  $(\epsilon,6\sqrt{\tasup\epsilon})$-approximate second order stationary point, we have
	\begin{align}
	\eta(\thetas, a_t)\ge \eta(\thetas, a_{t-1})+\min\pbra{\hasup^{-1}\eps^2/4,\tasup^{-1/2}\epsilon^{3/2}}-C_1\E_{\theta_t\sim p_t}\bbra{\err_t}.
	\end{align}
\end{lemma}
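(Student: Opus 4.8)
The plan is to turn the heuristic chain \eqref{equ:poc-1}--\eqref{equ:poc-4} into a rigorous two-case argument, the cases corresponding to the two ways $a_{t-1}$ can fail to be an $(\epsilon,6\sqrt{\tasup\epsilon})$-approximate stationary point: a large gradient $\normtwo{\ga\eta(\thetas,a_{t-1})}>\epsilon$, or a large positive curvature $\lmax\pbra{\ha\eta(\thetas,a_{t-1})}>6\sqrt{\tasup\epsilon}$. The backbone shared by both cases is the optimality $a_t=\argmax_a\E_{\theta_t\sim p_t}\eta(\theta_t,a)$: for any history-measurable candidate $a'$ (which can therefore be pulled out of the expectation over $\theta_t$) we have $\E_{\theta_t}\eta(\theta_t,a_t)\ge\E_{\theta_t}\eta(\theta_t,a')$. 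I would then pass between virtual and real quantities through the error terms, using $\eta(\thetas,a_t)\ge\E_{\theta_t}\eta(\theta_t,a_t)-\E_{\theta_t}\err_{t,1}$ and $\E_{\theta_t}\eta(\theta_t,a_{t-1})\ge\eta(\thetas,a_{t-1})-\E_{\theta_t}\err_{t,2}$ at the two anchor points, and Jensen's inequality to convert $\err_{t,3},\err_{t,4}$ into bounds on $\normtwo{\E_{\theta_t}\ga\eta(\theta_t,a_{t-1})-\ga\eta(\thetas,a_{t-1})}$ and on the quadratic form of $\E_{\theta_t}\ha\eta(\theta_t,a_{t-1})-\ha\eta(\thetas,a_{t-1})$. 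Feasibility of the candidates is assumed, as the regularized/coercive rewards make the virtual maximization well posed.

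For the large-gradient case I reproduce \eqref{equ:poc-2}--\eqref{equ:poc-4}. Since $\normsp{\ha\eta(\theta,\cdot)}\le\hasup$ (Assumption~\ref{assumption:lipschitz}), every virtual reward obeys the smoothness lower bound; averaging over $\theta_t$ and maximizing the averaged bound over $a$ (the maximizer is $a'=a_{t-1}+\hasup^{-1}\E_{\theta_t}\ga\eta(\theta_t,a_{t-1})$) gives the descent-lemma gain $\tfrac{1}{2\hasup}\normtwo{\E_{\theta_t}\ga\eta(\theta_t,a_{t-1})}^2$. Using $\normtwo{\E_{\theta_t}\ga\eta(\theta_t,a_{t-1})}\ge\epsilon-\E_{\theta_t}\err_{t,3}$ with the unconditional linearization $\normtwo{\cdot}^2\ge\epsilon^2-2\epsilon\,\E_{\theta_t}\err_{t,3}$, and $\epsilon<\normtwo{\ga\eta(\thetas,a_{t-1})}\le\gasup$, the gain is at least $\tfrac{\epsilon^2}{2\hasup}\ge\tfrac{\epsilon^2}{4\hasup}$, while the errors appear with coefficients $(1,1,\gasup/\hasup,0)$ on $(\err_{t,1},\err_{t,2},\err_{t,3},\err_{t,4})$, which combine by Cauchy--Schwarz into at most $C_1\E_{\theta_t}\err_t$.

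For the positive-curvature case the smoothness bound is too crude, so I use a third-order Taylor expansion with the Hessian-Lipschitz remainder: for the candidate $a'=a_{t-1}+rv$, where $v$ is the unit top eigenvector of the real Hessian $\ha\eta(\thetas,a_{t-1})$ with sign chosen so $\dotp{v}{\ga\eta(\thetas,a_{t-1})}\ge0$, every virtual reward satisfies $\eta(\theta_t,a')\ge\eta(\theta_t,a_{t-1})+r\dotp{v}{\ga\eta(\theta_t,a_{t-1})}+\tfrac{r^2}{2}v^\top\ha\eta(\theta_t,a_{t-1})v-\tfrac{\tasup}{6}r^3$. Averaging, replacing the virtual derivatives by real ones (at cost $r\,\E_{\theta_t}\err_{t,3}$ and $\tfrac{r^2}{2}\E_{\theta_t}\err_{t,4}$), bounding the first-order term below via $\dotp{v}{\ga\eta(\thetas,a_{t-1})}\ge0$, and using $v^\top\ha\eta(\thetas,a_{t-1})v=\lmax>6\sqrt{\tasup\epsilon}$ leaves a curvature gain of at least $\tfrac{r^2}{2}\lmax-\tfrac{\tasup}{6}r^3$. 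Choosing the moderate step $r=\sqrt{\epsilon/\tasup}$ makes this exceed $\tfrac{17}{6}\tasup^{-1/2}\epsilon^{3/2}>\tasup^{-1/2}\epsilon^{3/2}$, and the four errors again combine into at most $C_1\E_{\theta_t}\err_t$.

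The delicate point, and the reason I deviate from the textbook negative-curvature step, is the interaction between the step size and the error coefficients in Case 2. The ``optimal'' step $r\sim\lmax/\tasup$ maximizes the raw curvature gain but scales the gradient and Hessian errors by $r$ and $r^2$, which need not be controlled by $C_1$. The moderate choice $r=\sqrt{\epsilon/\tasup}$ fixes this: the threshold $\lmax>6\sqrt{\tasup\epsilon}$ still forces the quadratic term to dominate the cubic remainder (yielding the $\tfrac{17}{6}$ constant), while the hypothesis $\epsilon\le\tasup/16$ gives $r\le\tfrac14$, so that with $c=(1,1,r,\tfrac{r^2}{2})$ the Cauchy--Schwarz factor $\sqrt{\sum_i c_i^2}=\sqrt{2+r^2+r^4/4}$ stays below $2\le C_1$. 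The remaining work---verifying the smoothness and third-order lower bounds from Assumption~\ref{assumption:lipschitz}, the Jensen and Cauchy--Schwarz error bookkeeping, and the squared-gradient linearization---is routine. Combining the two cases, in either situation $\eta(\thetas,a_t)\ge\eta(\thetas,a_{t-1})+\min\pbra{\tfrac{\epsilon^2}{4\hasup},\tasup^{-1/2}\epsilon^{3/2}}-C_1\E_{\theta_t\sim p_t}\err_t$, which is the claim.
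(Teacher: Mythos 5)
Your proof is correct and follows essentially the same route as the paper's: the same two-case split (large gradient vs.\ large positive curvature), the same use of the optimality of $a_t$ against a history-measurable candidate step, the same virtual-to-real transfer through $\err_{t,1},\dots,\err_{t,4}$, and the same Cauchy--Schwarz packaging of the errors into $C_1\E_{\theta_t\sim p_t}\bbra{\err_t}$. The deviations are only cosmetic: in Case 1 the paper steps along the real gradient $g_t$ inside the surrogate at $\thetas$ (rather than along the averaged virtual gradient with your linearization $\normtwo{\cdot}^2\ge \epsilon^2-2\epsilon\,\E\err_{t,3}$), and in its curvature case the paper additionally assumes $\normtwo{g_t}\le\epsilon$ and takes step $4\sqrt{\epsilon/\tasup}$ with remainder $\tfrac{\tasup}{2}r^3$ (gain $12\tasup^{-1/2}\epsilon^{3/2}$), where you fix the eigenvector's sign and take $r=\sqrt{\epsilon/\tasup}$ with $\tfrac{\tasup}{6}r^3$ (gain $\tfrac{17}{6}\tasup^{-1/2}\epsilon^{3/2}$) --- both clear the required $\min\pbra{\hasup^{-1}\epsilon^2/4,\tasup^{-1/2}\epsilon^{3/2}}$ under $\epsilon\le\tasup/16$.
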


Next, we show part (i) by linking the error $\Delta_t$ to the loss function $\ell$ (Eq.~\eqref{equ:bandit-loss}) used by the online learner. The errors $\Delta_{t,1}, \Delta_{t,2}$ are already part of the loss function. 
Let $\emperr_{t,3}=\dotp{\ga \eta(\theta_t,a_{t-1})-\ga \eta(\thetas,a_{t-1})}{u_t}$ and 
$\emperr_{t,4}=\dotp{{\ha \eta(\theta_t,a_{t-1})-\ha \eta(\thetas,a_{t-1})}u_t}{v_t}$ be the remaining two terms (without the clipping) in the loss (Eq.~\eqref{equ:bandit-loss}). Note that $\emperr_{t,3}$ is supposed to bound $\Delta_{t,3}$ because $\E_{u_t}[\emperr_{t,3}^2] = \Delta_{t,3}^2$. Similarly, $\E_{u_t,v_t}[\emperr_{t,4}^2] = \normFsm{\ha \eta(\theta_t,a_{t-1})-\ha \eta(\thetas,a_{t-1})}^2 \ge \Delta_{t,4}^2$. We clip $\emperr_{t,3}$ and $\emperr_{t,4}$ to make them uniformly bounded and improve the concentration with respect to the randomness of $u$ and $v$ (the clipping is conservative and is often not active). Let 
$\emperr_{t}^2=\err_{t,1}^2+\err_{t,2}^2+\mintwo{\ubg^2}{\emperr_{t,3}^2}+\mintwo{\ubh^2}{\emperr_{t,4}^2}$ be the error received by the online learner at time $t$. The argument above can be rigorously formalized into a lemma that upper bound $\Delta_t$ by $\emperr_t$, which will be bounded by the sequential Rademacher complexity.

\begin{lemma}\label{lem:bandit-concentration} 
	By choosing $\ubg=2\gasup$ and $\ubh=640\sqrt{2}\hasup$, we have
	\begin{align}
		\E_{u_{1:T},v_{1:T},\theta_{1:T}}\bbra{\sum_{t=1}^{T}\emperr_{t}^2}\ge \frac{1}{2}\E_{\theta_{1:T}}\bbra{\sum_{t=1}^{T}\err_t^2}.
	\end{align}
\end{lemma}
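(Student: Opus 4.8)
The plan is to reduce the statement to two one-step anti-concentration inequalities---one for the clipped gradient probe and one for the clipped Hessian probe---and then sum over $t$ by conditioning. Fix $t$ and condition on the $\sigma$-field $\calF_{t-1}$ generated by the entire history up to and including the draw $\theta_t\sim p_t$ and the choice of $a_{t-1},a_t$; crucially $u_t,v_t$ are fresh $\calN(0,I)$ samples independent of $\calF_{t-1}$. Writing $g_t=\ga\eta(\theta_t,a_{t-1})-\ga\eta(\thetas,a_{t-1})$ and the symmetric $M_t=\ha\eta(\theta_t,a_{t-1})-\ha\eta(\thetas,a_{t-1})$, the terms $\err_{t,1}^2,\err_{t,2}^2$ appear verbatim in both $\emperr_t^2$ and $\err_t^2$, so it suffices to prove the two conditional bounds $\E_{u_t}\bbra{\mintwo{\ubg^2}{\emperr_{t,3}^2}}\ge\tfrac12\err_{t,3}^2$ and $\E_{u_t,v_t}\bbra{\mintwo{\ubh^2}{\emperr_{t,4}^2}}\ge\tfrac12\err_{t,4}^2$. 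Adding $\err_{t,1}^2+\err_{t,2}^2\ge\tfrac12(\err_{t,1}^2+\err_{t,2}^2)$ yields $\E\bbra{\emperr_t^2\mid\calF_{t-1}}\ge\tfrac12\err_t^2$, and taking total expectation and summing over $t$ gives the claim (the adaptivity is harmless because $\err_t$ is $\calF_{t-1}$-measurable while $u_t,v_t$ are independent of $\calF_{t-1}$).

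For the gradient term, $\emperr_{t,3}=\inner{g_t}{u_t}\sim\calN(0,\err_{t,3}^2)$, so with $Z\sim\calN(0,1)$ we have $\mintwo{\ubg^2}{\emperr_{t,3}^2}=\err_{t,3}^2\mintwo{(\ubg/\err_{t,3})^2}{Z^2}$. Assumption~\ref{assumption:lipschitz} gives $\err_{t,3}\le2\gasup=\ubg$, hence $\ubg/\err_{t,3}\ge1$ and the conditional expectation is at least $\err_{t,3}^2\,\E\bbra{\mintwo{1}{Z^2}}$. A direct one-dimensional Gaussian computation shows $\E\bbra{\mintwo{1}{Z^2}}>\tfrac12$, which closes this term.

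The Hessian term is the main obstacle, since $\emperr_{t,4}=u_t^\top M_t v_t$ is a bilinear Gaussian chaos whose typical magnitude is governed by $\normF{M_t}$, whereas we must lower bound by the \emph{spectral} norm $\err_{t,4}=\normsp{M_t}$. I would avoid symmetrization (which is too lossy to reach the factor $\tfrac12$) and instead condition on $u_t$ alone: then $\emperr_{t,4}\mid u_t\sim\calN(0,\normtwo{M_tu_t}^2)$, so the conditional expectation equals $g(\normtwo{M_tu_t})$ where $g(r)\defeq\E_Z\bbra{\mintwo{\ubh^2}{r^2Z^2}}$. Since $g$ is nondecreasing and $g(r)=r^2\,\E\bbra{\mintwo{(\ubh/r)^2}{Z^2}}\ge r^2\,\E\bbra{\mintwo{1}{Z^2}}$ for $r\le\ubh$ (while $g(r)\ge g(\ubh)$ for $r>\ubh$), one obtains the scale-robust bound $g(r)\ge\E\bbra{\mintwo{1}{Z^2}}\cdot\mintwo{r^2}{\ubh^2}$ valid for all $r$. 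Letting $w$ be a top unit eigenvector of $M_t$, the single-direction estimate $\normtwo{M_tu_t}^2\ge\err_{t,4}^2(w^\top u_t)^2$ with $w^\top u_t\sim\calN(0,1)$ retains the spectral norm; taking $\E_{u_t}$ then lower bounds the Hessian term by $\E\bbra{\mintwo{1}{Z^2}}\cdot\err_{t,4}^2\,\E\bbra{\mintwo{Z^2}{(\ubh/\err_{t,4})^2}}$.

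The role of the large clipping constant is exactly to finish this last estimate: $\err_{t,4}\le2\hasup$ forces the clip ratio $\ubh/\err_{t,4}\ge320\sqrt2$, so the truncation in $\E\bbra{\mintwo{Z^2}{(\ubh/\err_{t,4})^2}}$ is negligibly close to $\E[Z^2]=1$, and together with $\E\bbra{\mintwo{1}{Z^2}}>\tfrac12$ this comfortably delivers the required $\tfrac12\err_{t,4}^2$; the choice $\ubh=640\sqrt2\,\hasup$ guarantees this margin. I expect the crux to be precisely the conditioning-on-$u_t$ step combined with the top-eigendirection bound, as this is what lets us replace the bilinear form by a one-dimensional Gaussian \emph{without} collapsing the spectral norm into the Frobenius norm; the rest is bookkeeping---verifying $\err_{t,3}\le\ubg$ and $\err_{t,4}\le2\hasup$ from Assumption~\ref{assumption:lipschitz} and assembling the conditional bounds into the stated sum.
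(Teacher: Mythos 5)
Your proof is correct, and its crux is genuinely different from the paper's. The skeleton---conditioning so that $u_t,v_t$ are fresh Gaussians, splitting off $\err_{t,1},\err_{t,2}$, proving the two clipped one-step bounds, then using the tower property to handle adaptivity---matches the paper, and your gradient-term argument (rescale and use $\E\bbra{\mintwo{1}{Z^2}}>\tfrac12$) is exactly the paper's. The difference is the Hessian term. The paper goes through the Frobenius norm: it uses $\E\bbra{(u_t^\top H_t v_t)^2}=\normF{H_t}^2\ge\normsp{H_t}^2$ and controls the clipping via its Lemma~\ref{lem:concentration-matrix-min}, whose proof is a case analysis on $\normF{H_t}$ versus $\ubh/40$ invoking a Hanson--Wright-type tail bound for the bilinear Gaussian chaos (Lemma~\ref{lem:square-bilinear-tail-bound}) in one case and Laurent--Massart anti-concentration (Lemma~\ref{lem:square-bilinear-anti-concentration}) in the other. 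You avoid all of that machinery: conditioning on $u_t$ makes the probe $\calN(0,\normtwo{H_tu_t}^2)$, your scale-robust bound $g(r)\ge\E\bbra{\mintwo{1}{Z^2}}\mintwo{r^2}{\ubh^2}$ is elementary, and the top-eigendirection estimate $\normtwo{H_tu_t}^2\ge\normsp{H_t}^2(w^\top u_t)^2$ retains the spectral norm, so the outer expectation is again a one-dimensional Gaussian truncation that is nearly lossless because $\ubh/\err_{t,4}\ge 320\sqrt{2}$. What each approach buys: the paper's route establishes the stronger, reusable fact that clipping preserves the capped \emph{Frobenius} norm (more than the lemma needs), at the price of genuine concentration arguments; yours is shorter, needs only one-dimensional Gaussian facts, and is immune to a constant-factor slip in the paper's final step, where $\mintwo{\hasup^2}{\normsp{H_t}^2}=\normsp{H_t}^2$ is asserted although the triangle inequality only guarantees $\normsp{H_t}\le 2\hasup$---your argument only requires $\err_{t,4}\le 2\hasup$, which is what actually holds.
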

We defer the proof to Appendix~\ref{app:proof-bandit-concentration}.
With Lemma~\ref{lem:bandit-improvement} and Lemma~\ref{lem:bandit-concentration}, we can prove Theorem~\ref{thm:bandit-main} by keeping track of the performance $\eta(\thetas,a_t)$. %
The full proof can be found in Appendix~\ref{app:proof-bandit-main}.

\section{Model-based Reinforcement Learning}\label{sec:main-results-rl}

In this section, we extend the results in Section~\ref{sec:main-results-bandit} to model-based reinforcement learning with deterministic dynamics and reward function. 

We can always view a model-based reinforcement learning problem with parameterized dynamics and policy as a nonlinear bandit problem in the following way. The policy parameter $\psi$ corresponds to the action $a$ in bandit, and the dynamics parameter $\theta$ corresponds to the model parameter $\theta$ in bandit. The expected total return $\eta(\theta,\psi) = \E_{s_1\sim \mu_1}V^{\pi_\psi}_{\dy_\theta}(s_1)$ is the analogue of reward function in bandit. We intend to make the same regularity assumptions on $\eta$ as in the bandit case (that is, Assumption~\ref{assumption:lipschitz}) with $a$ being replaced by $\psi$. However, when the policy is deterministic, the reward function $\eta$ has Lipschitz constant with respect to $\psi$ that is exponential in $H$ (even if dynamics and policy are both deterministic with good Lipschitzness). This prohibits efficient optimization over policy parameters. Therefore we focus on \textit{stochastic policies} in this section, for which we expect $\eta$ and its derivatives to be Lipschitz with respect to $\psi$. 

Blindly treating RL as a bandit only utilizes the reward but not the state observations. %
In fact, one major reason why model-based methods are more sample efficient is that it supervises the learning of dynamics by state observations. 
To reason about the learning about local steps and the dynamics, 
we make the following additional Lipschitzness of value functions w.r.t to the states and Lipschitzness of policies w.r.t to its parameters, beyond those assumptions for the total reward $\eta(\theta,\psi)$ in Assumption~\ref{assumption:lipschitz}. 

\begin{assumption}\label{assumption:RL-lipschitz}
	We assume the following (analogous to Assumption~\ref{assumption:lipschitz}) on the value function: $\forall \psi\in \Psi,\theta\in \Theta,s,s'\in \calS$ we have
	\begin{itemize}
		\item $\abssm{\V(s)-\V(s')}\le L_0\normtwosm{s-s'}$;
		\item $\normtwosm{\gpsi \V(s)-\gpsi \V(s')}\le L_1\normtwosm{s-s'}$;
		\item$\normspsm{\hpsi \V(s)-\hpsi \V(s')}\le L_2\normtwosm{s-s'}.$
	\end{itemize}
\end{assumption}
\begin{assumption}\label{assumption:RL-smoothness}
	We assume the following Lipschitzness assumptions on the stochastic policies parameterization $\pi_\psi$.\footnote{Recall that the injective norm of a $k$-th order tensor $A\in {\R^{d}}^{\otimes k}$ is defined as $\normsp{A^{\otimes k}}=\sup_{u\in S^{d-1}}\dotp{A}{u^{\otimes k}}.$ } 
	\begin{itemize}
		\item $\normspsm{\E_{a\sim \pi_\psi(\cdot\mid s)}[(\gpsi \log \pi_\psi(a\mid s))(\gpsi \log \pi_\psi(a\mid s))^\top]}\le \Pg;$
		\item $\normspsm{\E_{a\sim \pi_\psi(\cdot\mid s)}[(\gpsi \log \pi_\psi(a\mid s))^{\otimes 4}]}\le \Pt;$
		\item $\normspsm{\E_{a\sim \pi_\psi(\cdot\mid s)}[(\hpsi \log \pi_\psi(a\mid s))(\hpsi \log \pi_\psi(a\mid s))^\top]}\le \Ph.$
	\end{itemize}
\end{assumption}

Our results will depend polynomially on the parameters $L_0, L_1, L_2, \Pg,\Pt$ and $\Ph$.  To demonstrate that the Assumption~\ref{assumption:RL-lipschitz} and~\ref{assumption:RL-smoothness} can contain interesting RL problems with nonlinear models and stochastic policies, we give the following example where these parameters are all on the order of $O(1)$.

\begin{example}\label{example:1}
Let state space $\calS$ be the unit ball in $\R^d$ and action space $\calA$ be $\R^d$.  The (deterministic) dynamics $T$ is given by $\dy(s,a)=\NN_\theta(s+a),$ where $\NN$ is a nonlinear model parameterized by $\theta$, e.g., a neural network. 
We assume that $\theta$ belongs to a finite hypothesis class $\Theta$ that satisfies $\normtwo{\NN_\theta(s+a)}\le 1$ for all $\theta\in \Theta,s\in\calS,a\in\calA.$
Assume that the reward function $r(s,a)$ is $L_r$-Lipschitz w.r.t $\ell_2$-norm, that is, satisfying $\abs{r(s_1,a_1)-r(s_2,a_2)}\le L_r(\normtwo{s_1-s_2}+\normtwo{a_1-a_2}).$ 
We consider a family of stochastic Gaussian policies with the mean being linear in the state: $\pi_\psi(s)=\calN(\psi s,\sigma^2 I)$, parameterized by $\psi\in \R^{d\times d}$ with $\normop{\psi}\le 1$. We consider $\sigma\in (0,1)$ as a small constant on the order of 1.\footnote{At the first sight, the noise level appears to be quite large because the norm of the noise in the action dominates the norm of the mean. However, this can make sense because the model $\NN_\theta$ can average out the noise by, e.g., taking a weighted sum of its input $s+a$ first before doing other computations. In other words, the scaling of the noise here implicitly assumes that the model $\NN_\theta$ should typically average out the noise by looking at the all the coordinates.}

In this setting, Assumption~\ref{assumption:lipschitz}, ~\ref{assumption:RL-lipschitz}, and~\ref{assumption:RL-smoothness} hold with  all parameters $\gasup, \hasup,\tasup, L_0, L_1, L_2, \Pg,\Pt$ and $\Ph$ bounded by $\poly(\sigma,1/\sigma,H,L_r)$. 
\end{example}
The proofs for the bounds on the Lipschitz parameters are highly nontrivial and deferred to Section~\ref{app:rl-instance}. 

We will show that the difference of gradient and Hessian of the total reward can be upper-bounded by the difference of dynamics. Let $\tau_t=(s_1,a_1,\cdots,s_H,a_H)$ be a trajectory sampled from policy $\pi_{\psi_{t}}$ under the ground-truth dynamics $\dy_{\thetas}.$ 
Similarly to~\cite{yu2020mopo}, using the simulation lemma and Lipschitzness of the value function, we can easily upper bound $\Delta_{t,1} = 	|\eta(\theta_t, \psi_t)-\eta(\thetas, \psi_t)|$ by the one-step model prediction errors.  
Thanks to policies' stochasticity, using the REINFORCE formula, we can also bound  the gradient  errors  by the model errors:
\begin{align*}
&\err_{t,3}^2=\normtwo{\gpsi \eta(\theta_t, \psi_{t-1})-\gpsi \eta(\thetas, \psi_{t-1})}^2\lesssim\E_{\tau\sim \rho^{\psi_{t-1}}_{\thetas}}\bbra{\sum_{h=1}^{H}\normtwo{\dy_{\theta_t}(s_h,a_h)-\dy_{\thetas}(s_h,a_h)}^2}.
\end{align*}
Similarly, we can upper bound the Hessian errors by the errors of dynamics. As a result, the loss function simply can be set to
\begin{align}\label{equ:rl-loss}
	\ell((\tau_t,\tau_t');\theta)=\hspace{-10pt}\sum_{(s_h,a_h)\in \tau_t}\normtwo{\dy_{\theta}(s_h,a_h)-\dy_{\thetas}(s_h,a_h)}^2+\hspace{-10pt}\sum_{(s_h',a_h')\in \tau_t'}\normtwo{\dy_{\theta}(s_h',a_h')-\dy_{\thetas}(s_h',a_h')}^2
\end{align}
for two trajectories $\tau,\tau'$ sampled from policy $\pi_{\psi_t}$ and $\pi_{\psi_{t-1}}$ respectively.
Compared to Alg.~\ref{alg:ban}, the loss function is here simpler without  relying on finite difference techniques to query gradients projections. 
Our algorithm for RL is analogous to Alg.~\ref{alg:ban} by using the loss function in Eq.~\eqref{equ:rl-loss}. Our algorithm is presented in Alg.~\ref{alg:rl} in Appendix~\ref{app:rl}. 
Main theorem for Alg.~\ref{alg:rl} is shown below.
\begin{theorem}\label{thm:rl-main} 
	Let $c_1=HL_0^2(4H^2\Ph+4H^4\Pt+2H^2\Pg+1)+HL_1^2(8H^2\Pg+2)+4HL_2^2$ and $C_1=2+\frac{\gasup}{\hasup}$. Let $\srcrl_T$ be the sequential Rademacher complexity for the loss function defined in Eq.~\eqref{equ:rl-loss}. 
	Under Assumption~\ref{assumption:lipschitz}-\ref{assumption:RL-smoothness}, for any $\epsilon\le \mintwo{1}{\frac{\tasup}{16}},$ we can bound the  $(\epsilon,6\sqrt{\tasup\epsilon})$-regret of Alg.~\ref{alg:rl} by
	\begin{equation}
		\E\bbra{\reg_{\epsilon,6\sqrt{\tasup\epsilon}}(T)}\le \pbra{1+C_1\sqrt{2c_1T\srcrl_T}}\maxtwo{2\hasup\epsilon^{-2}}{\sqrt{\tasup}\epsilon^{-3/2}}.
	\end{equation}
\end{theorem}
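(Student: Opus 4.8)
The plan is to follow the skeleton of the proof of Theorem~\ref{thm:bandit-main}, exploiting the reduction in which the policy parameter $\psi$ plays the role of the action and the expected return $\eta(\theta,\psi)=\E_{s_1\sim\mu_1}\V(s_1)$ plays the role of the reward. The improvement half of the argument transfers with essentially no change: since Assumption~\ref{assumption:lipschitz} is postulated for $\eta(\theta,\psi)$ (with $a$ replaced by $\psi$) and the RL algorithm selects $\psi_t$ greedily as $\argmax_\psi\E_{\theta_t\sim p_t}\eta(\theta_t,\psi)$, Lemma~\ref{lem:bandit-improvement} applies under the substitution $a_t\mapsto\psi_t$, giving $\eta(\thetas,\psi_t)\ge\eta(\thetas,\psi_{t-1})+\min(\hasup^{-1}\epsilon^2/4,\tasup^{-1/2}\epsilon^{3/2})-C_1\E_{\theta_t\sim p_t}[\err_t]$ whenever $\psi_{t-1}$ is not an $(\epsilon,6\sqrt{\tasup\epsilon})$-approximate second-order stationary point. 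Thus the only genuinely new work is an RL analogue of the concentration Lemma~\ref{lem:bandit-concentration}: controlling the total error $\err_t^2=\err_{t,1}^2+\err_{t,2}^2+\err_{t,3}^2+\err_{t,4}^2$ by the dynamics-prediction loss of Eq.~\eqref{equ:rl-loss}.

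First I would bound each $\err_{t,i}$ by per-step dynamics errors accumulated along a trajectory drawn under the real dynamics $\dy_{\thetas}$. For the return errors $\err_{t,1},\err_{t,2}$ I would invoke the simulation lemma together with the $L_0$-Lipschitzness of $\V$ in the state (Assumption~\ref{assumption:RL-lipschitz}), which telescopes the value gap into a sum of one-step prediction errors $\normtwosm{\dy_{\theta_t}(s_h,a_h)-\dy_{\thetas}(s_h,a_h)}$. For the gradient error $\err_{t,3}$ I would differentiate the return through the REINFORCE identity, writing $\gpsi\eta$ as an expectation of score-weighted values; the difference between two dynamics then splits into the change of the value-to-go evaluated along the trajectory and the change of the visitation distribution, each Lipschitz in the dynamics through $L_0,L_1$ and the score second-moment bound $\Pg$ from Assumption~\ref{assumption:RL-smoothness}. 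Summing Lipschitz constants over the $H$ steps produces the $HL_0^2(\cdots)+HL_1^2(8H^2\Pg+2)$ contributions to $c_1$.

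The crux is the Hessian error $\err_{t,4}$: differentiating the REINFORCE formula a second time yields, besides first-order terms, a score-squared term, a term in the second-order score $\hpsi\log\pi_\psi$, and cross terms coupling pairs of time steps. Bounding their injective norms requires all three moment bounds $\Pg,\Ph,\Pt$ of Assumption~\ref{assumption:RL-smoothness} and the full Lipschitz triple $L_0,L_1,L_2$ of Assumption~\ref{assumption:RL-lipschitz}; the pairwise time-step coupling is what generates the $4H^2\Ph+4H^4\Pt+2H^2\Pg$ factor inside $c_1$. Carefully matching these constants is the main obstacle, since one must track the injective-norm bounds through a double expansion over the horizon without losing more than polynomially many factors of $H$.

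Having established $\err_t^2\le c_1\,\E[\ell((\tau_t,\tau_t');\theta_t)]$ in expectation over the trajectory randomness (with $\tau_t\sim\pi_{\psi_t}$ controlling $\err_{t,1}$ and $\tau_t'\sim\pi_{\psi_{t-1}}$ controlling $\err_{t,2},\err_{t,3},\err_{t,4}$), I would invoke realizability: at $\theta=\thetas$ the loss in Eq.~\eqref{equ:rl-loss} vanishes identically, so the comparator in the online regret is zero and the online learner guarantees $\sum_{t=1}^T\E[\ell_t]\le\olreg_T\le 2\srcrl_T$. Hence $\sum_t\E_{\theta_t\sim p_t}[\err_t^2]\le 2c_1\srcrl_T$, and by Jensen and Cauchy--Schwarz $\sum_t\E[\err_t]\le\sqrt{2c_1 T\srcrl_T}$. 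Finally I would telescope the improvement bound exactly as in Theorem~\ref{thm:bandit-main}: on rounds where $\psi_{t-1}\notin\sosp_{\epsilon,6\sqrt{\tasup\epsilon}}$ the return strictly increases by the $\min(\cdots)$ margin up to the error $C_1\err_t$, and since $\eta\in[0,1]$ the accumulated local regret is bounded by dividing the summed error budget by this margin, yielding the stated prefactor $1+C_1\sqrt{2c_1 T\srcrl_T}$ times the $\max(2\hasup\epsilon^{-2},\sqrt{\tasup}\epsilon^{-3/2})$ factor.
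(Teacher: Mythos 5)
Your proposal is correct and follows essentially the same route as the paper: reuse the bandit improvement lemma verbatim with $\psi$ in place of $a$, prove an RL analogue of the concentration lemma by combining the simulation (telescoping) lemma with the policy gradient identity (applied once for the gradient error and twice for the Hessian error, using $L_0,L_1,L_2$ and $\Pg,\Ph,\Pt$ to absorb the score-function moments), then feed the resulting bound $\err_t^2\le c_1\E[\ell_t]$ into the realizability-based online-regret argument and the same induction as in Theorem~\ref{thm:bandit-main}. This is exactly the structure of the paper's proof via Lemma~\ref{lem:rl-concentration}, so no gap.
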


\paragraph{Instantiation of Theorem~\ref{thm:rl-main} on Example~\ref{example:1}.} Applying Theorem~\ref{thm:rl-main} to the Example~\ref{example:1} with $\sigma=\Theta(1)$ we get the local regret guarantee\footnote{Since local regret is monotonic w.r.t. $\epsilon$, we can rescale $\epsilon$ to get a $(\epsilon,\sqrt{\epsilon})$-local regret bound by paying an additional $\sqrt{\tasup}$ factor.}: 
\begin{equation}
\E\bbra{\reg_{\epsilon,\sqrt{\epsilon}}(T)}=\bigO\pbra{\poly(\sigma,1/\sigma,H,L_r)T^{3/4}\log|\Theta|^{1/4}\epsilon^{-2}},\label{eqn:3}
\end{equation}

\paragraph{Comparison with policy gradient. } 
To the best of our knowledge, the best analysis for policy gradient~\citep{williams1992simple} shows convergence to a local maximum with a sample complexity that depends polynomially on $\normtwo{\gpsi \log\pi_\psi(a\mid s)}$ \citep{agarwal2019theory}.  For the instance in in Example~\ref{example:1}, this translates to a sample complexity guarantee on the order of $\sqrt{d}/\sigma$. In contrast, our local regret bound in equation ~\eqref{eqn:3} is independent of the dimension $d$. Instead, our bound depends on the complexity of the model family $\Theta$ which could be much smaller than the ambient dimension---this demonstrates that we leverage the model extrapolation.

More generally, %
letting $g(s,a)=\gpsi \log\pi_\psi (a\mid s)$, the variance of REINFORCE estimator is given  by $\E[\normtwo{g(s,a)}^2]$, which eventually shows up in the sample complexity bound. 
In contrast, our bound depends on $\normop{\E[g(s,a)g(s,a)^\top]}$. 
The difference between $\E[\normtwo{g(s,a)}^2]$ and $\normop{\E[g(s,a)g(s,a)^\top]}$ can be as large as a factor of $\da$ when $g(s,a)$ is isotropic. It's possible that our bound is dimension-free and the bound for policy gradient is not (e.g., as in Example~\ref{example:1}). 
We can also consider a more general Gaussian stochastic policy in deep RL~\citep{schulman2017proximal,schulman2015trust}: $\pi_\psi(s)\sim \mu_\psi(s)+\calN(0,\sigma^2I),$ where $\mu_\psi$ is a neural network and $\sigma$ is a constant . We have $g(s,a)=\frac{\partial\mu_\psi(s)}{\partial \psi}\frac{1}{\sigma^2}(\mu_\psi(s)-a).$ It follows that if $\normsp{\frac{\partial\mu_\psi(s)}{\partial \psi}} \approx 1$, then $\E[\normtwo{g(s,a)}^2]\approx \da$. On the other hand, %
$\normop{\E_a[g(s,a)g(s,a)^\top]}$ can be bounded by $O(1)$ if  $g(s,a)$ is isotropic.

\section{Lower Bounds}\label{sec:lowerbounds}
We prove several lower bounds to show (a) the hardness of finding global maxima, (b) the inefficiency of using optimism in nonlinear bandit, and (c) the hardness of stochastic environments.

\paragraph{Hardness of Global Optimality.}
In the following theorem, we show it statistically intractable to find the global optimal policy when the function class is chosen to be the neural networks with ReLU activation. That is, the reward function can be written in the form of $\eta((w,b),a)=\mathrm{ReLU}(\dotp{w}{a}+b).$ Note that the reward function can also be made smooth by replacing the activation by a smoothed version. For example, $\eta((w,b),a)=\mathrm{ReLU}(\dotp{w}{a}+b)^2$. We visualize the reward function in Fig.~\ref{fig:local-1}.

\begin{theorem} \label{thm:lowerbound-sample-comp}
    When the function class is chosen to be one-layer neural networks with ReLU activation, the minimax sample complexity is $\Omega(\varepsilon^{-(d-2)}).$
\end{theorem}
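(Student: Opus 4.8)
The plan is to prove this minimax lower bound by a needle-in-a-haystack construction combined with Yao's principle, reducing the minimax sample complexity to a Bayesian lower bound against a uniform prior over a large packing of ``spike'' rewards. First I would take the action set $\calA$ to be the unit ball (or sphere) in $\R^d$ and fix a maximal $\varepsilon$-separated set of directions $w_1,\dots,w_N$ on the unit sphere; by a standard volume/packing bound $N\ge \Omega(\varepsilon^{-(d-2)})$ (in fact one can get $\varepsilon^{-(d-1)}$, so the stated exponent leaves slack for constants and boundary effects). To each direction I would associate the one-layer ReLU reward $\eta_i(a)=\mathrm{ReLU}(\langle w_i,a\rangle + b)$ with bias $b=-\cos\alpha$ and weight norm scaled so the reward stays in $[0,1]$, where the cap half-angle $\alpha\lesssim\varepsilon$ is calibrated so that three things hold: (i) the active region $C_i=\{a:\langle w_i,a\rangle>\cos\alpha\}$ is a small spherical cap around the peak $w_i$; (ii) the caps $C_1,\dots,C_N$ are pairwise disjoint, which the $\varepsilon$-separation guarantees once $\alpha$ is a small multiple of $\varepsilon$; and (iii) the peak value $\eta_i(w_i)$ exceeds $2\varepsilon$, so any action outside $C_i$ is more than $\varepsilon$-suboptimal for instance $i$. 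Each $\eta_i$ is \emph{exactly} a one-layer ReLU net, and the smoothed variant $\mathrm{ReLU}(\cdot)^2$ is handled identically.

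The crucial observation driving the indistinguishability argument is that every reward is identically zero outside its own cap, and the caps are disjoint, so any fixed action $a$ receives nonzero reward under at most one instance. Drawing the instance index $I$ uniformly from $[N]$ and considering any adaptive, possibly randomized algorithm, I would argue that, conditioned on the algorithm's internal randomness, its query--observation transcript is \emph{identical} across all instances up to the first time it queries a point inside the true cap $C_I$: until such a hit, every observed reward equals $0$ regardless of $I$. Consequently the query sequence $a_1,\dots,a_T$ is, up to the first hit, a fixed instance-independent sequence; being $T$ points, it intersects at most $T$ of the $N$ disjoint caps. Averaging over the uniform $I$ then gives $\Pr[\exists\, t\le T:\ a_t\in C_I]\le T/N$.

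To conclude, on the event that no query lands in $C_I$ (probability at least $1-T/N$) the algorithm's output is independent of which of the unqueried caps is the true one; since the peaks $w_i$ are $\varepsilon$-separated, a single output can be $\varepsilon$-optimal for at most one instance, so the algorithm fails to return an $\varepsilon$-optimal action for all but a $1/N$ fraction of these instances. Hence the Bayesian failure probability is at least $(1-T/N)(1-1/N)\ge 1/2$ whenever $T\le N/4$, and by Yao's principle the same bound applies to the minimax risk of any randomized algorithm. Constant success probability therefore forces $T=\Omega(N)=\Omega(\varepsilon^{-(d-2)})$.

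The hard part will be the \emph{simultaneous} calibration of the cap angle $\alpha$, the weight norm $\|w_i\|$/bias $b$, and the $\varepsilon$-optimality gap so that conditions (i)--(iii) all hold while keeping rewards in $[0,1]$ and the caps genuinely disjoint; this balancing is precisely what pins down the exponent, and the conservative $d-2$ (rather than $d-1$) comfortably absorbs the packing constant and the margin needed to keep caps separated. A secondary subtlety I would need to address is that a query landing inside $C_I$ reveals only one linear functional of $w_I$ rather than $w_I$ itself; but since the argument only needs an \emph{upper} bound on the probability of \emph{any} informative hit, this is sidestepped entirely by the $T/N$ union bound, which already charges the algorithm a success for every cap it touches.
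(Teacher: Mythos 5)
Your proposal is correct for the theorem as stated, and although it rests on the same needle-in-a-haystack construction as the paper (spike rewards supported on disjoint caps, plus the observation that an all-zero reward transcript is uninformative), the route is genuinely different in two ways. First, the paper never argues about PAC success directly: it proves a regret lower bound $\Omega(T^{(d-2)/(d-1)})$ for the instance class $\eta(I_{\theta,\varepsilon},x)=\max\{\langle x,\theta\rangle-1+\varepsilon,\,0\}$ with $\|\theta\|_2\le 1$, and then appeals to the regret-to-sample-complexity reduction; your direct Bayesian argument (uniform prior, first-hit coupling, Yao) reaches the sample-complexity statement without that detour. Second, and more substantively, your calibration decouples cap width from peak height by letting the weight norm grow: caps of angular radius $\Theta(\varepsilon)$ whose peaks exceed $2\varepsilon$ force $\|w_i\|=\Theta(1/\varepsilon)$. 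This is exactly the point on which the paper's own proof is loose: with unit-norm weights, a spike of height $\varepsilon$ has support of Euclidean radius $\sqrt{2\varepsilon}$, so an $\varepsilon$-packing does \emph{not} make the supports pairwise disjoint (two centers at distance $\varepsilon$ are both activated by their normalized midpoint), and repairing this inside the unit-norm class requires a $\Theta(\sqrt{\varepsilon})$-scale packing, which only yields an $\varepsilon^{-(d-1)/2}$-type bound. Your weight scaling is therefore what actually delivers the exponent $d-2$ (indeed $d-1$) cleanly — but you should state it explicitly, because it means your hard instances live in the unconstrained ReLU class of the theorem statement rather than in the bounded-weight-norm subclass used by the paper's construction and emphasized elsewhere in the paper; under a hard norm bound your construction is unavailable and the achievable exponent degrades.
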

 We can also prove that the eluder dimension of the constructed reward function class is exponential. 
\begin{theorem}\label{thm:lowerbound-eluder}  The  $\varepsilon$-eluder dimension of one-layer neural networks is at least $\Omega(\varepsilon^{-(d-1)}).$
\end{theorem}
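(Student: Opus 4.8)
The plan is to exhibit an explicit sequence of $n = \Omega(\varepsilon^{-(d-1)})$ actions that is $\varepsilon$-independent in the sense of the Eluder dimension, i.e., for each $i$ there is a pair of one-layer nets agreeing (in the $\ell_2$ sense) to within $\varepsilon$ on $a_1,\dots,a_{i-1}$ yet differing by more than $\varepsilon$ at $a_i$. I would reuse the localized ``bump'' reward functions underlying Theorem~\ref{thm:lowerbound-sample-comp}: a steep ReLU $f_v(a) = \mathrm{ReLU}(\langle w, a\rangle + b)$ with $w = \lambda v$ and $b = -\lambda\cos\gamma$ is supported exactly on the spherical cap of angular radius $\gamma$ around a unit vector $v$, attains peak value $\lambda(1-\cos\gamma)$ at $a = v$, and vanishes outside the cap. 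First I would fix a maximal $\gamma$-separated subset $\{v_1,\dots,v_n\}$ of the unit sphere $S^{d-1}$; by the standard volumetric packing bound its size is $n = \Omega(\gamma^{-(d-1)})$.

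Next I would take the evaluation points to be $a_i = v_i$ and, for each $i$, the witnessing pair $(f_{v_i}, 0)$, where the zero function is realized by any ReLU with a sufficiently negative bias. Because the $v_j$ are pairwise $\gamma$-separated, every $v_j$ with $j\neq i$ lies outside the active cap of $f_{v_i}$, so $f_{v_i}(v_j) = 0$ exactly; hence $\sqrt{\sum_{j<i} (f_{v_i}(a_j) - 0)^2} = 0 \le \varepsilon$, while $\abssm{f_{v_i}(a_i) - 0} = \lambda(1 - \cos\gamma)$. Choosing $\lambda$ so that this peak equals $2\varepsilon$ makes $a_i$ $\varepsilon$-independent of all its predecessors, irrespective of the ordering, so the whole sequence certifies an $\varepsilon$-Eluder dimension of at least $n$.

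The crux is the joint choice of the cap width $\gamma$ and the slope $\lambda$: the bump must be simultaneously tall enough (peak $> \varepsilon$) and narrow enough (radius $\gamma$ no larger than the net spacing, so neighbors are exactly excluded). These two demands force $\lambda(1-\cos\gamma) = 2\varepsilon$ with $1 - \cos\gamma \approx \gamma^2/2$, i.e., $\lambda \approx 4\varepsilon/\gamma^2$. Taking $\gamma = \Theta(\varepsilon)$ yields caps of radius $\Theta(\varepsilon)$ --- hence $n = \Omega(\varepsilon^{-(d-1)})$ --- at the price of a large slope $\lambda = \Theta(1/\varepsilon)$. This steepness is exactly what distinguishes the claimed bound from the weaker $\varepsilon^{-(d-1)/2}$ that one obtains with $O(1)$-norm weights (where caps must have radius $\sqrt{\varepsilon}$ to clear the height threshold), and I expect the main point needing care to be verifying that such steep bumps are admissible members of the function class and that their range stays in $[0,1]$, which holds since the peak equals $2\varepsilon \le 1$. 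The smoothed variant $\mathrm{ReLU}(\cdot)^2$ mentioned in the theorem statement can be handled identically after rescaling the peak to absorb the squaring.
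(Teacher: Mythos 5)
Your construction follows the same skeleton as the paper's proof of Theorem~\ref{thm:lowerbound-eluder}: a packing of $S^{d-1}$ whose points serve simultaneously as bump centers and as the evaluated actions, with a witnessing pair of one-layer nets that agree \emph{exactly} on all other centers and differ by $\Theta(\varepsilon)$ at the current one. The cosmetic difference is the witnessing pair (you use $(f_{v_i},0)$, the paper uses consecutive bumps $(f_i,f_{i+1})$); the substantive difference is the parameter regime, and here your version is actually the more careful one. The paper keeps unit-norm weights, takes $f_i(a)=\mathrm{ReLU}(\langle \theta_i,a\rangle -1+\varepsilon)$, and pairs this with an $\varepsilon$-packing $\{\theta_1,\dots,\theta_n\}$, $n=\Omega(\varepsilon^{-(d-1)})$. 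But for unit vectors at distance $\varepsilon$ one has $\langle\theta_i,\theta_j\rangle = 1-\varepsilon^2/2 > 1-\varepsilon$, so $f_i(\theta_j)=\varepsilon-\varepsilon^2/2\neq 0$: the identities $f_i(\theta_j)=f_{i+1}(\theta_j)$ and $f_{i+1}(\theta_i)=0$ asserted in the paper's proof fail at that packing scale. They hold only once the separation is at least $\sqrt{2\varepsilon}$, and a $\sqrt{2\varepsilon}$-packing has size $\Omega(\varepsilon^{-(d-1)/2})$, not $\Omega(\varepsilon^{-(d-1)})$. Your steep-bump regime (slope $\lambda=\Theta(1/\varepsilon)$, cap of angular radius $\gamma=\Theta(\varepsilon)$, peak $\lambda(1-\cos\gamma)=2\varepsilon$) is exactly what removes this mismatch: the caps are genuinely disjoint at packing scale $\varepsilon$ while the peak still clears the independence threshold, so the full exponent $\varepsilon^{-(d-1)}$ goes through.

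The one point you flagged but must actually resolve is class membership, and it is not a formality. Your bumps have weight norm $\|w\|_2=\lambda=\Theta(1/\varepsilon)$, whereas the class $\calF$ the paper works with (inherited from Appendix~\ref{app:proof-sample-comp}) is $\{a\mapsto \mathrm{ReLU}(\langle\theta,a\rangle-1+\varepsilon):\|\theta\|_2\le 1,\varepsilon>0\}$, and the introduction emphasizes one-layer nets \emph{with bounded weight norms}. So as written, your argument proves the $\Omega(\varepsilon^{-(d-1)})$ bound for the class of one-layer ReLU nets whose weight norm may grow like $1/\varepsilon$ (equivalently, for weight norm $B$ the construction gives $\Omega((B/\varepsilon)^{(d-1)/2})$, recovering the claim at $B=\Theta(1/\varepsilon)$); it does not establish it for the unit-norm class. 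Conversely, for the unit-norm class the scale mismatch above means this style of construction only yields $\Omega(\varepsilon^{-(d-1)/2})$ --- still exponential in $d$, hence sufficient for the paper's qualitative message, but a weaker exponent than stated. Make the norm bound of your hypothesis class explicit; with that stipulation your proof is complete, and it doubles as a diagnosis of where the paper's own proof is loose.
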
 
 This result is concurrently established by \citet[Theorem 8]{li2021eluder}. The proofs of both theorems are deferred to Appendices~\ref{app:proof-sample-comp} and~\ref{app:proof-hardness-eluder}, respectively. We also note that Theorem~\ref{thm:lowerbound-sample-comp} does require ReLU activation, because if the ReLU function is replaced by a \textit{strictly} monotone link function with bounded derivatives (up to third order), then this is the setting of deterministic generalized linear bandit problem, which does allow a global regret that depends polynomially on dimension \citep{filippi2010parametric,dong2019performance,li2017provably}.
In this case, our Theorem~\ref{thm:bandit-main} can also give polynomial global regret result: because all local maxima of the reward function is global maximum~\citep{hazan2015beyond,kakade2011efficient} and it also satisfies the strict-saddle property~\citep{ge2015escaping}, the local regret result translates to a global regret result. This shows that our framework does separate the intractable cases from the tractable by the notions of local and global regrets.

With two-layer neural networks, we can relax the use of ReLU activation---Theorem~\ref{thm:lowerbound-eluder} holds with two-layer neural networks and leaky-ReLU activations~\citep{xu2015empirical} because $O(1)$ leaky-ReLU can implement a ReLU activation. We conjecture that with more layers, the impossibility result also holds for a broader sets of activations.  

\begin{figure}
	\centering
	\begin{subfigure}[b]{0.3\textwidth}
		\centering
		\includegraphics[width=.8\textwidth]{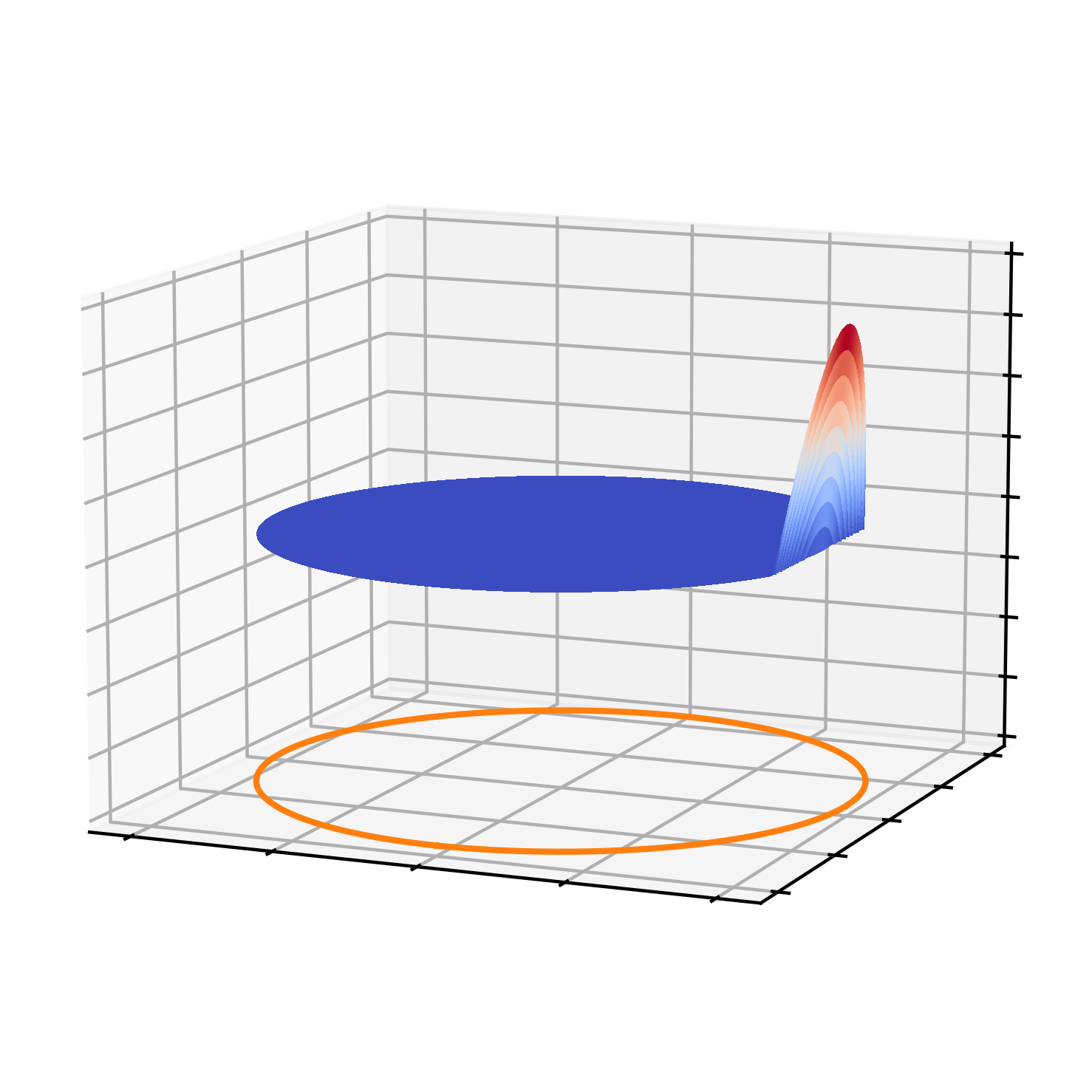}
		\caption{\footnotesize$\relu(\langle\theta,a\rangle-0.9)$}
		\label{fig:local-1a}
	\end{subfigure}
	\begin{subfigure}[b]{0.3\textwidth}
		\centering
		\includegraphics[width=.8\textwidth]{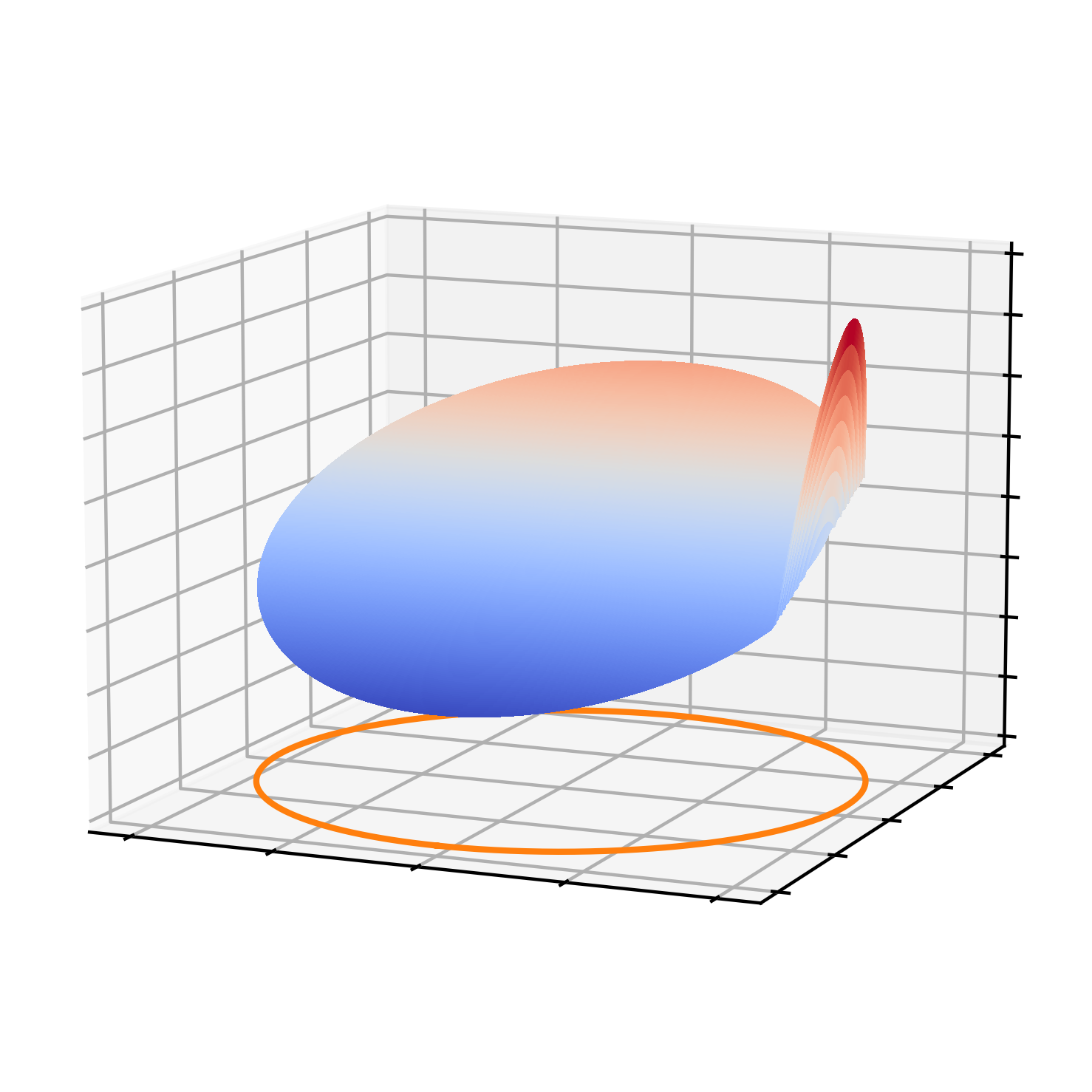}
		\caption{\footnotesize$\langle\theta_1,a\rangle+c\cdot \relu(\langle\theta_2,a\rangle-0.9)$}
		\label{fig:local-1b}
	\end{subfigure}
	\begin{subfigure}[b]{0.3\textwidth}
		\centering
		\includegraphics[width=.8\textwidth]{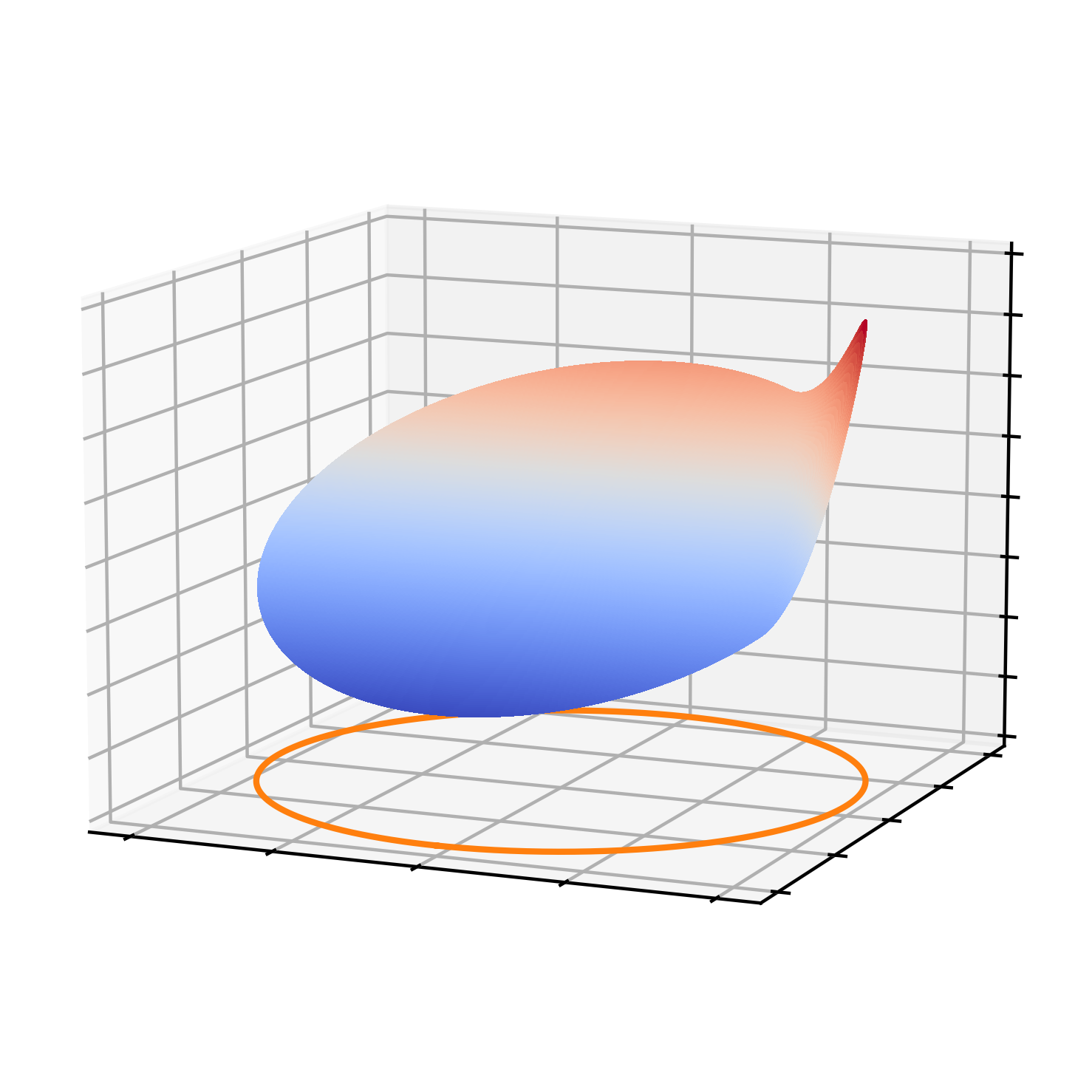}
		\caption{\footnotesize$\langle\theta_1,a\rangle+c\cdot \relu(\langle\theta_2,a\rangle-0.9)^2$}
		\label{fig:local-1c}
	\end{subfigure}
	\caption{Visualization of the reward function in our hard instances for non-linear bandit. \textbf{(a).} Hard instances for Theorem~\ref{thm:lowerbound-sample-comp} and~\ref{thm:lowerbound-eluder}. Reward function is $\eta(a)=\relu(\langle\theta,a\rangle-0.9)$. There is a large flat region with zero reward, and the region with non-zero reward is exponentially small. Therefore finding the global maximum requires exponential  samples. \textbf{(b).} Hard instances for Theorem~\ref{thm:lowerbound-optimism}. The reward is a linear part plus the same nonlinear part so there is no flat region. The UCB algorithm in this case doesn't even converge in polynomial samples because it keeps guessing the nonlinear part. \textbf{(c).} A smoothed version of (b). Our \AlgBan~algorithm converges to local maxima in polynomial samples, but UCB algorithm still doesn't converge. This is also a \textit{smooth} hard instance for the setting of Theorem~\ref{thm:lowerbound-sample-comp}---one can prove with similar techniques that it requires exponential samples to find the global maximum.}
	\label{fig:local-1}
\end{figure}

\paragraph{Inefficiency caused by optimism in nonlinear models.} In the following we revisit the optimism-in-face-of-uncertainty principle. First we recall the UCB algorithm in deterministic environments.

We formalize UCB algorithm under deterministic environments as follows. At every time step $t$, the algorithm maintains a upper confidence bound $C_t:\calA\to \R$. The function $C_t$ satisfies $\eta(\thetas,a)\le C_t(a).$ And then the action for time step $t$ is $a_t\gets \arg\max C_t(a)$. 
Let $\Theta_t$ be the set of parameters that is consistent with $\eta(\thetas,a_1),\cdots,\eta(\thetas,a_{t-1}).$ That is, $\Theta_t=\{\theta\in \Theta:\eta(\theta,a_\tau)=\eta(\thetas,a_\tau),\forall \tau<t\}.$ 
In a deterministic environment, the tightest upper confidence bound is $C_t(a)=\sup_{\theta\in \Theta_t}\eta(\theta,a).$

The next theorem states that the UCB algorithm that uses optimism-in-face-of-uncertainty principle can overly explore in the action space, even if the ground-truth is simple. 
\begin{theorem}\label{thm:lowerbound-optimism}
	Consider the case where the ground-truth reward function is linear: $\dotp{\thetas}{a}$ and the action set is $a\in S^{d-1}.$ If the hypothesis is chosen to be two-layer neural network with width $d$, UCB algorithm with tightest upper confidence bound  suffers exponential sample complexity .
\end{theorem}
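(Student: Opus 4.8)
The plan is to exhibit a hard instance inside the width‑$d$ two‑layer class on which the tightest‑UCB action is perpetually lured toward ``imagined'' bumps that do not actually exist, so that it plays $\Omega(1)$‑suboptimal actions for $e^{\Omega(d)}$ rounds. Fix a unit vector $\thetas\in S^{d-1}$; the ground truth is the linear reward $\dotp{\thetas}{a}$, affinely rescaled to lie in $[0,1]$. The key structural fact is that the width‑$d$ two‑layer class also contains, for \emph{every} direction $w\in S^{d-1}$, a hypothesis of the form $\eta_w(a)=(\text{linear part})+h\,\phi(\dotp{w}{a})$, where $\phi$ is a sharp bump (e.g.\ the squared‑ReLU $\relu(z-(1-\gamma))^2$ as in Fig.~\ref{fig:local-1b}, or a difference of two sigmoids) supported on the spherical cap $\{a:\dotp{w}{a}>1-\gamma\}$ for a small constant $\gamma\in(0,\tfrac12)$, with a constant height $h$ chosen large enough that the bump dominates the variation of the linear part over a constant‑angle neighborhood of its center. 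Realizability holds since $\thetas$ itself ($h=0$) lies in the class; the remaining normalization/weight‑norm bookkeeping is routine, and only a constant hidden width is actually needed. Since enlarging the hypothesis set can only \emph{increase} the optimistic confidence bound, it suffices to reason with this sub‑family.

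\textbf{Confidence set and concentration of measure.} In the deterministic environment the learner observes the bump‑free linear rewards, so after the plays $a_1,\dots,a_{t-1}$ a bump hypothesis $\eta_w$ is consistent (i.e.\ lies in $\Theta_t$) iff its bump was never triggered, that is $\dotp{w}{a_\tau}\le 1-\gamma$ for all $\tau<t$. Thus a single query $a_\tau$ eliminates only the bump directions in the cap $\calC(a_\tau)=\{w:\dotp{w}{a_\tau}>1-\gamma\}$, which by concentration of measure on $S^{d-1}$ has measure $e^{-\Omega(d)}$ for constant $\gamma$. Setting $\alpha=\arccos(1-\gamma)$ and comparing this with the measure of the constant‑width shell $\{w:\angle(w,\thetas)\in(\alpha,2\alpha)\}$, which is exponentially larger than one cap, a union‑bound/counting argument shows that for every $t\le e^{\Omega(d)}$ there is a shell direction $w_t$ that remains consistent.

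\textbf{Optimism forces suboptimal play.} The tightest bound satisfies $C_t(a)=\sup_{\theta\in\Theta_t}\eta(\theta,a)\ge(\text{linear part at }a)+\sup_{\text{consistent }w}h\,\phi(\dotp{w}{a})$. At $t=1$ every bump is available and the peak value $h\,\phi(1)=h\gamma^2$ is attainable at every $a$, so $C_1$ is maximized at $a_1=\thetas$; this first play immediately eliminates all bump directions within angle $\alpha$ of $\thetas$, and they stay eliminated. Hence for $t\ge2$ no consistent bump is triggered near $\thetas$, so $C_t$ there reduces to the linear value, whereas at the available shell direction $w_t$ one has $C_t(w_t)\ge(\text{linear at }w_t)+h\gamma^2$, which is strictly larger by the height condition. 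Therefore $a_t=\argmax_a C_t(a)$ must trigger a consistent bump; because the bump is tall and narrow, the maximizer is pinned within $o(\alpha)$ of its center $w_t$, so $\angle(a_t,\thetas)\gtrsim\alpha$ and the \emph{real} reward $\eta(\thetas,a_t)=\dotp{\thetas}{a_t}$ is bounded away from the optimum by a positive constant. Summing over rounds gives $\reg(T)=\Omega(T)$ for all $T\le e^{\Omega(d)}$, i.e.\ exponential sample complexity.

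\textbf{Main obstacle.} The crux is the last step: ruling out that the optimistic maximizer sits close to $\thetas$ (collecting nearly the optimal linear reward) while still enjoying a large bump value through a bump centered just outside the eliminated cap. This requires quantifying the linear--versus--bump trade‑off on the sphere—choosing $h$ and $\gamma$ so that $\argmax_a\big[\dotp{\thetas}{a}+h\,\phi(\dotp{w}{a})\big]$ provably stays within $o(\alpha)$ of the bump center $w$, \emph{uniformly} over the adversarially evolving set of consistent directions $w$. Controlling this joint supremum over the confidence set and the action, together with the spherical covering estimate in the counting step, is the main technical work.
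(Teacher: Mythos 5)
Your hard instance is essentially the paper's own (it is the construction of Figure~\ref{fig:local-1b}: a linear reward plus a phantom bump in every direction, the truth being the bump-free member), and your cap-counting step---each play eliminates only an $e^{-\Omega(d)}$-measure cap of bump directions, so consistent bumps survive for $e^{\Omega(d)}$ rounds---matches the paper's packing count. The genuine gap is exactly where you flag it, in ``Optimism forces suboptimal play,'' and it is not a deferrable technicality: it is the proof. Your claim that after $a_1=\thetas$ ``no consistent bump is triggered near $\thetas$, so $C_t$ there reduces to the linear value'' is false: a bump centered at angle $\alpha+\epsilon$ from $\thetas$ survives the first play, and its support contains points at angle $\epsilon$ from $\thetas$, so the UCB exceeds the linear value arbitrarily close to $\thetas$ at every round. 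The subsequent claim that the optimistic maximizer is ``pinned within $o(\alpha)$ of its center'' is also wrong in general: writing $c$ for the linear scale, moving from angle $\alpha$ to angle $\alpha-\delta$ from $\thetas$ gains about $c\,\delta\sin\alpha$ in linear value while losing only about $h\gamma\delta^{2}$ in best consistent bump value, so the UCB maximizer can sit at angle $\alpha-\delta^{*}$ with $\delta^{*}\approx c\sin\alpha/(2h\gamma)$---strictly inside the eliminated cap, at distance roughly $\delta^{*}$ from every consistent center. One obtains constant per-round suboptimality only after imposing and actually using a quantitative height condition of the form $h\gtrsim c/\gamma$, uniformly over the evolving consistent set; your sketch states a qualitative version of this condition but never runs the argument. (Your reduction ``enlarging the hypothesis set can only increase the UCB, so it suffices to reason with the sub-family'' cannot fill the hole either: near $\thetas$ you need an \emph{upper} bound on the UCB, and enlarging the class only raises it.)

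The paper closes this hole with a different device that avoids the linear-versus-bump trade-off entirely: it never claims that every optimistic action is suboptimal. Since every hypothesis in its class is $\frac{65}{64}$-Lipschitz and agrees with the observed linear rewards at played actions, the UCB within distance $\frac{1}{130}$ of any played action is at most $\frac{3}{128}$, strictly below the UCB $\ge\frac{1}{32}$ at any surviving bump center; hence each new action is $\frac{1}{130}$-far from all previous ones, the play sequence is a $(1/130)$-packing, and because an unvisited bump ball would force this packing past the $130^{d}$ bound, UCB must visit every one of the $\Omega(2^{d})$ disjoint bump balls, each such visit being a constant-suboptimal play. Rounds in which UCB happens to play near $\thetas$ are simply tolerated rather than excluded. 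So either carry out the trade-off computation above (it does go through when $h\gtrsim c/\gamma$, and then you get the stronger per-round statement $\reg(T)=\Omega(T)$ for all $T\le e^{\Omega(d)}$), or switch to the Lipschitz-plus-packing argument, which requires no control of the UCB at unplayed points near $\thetas$.
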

Proof of the theorem is deferred to Appendix~\ref{app:proof-hardness-optimism}. 
Informally speaking, we prove the theorem by showing that one optimistic exploration step only eliminates a (exponentially) small portion of the hypothesis, because the optimistic action is less informative. Similar constructions also appear in proving the inefficiency of UCB algorithm on contextual bandits \citep[Proposition 1]{foster2018practical}. \citet[Section 25.2]{lattimore2020bandit} and \citet{hao2020adaptive} also show that optimistic algorithms is suboptimal for linear bandits.

\paragraph{Hardness of stochastic environments.} As a motivation to consider deterministic rewards, the next theorem proves that a $\poly\pbra{\log |\Theta|}$ sample complexity is impossible for finding local optimal action even under mild stochastic environment.

\begin{theorem}\label{thm:lowerbound-stochastic}
	There exists an bandit problem with stochastic reward and hypothesis class with size $\log |\Theta|=\tildeO\pbra{1}$, such that any algorithm requires $\Omega\pbra{d}$ sample to find a $(0.1,1)$-approximate second order stationary point with probability at least $3/4.$
\end{theorem}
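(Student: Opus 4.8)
The plan is to exhibit a family of $d+1$ stochastic bandit instances, indexed by $\theta_0,\theta_1,\dots,\theta_d$ (so $\log|\Theta|=\log(d+1)=\tildeO(1)$), in which locating an approximate second-order stationary point is equivalent to identifying a hidden coordinate, and then to show by an information-theoretic change-of-measure argument that this identification requires $\Omega(d)$ samples. Concretely, I would take the action set to be a ball of constant radius in $\R^d$, let the reward be stochastic (e.g.\ Bernoulli) with mean $\bar\eta(\theta_0,a)=-\tfrac12\normtwo{a}^2$ under the null instance and $\bar\eta(\theta_i,a)=-\tfrac12\normtwo{a}^2+a_i$ under instance $\theta_i$. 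Each mean is affinely rescaled into $[0,1]$ so the rewards are legal, and each instance is smooth with bounded derivatives, so Assumption~\ref{assumption:lipschitz} holds.

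First I would characterize the stationary points. Since the Hessian of $\bar\eta(\theta_i,\cdot)$ equals $-I$ everywhere, the second-order condition $\lmax(\nabla^2)\le 1$ is automatically satisfied, so a $(0.1,1)$-approximate SOSP under $\theta_i$ is exactly a point with $\normtwo{\nabla\bar\eta(\theta_i,a)}=\normtwo{a-e_i}\le 0.1$, i.e.\ a point in $B(e_i,0.1)$. Because $\normtwo{e_i-e_j}=\sqrt2$, the balls $B(e_i,0.1)$ are pairwise disjoint, so any point that is a valid SOSP under $\theta_i$ reveals $i$. Consequently, an algorithm that outputs an approximate SOSP with probability $\ge 3/4$ under $\theta_i$ must satisfy $P_i(E_i)\ge 3/4$ for the event $E_i\defeq\{\text{output}\in B(e_i,0.1)\}$, where $P_i$ denotes the law of the interaction transcript under $\theta_i$; finding a SOSP is therefore at least as hard as identifying the planted coordinate.

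The core of the argument is a KL budget computation. Using the chain rule for the divergence between interactive transcripts and the reference instance $\theta_0$, each query $a_t$ contributes a per-step divergence $\KL(\text{reward}_0\,\|\,\text{reward}_i)=\Theta(a_{t,i}^2)$ (the coordinate-$i$ mean gap is $a_{t,i}$), so that $\KL(P_0\|P_i)=\Theta\big(\E_{P_0}[\sum_t a_{t,i}^2]\big)$. The crucial point is the direction of the divergence: taking expectations under the coordinate-agnostic measure $P_0$ lets me sum over $i$ and use $\sum_i a_{t,i}^2=\normtwo{a_t}^2=O(1)$, giving the total budget $\sum_{i=1}^d\KL(P_0\|P_i)=O(T)$. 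On the other hand, since $E_1,\dots,E_d$ are disjoint we have $\sum_i P_0(E_i)\le 1$, so at least $d/2$ indices satisfy $P_0(E_i)\le 2/d$; for each such index, Pinsker's inequality combined with $P_i(E_i)\ge 3/4$ forces $\KL(P_0\|P_i)=\Omega(1)$. Summing over these $\Omega(d)$ indices yields $\sum_i\KL(P_0\|P_i)=\Omega(d)$, which together with the $O(T)$ budget gives $T=\Omega(d)$.

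The main obstacle, and the step that needs the most care, is exactly this KL budget: one must decompose the divergence between transcripts into per-step reward divergences and, critically, take the expectation under the coordinate-agnostic reference $P_0$ rather than under $P_i$. The naive pointwise bound $\KL(P_0\|P_i)\le O(T)$ for each $i$ is too weak---it yields only $T=\Omega(1)$ and discards the normalization $\sum_i a_{t,i}^2\le\normtwo{a_t}^2$ responsible for the factor $d$. A secondary technical point is keeping the rewards in $[0,1]$ while retaining $\KL=\Theta((\Delta\text{mean})^2)$; using Bernoulli rewards with means bounded away from $0$ and $1$ handles this with only constant-factor changes. Finally, one should check that these instances meet the regularity assumed in the setup, which follows from smoothness of the quadratic means on a bounded action set.
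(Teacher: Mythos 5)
Your proposal is correct and takes essentially the same route as the paper's proof: a change-of-measure argument against a coordinate-agnostic null instance, with per-step KL proportional to $a_{t,i}^2$, summed over coordinates via $\sum_i a_{t,i}^2 = \normtwo{a_t}^2 = O(1)$ to get an $O(T)$ information budget, played against the $\Omega(d)$ demand forced by Pinsker and the disjointness of the SOSP regions around each $e_i$. The paper instantiates this with Gaussian noise on a linear reward over the sphere $S^{d-1}$ (characterizing SOSPs via the manifold gradient $(I-aa^\top)\theta$) and aggregates by Cauchy--Schwarz over the $d$ Pinsker terms, whereas you use rescaled Bernoulli rewards with a quadratic-plus-linear mean on a ball and count indices with small $P_0(E_i)$; these differences are immaterial.
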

A similar theorem is proved in \citet[Section 23.3]{lattimore2020bandit} (in a somewhat different context) with minor differences in the constructed hard instances. Our hard instance is linear bandit with hypothesis $\Theta=\{e_1,\cdots,e_d\}$, action space $\calA=S^{d-1}$ and i.i.d. standard Gaussian noise. Intuitively, the hardness comes from low signal-to-noise ratio because $\min_i\abs{\dotp{a}{e_i}}\le 1/\sqrt{d}$ for any $a\in \calA$. In other words, in the worse case the signal-to-noise ration is $\bigO\pbra{1/\sqrt{d}}$, which leads to a sample complexity that depends on $d$. We defer the formal proof to Appendix~\ref{app:proof-hardness-stochastic}.

\section{Additional Related Work}\label{sec:relatedwork}
There are several provable efficient algorithms without optimism for contextual bandit. 
The algorithms in \citet{dudik2011efficient,agarwal2014taming} instantiate mirror descent for contextual bandit, and the regret bounds depend polynomially on the number of actions. \citet{foster2020beyond} and \citet{simchi2020bypassing} exploit a particular exploration probability that is approximately the inverse of empirical gap. The SquareCB algorithm \citep{foster2020beyond} also extends to infinite action, but with a polynomial dependence on the action dimension in regret bound. The exploration strategy in \citet{foster2020beyond} and \citet{simchi2020bypassing} seems to rely on the structure in the action space whereas ours exploits the extrapolation in the model space. 
Recently, \citet{foster2020instance} prove an instance-dependent regret bound for contextual bandit.

\citet{zhou2020neural} also consider non-linear contextual bandits and propose the NeuralUCB algorithm by leveraging the NTK approach. It converges to a globally optimal solution with number of samples polynomial in the number of contexts and the number of actions. For nonlinear bandit problems with continuous actions, the sample complexity depends on the ``effective dimension'' of some kernel matrix. Because no algorithm can find the global maximum for the hard instances in Theorem~\ref{thm:lowerbound-sample-comp} with polynomial samples, the effective dimension should be exponential in dimension for the hard instances.

The deterministic nonlinear 
bandit problem can also be formulated as zero-order optimization without noise (see \citet{duchi2015optimal, liu2020primer} and references therein), where the reward class is assumed to be all 1-Lipschitz functions. In contrast, our algorithm exploits the knowledge of the reward function parametrization and achieves an action-dimension-free regret.
In the setting of stochastic nonlinear bandit, \citet{filippi2010parametric} consider generalized linear model. \citet{valko2013finite,zhou2020neural} focus on rewards in a Reproducing Kernel Hilbert Space (RKHS) and neural network (in the Neural Tangent Kernel regime) respectively, and provide algorithms with sublinear regret. \citet{yang2020bridging} extends this line of research to reinforcement learning setting.

Another line of research focuses on solving reinforcement learning by running optimization algorithms on the policy space. \citet{agarwal2019theory} prove that natural policy gradient can solve tabular MDPs efficiently. \citet{cai2020provably} incorporate exploration bonus in proximal policy optimization algorithm and achieves polynomial regret in linear MDP setting. \citet{hazan2017efficient} also work on local regret, but in the setting of online non-convex games. Their local regret notation is different from ours.

Beyond linear function approximations, there are also extensive studies on various settings that allow efficient algorithms. For example, rich observation MDPs \citep{krishnamurthy2016pac,dann2018oracle,du2019provably,misra2020kinematic}, state aggregation \citep{dong2019provably,li2009unifying}, Bellman rank \citep{jiang2016contextual,dong2020root} and others \citep{du2021bilinear,littman2001predictive,munos2005error,zanette2020learning,kakade2020information}.
\section{Conclusion}
In this paper, we design new algorithms whose local regrets are bounded by the sequential Rademacher complexity of particular loss functions. By rearranging the priorities of exploration versus exploitation, our algorithms avoid over-aggressive explorations caused by the optimism in the face of uncertainty principle, and hence apply to nonlinear models and dynamics. We raise the following questions as future works:

\begin{itemize}
	\item[1.] Since we mainly focus on proving a regret bound that depends only on the complexity of dynamics/reward class, our convergence rate in $T$ is likely not minimax optimal. Can our algorithms (or analysis) be modified to achieve minimax optimal regret for some of the instantiations such as sparse linear bandit and linear bandit with finite model class?
	\item[2.] In the bandit setting, we focus on deterministic reward because our \AlgBan~algorithm relies on finite difference to estimate the gradient and Hessian of reward function. In fact, Theorem~\ref{thm:lowerbound-stochastic} shows that action-dimension-free regret bound for linear models is impossible under standard Gaussian noise. Can we extend our algorithm to stochastic environments with additional assumptions on noises? 
	\item[3.] In the reinforcement learning setting, we use policy gradient lemma to upper bound the gradient/Hessian loss by the dynamics loss, which inevitable require the policies being stochastic. Despite the success of stochastic policies in deep reinforcement learning, the optimal policy may not be stochastic. Can we extend the \AlgBan~algorithm to reinforcement learning problems with deterministic policy hypothesis?
\end{itemize}

\subsection*{Acknowledgment}
The authors would like to thank Yuanhao Wang, Daogao Liu, Zhizhou Ren, Jason D. Lee, Colin Wei, Akshay Krishnamurthy, Alekh Agarwal and Csaba Szepesv{\'a}ri for helpful discussions. TM is also partially supported by the Google Faculty Award, Lam Research, and JD.com. 

\bibliographystyle{plainnat}
\bibliography{all.bib}

\newcommand{\noopsort}[1]{}
\begin{thebibliography}{83}
\providecommand{\natexlab}[1]{#1}
\providecommand{\url}[1]{\texttt{#1}}
\expandafter\ifx\csname urlstyle\endcsname\relax
  \providecommand{\doi}[1]{doi: #1}\else
  \providecommand{\doi}{doi: \begingroup \urlstyle{rm}\Url}\fi

\bibitem[Agarwal et~al.(2014)Agarwal, Hsu, Kale, Langford, Li, and
  Schapire]{agarwal2014taming}
Alekh Agarwal, Daniel Hsu, Satyen Kale, John Langford, Lihong Li, and Robert
  Schapire.
\newblock Taming the monster: A fast and simple algorithm for contextual
  bandits.
\newblock In \emph{International Conference on Machine Learning}, pages
  1638--1646, 2014.

\bibitem[Agarwal et~al.(2019)Agarwal, Jiang, and
  Kakade]{agarwal2019reinforcement}
Alekh Agarwal, Nan Jiang, and Sham~M Kakade.
\newblock Reinforcement learning: Theory and algorithms.
\newblock \emph{CS Dept., UW Seattle, Seattle, WA, USA, Tech. Rep}, 2019.

\bibitem[Agarwal et~al.(2020{\natexlab{a}})Agarwal, Kakade, Krishnamurthy, and
  Sun]{agarwal2020flambe}
Alekh Agarwal, Sham Kakade, Akshay Krishnamurthy, and Wen Sun.
\newblock Flambe: Structural complexity and representation learning of low rank
  mdps.
\newblock \emph{arXiv preprint arXiv:2006.10814}, 2020{\natexlab{a}}.

\bibitem[Agarwal et~al.(2020{\natexlab{b}})Agarwal, Kakade, Lee, and
  Mahajan]{agarwal2019theory}
Alekh Agarwal, Sham~M Kakade, Jason~D Lee, and Gaurav Mahajan.
\newblock Optimality and approximation with policy gradient methods in markov
  decision processes.
\newblock In \emph{Proceedings of Thirty Third Conference on Learning Theory},
  volume 125 of \emph{Proceedings of Machine Learning Research}, pages 64--66.
  PMLR, 09--12 Jul 2020{\natexlab{b}}.

\bibitem[Amos et~al.(2017)Amos, Xu, and Kolter]{amos2017input}
Brandon Amos, Lei Xu, and J~Zico Kolter.
\newblock Input convex neural networks.
\newblock In \emph{International Conference on Machine Learning}, pages
  146--155. PMLR, 2017.

\bibitem[Ayoub et~al.(2020)Ayoub, Jia, Szepesvari, Wang, and
  Yang]{ayoub2020model}
Alex Ayoub, Zeyu Jia, Csaba Szepesvari, Mengdi Wang, and Lin~F Yang.
\newblock Model-based reinforcement learning with value-targeted regression.
\newblock In \emph{Proceedings of the 37th International Conference on Machine
  Learning}, 2020.

\bibitem[Berner et~al.(2019)Berner, Brockman, Chan, Cheung, Debiak, Dennison,
  Farhi, Fischer, Hashme, Hesse, et~al.]{berner2019dota}
Christopher Berner, Greg Brockman, Brooke Chan, Vicki Cheung, Przemyslaw
  Debiak, Christy Dennison, David Farhi, Quirin Fischer, Shariq Hashme, Chris
  Hesse, et~al.
\newblock Dota 2 with large scale deep reinforcement learning.
\newblock \emph{arXiv preprint arXiv:1912.06680}, 2019.

\bibitem[Cai et~al.(2020)Cai, Yang, Jin, and Wang]{cai2020provably}
Qi~Cai, Zhuoran Yang, Chi Jin, and Zhaoran Wang.
\newblock Provably efficient exploration in policy optimization.
\newblock In \emph{International Conference on Machine Learning}, pages
  1283--1294. PMLR, 2020.

\bibitem[Carpentier and Munos(2012)]{carpentier2012bandit}
Alexandra Carpentier and R{\'e}mi Munos.
\newblock Bandit theory meets compressed sensing for high dimensional
  stochastic linear bandit.
\newblock In \emph{Artificial Intelligence and Statistics}, pages 190--198.
  PMLR, 2012.

\bibitem[Clavera et~al.(2019)Clavera, Fu, and Abbeel]{clavera2019model}
Ignasi Clavera, Yao Fu, and Pieter Abbeel.
\newblock Model-augmented actor-critic: Backpropagating through paths.
\newblock In \emph{International Conference on Learning Representations}, 2019.

\bibitem[Dann et~al.(2018)Dann, Jiang, Krishnamurthy, Agarwal, Langford, and
  Schapire]{dann2018oracle}
Christoph Dann, Nan Jiang, Akshay Krishnamurthy, Alekh Agarwal, John Langford,
  and Robert~E. Schapire.
\newblock On oracle-efficient {PAC}-{RL} with rich observations.
\newblock In \emph{Advances in Neural Information Processing Systems}, 2018.

\bibitem[Dong et~al.(2020{\natexlab{a}})Dong, Luo, Yu, Finn, and
  Ma]{dong2020expressivity}
Kefan Dong, Yuping Luo, Tianhe Yu, Chelsea Finn, and Tengyu Ma.
\newblock On the expressivity of neural networks for deep reinforcement
  learning.
\newblock In \emph{International Conference on Machine Learning}, pages
  2627--2637. PMLR, 2020{\natexlab{a}}.

\bibitem[Dong et~al.(2020{\natexlab{b}})Dong, Peng, Wang, and
  Zhou]{dong2020root}
Kefan Dong, Jian Peng, Yining Wang, and Yuan Zhou.
\newblock Root-n-regret for learning in markov decision processes with function
  approximation and low bellman rank.
\newblock In \emph{Conference on Learning Theory}, pages 1554--1557. PMLR,
  2020{\natexlab{b}}.

\bibitem[Dong and Van~Roy(2018)]{dong2018information}
Shi Dong and Benjamin Van~Roy.
\newblock An information-theoretic analysis for thompson sampling with many
  actions.
\newblock \emph{arXiv preprint arXiv:1805.11845}, 2018.

\bibitem[Dong et~al.(2019{\natexlab{a}})Dong, Ma, and
  Van~Roy]{dong2019performance}
Shi Dong, Tengyu Ma, and Benjamin Van~Roy.
\newblock On the performance of thompson sampling on logistic bandits.
\newblock In \emph{Conference on Learning Theory}, pages 1158--1160,
  2019{\natexlab{a}}.

\bibitem[Dong et~al.(2019{\natexlab{b}})Dong, Van~Roy, and
  Zhou]{dong2019provably}
Shi Dong, Benjamin Van~Roy, and Zhengyuan Zhou.
\newblock Provably efficient reinforcement learning with aggregated states.
\newblock \emph{arXiv preprint arXiv:1912.06366}, 2019{\natexlab{b}}.

\bibitem[Du et~al.(2019{\natexlab{a}})Du, Krishnamurthy, Jiang, Agarwal, Dudik,
  and Langford]{du2019provably}
Simon~S Du, Akshay Krishnamurthy, Nan Jiang, Alekh Agarwal, Miroslav Dudik, and
  John Langford.
\newblock Provably efficient {RL} with rich observations via latent state
  decoding.
\newblock In \emph{International Conference on Machine Learning}, pages
  1665--1674, 2019{\natexlab{a}}.

\bibitem[Du et~al.(2019{\natexlab{b}})Du, Luo, Wang, and Zhang]{du2019q}
Simon~S Du, Yuping Luo, Ruosong Wang, and Hanrui Zhang.
\newblock Provably efficient {Q}-learning with function approximation via
  distribution shift error checking oracle.
\newblock In \emph{Advances in Neural Information Processing Systems}, pages
  8058--8068, 2019{\natexlab{b}}.

\bibitem[Du et~al.(2021)Du, Kakade, Lee, Lovett, Mahajan, Sun, and
  Wang]{du2021bilinear}
Simon~S Du, Sham~M Kakade, Jason~D Lee, Shachar Lovett, Gaurav Mahajan, Wen
  Sun, and Ruosong Wang.
\newblock Bilinear classes: A structural framework for provable generalization
  in rl.
\newblock \emph{arXiv preprint arXiv:2103.10897}, 2021.

\bibitem[Duchi et~al.(2015)Duchi, Jordan, Wainwright, and
  Wibisono]{duchi2015optimal}
John~C Duchi, Michael~I Jordan, Martin~J Wainwright, and Andre Wibisono.
\newblock Optimal rates for zero-order convex optimization: The power of two
  function evaluations.
\newblock \emph{IEEE Transactions on Information Theory}, 61\penalty0
  (5):\penalty0 2788--2806, 2015.

\bibitem[Dudik et~al.(2011)Dudik, Hsu, Kale, Karampatziakis, Langford, Reyzin,
  and Zhang]{dudik2011efficient}
Miroslav Dudik, Daniel Hsu, Satyen Kale, Nikos Karampatziakis, John Langford,
  Lev Reyzin, and Tong Zhang.
\newblock Efficient optimal learning for contextual bandits.
\newblock \emph{arXiv preprint arXiv:1106.2369}, 2011.

\bibitem[Faury et~al.(2020)Faury, Abeille, Calauz{\`e}nes, and
  Fercoq]{faury2020improved}
Louis Faury, Marc Abeille, Cl{\'e}ment Calauz{\`e}nes, and Olivier Fercoq.
\newblock Improved optimistic algorithms for logistic bandits.
\newblock In \emph{International Conference on Machine Learning}, pages
  3052--3060. PMLR, 2020.

\bibitem[Filippi et~al.(2010)Filippi, Capp{\'e}, Garivier, and
  Szepesv{\'a}ri]{filippi2010parametric}
Sarah Filippi, Olivier Capp{\'e}, Aur{\'e}lien Garivier, and Csaba
  Szepesv{\'a}ri.
\newblock Parametric bandits: the generalized linear case.
\newblock In \emph{Proceedings of the 23rd International Conference on Neural
  Information Processing Systems-Volume 1}, pages 586--594, 2010.

\bibitem[Foster and Rakhlin(2020)]{foster2020beyond}
Dylan Foster and Alexander Rakhlin.
\newblock Beyond {UCB}: Optimal and efficient contextual bandits with
  regression oracles.
\newblock In \emph{Proceedings of the 37th International Conference on Machine
  Learning}, volume 119 of \emph{Proceedings of Machine Learning Research},
  pages 3199--3210. PMLR, 13--18 Jul 2020.

\bibitem[Foster et~al.(2018)Foster, Agarwal, Dudik, Luo, and
  Schapire]{foster2018practical}
Dylan Foster, Alekh Agarwal, Miroslav Dudik, Haipeng Luo, and Robert Schapire.
\newblock Practical contextual bandits with regression oracles.
\newblock In \emph{International Conference on Machine Learning}, pages
  1539--1548, 2018.

\bibitem[Foster et~al.(2020)Foster, Rakhlin, Simchi-Levi, and
  Xu]{foster2020instance}
Dylan~J Foster, Alexander Rakhlin, David Simchi-Levi, and Yunzong Xu.
\newblock Instance-dependent complexity of contextual bandits and reinforcement
  learning: A disagreement-based perspective.
\newblock \emph{arXiv preprint arXiv:2010.03104}, 2020.

\bibitem[Ge and Ma(2020)]{ge2020optimization}
Rong Ge and Tengyu Ma.
\newblock On the optimization landscape of tensor decompositions.
\newblock \emph{Mathematical Programming}, pages 1--47, 2020.

\bibitem[Ge et~al.(2015)Ge, Huang, Jin, and Yuan]{ge2015escaping}
Rong Ge, Furong Huang, Chi Jin, and Yang Yuan.
\newblock Escaping from saddle points—online stochastic gradient for tensor
  decomposition.
\newblock In \emph{Conference on Learning Theory}, pages 797--842, 2015.

\bibitem[Ge et~al.(2016)Ge, Lee, and Ma]{ge2016matrix}
Rong Ge, Jason~D Lee, and Tengyu Ma.
\newblock Matrix completion has no spurious local minimum.
\newblock In \emph{Advances in Neural Information Processing Systems}, pages
  2973--2981, 2016.

\bibitem[Ge et~al.(2017)Ge, Jin, and Zheng]{ge2017no}
Rong Ge, Chi Jin, and Yi~Zheng.
\newblock No spurious local minima in nonconvex low rank problems: A unified
  geometric analysis.
\newblock \emph{arXiv preprint arXiv:1704.00708}, 2017.

\bibitem[Hafner et~al.(2019{\natexlab{a}})Hafner, Lillicrap, Ba, and
  Norouzi]{hafner2019dream}
Danijar Hafner, Timothy Lillicrap, Jimmy Ba, and Mohammad Norouzi.
\newblock Dream to control: Learning behaviors by latent imagination.
\newblock In \emph{International Conference on Learning Representations},
  2019{\natexlab{a}}.

\bibitem[Hafner et~al.(2019{\natexlab{b}})Hafner, Lillicrap, Fischer, Villegas,
  Ha, Lee, and Davidson]{hafner2019learning}
Danijar Hafner, Timothy Lillicrap, Ian Fischer, Ruben Villegas, David Ha,
  Honglak Lee, and James Davidson.
\newblock Learning latent dynamics for planning from pixels.
\newblock In \emph{International Conference on Machine Learning}, pages
  2555--2565. PMLR, 2019{\natexlab{b}}.

\bibitem[Hao et~al.(2020{\natexlab{a}})Hao, Lattimore, and
  Szepesvari]{hao2020adaptive}
Botao Hao, Tor Lattimore, and Csaba Szepesvari.
\newblock Adaptive exploration in linear contextual bandit.
\newblock In \emph{International Conference on Artificial Intelligence and
  Statistics}, pages 3536--3545. PMLR, 2020{\natexlab{a}}.

\bibitem[Hao et~al.(2020{\natexlab{b}})Hao, Lattimore, and Wang]{hao2020high}
Botao Hao, Tor Lattimore, and Mengdi Wang.
\newblock High-dimensional sparse linear bandits.
\newblock \emph{arXiv preprint arXiv:2011.04020}, 2020{\natexlab{b}}.

\bibitem[Hao et~al.(2021)Hao, Lattimore, Szepesv{\'a}ri, and
  Wang]{hao2021online}
Botao Hao, Tor Lattimore, Csaba Szepesv{\'a}ri, and Mengdi Wang.
\newblock Online sparse reinforcement learning.
\newblock In \emph{International Conference on Artificial Intelligence and
  Statistics}, pages 316--324. PMLR, 2021.

\bibitem[Hazan et~al.(2015)Hazan, Levy, and Shalev-Shwartz]{hazan2015beyond}
Elad Hazan, Kfir Levy, and Shai Shalev-Shwartz.
\newblock Beyond convexity: Stochastic quasi-convex optimization.
\newblock In \emph{Advances in Neural Information Processing Systems}, pages
  1594--1602, 2015.

\bibitem[Hazan et~al.(2017)Hazan, Singh, and Zhang]{hazan2017efficient}
Elad Hazan, Karan Singh, and Cyril Zhang.
\newblock Efficient regret minimization in non-convex games.
\newblock In \emph{International Conference on Machine Learning}, pages
  1433--1441. PMLR, 2017.

\bibitem[Hsu et~al.(2012)Hsu, Kakade, Zhang, et~al.]{hsu2012tail}
Daniel Hsu, Sham Kakade, Tong Zhang, et~al.
\newblock A tail inequality for quadratic forms of subgaussian random vectors.
\newblock \emph{Electronic Communications in Probability}, 17, 2012.

\bibitem[Janner et~al.(2019)Janner, Fu, Zhang, and Levine]{janner2019trust}
Michael Janner, Justin Fu, Marvin Zhang, and Sergey Levine.
\newblock When to trust your model: Model-based policy optimization.
\newblock \emph{arXiv preprint arXiv:1906.08253}, 2019.

\bibitem[Jiang et~al.(2016)Jiang, Krishnamurthy, Agarwal, Langford, and
  Schapire]{jiang2016contextual}
Nan Jiang, Akshay Krishnamurthy, Alekh Agarwal, John Langford, and Robert~E
  Schapire.
\newblock Contextual decision processes with low bellman rank are
  pac-learnable.
\newblock \emph{arXiv preprint arXiv:1610.09512}, 2016.

\bibitem[Jin et~al.(2018)Jin, Allen-Zhu, Bubeck, and Jordan]{jin2018q}
Chi Jin, Zeyuan Allen-Zhu, Sebastien Bubeck, and Michael~I Jordan.
\newblock Is q-learning provably efficient?
\newblock In \emph{Proceedings of the 32nd International Conference on Neural
  Information Processing Systems}, pages 4868--4878, 2018.

\bibitem[Jin et~al.(2020)Jin, Yang, Wang, and Jordan]{jin2019provably}
Chi Jin, Zhuoran Yang, Zhaoran Wang, and Michael~I Jordan.
\newblock Provably efficient reinforcement learning with linear function
  approximation.
\newblock In \emph{Conference on Learning Theory}, pages 2137--2143, 2020.

\bibitem[Johnson and Zhang(2013)]{johnson2013accelerating}
Rie Johnson and Tong Zhang.
\newblock Accelerating stochastic gradient descent using predictive variance
  reduction.
\newblock In \emph{Advances in Neural Information Processing Systems}, pages
  315--323, 2013.

\bibitem[Kakade et~al.(2020)Kakade, Krishnamurthy, Lowrey, Ohnishi, and
  Sun]{kakade2020information}
Sham Kakade, Akshay Krishnamurthy, Kendall Lowrey, Motoya Ohnishi, and Wen Sun.
\newblock Information theoretic regret bounds for online nonlinear control.
\newblock \emph{arXiv preprint arXiv:2006.12466}, 2020.

\bibitem[Kakade et~al.(2011)Kakade, Kanade, Shamir, and
  Kalai]{kakade2011efficient}
Sham~M Kakade, Varun Kanade, Ohad Shamir, and Adam Kalai.
\newblock Efficient learning of generalized linear and single index models with
  isotonic regression.
\newblock In \emph{Advances in Neural Information Processing Systems}, pages
  927--935, 2011.

\bibitem[Krishnamurthy et~al.(2016)Krishnamurthy, Agarwal, and
  Langford]{krishnamurthy2016pac}
Akshay Krishnamurthy, Alekh Agarwal, and John Langford.
\newblock {PAC} reinforcement learning with rich observations.
\newblock In \emph{Advances in Neural Information Processing Systems}, pages
  1840--1848, 2016.

\bibitem[Lattimore and Szepesv{\'a}ri(2020)]{lattimore2020bandit}
Tor Lattimore and Csaba Szepesv{\'a}ri.
\newblock \emph{Bandit algorithms}.
\newblock Cambridge University Press, 2020.

\bibitem[Laurent and Massart(2000)]{laurent2000adaptive}
Beatrice Laurent and Pascal Massart.
\newblock Adaptive estimation of a quadratic functional by model selection.
\newblock \emph{Annals of Statistics}, pages 1302--1338, 2000.

\bibitem[Lee et~al.(2016)Lee, Simchowitz, Jordan, and Recht]{lee2016gradient}
Jason~D Lee, Max Simchowitz, Michael~I Jordan, and Benjamin Recht.
\newblock Gradient descent only converges to minimizers.
\newblock In \emph{Conference on learning theory}, pages 1246--1257. PMLR,
  2016.

\bibitem[Levine et~al.(2016)Levine, Finn, Darrell, and Abbeel]{levine2016end}
Sergey Levine, Chelsea Finn, Trevor Darrell, and Pieter Abbeel.
\newblock End-to-end training of deep visuomotor policies.
\newblock \emph{The Journal of Machine Learning Research}, 17\penalty0
  (1):\penalty0 1334--1373, 2016.

\bibitem[Li et~al.(2021)Li, Kamath, Foster, and Srebro]{li2021eluder}
Gene Li, Pritish Kamath, Dylan~J. Foster, and Nathan Srebro.
\newblock Eluder dimension and generalized rank, 2021.

\bibitem[Li(2009)]{li2009unifying}
Lihong Li.
\newblock \emph{A unifying framework for computational reinforcement learning
  theory}.
\newblock PhD thesis, Rutgers University-Graduate School-New Brunswick, 2009.

\bibitem[Li et~al.(2017)Li, Lu, and Zhou]{li2017provably}
Lihong Li, Yu~Lu, and Dengyong Zhou.
\newblock Provably optimal algorithms for generalized linear contextual
  bandits.
\newblock In \emph{International Conference on Machine Learning}, pages
  2071--2080. PMLR, 2017.

\bibitem[Littman et~al.(2001)Littman, Sutton, and Singh]{littman2001predictive}
Michael~L Littman, Richard~S Sutton, and Satinder~P Singh.
\newblock Predictive representations of state.
\newblock In \emph{NIPS}, volume~14, page~30, 2001.

\bibitem[Liu et~al.(2020)Liu, Chen, Kailkhura, Zhang, Hero~III, and
  Varshney]{liu2020primer}
Sijia Liu, Pin-Yu Chen, Bhavya Kailkhura, Gaoyuan Zhang, Alfred~O Hero~III, and
  Pramod~K Varshney.
\newblock A primer on zeroth-order optimization in signal processing and
  machine learning: Principals, recent advances, and applications.
\newblock \emph{IEEE Signal Processing Magazine}, 37\penalty0 (5):\penalty0
  43--54, 2020.

\bibitem[Luo et~al.(2019)Luo, Xu, Li, Tian, Darrell, and
  Ma]{luo2019algorithmic}
Yuping Luo, Huazhe Xu, Yuanzhi Li, Yuandong Tian, Trevor Darrell, and Tengyu
  Ma.
\newblock Algorithmic framework for model-based deep reinforcement learning
  with theoretical guarantees.
\newblock In \emph{International Conference on Learning Representations}, 2019.
\newblock URL \url{https://openreview.net/forum?id=BJe1E2R5KX}.

\bibitem[Mahmud et~al.(2018)Mahmud, Kaiser, Hussain, and
  Vassanelli]{mahmud2018applications}
Mufti Mahmud, Mohammed~Shamim Kaiser, Amir Hussain, and Stefano Vassanelli.
\newblock Applications of deep learning and reinforcement learning to
  biological data.
\newblock \emph{IEEE transactions on neural networks and learning systems},
  29\penalty0 (6):\penalty0 2063--2079, 2018.

\bibitem[Misra et~al.(2020)Misra, Henaff, Krishnamurthy, and
  Langford]{misra2020kinematic}
Dipendra Misra, Mikael Henaff, Akshay Krishnamurthy, and John Langford.
\newblock Kinematic state abstraction and provably efficient rich-observation
  reinforcement learning.
\newblock In \emph{International conference on machine learning}, pages
  6961--6971. PMLR, 2020.

\bibitem[Modi et~al.(2020)Modi, Jiang, Tewari, and Singh]{modi2020sample}
Aditya Modi, Nan Jiang, Ambuj Tewari, and Satinder Singh.
\newblock Sample complexity of reinforcement learning using linearly combined
  model ensembles.
\newblock In \emph{International Conference on Artificial Intelligence and
  Statistics}, pages 2010--2020. PMLR, 2020.

\bibitem[Munos(2005)]{munos2005error}
R{\'e}mi Munos.
\newblock Error bounds for approximate value iteration.
\newblock In \emph{Proceedings of the National Conference on Artificial
  Intelligence}, page 1006, 2005.

\bibitem[Osband and Roy(2014)]{osband2014model}
Ian Osband and Benjamin~Van Roy.
\newblock Model-based reinforcement learning and the eluder dimension.
\newblock In \emph{Proceedings of the 27th International Conference on Neural
  Information Processing Systems-Volume 1}, pages 1466--1474, 2014.

\bibitem[Rakhlin et~al.(2011)Rakhlin, Sridharan, and Tewari]{rakhlin2011online}
Alexander Rakhlin, Karthik Sridharan, and Ambuj Tewari.
\newblock Online learning: stochastic, constrained, and smoothed adversaries.
\newblock In \emph{Proceedings of the 24th International Conference on Neural
  Information Processing Systems}, pages 1764--1772, 2011.

\bibitem[Rakhlin et~al.(2015{\natexlab{a}})Rakhlin, Sridharan, and
  Tewari]{rakhlin2015online}
Alexander Rakhlin, Karthik Sridharan, and Ambuj Tewari.
\newblock Online learning via sequential complexities.
\newblock \emph{Journal of Machine Learning Research}, 16\penalty0
  (6):\penalty0 155--186, 2015{\natexlab{a}}.

\bibitem[Rakhlin et~al.(2015{\natexlab{b}})Rakhlin, Sridharan, and
  Tewari]{rakhlin2015sequential}
Alexander Rakhlin, Karthik Sridharan, and Ambuj Tewari.
\newblock Sequential complexities and uniform martingale laws of large numbers.
\newblock \emph{Probability Theory and Related Fields}, 161\penalty0
  (1-2):\penalty0 111--153, 2015{\natexlab{b}}.

\bibitem[Russo and Roy(2013)]{russo2013eluder}
Daniel Russo and Benjamin~Van Roy.
\newblock Eluder dimension and the sample complexity of optimistic exploration.
\newblock In \emph{Proceedings of the 26th International Conference on Neural
  Information Processing Systems-Volume 2}, pages 2256--2264, 2013.

\bibitem[Schulman et~al.(2015)Schulman, Levine, Abbeel, Jordan, and
  Moritz]{schulman2015trust}
John Schulman, Sergey Levine, Pieter Abbeel, Michael Jordan, and Philipp
  Moritz.
\newblock Trust region policy optimization.
\newblock In \emph{International conference on machine learning}, pages
  1889--1897, 2015.

\bibitem[Schulman et~al.(2017)Schulman, Wolski, Dhariwal, Radford, and
  Klimov]{schulman2017proximal}
John Schulman, Filip Wolski, Prafulla Dhariwal, Alec Radford, and Oleg Klimov.
\newblock Proximal policy optimization algorithms.
\newblock \emph{arXiv preprint arXiv:1707.06347}, 2017.

\bibitem[Shariff and Szepesv{\'a}ri(2020)]{shariff2020efficient}
Roshan Shariff and Csaba Szepesv{\'a}ri.
\newblock Efficient planning in large mdps with weak linear function
  approximation.
\newblock \emph{arXiv preprint arXiv:2007.06184}, 2020.

\bibitem[Silver et~al.(2017)Silver, Schrittwieser, Simonyan, Antonoglou, Huang,
  Guez, Hubert, Baker, Lai, Bolton, et~al.]{alphago17}
David Silver, Julian Schrittwieser, Karen Simonyan, Ioannis Antonoglou, Aja
  Huang, Arthur Guez, Thomas Hubert, Lucas Baker, Matthew Lai, Adrian Bolton,
  et~al.
\newblock Mastering the game of {G}o without human knowledge.
\newblock \emph{Nature}, 550\penalty0 (7676):\penalty0 354, 2017.

\bibitem[Simchi-Levi and Xu(2020)]{simchi2020bypassing}
David Simchi-Levi and Yunzong Xu.
\newblock Bypassing the monster: A faster and simpler optimal algorithm for
  contextual bandits under realizability.
\newblock \emph{arXiv preprint arXiv:2003.12699}, 2020.

\bibitem[Sun et~al.(2019)Sun, Jiang, Krishnamurthy, Agarwal, and
  Langford]{sun2019model}
Wen Sun, Nan Jiang, Akshay Krishnamurthy, Alekh Agarwal, and John Langford.
\newblock Model-based rl in contextual decision processes: Pac bounds and
  exponential improvements over model-free approaches.
\newblock In \emph{Conference on Learning Theory}, pages 2898--2933. PMLR,
  2019.

\bibitem[Sutton and Barto(2011)]{sutton2011reinforcement}
Richard~S Sutton and Andrew~G Barto.
\newblock Reinforcement learning: An introduction.
\newblock 2011.

\bibitem[Valko et~al.(2013)Valko, Korda, Munos, Flaounas, and
  Cristianini]{valko2013finite}
Michal Valko, Nathan Korda, R{\'e}mi Munos, Ilias Flaounas, and Nello
  Cristianini.
\newblock Finite-time analysis of kernelised contextual bandits.
\newblock In \emph{Proceedings of the Twenty-Ninth Conference on Uncertainty in
  Artificial Intelligence}, pages 654--663, 2013.

\bibitem[Wang et~al.(2020{\natexlab{a}})Wang, Salakhutdinov, and
  Yang]{wang2020provably}
Ruosong Wang, Ruslan Salakhutdinov, and Lin~F Yang.
\newblock Provably efficient reinforcement learning with general value function
  approximation.
\newblock \emph{Advances in Neural Information Processing Systems},
  2020{\natexlab{a}}.

\bibitem[Wang et~al.(2019)Wang, Wang, Du, and Krishnamurthy]{wang2019optimism}
Yining Wang, Ruosong Wang, Simon~S Du, and Akshay Krishnamurthy.
\newblock Optimism in reinforcement learning with generalized linear function
  approximation.
\newblock \emph{arXiv preprint arXiv:1912.04136}, 2019.

\bibitem[Wang et~al.(2020{\natexlab{b}})Wang, Chen, Fang, Wang, and
  Li]{wang2020nearly}
Yining Wang, Yi~Chen, Ethan~X Fang, Zhaoran Wang, and Runze Li.
\newblock Nearly dimension-independent sparse linear bandit over small action
  spaces via best subset selection.
\newblock \emph{arXiv preprint arXiv:2009.02003}, 2020{\natexlab{b}}.

\bibitem[Williams(1992)]{williams1992simple}
Ronald~J Williams.
\newblock Simple statistical gradient-following algorithms for connectionist
  reinforcement learning.
\newblock \emph{Machine learning}, 8\penalty0 (3-4):\penalty0 229--256, 1992.

\bibitem[Xu et~al.(2015)Xu, Wang, Chen, and Li]{xu2015empirical}
Bing Xu, Naiyan Wang, Tianqi Chen, and Mu~Li.
\newblock Empirical evaluation of rectified activations in convolutional
  network, 2015.

\bibitem[Yang and Wang(2020)]{yang2020reinforcement}
Lin Yang and Mengdi Wang.
\newblock Reinforcement learning in feature space: Matrix bandit, kernels, and
  regret bound.
\newblock In \emph{International Conference on Machine Learning}, pages
  10746--10756. PMLR, 2020.

\bibitem[Yang et~al.(2020)Yang, Jin, Wang, Wang, and Jordan]{yang2020bridging}
Zhuoran Yang, Chi Jin, Zhaoran Wang, Mengdi Wang, and Michael~I Jordan.
\newblock Bridging exploration and general function approximation in
  reinforcement learning: Provably efficient kernel and neural value
  iterations.
\newblock \emph{arXiv preprint arXiv:2011.04622}, 2020.

\bibitem[Yu et~al.(2020)Yu, Thomas, Yu, Ermon, Zou, Levine, Finn, and
  Ma]{yu2020mopo}
Tianhe Yu, Garrett Thomas, Lantao Yu, Stefano Ermon, James Zou, Sergey Levine,
  Chelsea Finn, and Tengyu Ma.
\newblock Mopo: Model-based offline policy optimization.
\newblock \emph{arXiv preprint arXiv:2005.13239}, 2020.

\bibitem[Zanette et~al.(2020)Zanette, Lazaric, Kochenderfer, and
  Brunskill]{zanette2020learning}
Andrea Zanette, Alessandro Lazaric, Mykel Kochenderfer, and Emma Brunskill.
\newblock Learning near optimal policies with low inherent {Bellman} error.
\newblock In \emph{International Conference on Machine Learning}, 2020.

\bibitem[Zhou et~al.(2020)Zhou, Li, and Gu]{zhou2020neural}
Dongruo Zhou, Lihong Li, and Quanquan Gu.
\newblock Neural contextual bandits with ucb-based exploration.
\newblock In \emph{International Conference on Machine Learning}, pages
  11492--11502. PMLR, 2020.

\end{thebibliography}

\newpage
\appendix

\section*{List of Appendices}
\startcontents[sections]
\printcontents[sections]{l}{1}{\setcounter{tocdepth}{2}}
\newpage

\section{Missing Proofs in Section~\ref{sec:lowerbounds}}\label{sec:app:pf5}%

In this section, we prove several negative results.

\subsection{Proof of Theorem~\ref{thm:lowerbound-sample-comp}}

\label{app:proof-sample-comp}
\begin{proof}
    We consider the class $\calI = \{I_{\theta, \varepsilon} : \norm{\theta}_2 \le 1,\varepsilon>0\}$ of infinite-armed bandit instances, where in the instance $I_{\theta, \varepsilon}$, the reward of pulling action $x \in \sB_2^d(1)$ is deterministic and is equal to \begin{equation}
    \eta(I_{\theta, \varepsilon}, x) = Ax\{ \langle x , \theta\rangle - 1 + \varepsilon, 0\}.\end{equation}
    
    We prove the theorem by proving the  minimax regret. The sample complexity then follows from the canonical sample complexity-regret reduction \citep[Section 3.1]{jin2018q}. Let $\calA$ denote any algorithm. Let  $R^T_{\calA, I}$ be the $T$-step regret of algorithm $\calA$ under instance $I $. Then we have 
\begin{align*}
    \inf_{\calA} \sup_{I \in \calI} \E[R^T_{\calA, I}] \ge \Omega(T^{\frac{d-2}{d-1}}).
\end{align*} 
    Fix $\varepsilon = c \cdot T^{-1/(d-1)}$. Let $\Theta$ be an $\varepsilon$-packing of the sphere $\{x \in \sR^d : \lVert x \rVert_2 = 1\}$. Then we have $\lvert \Theta \rvert \ge \Omega(\varepsilon^{-(d-1)}).$ So we choose $c > 0$ to be a numeric constant such that $T \le \abs{\Theta} / 2$. Let $\mu$ be the distribution over $\Theta$ such that $\mu(\theta) = \Pr[\exists t \le T \text{ s.t. } \eta(I_{\theta, \varepsilon}, \va_t) \ne 0 \text{ when }r_\tau \equiv 0 \text{ for }\tau = 1, \ldots, T]$. Note that for any action $\va_t \in \sB_2^d(1)$, there is at most one $\theta \in \Theta$ such that $\eta(I_{\theta,\varepsilon}, \va_t) \ne 0$, because $\Theta$ is a packing. Since $T \le \abs{\Theta} / 2$, there exists $\theta^* \in \Theta$ such that $\mu(\theta^*) \le 1/2$. Therefore, with probability $1/2$, the algorithm $\calA$ would obtain reward $r_t = \eta(I_{\theta^*, \varepsilon}, \va_t) = 0$ for every time step $t = 1, \ldots, T$. Note that under instance $I_{\theta^*, \varepsilon}$, the optimal action is to choose $\va_t \equiv \theta^*$, which would give reward $r_t^* \equiv \varepsilon$. Therefore, with probability $1/2$, we have $\E[R^T_{\calA, I_{\theta, \varepsilon}}] \ge \varepsilon T/2 \ge \Omega(T^{\frac{d-2}{d-1}})$.
    
\end{proof}

\subsection{Proof of Theorem~\ref{thm:lowerbound-eluder}} \label{app:proof-hardness-eluder}

\begin{proof} 
    We adopt the notations from Appendix~\ref{app:proof-sample-comp}. We use $\dim_E(\calF, \varepsilon)$ to denote the $\varepsilon$-eluder dimension of the function class $\calF.$ Let $\Theta$ be an $\varepsilon$-packing of the sphere $\{x \in \sR^d : \lVert x \rVert_2 = 1\}$. We write $\Theta = \{\theta_1, \ldots, \theta_n\}.$ Then we have $n \ge \Omega(\varepsilon^{-(d-1)}).$ Next we establish that $\dim_E(\calF, \varepsilon) \ge \Omega(\varepsilon^{-(d-1)}).$ For each $i \in [n],$ we define the function $f_i(\va) = \eta(I_{\theta_i,\varepsilon},\va) \in \calF.$ Then for $i \le n-1,$ we have $f_i(\theta_j) = f_{i+1}(\theta_j)$ for $j \le i-1,$ while $\varepsilon=f_i(\theta_i) \ne f_{i + 1}(\theta_i) = 0.$ Therefore, $\theta_i$ is $\frac{\varepsilon}{2}$-independent of its predecessors. As a result, we have $\dim_E(\calF, \varepsilon) \ge n-1.$
\end{proof}

\subsection{Proof of Theorem~\ref{thm:lowerbound-optimism}}\label{app:proof-hardness-optimism}

We first provide a proof sketch to the theorem. We consider the following reward function. $$\eta((\thetas_1,\thetas_2,\alpha),a)=\frac{1}{64}\dotp{a}{\thetas_1}+\alpha\maxtwo{\dotp{\thetas_2}{a}-\frac{31}{32}}{0}.$$

Note that the reward function $\eta$ can be clearly realized by a two-layer neural network with width $2d$. When $\alpha=0$ we have $\eta((\thetas_1,\thetas_2,\alpha),a)=\frac{1}{64}\dotp{\thetas_1}{a},$ which represents a linear reward. Informally, optimism based algorithm will try to make the second term large (because optimistically the algorithm hopes $\alpha=1$), which leads to an action $a_t$ that is suboptimal for ground-truth reward (in which case $\alpha=0$). In round $t$, the optimism algorithm observes $\dotp{\thetas_2}{a_t}=0$, and can only eliminate an exponentially small fraction of $\thetas_2$ from the hypothesis. Therefore the optimism algorithm needs exponential number of steps to determine $\alpha=0$ and stops exploration. Formally, the prove is given below.

\begin{proof} %
	Consider a bandit problem where $\calA=S^{d-1}$ and  $$\eta((\thetas_1,\thetas_2,\alpha),a)=\frac{1}{64}\dotp{a}{\thetas_1}+\alpha\maxtwo{\dotp{\thetas_2}{a}-\frac{31}{32}}{0}.$$ The hypothesis space is $\Theta=\{\theta_1,\theta_2,\alpha:\normtwo{\theta_1}\le 1,\normtwo{\theta_2}\le 1, \alpha\in[0,1]\}.$ Then the reward function $\eta$ can be clearly realized by a two-layer neural network with width $d$. Note that when $\alpha=0$ we have $\eta((\thetas_1,\thetas_2,\alpha),a)=\frac{1}{64}\dotp{\thetas_1}{a},$ which represents a linear reward. In the following we use $\vthetas=(\thetas_1,\thetas_2,0)$ as a shorthand.
	
	The UCB algorithm is described as follows. At every time step $t$, the algorithm maintains a upper confidence bound $C_t:\calA\to \R$. The function $C_t$ satisfies $\eta(\vthetas,a)\le C_t(a).$ And then the action for time step $t$ is $a_t\gets \argmax C_t(a)$. %
	
	Let $\calP=\{p_1,p_2,\cdots,p_n\}$ be an $\frac{1}{2}$-packing of the sphere $S^{d-1},$ where $n=\Omega(2^{d}).$ Let $\calB(p_i,\frac{1}{4})$ be the ball with radius $1/4$ centered at $p_i,$ and $B_i=\calB(p_i,\frac{1}{4})\cup S^{d-1}.$ We prove the theorem by showing that the UCB algorithm will explore every packing in $\calP$. That is, for any $i\in [n]$, there exists $t$ such that $a_t\in B_i$. Since we have $\sup_{a_j\in B_j}\dotp{p_i}{a_j}\le 31/32$ for all $j\neq i,$ this over-exploration strategy leads to a sample complexity (for finding a $(31/2048)$-suboptimal action) at least $\Omega(2^d)$ when $\vthetas=(p_i,p_i,0).$
	
	Let $\Theta_t$ be the set of parameters that is consistent with $\eta(\vthetas,a_1),\cdots,\eta(\vthetas,a_{t-1}).$ That is, $\Theta_t=\{\vtheta\in \Theta:\eta(\vtheta,a_\tau)=\eta(\vthetas,a_\tau),\forall \tau<t\}.$ Since our environment is deterministic, a tightest upper confidence bound is $C_t(a)=\sup_{\vtheta\in \Theta_t}\eta(\vtheta,a).$ Let $A_t=\{a_1,\cdots,a_{t}\}.$ It can be verified that for any $\theta_2\in S^{d-1},$ $\eta((\thetas_1,\theta_2,1),\cdot)$ is consistent with $\eta(\vthetas,\cdot)$ on $A_{t-1}$ if $\calB(\theta_2,\frac{1}{4})\cup A_{t-1}=\emptyset.$ As a result, for any $\theta_2$ such that $\calB(\theta_2,\frac{1}{4})\cup A_{t-1}=\emptyset$ we have 
	\begin{equation}\label{equ:optimism-1}
		C_t(\theta_2)\ge \frac{1}{32}> \frac{1}{128}+\sup_{a}\eta(\vthetas,a).
	\end{equation}
	
	Next we prove that for any $i\in[n]$, there exists $t$ such that $a_t\in \calB(p_i,\frac{1}{4}).$ Note that $\eta(\vtheta,\cdot)$ is $\frac{65}{64}$ Lipschitz for every $\vtheta\in \Theta.$ 
	As a result, $C_t(a_\tau+\xi)\le C_t(a_\tau)+\frac{65}{64}\normtwo{\xi}=\eta(\vthetas,a_\tau)+\frac{65}{64}\normtwo{\xi}$ for all $\tau<t.$
	Consequently, 
	\begin{align}\label{equ:optimism-2}
		C_t(a_\tau+\xi)\le \sup_{a}\eta(\vthetas,a)+\frac{1}{128}=\frac{3}{128}
	\end{align} for any $\tau<t$ and $\xi$ such that $\normtwo{\xi}\le \frac{1}{130}.$ 
	In other words, Eq.~\eqref{equ:optimism-2} upper bounds the upper confidence bound for actions that is taken by the algorithm, and Eq.~\eqref{equ:optimism-1} lower bounds the upper confidence bound for actions that is not taken.
	
	Now, for the sake of contradiction, assume that actions in $\calB(\theta_2,\frac{1}{4})$ is never taken by the algorithm. By Eq.~\eqref{equ:optimism-2} we have $C_t(\theta_2)\ge\frac{1}{32}$ for all $t.$ Let $\calH_t=\cup_{\tau=1}^{t-1}\calB(a_\tau,\frac{1}{130}).$ By Eq.~\eqref{equ:optimism-2} we have $C_t(a)\le \frac{3}{128}$ for all $a\in \calH_t$. Because $a_t\gets \argmax_{a}C_t(a)$ and $\max_{a\in\calH_t}C_t(a)<C_t(\theta_2)$, we conclude that $a_t\not\in \calH_t.$ Therefore, $\{a_t\}$ is a $(1/130)$-packing. However, the $(1/130)$-packing of $S^{d-1}$ has a size bounded by $130^d$, which leads to contradiction.
	
	For any $\theta_2$ there exists $t\le 130^{d}$ such that $a_t\in \calB(\theta_2,\frac{1}{4}).$
\end{proof}

\subsection{Proof of Theorem~\ref{thm:lowerbound-stochastic}}\label{app:proof-hardness-stochastic}
\begin{proof} %
	We consider a linear bandit problem with hypothesis class $\Theta=\{e_1,\cdots, e_d\}$. The action space is $S^{d-1}$. The stochastic reward function is given by $\eta(\theta,a)=\dotp{\theta}{a}+\xi$ where $\xi=\calN(0,1)$ is the noise. Define the set $A_i=\{a\in S^{d-1}:\abs{\dotp{a}{e_i}}\ge 0.9\}.$ By basic algebra we get, $A_i\cap A_j=\emptyset$ for all $i\neq j.$
	
	The manifold gradient of $\eta(\theta,\cdot)$ on $S^{d-1}$ is $$\grad \eta(\theta,a)=\pbra{I-aa^\top}\theta.$$ By triangular inequality we get $\normtwo{\grad \eta(\theta,a)}\ge \normtwo{\theta}-\abs{\dotp{a}{\theta}}.$ Consequently, $\normtwo{\grad \eta(\theta_i,a)}\ge 0.1$ for $a\not\in A_i.$ In other words, $\pbra{S^{d-1}\setminus A_i}$ does not contain any $(0.1,1)$-approximate second order stationary point for $\eta(\theta_i,\cdot).$
	
	For a fixed algorithm, let $a_1,\cdots,a_T$ be the sequence of actions chosen by the algorithm, and $x_t=\dotp{\thetas}{a_t}+\xi_t$. Next we prove that  with $T\lesssim d$ steps, there exists $i\in [d]$ such that $\Pr_i\bbra{a_T\in A_i}\le 1/2,$ where $\Pr_i$ denotes the probability space generated by $\thetas=\theta_i.$ Let $\Pr_0$ be the probability space generated by $\thetas=0.$ Let $E_{i,T}$ be the event that the algorithm outputs an action $a\in A_i$ at time step $T$. By Pinsker inequality we get,
	\begin{align}
		\E_{i}\bbra{E_{i,T}} \le \E_0\bbra{E_{i,T}}+\sqrt{\frac{1}{2}\KL\pbra{\Pr\nolimits_i,\Pr\nolimits_0}}.
	\end{align}
	Using the chain rule of KL-divergence and the fact that $\KL\pbra{\calN(0,1),\calN(a,1)}=\frac{a^2}{2},$ we get
	\begin{align}
		\E_{i}\bbra{E_{i,T}} \le \E_0\bbra{E_{i,T}}+\sqrt{\frac{1}{4}\E_0\bbra{\sum_{t=1}^{T}\dotp{a_t}{\theta_i}^2}}.
	\end{align}
	Consequently,
	\begin{align}
		\sum_{i=1}^{d}\E_{i}\bbra{E_{i,T}} \le\; &\sum_{i=1}^{d}\E_0\bbra{E_{i,T}}+\sum_{i=1}^{d}\sqrt{\frac{1}{4}\E_0\bbra{\sum_{t=1}^{T}\dotp{a_t}{\theta_i}^2}}\\
		\le\;& 1+\sqrt{\frac{d}{4}\E_0\bbra{\sum_{i=1}^{d}\sum_{t=1}^{T}\dotp{a_t}{\theta_i}^2}}\le 1+\sqrt{\frac{dT}{4}},
	\end{align}
	which means that 
	\begin{align}
		\min_{i\in [d]}\E_{i}\bbra{E_{i,T}} \le \frac{1}{d}+\sqrt{\frac{T}{4d}}.
	\end{align}
	Therefore when $T\le d$, there exists $i\in [d]$ such that $\E_{i}\bbra{E_{i,T}}\le \frac{3}{4}.$
\end{proof}

\section{Missing Proofs in Section~\ref{sec:main-results-bandit}}\label{sec:app:pf3}
In this section, we show missing proofs in Section~\ref{sec:main-results-bandit}.

\subsection{Proof of Lemma~\ref{lem:bandit-improvement}}\label{app:proof-bandit-improvement}
\begin{proof} %
	We prove the lemma by showing that algorithm~\ref{alg:ban} improves reward $\eta(\thetas,a_t)$ in the following two cases:
	\begin{itemize}
		\item[1.] $\normtwo{\ga \eta(\thetas, a_{t-1})}\ge \epsilon$, or
		\item[2.] $\normtwo{\ga \eta(\thetas, a_{t-1})}\le \epsilon$ and $\lmax\pbra{\ha \eta(\thetas, a_{t-1})}\ge 6\sqrt{\tasup\epsilon}$.
	\end{itemize}
	\paragraph{Case 1:} For simplicity, let $g_{t}=\ga \eta(\thetas,a_{t-1}).$ In this case we assume $\normtwo{g_{t}}\ge \epsilon.$ Define function 
	\begin{equation}
		\bar\eta_t(\theta,a)=\eta(\theta,a_{t-1})+\dotp{a-a_{t-1}}{\ga \eta(\theta,a_{t-1})}-\hasup\normtwo{a-a_{t-1}}^2
	\end{equation} to be the local first order approximation of function $\eta(\theta,a)$. By the Lipschitz assumption (namely, Assumption~\ref{assumption:lipschitz}), we have $\eta(\theta,a)\ge \bar{\eta}_t(\theta,a)$ for all $\theta\in \Theta,a\in \calA$. By the definition of $\err_{t,2}$ and $\err_{t,3}$, we get
	\begin{align}\label{equ:err-case-1}
		\bar{\eta}_t(\theta_t,a)\ge \bar\eta_t(\thetas,a)-\err_{t,2}-\normtwo{a-a_{t-1}}\err_{t,3}.
	\end{align}
	In this case we have
	\begin{align*}
		&\eta(\thetas,a_t)\ge\; \E_{\theta_t\sim p_t}\bbra{\eta(\theta_t,a_t)-\err_{t,1}}\\
		\ge\; &\sup_a\E_{\theta_t\sim p_t}\bbra{\eta(\theta_t,a)-\err_{t,1}}\tag{By the optimality of $a_t$}\\
		\ge\; &\sup_a\E_{\theta_t\sim p_t}\bbra{\bar{\eta}_t(\theta_t,a)-\err_{t,1}}\\
		\ge\; &\sup_a\E_{\theta_t\sim p_t}\bbra{\bar\eta_t(\thetas,a)-\err_{t,1}-\err_{t,2}-\normtwo{a-a_{t-1}}\err_{t,3}}\tag{By Eq.~\eqref{equ:err-case-1}}\\
		\ge\; &\E_{\theta_t\sim p_t}\bbra{\eta(\thetas,a_{t-1})+\frac{1}{4\hasup}\normtwo{g_t}^2-\err_{t,1}-\err_{t,2}-\frac{\normtwo{g_t}}{2\hasup}\err_{t,3}}&\tag{Take $a=a_{t-1}+\frac{g_t}{2\hasup}$}\\
		\ge\; &\eta(\thetas,a_{t-1})+\frac{\epsilon^2}{4\hasup}-\E_{\theta_t\sim p_t}\bbra{\pbra{2+\frac{\gasup}{\hasup}}\err_{t}}\tag{By Cauchy-Schwarz}
	\end{align*}
	\paragraph{Case 2:} Let $H_{t}=\ha \eta(\thetas,a_{t-1})$. Define $v_{t}\in \argmax_{v:\normtwo{v}=1}v^\top H_{t}v$. In this case we have $\normtwo{g_{t}}\le \epsilon$ and 
	\begin{align}\label{equ:eigenvalue-case-2}
		v_{t}^\top H_{t}v_{t}\ge 6\sqrt{\tasup\epsilon}\normtwo{v_{t}}^2.
	\end{align}
	Define function 
	\begin{align}
		\hat\eta_t(\theta,a)=\;&\eta(\theta,a_{t-1})+\dotp{a-a_{t-1}}{\ga \eta(\theta,a_{t-1})}\nonumber \\
		&+\frac{1}{2}\dotp{\ha \eta(\theta_t,a_{t-1})(a-a_{t-1})}{a-a_{t-1}}-\frac{\tasup}{2}\normtwo{a-a_{t-1}}^3
	\end{align} to be the local second order approximation of function $\eta(\theta,a)$. By the Lipschitz assumption (namely, Assumption~\ref{assumption:lipschitz}), we have $\eta(\theta,a)\ge \hat{\eta}_t(\theta,a)$ for all $\theta\in \Theta,a\in \calA$. 
	
	By Eq.~\eqref{equ:eigenvalue-case-2}, we can exploit the positive curvature by taking $a'=a_{t-1}+4\sqrt{\frac{\epsilon}{\tasup}}v_t$. Concretely, by basic algebra we get:
	\begin{align}
		\hat\eta_t(\thetas,a')&\ge \eta(\thetas,a_{t-1})-\epsilon\normtwo{a'-a_{t-1}}+3\sqrt{\tasup\epsilon}\normtwo{a'-a_{t-1}}^2-\frac{\tasup}{2}\normtwo{a'-a_{t-1}}^3\nonumber\\
		&\ge \eta(\thetas,a_{t-1})+12\sqrt{\frac{\epsilon^3}{\tasup}}.\label{equ:ascent-case-2}
	\end{align}
	Combining with the definition of $\err_{t,2}$, $\err_{t,3}$ and $\err_{t,4}$, for any $a\in \calA$ we get
	\begin{align}\label{equ:err-case-2t}
		\hat{\eta}_t(\theta_t,a)&\ge \hat\eta_t(\thetas,a)-\err_{t,2}-\normtwo{a-a_{t-1}}\err_{t,3}-\frac{1}{2}\normtwo{a-a_{t-1}}^2\err_{t,4}.
	\end{align}
	As a result, we have
	\begin{align*}
		&\eta(\thetas,a_t)\ge\; \E_{\theta_t\sim p_t}\bbra{\eta(\theta_t,a_t)-\err_{t,1}}\\
		\ge\; &\E_{\theta_t\sim p_t}\bbra{\eta(\theta_t,a')-\err_{t,1}}\tag{By the optimality of $a_t$}\\
		\ge\; &\E_{\theta_t\sim p_t}\bbra{\hat\eta_t(\thetas,a')-\err_{t,1}-\err_{t,2}-\normtwo{a-a_{t-1}}\err_{t,3}-\frac{1}{2}\normtwo{a-a_{t-1}}^2\err_{t,4}}\tag{By Eq.~\eqref{equ:err-case-2t}}\\
		\ge\; &\eta(\thetas,a_{t-1})+12\sqrt{\frac{\epsilon^3}{\tasup}}-\E_{\theta_t\sim p_t}\bbra{\err_{t,1}+\err_{t,2}+4\sqrt{\frac{\epsilon}{\tasup}}\err_{t,3}+\frac{8\epsilon}{\tasup}\err_{t,4}}\tag{By Eq.~\eqref{equ:ascent-case-2}}\\
		\ge\; &\eta(\thetas,a_{t-1})+12\sqrt{\frac{\epsilon^3}{\tasup}}-\E_{\theta_t\sim p_t}\bbra{2\err_{t}}\tag{When $16\epsilon\le \tasup$}.
	\end{align*}
	Combining the two cases together, we get the desired result.
\end{proof}

\subsection{Proof of Lemma~\ref{lem:bandit-concentration}}\label{app:proof-bandit-concentration}
\begin{proof}%
	Define $\calF_{t}$ to be the $\sigma$-field generated by random variable $u_{1:t},v_{1:t},\theta_{1:t}.$ In the following, we use $\E_t[\cdot]$ as a shorthand for $\E[\cdot\mid \calF_{t}].$ 
	
	Let $g_{t}=\ga \eta(\theta_t,a_{t-1})-\ga \eta(\thetas,a_{t-1})$. Note that condition on $\theta_t$ and $\calF_{t-1}$, $\dotp{g_t}{u_t}$ follows the distribution $\calN(0,\normtwo{g_t}^2).$ By Assumption~\ref{assumption:lipschitz}, $\normtwo{g_t}\le 2\gasup=\ubg.$ As a result,
	\begin{align}
		\E_{t-1}\bbra{\mintwo{\ubg^2}{\dotp{g_t}{u_t}^2}\mid \theta_t}\ge \frac{1}{2} \E_{t-1}\bbra{\dotp{g_t}{u_t}^2\mid \theta_t}=\frac{1}{2}\normtwo{g_t}^2.
	\end{align}
	By the tower property of expectation we get 
	\begin{align}\label{equ:bandit-lem2-part1}
		\E_{t-1}\bbra{\mintwo{\ubg^2}{\emperr_{t,3}^2}}\ge \frac{1}{2}\E_{t-1}\bbra{\err_{t,3}^2}.
	\end{align}
	
	Now we turn to the term $\emperr_{t,4}^2.$
	Let $H_t=\ha \eta(\theta_t,a_{t-1})-\ha \eta(\thetas,a_{t-1}).$ Define a random variable $x=\pbra{u_t^\top H_t v_t}^2.$ Note that $u_t,v_t$ are independent, we have
	\begin{align}
		\E_{t-1}[x\mid \theta_t]=\E_{t-1}\bbra{\normtwo{H_{t} v_t}^2\mid \theta_t}=\normF{H_{t}}^2\ge \normsp{H_{t}}^2.
	\end{align}
	Since $u_t,v_t$ are two Gaussian vectors, random variable $x$ has nice concentratebility properties. Therefore we can prove that the $\min$ operator in the definition of $\emperr_t$ does not change the expectation too much. Formally speaking, by Lemma~\ref{lem:concentration-matrix-min}, condition on $\calF_{t-1}$ and $\theta_t$, we have $\E\bbra{\mintwo{\ubh^2}{x}}\ge \frac{1}{2}\mintwo{\hasup^2}{\E\bbra{x}},$ which leads to 
	\begin{align}\label{equ:bandit-lem2-part2}
		&\E_{t-1}\bbra{\min(\ubh^2,\emperr_{t,4}^2)} \ge \frac{1}{2}\E_{t-1}\bbra{\min(\hasup^2,\normF{H_t}^2)}\ge \frac{1}{2}\E_{t-1}\bbra{\min(\hasup^2,\normsp{H_t}^2)}=\frac{1}{2}\E_{t-1}\bbra{\normsp{H_t}^2}.
	\end{align}
	
	Combining Eq.~\eqref{equ:bandit-lem2-part1} and Eq.~\eqref{equ:bandit-lem2-part2}, we get the desired inequality. 
\end{proof}

\subsection{Proof of Theorem~\ref{thm:bandit-main}}\label{app:proof-bandit-main}
\begin{proof}%
	Let $\delta_t=\inf_{a\in \sosp_{(\epsilon),6\sqrt{\tasup\epsilon}}}\eta(\thetas,a)-\eta(\thetas,a_t).$ By the definition of regret we have $\reg_{\epsilon,6\sqrt{\tasup\epsilon}}(T)=\sum_{t=1}^{T}\delta_t.$ Define $\upsilon=\mintwo{\frac{1}{4\hasup}\eps^2}{\frac{1}{\tasup^{1/2}}\epsilon^{3/2}}$ for simplicity. Recall that $C_1=2+\frac{\gasup}{\hasup}.$ In the following we prove by induction that for any $t_0$,
	\begin{align}\label{equ:bandit-induction}
		\E_{t_0-1}\bbra{\sum_{t=t_0}^{T}\delta_t}\le \E_{t_0-1}\bbra{\frac{1}{\upsilon}\pbra{\delta_{t_0}+C_1\sum_{t=t_0+1}^{T}\err_t}}.
	\end{align}
	
	For the base case where $t_0=T$ Eq.~\eqref{equ:bandit-induction} trivially holds because $\upsilon\le 1.$
	
	Now suppose Eq.~\eqref{equ:bandit-induction} holds for any $t>t_0$ and consider time step $t_0.$ When $a_{t_0}\not\in \sosp_{(\epsilon),6\sqrt{\tasup\epsilon}}$, applying Lemma~\ref{lem:bandit-improvement} we get $\eta(\thetas,a_{t_0+1})\ge \eta(\thetas,a_{t_0})+\upsilon-C_1\E_{t_0}[\err_{t_0+1}].$ By basic algebra we get,
	\begin{align}\label{equ:bandit-increase}
		\delta_{t_0+1}\le \delta_{t_0}-\upsilon+C_1\E_{t_0}[\err_{t_0+1}].
	\end{align}
	As a result,
	\begin{align*}
		\E_{t_0-1}\bbra{\sum_{t=t_0}^{T}\delta_t}&=\E_{t_0-1}\bbra{\delta_{t_0}+\sum_{t=t_0+1}^{T}\delta_t}\\
		&\le \E_{t_0-1}\bbra{\delta_{t_0}+\frac{1}{\upsilon}\pbra{\delta_{t_0+1}+C_1\sum_{t=t_0+2}^{T}\err_t}}\tag{By induction hypothesis}\\
		&\le \E_{t_0-1}\bbra{\delta_{t_0}-1+\frac{1}{\upsilon}\pbra{\delta_{t_0}+C_1\err_{t_0+1}+C_1\sum_{t=t_0+2}^{T}\err_t}}\tag{By Eq.~\eqref{equ:bandit-increase}}\\
		&\le\E_{t_0-1}\bbra{\frac{1}{\upsilon}\pbra{\delta_{t_0}+C_1\sum_{t=t_0+1}^{T}\err_t}}.
	\end{align*}
	On the other hand, when $a_{t_0}\in \sosp_{(\epsilon),6\sqrt{\tasup\epsilon}}$ we have 
	\begin{align*}
		&\eta(\thetas,a_{t_0+1})\ge \E_{\theta_{t_0+1}}\bbra{\eta(\theta_{t_0+1}, a_{t_0+1})-\err_{t_0+1,1}}\\
		\ge &\E_{\theta_{t_0+1}}\bbra{\eta(\theta_{t_0+1}, a_{t_0})-\err_{t_0+1,1}}\tag{By the optimality of $a_{t_0+1}$}\\
		\ge &\E_{\theta_{t_0+1}}\bbra{\eta(\thetas, a_{t_0})-\err_{t_0+1,1}-\err_{t_0+1,2}}\ge \eta(\thetas, a_{t_0})-C_1\E_{\theta_{t_0+1}}\bbra{\err_{t_0+1}}.
	\end{align*}
	Consequently, by basic algebra we get $\delta_{t_0+1}\le \delta_{t_0}+C_1\E_{t_0}[\err_{t_0+1}].$ Note that since $a_{t_0}\in \sosp_{(\epsilon),6\sqrt{\tasup\epsilon}}$, we have $\delta_{t_0}\le 0.$ As a result,
	\begin{align*}
		\E_{t_0-1}\bbra{\sum_{t=t_0}^{T}\delta_t}&\le  \E_{t_0-1}\bbra{\delta_{t_0}+\frac{1}{\upsilon}\pbra{\delta_{t_0+1}+C_1\sum_{t=t_0+2}^{T}\err_t}}\tag{By induction hypothesis}\\&\le  \E_{t_0-1}\bbra{\frac{1}{\upsilon}\pbra{\delta_{t_0+1}+C_1\sum_{t=t_0+2}^{T}\err_t}}\tag{$\delta_{t_0}\le 0$}\\
		&\le  \E_{t_0-1}\bbra{\frac{1}{\upsilon}\pbra{\delta_{t_0}+C_1\err_{t_0+1}+C_1\sum_{t=t_0+2}^{T}\err_t}}\\
		&\le  \E_{t_0-1}\bbra{\frac{1}{\upsilon}\pbra{\delta_{t_0}+C_1\sum_{t=t_0+1}^{T}\err_t}}.
	\end{align*}
	
	Combining the two cases together we prove Eq.~\eqref{equ:bandit-induction}. It follows that 
	\begin{align}
		&\E\bbra{\reg_{\epsilon,6\sqrt{\tasup\epsilon}}(T)}=\E\bbra{\sum_{t=0}^{T}\delta_t}\le \E\bbra{\frac{1}{\upsilon}\pbra{\delta_{0}+C_1\sum_{t=1}^{T}\err_t}}\\
		\le &\frac{1}{\upsilon}\pbra{1+C_1\E\bbra{\sqrt{T\sum_{t=1}^{T}\err_t^2}}}
		\le \frac{1}{\upsilon}\pbra{1+C_1\sqrt{T\E\bbra{\sum_{t=1}^{T}\err_t^2}}}.
	\end{align}
	Note that when realizability holds, we have $\inf_{\theta}\sum_{t=1}^{T}\ell((x_t,y_t);\theta)=0$. Therefore, by Lemma~\ref{lem:bandit-concentration} and the definition of online learning regret (see Eq.~\eqref{equ:def-online-learning}) we have
	\begin{align}
		\E\bbra{\reg_{\epsilon,6\sqrt{\tasup\epsilon}}(T)}\le &\frac{1}{\upsilon}\pbra{1+C_1\sqrt{2T\E\bbra{\sum_{t=1}^{T}\emperr_t^2}}}
		\le \frac{1}{\upsilon}\pbra{1+C_1\sqrt{4T\src_T}}.
	\end{align}
\end{proof}

\subsection{Proof of Theorem~\ref{thm:ban-action-space}}\label{app:action-convergence}
In this section we show that Alg.~\ref{alg:ban} finds a $(\epsilon,6\sqrt{\tasup\epsilon})$-approximate local maximum in polynomial steps. In the following, we treat $\gasup, \hasup, \tasup$ as constants.

\begin{proof}[Proof of Theorem~\ref{thm:ban-action-space}]
	We prove this theorem by contradiction. Suppose $\Pr\bbra{a_{t+1}\in \sosp_{\epsilon,6\sqrt{\tasup\epsilon}}}\le 0.5$ for all $t\in[T]$, we prove that $T\lesssim \frac{R}{\epsilon^4}\polylog(R,1/\epsilon).$
	
	Define $\upsilon=\mintwo{\frac{1}{4\hasup}\eps^2}{\frac{1}{\tasup^{1/2}}\epsilon^{3/2}}$. Recall that $C_1=2+\frac{\gasup}{\hasup}.$ By Lemma~\ref{lem:bandit-improvement}, when $a_t$ is not a $(\epsilon,6\sqrt{\tasup\epsilon})$-approximate local maximum we have
	\begin{align}
		\eta(\theta^\star,a_{t+1})\ge \eta(\thetas,a_{t})+\upsilon-C_1\E_{t}[\err_{t+1}].
	\end{align}
	Similar to the proof of Theorem~\ref{thm:bandit-main}, when $a_t$ is a $(\epsilon,6\sqrt{\tasup\epsilon})$-approximate local maximum we have
	\begin{align}
		\eta(\thetas,a_{t+1})\ge \eta(\thetas, a_{t})-C_1\E_{t}\bbra{\err_{t+1}}.
	\end{align}
	As a result, when $\Pr\bbra{a_{t+1}\in \sosp_{\epsilon,6\sqrt{\tasup\epsilon}}}\le 0.5$ we get
	\begin{align}\label{equ:action-conv-1}
		\E[\eta(\theta^\star,a_{t+1})]\ge \E[\eta(\thetas,a_{t})]+\frac{\upsilon}{2}-C_1\E\bbra{\err_{t+1}}.
	\end{align}
	Take summation of Eq.~\eqref{equ:action-conv-1} over $t\in [T]$ leads to 
	\begin{align}\label{equ:action-conv-2}
		\E[\eta(\theta^\star,a_{T})-\eta(\theta^\star,a_{0})]\ge \frac{\upsilon T}{2}-C_1\E\bbra{\sum_{t=1}^{T}\Delta_t}.
	\end{align}
	Lemma~\ref{lem:bandit-concentration} leads to 
	\begin{align}
		\E\bbra{\sum_{t=1}^{T}\Delta_t}\le \sqrt{2T\E\bbra{\sum_{t=1}^{T}\emperr_t^2}}\le 2T^{3/4}\pbra{R(\Theta)\polylog(T)}^{1/4}.
	\end{align}
	Combining with Eq.~\eqref{equ:action-conv-2} we have
	\begin{align}
		1\ge \E[\eta(\theta^\star,a_{T})-\eta(\theta^\star,a_{0})]\ge \frac{\upsilon T}{2}-2C_1T^{3/4}\pbra{R(\Theta)\polylog(T)}^{1/4}.
	\end{align}
	As a result, we can solve an upper bound of $T$. In particular, we get
	\begin{align}
		T\lesssim R(\Theta)\epsilon^{-8}\polylog(R(\Theta),1/\epsilon).
	\end{align}
	Consequently, when $T\gtrsim R(\Theta)\epsilon^{-8}\polylog(R(\Theta),1/\epsilon)$, there exists $t\in[T]$ such that $\Pr\bbra{a_{t+1}\in \sosp_{\epsilon,6\sqrt{\tasup\epsilon}}}> 0.5$.
\end{proof}

\subsection{Instantiations of Theorem~\ref{thm:bandit-main}}\label{app:instance}
In this section we rigorously prove the instantiations discussed in Section~\ref{sec:main-results-bandit}.

\paragraph{Linear bandit with finite model class.} 
Recall that the linear bandit reward is given by $\eta(\theta,a)=\dotp{\theta}{a}$, and the constrained reward is $\tilde{\eta}(\theta,a)=\eta(\theta,a)-\frac{1}{2}\normtwo{a}^2.$ 

In order to deal with $\ell_2$ regularization which violates Assumption~\ref{assumption:lipschitz}, we bound the set of actions Alg.~\ref{alg:ban} takes. Consider the regularized reward $\tilde{\eta}(\theta,a)$. When $\normtwo{a}>2$ we have $\tilde{\eta}(\theta,a)<0.$ Therefore the set of actions taken by Alg.~\ref{alg:ban} satisfies $\normtwo{a_t}\le 2$ for all $t$. Because we only apply Lemma~\ref{lem:bandit-improvement} and Lemma~\ref{lem:bandit-concentration} to actions that is taken by the algorithm, Theorem~\ref{thm:bandit-main} holds even if Assumption~\ref{assumption:lipschitz} is satisfied locally for $\normtwo{a}\lesssim 1.$ Since the gradient and Hessian of regularization term is $a$ and $I_{d}$ respectively, we have $\normtwo{\ga \tilde{\eta}(\theta,a)}\lesssim \normtwo{\ga \eta(\theta,a)}+1$ and $\normsp{\ha \tilde{\eta}(\theta,a)}\lesssim \normsp{\ha \eta(\theta,a)}+1$ when $\normtwo{a}\lesssim 1$, which verifies Assumption~\ref{assumption:lipschitz}.

In the following we translate the regularized local regret to the standard regret. 
Note that $\ga \tilde{\eta}(\theta,a)=\theta-a.$  As a result, the optimal action is given by $a^\star=\thetas.$ In addition, for any $a\in \sosp_{(\epsilon,1)}$ we have $\normtwo{\thetas-a}\le \epsilon.$ By algebraic manipulation we have
\begin{align}
	&\normtwo{\thetas-a}\le \epsilon\\
	\implies &\normtwo{\thetas}^2+\normtwo{a}^2-2\dotp{\thetas}{a}=\normtwo{\thetas-a}^2\le \epsilon^2\\
	\implies &\dotp{\thetas}{a}-\frac{1}{2}\normtwo{a}^2\ge \frac{1}{2}\normtwo{\thetas}^2-\frac{\epsilon^2}{2}=\dotp{\thetas}{\thetas}-\frac{1}{2}\normtwo{\thetas}^2-\frac{\epsilon^2}{2}.
\end{align}
Plug in the definition of regularized reward, for any $a\in\sosp_{(\epsilon,1)}$
\begin{align}
	\tilde{\eta}(\thetas,a)\ge \tilde{\eta}(\thetas,a^\star)-\frac{\epsilon^2}{2}.
\end{align}
Consequently,
\begin{align}
	&\E\bbra{\sum_{t=1}^{T}\pbra{\tilde{\eta}(\thetas,a^\star)-\tilde{\eta}(\thetas,a_t)}}\le \E\bbra{\sum_{t=1}^{T}\pbra{\inf_{a\in \sosp_{(\epsilon,1)}}\tilde{\eta}(\thetas,a)-\tilde{\eta}(\thetas,a_t)}}+\frac{\epsilon^2}{2}T\\
	&\quad \lesssim \epsilon^{-2}\sqrt{T\src_T}+\epsilon^2T,
\end{align}
where the last inequality follows from Theorem~\ref{thm:bandit-main}. Since the loss function $\ell$ is uniformly bounded by $v=\ubh^2+\ubg^2+4=O(1).$ By \citet{rakhlin2011online}, for finite hypothesis we have $\src_T\le v\sqrt{2T\log |\Theta|}.$ 
By choosing $\epsilon=T^{-1/16}(\log|\Theta|)^{1/16}$ we get 
\begin{align}
	\E\bbra{\sum_{t=1}^{T}\pbra{\tilde{\eta}(\thetas,a^\star)-\tilde{\eta}(\thetas,a_t)}}\lesssim T^{7/8}(\log |\Theta|)^{1/8}.
\end{align}

Now we bound the standard regret by the regularized regret. Recall that $\normtwo{\thetas}=1$. As a result,
\begin{align}\label{equ:lb-pf-1}
	\tilde{\eta}(\thetas,a^\star)-\tilde{\eta}(\thetas,a)=\frac{1}{2}-\dotp{\thetas}{a}+\frac{1}{2}\normtwo{a}^2=\frac{1}{2}\normtwo{\thetas-a}^2.
\end{align}
By Lemma~\ref{lem:helper-reduction} we have
\begin{align}\label{equ:lb-pf-2}
	&\normtwo{\thetas-a}^2\ge (1-\dotp{\thetas}{a})^2.
\end{align}
Combining Eq.~\eqref{equ:lb-pf-1} and Eq.~\eqref{equ:lb-pf-2} we get
\begin{align}
	&\E\bbra{\sum_{t=1}^{T}\pbra{\eta(\thetas,a^\star)-\eta(\thetas,a_t)}}=\E\bbra{\sum_{t=1}^{T}\pbra{1-\dotp{\thetas}{a_t}}}\\
	\le\;& \E\bbra{\sqrt{T\sum_{t=1}^{T}\pbra{1-\dotp{\thetas}{a_t}}^2}}
	\le\; \E\bbra{\sqrt{2T\sum_{t=1}^{T}\pbra{\tilde{\eta}(\thetas,a^\star)-\tilde{\eta}(\thetas,a_t)}}}\\
	\le\;& \sqrt{2T\E\bbra{\sum_{t=1}^{T}\pbra{\tilde{\eta}(\thetas,a^\star)-\tilde{\eta}(\thetas,a_t)}}}
	\lesssim\; T^{15/16}(\log|\Theta|)^{1/16}.
\end{align}

\paragraph{Linear bandit with sparse or structured model vectors.} In this case, the reduction is exactly the same as that in linear bandit. In the following we prove that the sparse linear hypothesis has a small covering number. Note that the $\log|\Theta|$ regret bound fits perfectly with the covering number technique. That is, we can discretize the hypothesis $\Theta$ by finding a $1/\poly(d T)$-covering of the loss function $\calL=\{\ell(\cdot,\theta):\theta\in \Theta\}$. And then the regret of our algorithm depends polynomially on the log-covering number. Since the log-covering number of the set of $s$-sparse vectors is bounded by $\bigO\pbra{s\log(d T)},$ we get the desired result. 

For completeness, in the following we prove that the Eluder dimension for sparse linear model is $\Omega(d).$
\begin{lemma}\label{lem:sparse-eluder}
	Let $e_1,\cdots,e_d$ be the basis vectors and $f_i(a)=\dotp{e_i}{a}.$ Specifically, define $f_0(a)=0.$ Define the function class $\calF=\{f_i:0\le i\le d\}.$ The Eluder dimension of $\calF$ is at least $d$.
\end{lemma}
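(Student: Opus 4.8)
The plan is to exhibit an explicit sequence of $d$ actions in which every action is $\varepsilon$-independent of all of its predecessors (in the Russo--Van Roy sense) for any fixed $\varepsilon\in(0,1)$; by the definition of the Eluder dimension this immediately yields $\dim_E(\calF,\varepsilon)\ge d$. Recall that an action $a$ is $\varepsilon$-independent of $\{a_1,\dots,a_k\}$ with respect to $\calF$ whenever there exists a pair $f,f'\in\calF$ satisfying $\sum_{j=1}^{k}\pbra{f(a_j)-f'(a_j)}^2\le\varepsilon^2$ but $\abs{f(a)-f'(a)}>\varepsilon$.

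The natural candidate sequence is the ordered list of standard basis vectors, $a_i=e_i$ for $i=1,\dots,d$. The single computation I need is $f_i(e_j)=\dotp{e_i}{e_j}=\delta_{ij}$, so that each $f_i$ vanishes on every basis vector except $e_i$, where it takes the value $1$. To certify independence, for the action $a_i=e_i$ I would take the witnessing pair $(f_i,f_0)$, where $f_0\equiv 0$ is included in $\calF$ by construction. On each earlier action $e_j$ with $j<i$ we have $f_i(e_j)-f_0(e_j)=\delta_{ij}=0$, so the cumulative discrepancy over the predecessors is exactly $0\le\varepsilon$, while at $a_i=e_i$ we have $f_i(e_i)-f_0(e_i)=1>\varepsilon$ for every $\varepsilon<1$. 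Thus each $e_i$ is $\varepsilon$-independent of $e_1,\dots,e_{i-1}$, and the full sequence $e_1,\dots,e_d$ has length $d$, giving $\dim_E(\calF,\varepsilon)\ge d$ for all $\varepsilon<1$.

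There is essentially no analytic obstacle here; the only point requiring care is the correct bookkeeping of the definition (small discrepancy on the already-queried points, large discrepancy on the new one) and the role of the zero function $f_0$. Including $f_0$ is precisely what lets $f_i$ agree with its partner on all previously queried basis vectors while disagreeing at $e_i$. With two distinct nonzero functions $f_p,f_q$ one would in general already see disagreement on some predecessor, so this exact argument would only reach length $d-1$ using nonzero functions alone (the final action having no valid witnessing partner).

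Finally, to connect with the claim in the main text, I would note that the nonzero members of $\calF$ are genuine $1$-sparse unit reward vectors, hence lie in the $s$-sparse linear hypothesis class for every $s\ge 1$; since they alone already force the Eluder dimension to be at least $d-1$ and since the Eluder dimension is monotone under enlarging the function class, the $\Omega(d)$ lower bound transfers to the sparse linear class, in stark contrast to the sparsity-only dependence of our regret bound.
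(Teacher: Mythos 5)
Your proposal is correct and follows essentially the same argument as the paper: the same sequence $a_i=e_i$ of standard basis vectors, with the same witnessing pair $(f_i,f_0)$ showing agreement on all predecessors and disagreement of size $1>\varepsilon$ at $e_i$. Your additional care about the $\varepsilon$-independence bookkeeping and the remark on transferring the bound to the unit-norm sparse class (via the nonzero functions alone) are correct refinements but do not change the substance of the argument.
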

\begin{proof}
	In order to prove the lower bound for Eluder dimension, we only need to find a sequence $a_1,\cdots,a_d$ such that $a_i$ is independent with its predecessors. In the sequel we consider the action sequence $a_1=e_1,a_2=e_2,\cdots,a_d=e_d.$
	
	Now we prove that for any $i\in [d]$, $a_i$ is independent with $a_j$ where $j<i.$ Indeed, consider functions $f_i$ and $f_0.$ By definition we have $f_i(a_j)=f_0(a_j),\forall j<i.$ However, $f_i(a_i)=1\neq 0=f_0(a_i).$
\end{proof}

\paragraph{Deterministic logistic bandits.} Recall that in this case the reward function is given by $\eta(\theta,a)=(1+e^{-\dotp{\theta}{a}})^{-1}$, and the regularized loss is $\tilde{\eta}(\theta,a)=\eta(\theta,a)-\frac{c}{2}\normtwo{a}^2$ where $c=e(e+1)^{-2}.$ 
By basic algebra we get
\begin{align}
	\nabla_a \tilde{\eta}(\theta,a)=\frac{\exp(-\theta^\top a)}{(1+\exp(-\theta^\top a))^2}\theta-c a.
\end{align}
As a result, we have $a^\star=\thetas.$

Note that $\tilde{\eta}(\thetas,\cdot)$ is $(1/20)$-strongly concave. As a result, for any $\epsilon\in \sosp_{(\epsilon,1)}$ we get
\begin{align}
	\tilde{\eta}(\thetas,a^\star)-\tilde{\eta}(\thetas,a)\lesssim \normtwo{\nabla_a \tilde{\eta}(\thetas,a)}^2\lesssim \epsilon^2.
\end{align}
Consequently,
\begin{align}
	&\E\bbra{\sum_{t=1}^{T}\pbra{\tilde{\eta}(\thetas,a^\star)-\tilde{\eta}(\thetas,a_t)}}\lesssim \E\bbra{\sum_{t=1}^{T}\pbra{\inf_{a\in \sosp_{(\epsilon,1)}}\tilde{\eta}(\thetas,a)-\tilde{\eta}(\thetas,a_t)}}+\frac{\epsilon^2}{2}T\\
	&\quad \lesssim \epsilon^{-2}\sqrt{T\src_T}+\epsilon^2T,
\end{align}
Since the loss function $\ell$ is uniformly bounded by $v=\ubh^2+\ubg^2+4=\bigO(1).$ By \citet{rakhlin2011online}, for finite hypothesis we have $\src_T\le v\sqrt{2T\log |\Theta|}.$ Choose $\epsilon=T^{-1/16}\log|\Theta|^{1/16}$ we get
\begin{align}
	\E\bbra{\sum_{t=1}^{T}\pbra{\tilde{\eta}(\thetas,a^\star)-\tilde{\eta}(\thetas,a_t)}}\lesssim T^{7/8}(\log|\Theta|)^{1/8}.
\end{align}

In the following we prove a reduction from standard regret to regularized regret. Define $r(x)\defeq(1+\exp(-x))$ for shorthand. By Taylor expansion, for any $x\in \R$ there exists $\xi\in \R$ such that $r(x)=r(1)+(x-1)r'(x)+(x-1)^2r''(\xi).$ As a result,
\begin{align*}
	&\tilde{\eta}(\thetas,a^\star)-\tilde{\eta}(\thetas,a)=r(1)-\frac{c}{2}-r(\dotp{\thetas}{a})+\frac{c}{2}\normtwo{a}^2\\
	=\;&(1-\dotp{\thetas}{a})r'(1)-(1-\dotp{\thetas}{a})^2r''(\xi)-\frac{c}{2}+\frac{c}{2}\normtwo{a}^2\\
	=\;&\frac{c}{2}-c\dotp{\thetas}{a}+\frac{c}{2}\normtwo{a}^2-(1-\dotp{\thetas}{a})^2r''(\xi) \tag{Recall that $r'(1)=c$.}\\
	=\;&\frac{c}{2}\normtwo{\thetas-a}^2-(1-\dotp{\thetas}{a})^2r''(\xi)\\
	\ge\;&\pbra{\frac{c}{2}-r''(\xi)}(1-\dotp{\thetas}{a})^2\tag{By Lemma~\ref{lem:helper-reduction}}\\
	\ge\;&(1-\dotp{\thetas}{a})^2/50\gtrsim (r(1)-r(\dotp{\thetas}{a}))^2\tag{The reward function is Lipschitz.}\\
	\gtrsim\;&(\eta(\thetas,a^\star)-\eta(\thetas,a))^2.
\end{align*}

As a result we have
\begin{align}
	&\E\bbra{\sum_{t=1}^{T}\pbra{\eta(\thetas,a^\star)-\eta(\thetas,a_t)}}
	\lesssim\; \E\bbra{\sqrt{T\sum_{t=1}^{T}\pbra{\eta(\thetas,a^\star)-\eta(\thetas,a_t)}^2}}\\
	\lesssim\;& \E\bbra{\sqrt{T\sum_{t=1}^{T}\pbra{\tilde{\eta}(\thetas,a^\star)-\tilde{\eta}(\thetas,a_t)}}}
	\lesssim\; \sqrt{T\E\bbra{\sum_{t=1}^{T}\pbra{\tilde{\eta}(\thetas,a^\star)-\tilde{\eta}(\thetas,a_t)}}}\\
	\lesssim\;& T^{15/16}(\log|\Theta|)^{1/16}.
\end{align}

\paragraph{Two-layer neural network.} 
Recall that a two-layer neural network is defined by $\eta((\mW_1,\mW_2),a)=\mW_2\sigma(\mW_1a)$ ,where $\sigma$ is the activation function. For a matrix $\mW_1\in \R^{m\times d},$ the $(1,\infty)$-norm is defined by $\max_{i\in [m]}\sum_{j=1}^{d}\abs{[\mW_1]_{i,j}}.$ We make the following assumptions regarding the activation function.
\begin{assumption}\label{assumption:two-layer}
For any $x,y\in \R$, the activation function $\sigma(\cdot)$ satisfies
\begin{align}
	&\sup_{x}\abs{\sigma(x)}\le 1,\quad\sup_{x}\abs{\sigma'(x)}\le 1,\quad\sup_{x}\abs{\sigma''(x)}\le 1,\\
	&\abs{\sigma''(x)-\sigma''(y)}\le \abs{x-y}.
\end{align}
\end{assumption}

The following theorem summarized our result in this setting.
\begin{theorem}\label{thm:two-layer}
	Let $\Theta=\{(W_1,W_2):\normone{\mW_2}\le 1,\norm{\mW_1}_{1,\infty}\le 1\}$ be the parameter hypothesis. 
	Under the setting of Theorem~\ref{thm:bandit-main} with Assumption~\ref{assumption:two-layer}, the local regret of Alg.~\ref{alg:ban} running on two-layer neural networks can be bounded by $\tildeO\pbra{\epsilon^{-2}T^{3/4}}.$ In addition, if the neural network is input concave, then the global regret of Alg.~\ref{alg:ban} is bounded by $\tildeO\pbra{T^{7/8}}.$ 
\end{theorem}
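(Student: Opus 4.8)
The plan is to read the result off from Theorem~\ref{thm:bandit-main}, so the real work is to bound the sequential Rademacher complexity $\src_T$ of the loss class in Eq.~\eqref{equ:bandit-loss} and to verify Assumption~\ref{assumption:lipschitz}. For the latter, note that the $-\frac{1}{2}\normtwo{a}^2$ regularizer is independent of $\theta$ and therefore cancels in every difference $[\hat y]_i-[y]_i$ appearing in $\ell$, so it affects neither the complexity nor the ground-truth derivatives; it only keeps the visited actions in a bounded region $\normtwo{a_t}\lesssim 1$. On that region, writing $\eta=\mW_2\sigma(\mW_1a)$ and letting $w_i$ be the $i$-th row of $\mW_1$, the bounds $\normone{\mW_2}\le 1$ and $\norm{w_i}_1\le 1$ together with $\sup\abs{\sigma'},\sup\abs{\sigma''}\le 1$ and the $1$-Lipschitzness of $\sigma''$ give $\normtwo{\ga\eta}\le 1$, $\normsp{\ha\eta}\le 1$, and a Hessian Lipschitz constant $\le 1$, so $\gasup,\hasup,\tasup=O(1)$. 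Granting $\src_T=\tildeO(\sqrt{T}\polylog(d))$, Theorem~\ref{thm:bandit-main} gives $\sqrt{4T\src_T}=\tildeO(T^{3/4})$ and hence the local regret $\tildeO(\epsilon^{-2}T^{3/4})$.

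To bound $\src_T(\ell)$ I would first use subadditivity of sequential Rademacher complexity over the four summands of $\ell$ together with the contraction lemma, since on the bounded prediction range enforced by $\ubg,\ubh$ each map $z\mapsto(z-[y]_i)^2$ (and its clipped version) is Lipschitz. This reduces the problem to bounding the sequential Rademacher complexities, over $\theta=(\mW_1,\mW_2)$, of the three classes $\eta(\theta,a')=\mW_2\sigma(\mW_1 a')$, the gradient projection $\dotp{\ga\eta(\theta,a')}{u}=\sum_i[\mW_2]_i\,\sigma'(\dotp{w_i}{a'})\dotp{w_i}{u}$, and the Hessian projection $\dotp{\ha\eta(\theta,a')u}{v}=\sum_i[\mW_2]_i\,\sigma''(\dotp{w_i}{a'})\dotp{w_i}{u}\dotp{w_i}{v}$. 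In each case $\normone{\mW_2}\le 1$ lets me pass to the convex hull of the single-hidden-unit classes, and the linear maps $w\mapsto\dotp{w}{a'},\dotp{w}{u},\dotp{w}{v}$ over $\norm{w}_1\le 1$ have complexity $\tildeO(\sqrt{(\log d)/T})$ by $\ell_1/\ell_\infty$ duality.

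The hard part will be the gradient and Hessian projection classes, where each hidden unit is a product of functions of the \emph{same} parameter $w_i$. For the gradient term I would apply the product lemma for sequential Rademacher complexity once, to the product of $\sigma'(\dotp{w_i}{a'})$ (bounded by $1$ and, since $\abs{\sigma''}\le 1$, $1$-Lipschitz, hence amenable to contraction) and the linear factor $\dotp{w_i}{u}$; for the Hessian term I would apply it twice, peeling off $\dotp{w_i}{u}$ and $\dotp{w_i}{v}$ around $\sigma''(\dotp{w_i}{a'})$ (bounded and $1$-Lipschitz by the last line of Assumption~\ref{assumption:two-layer}). The one remaining subtlety is that the linear factors are bounded pointwise by $\norm{u}_\infty$ and $\norm{v}_\infty$; but $u,v$ are exactly the stochastic Gaussian components $\xi_t$, so the outer expectation in the definition of $\src_T$ turns these into $\E\norm{u}_\infty=\tildeO(\sqrt{\log d})$ factors. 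This is precisely the mechanism that keeps the \emph{rate} dimension-free while injecting only a $\polylog(d)$ factor, and summing the three bounds yields $\src_T=\tildeO(\sqrt{T}\polylog(d))$.

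Finally, for the global regret in the input-concave case I would observe that when $\mW_2$ has all negative entries and $\sigma$ is monotone and convex, $\mW_2\sigma(\mW_1a)$ is concave, so $\eta(\thetas,\cdot)$ is $1$-strongly concave thanks to the $-\frac{1}{2}\normtwo{a}^2$ term. Then any $a\in\sosp_{\epsilon,6\sqrt{\tasup\epsilon}}$ satisfies $\eta(\thetas,a^\star)-\eta(\thetas,a)\le\frac{1}{2}\normtwo{\ga\eta(\thetas,a)}^2\le\epsilon^2/2$, so $\inf_{a\in\sosp_{\epsilon,6\sqrt{\tasup\epsilon}}}\eta(\thetas,a)\ge\eta(\thetas,a^\star)-\epsilon^2/2$ and the global regret is at most the local regret plus $\epsilon^2T/2$. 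Combining with the local bound gives global regret $\lesssim\epsilon^{-2}\sqrt{T\src_T}+\epsilon^2T=\tildeO(\epsilon^{-2}T^{3/4}+\epsilon^2T)$, and choosing $\epsilon=T^{-1/16}$ balances the two terms at $\tildeO(T^{7/8})$, as claimed.
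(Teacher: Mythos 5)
Your proposal is correct and follows essentially the same route as the paper's proof: the same verification of Assumption~\ref{assumption:lipschitz} with $O(1)$ constants on the bounded action region enforced by the regularizer, the same reduction of $\src_T$ via the composition/contraction lemma of \citet{rakhlin2015online}, the same use of $\normone{\mW_2}\le 1$ to pass to single-hidden-unit classes, the same product argument for the coupled factors $\sigma'(\dotp{w_i}{a'})\dotp{w_i}{u}$ (the paper makes your ``product lemma'' explicit by first relaxing to a decoupled class with independent parameters $w_1,w_1'$ and then composing with the Lipschitz map $\phi(x,y)=xy$), and the same treatment of the $\norm{u}_\infty,\norm{v}_\infty$ factors via the Gaussianity of $\xi_t$. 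The global-regret conversion through strong concavity and the choice $\epsilon=T^{-1/16}$ is also identical to the paper's.
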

\begin{proof}
We prove the theorem by first bounding the sequential Rademacher complexity of the loss function, and then applying Theorem~\ref{thm:bandit-main}.
Let $\theta=(\mW_1,\mW_2).$ Recall that $u\odot v$ denotes the element-wise product. By basic algebra we get,
\begin{align}
	&\dotp{\ga \eta(\theta,a)}{u}=\mW_2\pbra{\sigma'(\mW_1 a)\odot \mW_1u},\\
	&u^\top \ha \eta(\theta,a)v=\mW_2\pbra{\sigma''(\mW_1 a)\odot \mW_1 u\odot \mW_1 v}.
\end{align}
First of all, we verify that the regularized reward $\tilde{\eta}(\theta,a)\defeq \eta(\theta,a)-\frac{1}{2}\normtwo{a}^2$ satisfies Assumption~\ref{assumption:lipschitz}. Indeed we have
\begin{align*}
	&\normtwo{\ga \eta(\theta,a)}=\sup_{u\in S^{d-1}}\dotp{\ga \eta(\theta,a)}{u}\le 1,\\
	&\normsp{\ha \eta(\theta,a)}=\sup_{u,v\in S^{d-1}} u^\top \ha \eta(\theta,a)v\le 1,\\
	&\normsp{\ha \eta(\theta,a_1)-\ha \eta(\theta,a_2)}=\sup_{u,v\in S^{d-1}} \mW_2\pbra{\pbra{\sigma''(\mW_1 a_1)-\sigma''(\mW_1a_2)}\odot \mW_1 u\odot \mW_1 v}\le \normtwo{a_1-a_2}.
\end{align*}
Observe that $\abs{\eta(\theta,a)}\le \norm{a}_\infty$, we have $\tilde{\eta}(\theta,a)<0$ when $\normtwo{a}>2.$ As a result, action $a_t$ taken by Alg.~\ref{alg:ban} satisfies $\normtwo{a_t}\le 2$ for all $t$. Since the gradient and Hessian of regularization term is $a$ and $I_{d}$ respectively, we have $\normtwo{\ga \tilde{\eta}(\theta,a)}\lesssim \normtwo{\ga \eta(\theta,a)}+1$ and $\normsp{\ha \tilde{\eta}(\theta,a)}\lesssim \normsp{\ha \eta(\theta,a)}+1.$
It follows that Assumption~\ref{assumption:lipschitz} holds with constant Lipschitzness for actions $a$ such that $\norm{a}\lesssim 1$.

In the following we bound the sequential Rademacher complexity of the loss function. By \citet[Proposition 15]{rakhlin2015online}, we can bound the sequential Rademacher complexity of $\err_{t,1}^2$ and $\err_{t,2}^2$ by $\tildeO\pbra{\sqrt{T\log d}}.$ Next we turn to higher order terms.

First of all, because the $(1,\infty)$ norm of $\mW_1$ is bounded, we have $\norm{\mW_1 u}_{\infty}\le \norm{u}_{\infty}.$ It follows from the upper bound of $\sigma'(x)$ that $\norm{\sigma'(\mW_1a)\odot\mW_1 u}_{\infty}\le \norm{u}_{\infty}.$ Therefore we get \begin{equation}
	\dotp{\ga \eta(\theta,a)}{u}\le \norm{\mW_2}_1\norm{\sigma'(\mW_1a)\odot\mW_1 u}_{\infty}\le \norm{u}_{\infty}.
\end{equation}
Similarly, we get 
\begin{equation}
	u^\top \ha \eta(\theta,a) v\le  \norm{u}_{\infty} \norm{v}_{\infty}.
\end{equation}

Let $B=(1+\norm{u}_{\infty})(1+\norm{v}_{\infty})$ for shorthand. We consider the error term $\emperr_{t,3}^2=\pbra{\dotp{\ga \eta(\theta,a)}{u}-[y_t]_3}^2.$ Let $\calG_1$ be the function class $\{\pbra{\dotp{\ga \eta(\theta,a)}{u}-[y_t]_3}^2:\theta\in \Theta\},$ and $\calG_2=\{\dotp{\ga \eta(\theta,a)}{u}:\theta\in \Theta\}.$ Applying \citet[Lemma 4]{rakhlin2015online} we get
$$\src_T(\calG_1)\lesssim B\log^{3/2}(T^2)\src_T(\calG_2).$$
Define $\calG_3=\{\sigma'(w_1^\top a)\cdot w_1^\top u:w_1\in \R^{d}, \normone{w_1}\le 1\}.$ In the following we show that $\src_T(\calG_2)\lesssim \src_T(\calG_3).$ For any sequence $u_1,\cdots,u_T$ and $\calA$-valued tree $\va$, we have
\begin{align}
	\src_T(\calG_2)&=\E_{\epsilon}\bbra{\sup_{\substack{\mW_2:\normone{\mW_2}\le 1\\g_1,\cdots,g_w\in \calG_3}}\sum_{t=1}^{T}\epsilon_t\pbra{\sum_{j=1}^{w}[\mW_2]_j g_j(\va_t(\epsilon))}}\\
	&\le \E_{\epsilon}\bbra{\sup_{\substack{\mW_2:\normone{\mW_2}\le 1\\g_1,\cdots,g_w\in \calG_3}}\normone{\mW_2}\sup_{j\in [w]}\abs{\sum_{t=1}^{T}\epsilon_t g_j(\va_t(\epsilon))}}\\
	&\le \E_{\epsilon}\bbra{\sup_{g\in \calG_3}\abs{\sum_{t=1}^{T}\epsilon_t\pbra{g_j(\va_t(\epsilon))}}}.
\end{align}
Since we have $0\in\calG_3$ by taking $w_1=0$, by symmetricity we have 
\begin{align}
	&\E_{\epsilon}\bbra{\sup_{g\in \calG_3}\abs{\sum_{t=1}^{T}\epsilon_t\pbra{g_j(\va_t(\epsilon))}}}
	\le 2\E_{\epsilon}\bbra{\sup_{g\in \calG_3}\sum_{t=1}^{T}\epsilon_t\pbra{g_j(\va_t(\epsilon))}}=2\src_T(\calG_3).
\end{align}

Now we bound $\src_T(\calG_3)$ by applying the composition lemma of sequential Rademacher complexity (namely \citet[Lemma 4]{rakhlin2015online}). First of all we define a relaxed function hypothesis
$\calG_4=\{\sigma'((w_1')^\top a)\cdot w_1^\top u:w_1,w_1'\in \R^{d}, \normone{w_1}\le 1, \normone{w_1'}\le 1\}.$ Since $\calG_3\subset \calG_4$ we have $\src_T(\calG_3)\le \src_T(\calG_4).$ 
Note that we have $\abs{\sigma'(w_1^\top a)}\le 1$ and $w_1^\top u\le \norm{u}_\infty.$ Let $\phi(x,y)=xy$, which is $(3c)$-Lipschitz for $\abs{x},\abs{y}\le c.$ Define $\calG_5=\{\sigma'(w_1^\top a):w_1\in \R^{d}, \normone{w_1}\le 1\}$ and $\calG_6=\{w_1^\top u:w_1\in \R^{d}, \normone{w_1}\le 1\}$. \citet[Lemma 4]{rakhlin2015online} gives $\src_T(\calG_4)\lesssim B\log^{3/2}(T^2)\pbra{\src_T(\calG_5)+\src_T(\calG_6)}.$ Note that $\calG_5$ is a generalized linear hypothesis and $\calG_6$ is linear, we have $\src_T(\calG_5)\lesssim B\log^{3/2}(T^2)\sqrt{T\log (d)}$ and $\src_T(\calG_6)\lesssim B\sqrt{T\log (d)}$. 

In summary, we get $\src_T(\calG_1)=\bigO\pbra{\poly(B)\polylog(d, T)\sqrt{T}}.$ Since the input $u_t\sim \calN(0,I_{d\times d})$, we have $B\lesssim \log(dT)$ with probability $1/T.$ As a result, the distribution dependent Rademacher complexity of $\emperr_{t,3}^2$ in this case is bounded by $\bigO\pbra{\polylog(d, T)\sqrt{T}}$.

Similarly, we can bound the sequential Rademacher complexity of the Hessian term $\emperr_{t,4}^2$ by $\bigO\pbra{\polylog(d, T)\sqrt{T}}$ by applying composition lemma with Lipschitz function $\phi(x,y,z)=xyz$ with bounded $\abs{x},\abs{y},\abs{z}.$ By \citet[Lemma 4]{rakhlin2015online}, composing with the min operator only introduces $\poly(\log(T))$ terms in the sequential Rademacher complexity. As a result, the sequential Rademacher complexity of the loss function can be bounded by 
$$\src_T=\bigO\pbra{\polylog(d, T)\sqrt{T}}.$$

Applying Theorem~\ref{thm:bandit-main}, the local regret of Alg.~\ref{alg:ban} is bounded by $\tildeO\pbra{\epsilon^{-2}T^{3/4}}.$ 

When the neural network is input concave (see \cite{amos2017input}), the regularized reward $\tilde{\eta}(\theta,a)$ is $\Omega(1)$-strongly concave. As a result, for any $a\in \sosp_{\epsilon,1}$ we have $\tilde{\eta}(\thetas,a)\ge \tilde{\eta}(\thetas,a^\star)-\bigO(\epsilon^2).$ It follows that,
\begin{equation}
	\reg(T)=\tildeO\pbra{\epsilon^{-2}T^{3/4}+\epsilon^2T}.
\end{equation}
By letting $\epsilon=T^{-1/16}$ we get $\reg(T)=\tildeO\pbra{T^{7/8}}.$
\end{proof} 

\section{Missing Proofs in Section~\ref{sec:main-results-rl}}\label{app:rl}
First of all, we present our algorithm in Alg.~\ref{alg:rl}.
\begin{algorithm}[h]
	\setstretch{0.9}
	\caption{\textbf{Vi}rtual Ascent with \textbf{O}n\textbf{lin}e Model Learner (\AlgBan~for RL)}
	\label{alg:rl}
	\begin{algorithmic}[1]
		\State Let $\calH_0 = \emptyset$; choose $a_0\in \calA$ arbitrarily. 
		\For{$t=1,2,\cdots$}
		\State Run $\ol$ on $\calH_{t-1}$ with loss function $\ell$ (defined in Eq.~\eqref{equ:rl-loss}) and obtain $p_t = \calA(\calH_{t-1})$. 
		\State $\psi_t\gets \argmax_{\psi}\E_{\theta_t\sim p_t}\bbra{\eta(\theta_t,\psi)}$;
		\State Sample one trajectory $\tau_t$ from policy $\pi_{\psi_t}$, and one trajectory $\tau_{t}'$ from policy $\pi_{\psi_{t-1}}.$
		\State Update $\calH_t\gets \calH_{t-1}\cup\{(\tau,\tau')\}$
		\EndFor
	\end{algorithmic}
\end{algorithm}

	In the following we present the proof sketch for Theorem~\ref{thm:rl-main}. Compare to the bandit case, we only need to prove an analog of Lemma~\ref{lem:bandit-concentration}, which means that we need to upper-bound the error term $\err_t$ by the difference of dynamics, as discussed before. Formally speaking, let $\tau_t=(s_1,a_1,\cdots,s_H,a_H)$ be a trajectory sampled from policy $\pi_{\psi_{t}}$ under the ground-truth dynamics $\dy_{\thetas}.$ By telescope lemma (Lemma~\ref{lem:telescope}) we get
	\begin{align}\label{equ:rl-1x}
		\V(s_1)-V^{\psi}_{\thetas}(s_1)=\E_{\tau\sim \rho^{\psi}_{\thetas}}\bbra{\sum_{h=1}^{H}\pbra{\V(\dy_{\theta}(s_h,a_h))-\V(\dy_{\thetas}(s_h,a_h))}}.
	\end{align}
	Lipschitz assumption (Assumption~\ref{assumption:RL-lipschitz}) yields,
	\begin{align}\label{equ:rl-2x}
		\abs{\V(\dy_{\theta}(s_h,a_h))-\V(\dy_{\thetas}(s_h,a_h))}\le L_0\normtwo{\dy_{\theta}(s_h,a_h)-\dy_{\thetas}(s_h,a_h))}.
	\end{align}
	Combining Eq.~\eqref{equ:rl-1x} and Eq.~\eqref{equ:rl-2x} and apply Cauchy-Schwartz inequality gives an upper bound for $[\err_t]_1^2$ and $[\err_{t}]_2^2.$ As for the gradient term, we will take gradient w.r.t. $\psi$ to both sides of Eq.~\eqref{equ:rl-1x}. The gradient inside expectation can be dealt with easily. And the gradient w.r.t. the distribution $\rho^{\psi}_{\thetas}$ can be computed by policy gradient lemma (Lemma~\ref{lem:PG}). As a result we get
	\begin{align}
		&\gpsi\V(s_1)-\gpsi V^{\psi}_{\thetas}(s_1)\nonumber\\
		=\;&\E_{\tau\sim \rho^{\psi}_{\thetas}}\bbra{\pbra{\sum_{h=1}^{H}\gpsi\log\pi_{\psi}(a_h\mid s_h)}\pbra{\sum_{h=1}^{H}\pbra{\V(\dy_{\theta}(s_h,a_h))-\V(\dy_{\thetas}(s_h,a_h))}}}\nonumber\\
		&+\;\E_{\tau\sim \rho^{\psi}_{\thetas}}\bbra{\sum_{h=1}^{H}\pbra{\gpsi\V(\dy_{\theta}(s_h,a_h))-\gpsi\V(\dy_{\thetas}(s_h,a_h))}}.
	\end{align}
	The first term can be bounded by vector-form Cauchy-Schwartz and Assumption~\ref{assumption:RL-smoothness}, and the second term is bounded by Assumption~\ref{assumption:RL-lipschitz}. Similarly, this approach can be extended to second order term. As a result, we have the following lemma.
	
	\begin{lemma}\label{lem:rl-concentration} Under the setting of Theorem~\ref{thm:rl-main}, we have
		\begin{align}
			c_1\E_{\tau_{1:t},\tau_{1:t}',\theta_{1:t}}\bbra{\emperrrl_t^2}\ge \E_{\theta_{1:t}}\bbra{\err_t^2}.
		\end{align}
	\end{lemma}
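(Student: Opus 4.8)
The plan is to prove the lemma by bounding each of the four error terms $\err_{t,1},\dots,\err_{t,4}$ separately, in expectation over the sampled trajectories, by a constant multiple of the one-step dynamics prediction errors that make up $\emperrrl_t^2$ (the loss of Eq.~\eqref{equ:rl-loss}), and then assembling the pieces via $(\sum_{i=1}^4 x_i)^2\le 4\sum_{i=1}^4 x_i^2$ to collect the constant $c_1$. The common starting point for all four terms is the telescoping identity Eq.~\eqref{equ:rl-1x} (Lemma~\ref{lem:telescope}), which writes the value gap $\V(s_1)-V^{\psi}_{\thetas}(s_1)$ as an expectation under $\rho^{\psi}_{\thetas}$ of a sum of $H$ one-step differences $\V(\dy_{\theta_t}(s_h,a_h))-\V(\dy_{\thetas}(s_h,a_h))$.

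First I would handle the two zeroth-order terms $\err_{t,1}=\abs{\eta(\theta_t,\psi_t)-\eta(\thetas,\psi_t)}$ and $\err_{t,2}=\abs{\eta(\theta_t,\psi_{t-1})-\eta(\thetas,\psi_{t-1})}$. Applying the value Lipschitzness $L_0$ of Assumption~\ref{assumption:RL-lipschitz} to each one-step difference and then Cauchy--Schwarz across the $H$ summands gives $\err_{t,i}^2\lesssim HL_0^2\,\E_{\tau}\bbra{\sum_{h=1}^H\normtwo{\dy_{\theta_t}(s_h,a_h)-\dy_{\thetas}(s_h,a_h)}^2}$, which is exactly the trajectory loss inside $\emperrrl_t^2$. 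For the gradient term $\err_{t,3}$, I would differentiate Eq.~\eqref{equ:rl-1x} in $\psi$. The derivative passing through the trajectory distribution produces, by the policy-gradient lemma (Lemma~\ref{lem:PG}), a score term $\E_\tau[g\,w]$ with $g=\sum_h\gpsi\log\pi_\psi(a_h\mid s_h)$ and $w=\sum_h(\V(\dy_{\theta_t})-\V(\dy_{\thetas}))$, plus a term where the derivative hits the inner value functions, controlled by $L_1$. The crucial move, and the reason the bound stays dimension-free, is to control the score term through $\normtwo{\E_\tau[g\,w]}^2\le\normop{\E_\tau[gg^\top]}\,\E_\tau[w^2]$ rather than through $\E_\tau[\normtwo{g}^2]$: since the scores are conditionally mean-zero, the cross terms in $\E_\tau[gg^\top]=\sum_h\E_\tau[g_hg_h^\top]$ vanish by the tower property, so $\normop{\E_\tau[gg^\top]}\le H\Pg$ by Assumption~\ref{assumption:RL-smoothness}, while $\E_\tau[w^2]\le HL_0^2\,\E_\tau[\sum_h\normtwo{\dy_{\theta_t}-\dy_{\thetas}}^2]$ as above.

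The Hessian term $\err_{t,4}=\normsp{\hpsi\eta(\theta_t,\psi_{t-1})-\hpsi\eta(\thetas,\psi_{t-1})}$ is the crux. Differentiating Eq.~\eqref{equ:rl-1x} twice via the second-order score-function identity splits it into four groups: (i) $\E_\tau[\hpsi w]$, bounded in spectral norm using the Hessian Lipschitzness $L_2$; (ii) the symmetric cross terms $\E_\tau[g\,(\gpsi w)^\top+(\gpsi w)\,g^\top]$, bounded using $\Pg$ and $L_1$; (iii) $\E_\tau[w\,\sum_h\hpsi\log\pi_\psi(a_h\mid s_h)]$, bounded using the log-policy Hessian moment $\Ph$ and $L_0$; and (iv) $\E_\tau[w\,gg^\top]$, bounded using the fourth-order score moment $\Pt$ and $L_0$. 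Each group reduces, by operator-norm subadditivity and Cauchy--Schwarz, to a multiple of the trajectory dynamics error, and summing the four contributions (with the factor-$4$ loss from squaring the sum) reproduces precisely the three-part constant $c_1=HL_0^2(4H^2\Ph+4H^4\Pt+2H^2\Pg+1)+HL_1^2(8H^2\Pg+2)+4HL_2^2$.

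The main obstacle I expect is group (iv) of the Hessian bound: the second-order REINFORCE identity must be written out carefully so that the $\hpsi\log\pi$ and $gg^\top$ pieces separate correctly, and controlling $\normsp{\E_\tau[w\,gg^\top]}$ requires bounding a fourth moment of the scores in injective/operator norm while keeping every step dimension-free. Concretely, one must avoid trace bounds throughout and instead repeatedly invoke the conditional-zero-mean orthogonality of scores at distinct time steps together with Assumption~\ref{assumption:RL-smoothness}; the polynomial-in-$H$ bookkeeping needed to match $c_1$ exactly is then tedious but routine.
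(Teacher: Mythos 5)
Your proposal follows essentially the same route as the paper's proof: the telescoping identity (Lemma~\ref{lem:telescope}) for the zeroth-order errors, the policy gradient lemma (Lemma~\ref{lem:PG}) applied once for the gradient error and twice for the Hessian error (yielding exactly the four groups you list), and operator-norm H\"{o}lder inequalities (the paper's Lemmas~\ref{lem:holder-vector}, \ref{lem:holder-matrix}, \ref{lem:holder-2tensor}) combined with the moment bounds of Assumption~\ref{assumption:RL-smoothness}. The one place you deviate is the treatment of the aggregated score moments. The paper never invokes the conditional mean-zero property of the scores; it bounds the sums by the crude pointwise Cauchy--Schwarz step $\left(\sum_{h}\langle u,g_h\rangle\right)^2\le H\sum_h\langle u,g_h\rangle^2$ inside the supremum over unit vectors, which yields $\normsp{\E[gg^\top]}\le H^2\Pg$, $\normsp{\E[g^{\otimes 4}]}\le H^4\Pt$, and the analogous $H^2\Ph$ bound --- and these are precisely the powers of $H$ that appear in $c_1$. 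Your orthogonality argument (cross terms $\E[g_h g_{h'}^\top]$ vanish for $h\neq h'$ by the tower property) is valid for the second moment and gives the tighter $H\Pg$, but then your final constant comes out strictly smaller than $c_1$, so you will not ``reproduce $c_1$ exactly'' as claimed; this is harmless, since a smaller constant still implies the stated inequality. More substantively, for the fourth-moment term orthogonality does \emph{not} kill all cross terms --- terms in which the largest time index appears at least twice, such as $\E[\langle u,g_{h_1}\rangle^2\langle u,g_{h_2}\rangle^2]$, survive --- so that step is messier than ``tedious but routine'' if you insist on the orthogonality route; adopting the paper's Cauchy--Schwarz bound there closes everything cleanly. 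Finally, note that no $(\sum_{i=1}^4 x_i)^2\le 4\sum_i x_i^2$ assembly step is needed at the end: the paper defines $\err_t^2=\sum_{i=1}^4\err_{t,i}^2$ as a sum of squares, so you simply add your four bounds.
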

	
	Proof of Lemma~\ref{lem:rl-concentration} is shown in Appendix~\ref{app:proof-rl-concentration}. Proof of Theorem~\ref{thm:rl-main} is exactly the same as that of Theorem~\ref{thm:bandit-main} except for replacing Lemma~\ref{lem:bandit-concentration} with Lemma~\ref{lem:rl-concentration}.

\subsection{Proof of Lemma~\ref{lem:rl-concentration}}\label{app:proof-rl-concentration}
\begin{proof}%
	The lemma is proven by combining standard telescoping lemma and policy gradient lemma. Specifically, let $\rho^\pi_\dy$ be the distribution of trajectories generated by policy $\pi$ and dynamics $\dy$. By telescoping lemma (Lemma~\ref{lem:telescope}) we have,
	\begin{align}\label{equ:telescope}
		\Vs{t}(s_1)-V^{\psi_t}_{\thetas}(s_1)=\E_{\tau\sim \rho^{\psi_t}_{\thetas}}\bbra{\sum_{h=1}^{H}\pbra{\Vs{t}(\dy_{\theta_t}(s_h,a_h))-\Vs{t}(\dy_{\thetas}(s_h,a_h))}}.
	\end{align}
	By the Lipschitz assumption (Assumption~\ref{assumption:RL-lipschitz}),
	\begin{align}
		\abs{\Vs{t}(\dy_{\theta_t}(s_h,a_h))-\Vs{t}(\dy_{\thetas}(s_h,a_h))}\le L_0\normtwo{\dy_{\theta_t}(s_h,a_h)-\dy_{\thetas}(s_h,a_h))}.
	\end{align}
	Consequently
	\begin{align}
		\err_{t,1}^2= \pbra{\Vs{t}(s_0)-V^{\psi_t}_{\thetas}(s_0)}^2\le HL_0^2\E_{\tau\sim \rho^{\psi_t}_{\thetas}}\bbra{\sum_{h=1}^{H}\normtwo{\dy_{\theta_t}(s_h,a_h)-\dy_{\thetas}(s_h,a_h))}^2}.
	\end{align}
	Similarly we get, 
	\begin{align}
		\err_{t,2}^2= \pbra{V^{\psi_{t-1}}_{\theta_t}(s_0)-V^{\psi_{t-1}}_{\thetas}(s_0)}^2\le HL_0^2\E_{\tau\sim \rho^{\psi_{t-1}}_{\thetas}}\bbra{\sum_{h=1}^{H}\normtwo{\dy_{\theta_{t}}(s_h,a_h)-\dy_{\thetas}(s_h,a_h))}^2}.
	\end{align}
	
	Now we turn to higher order terms. First of all, by H\"{o}lder inequality and Assumption~\ref{assumption:RL-smoothness}, we can prove the following:%
	\begin{itemize}
		\item $\normsp{\E_{\tau\sim \rho^\psi_{\thetas}}\bbra{\pbra{\sum_{h=1}^{H}\gpsi \log \pi_\psi(a_h\mid s_h)}\pbra{\sum_{h=1}^{H}\gpsi \log \pi_\psi(a_h\mid s_h)}^\top}}\le H^2\Pg,\forall \psi\in \Psi;$
		\item $\normsp{\E_{\tau\sim \rho^\psi_{\thetas}}\bbra{\pbra{\sum_{h=1}^{H}\gpsi \log \pi_\psi(a_h\mid s_h)}^{\otimes 4}}}\le H^4\Pt,\forall \psi\in \Psi;$
		\item $\normsp{\E_{\tau\sim \rho^\psi_{\thetas}}\bbra{\pbra{\sum_{h=1}^{H}\hpsi \log \pi_\psi(a_h\mid s_h)}\pbra{\sum_{h=1}^{H}\hpsi \log \pi_\psi(a\mid s)}^\top}}\le H^2\Ph,\forall \psi\in \Psi.$
	\end{itemize}
	Indeed, consider the first statement. Define $g_h=\gpsi \log \pi_\psi(a_h\mid s_h)$ for shorthand. Then we have
	\begin{align}
		&\normsp{\E_{\tau\sim \rho^\psi_{\thetas}}\bbra{\pbra{\sum_{h=1}^{H}g_h}\pbra{\sum_{h=1}^{H}g_h}^\top}}
		= \sup_{u\in S^{d-1}}u^\top\E_{\tau\sim \rho^\psi_{\thetas}}\bbra{\pbra{\sum_{h=1}^{H}g_h}\pbra{\sum_{h=1}^{H}g_h}^\top}u\\
		&\quad=\sup_{u\in S^{d-1}}\E_{\tau\sim \rho^\psi_{\thetas}}\bbra{\dotp{u}{\pbra{\sum_{h=1}^{H}g_h}}^2}
		\le \sup_{u\in S^{d-1}}\E_{\tau\sim \rho^\psi_{\thetas}}\bbra{H\sum_{h=1}^{H}\dotp{u}{g_h}^2}\\
		&\quad \le\E_{\tau\sim \rho^\psi_{\thetas}}\bbra{H\sum_{h=1}^{H}\sup_{u\in S^{d-1}}\dotp{u}{g_h}^2} =\E_{\tau\sim \rho^\psi_{\thetas}}\bbra{H\sum_{h=1}^{H}\normsp{gg^\top}}\le H^2\Pg.
	\end{align}
	Similarly we can get the second and third statement.
	
	For any fixed $\psi$ and $\theta$ we have
	\begin{align}\label{equ:telescope-2}
		\V(s_1)-V^{\psi}_{\thetas}(s_1)=\E_{\tau\sim \rho^{\psi}_{\thetas}}\bbra{\sum_{h=1}^{H}\pbra{\V(\dy_{\theta}(s_h,a_h))-\V(\dy_{\thetas}(s_h,a_h))}}.
	\end{align}
	
	Applying policy gradient lemma (namely, Lemma~\ref{lem:PG}) to RHS of Eq.~\eqref{equ:telescope-2} we get,
	\begin{align}\label{equ:rl-pg-1}
		&\gpsi\V(s_1)-\gpsi V^{\psi}_{\thetas}(s_1)\nonumber\\
		=\;&\E_{\tau\sim \rho^{\psi}_{\thetas}}\bbra{\pbra{\sum_{h=1}^{H}\gpsi\log\pi_{\psi}(a_h\mid s_h)}\pbra{\sum_{h=1}^{H}\pbra{\V(\dy_{\theta}(s_h,a_h))-\V(\dy_{\thetas}(s_h,a_h))}}}\nonumber\\
		&+\;\E_{\tau\sim \rho^{\psi}_{\thetas}}\bbra{\sum_{h=1}^{H}\pbra{\gpsi\V(\dy_{\theta}(s_h,a_h))-\gpsi\V(\dy_{\thetas}(s_h,a_h))}}.
	\end{align}
	Define the following shorthand:
	\begin{align}
		\G(s,a)&=\V(\dy_{\theta}(s,a))-\V(\dy_{\thetas}(s,a)),\\
		f&=\sum_{h=1}^{H}\gpsi\log\pi_\psi(a_h\mid s_h).
	\end{align}
	In the following we also omit the subscription in $\E_{\tau\sim \rho^{\psi}_{\thetas}}$ when the context is clear. It followed by Eq.~\eqref{equ:rl-pg-1} that
	\begin{align*}
		&\normtwo{\gpsi\V(s_1)-\gpsi V^{\psi}_{\thetas}(s_1)}^2\\
		\le\;&2\normtwo{\E\bbra{f\pbra{\sum_{h=1}^{H}\G(s_h,a_h)}}}^2+2\normtwo{\E\bbra{\sum_{h=1}^{H}\gpsi\G(s_h,a_h)}}^2\\
		\le\;&2\normsp{\E\bbra{ff^\top}}\E\bbra{\pbra{\sum_{h=1}^{H}\G(s_h,a_h)}^2}
		+2\normtwo{\E\bbra{\sum_{h=1}^{H}\gpsi\G(s_h,a_h)}}^2\tag{By Lemma~\ref{lem:holder-vector}}\\
		\le\;&2H\normsp{\E\bbra{ff^\top}}\E\bbra{\sum_{h=1}^{H}\G(s_h,a_h)^2}
		+2H\E\bbra{\sum_{h=1}^{H}\normtwo{\gpsi\G(s_h,a_h)}^2}.
	\end{align*}
	Now, plugin $\psi=\psi_{t-1},\theta=\theta_t$ and apply Assumption~\ref{assumption:RL-lipschitz} we get
	\begin{align*}
		&\err_{t,3}^2=\normtwo{\gpsi V^{\psi_{t-1}}_{\theta_t}(s_1)-\gpsi V^{\psi_{t-1}}_{\thetas}(s_1)}^2\\
		\le\;&(2HL_1^2+2H^3\Pg L_0^2)\E_{\tau\sim \rho^{\psi_{t-1}}_{\thetas}}\bbra{\sum_{h=1}^{H}\normtwo{\dy_{\theta_t}(s_h,a_h)-\dy_{\thetas}(s_h,a_h)}^2}.
	\end{align*}
	
	For any fixed $\psi, \theta$, define the following shorthand:
	\begin{align}		
		g&=\sum_{h=1}^{H}\pbra{\V(\dy_{\theta}(s_h,a_h))-\V(\dy_{\thetas}(s_h,a_h))}.
	\end{align}
	Apply policy gradient lemma again to RHS of Eq.~\eqref{equ:rl-pg-1} we get
	\begin{align}
		&\hpsi\V(s_1)-\hpsi V^{\psi}_{\thetas}(s_1)\nonumber\\
		=\;&\E\bbra{\pbra{\gpsi g} f^\top}\nonumber+\E\bbra{f\pbra{\gpsi g}^\top}+\E\bbra{\hpsi g}+\E\bbra{g\pbra{\sum_{h=1}^{H}\hpsi\log\pi_\psi(a_h\mid s_h)}}+\E\bbra{g\pbra{f f^\top}}.
	\end{align}
	As a result of Lemma~\ref{lem:holder-matrix} and Lemma~\ref{lem:holder-2tensor} that,
	\begin{align*}\label{equ:rl-pg-2}
		&\normsp{\hpsi\V(s_1)-\hpsi V^{\psi}_{\thetas}(s_1)}^2\\
		=\;&4\normsp{\E\bbra{\pbra{\gpsi g} f^\top}+\E\bbra{f\pbra{\gpsi g}^\top}}^2+4\normsp{\E\bbra{\hpsi g}}^2+4\normsp{\E\bbra{g\pbra{f f^\top}}}^2\\
		&+4\normsp{\E\bbra{g\pbra{\sum_{h=1}^{H}\hpsi\log\pi_\psi(a_h\mid s_h)}}}^2\\
		\le&\;8\sup_{u,v\in S^{d-1}}\E\bbra{\dotp{\gpsi g}{u}\dotp{f}{v}}^2+4\E\bbra{\normsp{\hpsi g}^2}+4\E\bbra{g^2}\normsp{\E\bbra{f^{\otimes 4}}}\\
		&+4\E\bbra{g^2}\normsp{\E\bbra{\pbra{\sum_{h=1}^{H}\hpsi\log\pi_\psi(a_h\mid s_h)}\pbra{\sum_{h=1}^{H}\hpsi\log\pi_\psi(a_h\mid s_h)}^\top}}\numberthis.
	\end{align*}
	Note that by H\"{o}lder's inequality,
	\begin{align*}
		\sup_{u,v\in S^{d-1}}\E\bbra{\dotp{\gpsi g}{u}\dotp{f}{v}}^2\le \sup_{u,v\in S^{d-1}}\E\bbra{\dotp{\gpsi g}{u}^2}\E\bbra{\dotp{f}{v}^2}\le \E\bbra{\normtwo{\gpsi g}^2}\normsp{\E\bbra{ff^\top}}.
	\end{align*}
	By Assumption~\ref{assumption:RL-lipschitz} we get,
	\begin{align}
		\E[g^2]&=\E\bbra{\pbra{\sum_{h=1}^{H}\pbra{\V(\dy_{\theta}(s_h,a_h))-\V(\dy_{\thetas}(s_h,a_h))}}^2}\\
		&\le H\E\bbra{\sum_{h=1}^{H}\pbra{\V(\dy_{\theta}(s_h,a_h))-\V(\dy_{\thetas}(s_h,a_h))}^2}\\
		&\le HL_0^2\E\bbra{\sum_{h=1}^{H}\normtwo{\dy_{\theta}(s_h,a_h)-\dy_{\thetas}(s_h,a_h)}^2}.
	\end{align}
	Similarly, we have
	\begin{align}
		\E[\normtwo{\gpsi g}^2]
		&\le HL_1^2\E\bbra{\sum_{h=1}^{H}\normtwo{\dy_{\theta}(s_h,a_h)-\dy_{\thetas}(s_h,a_h)}^2},\\
		\E[\normsp{\hpsi g}^2]
		&\le HL_2^2\E\bbra{\sum_{h=1}^{H}\normtwo{\dy_{\theta}(s_h,a_h)-\dy_{\thetas}(s_h,a_h)}^2}.
	\end{align}
	Combining with Eq.~\eqref{equ:rl-pg-2} we get,
	\begin{align*}
		&\err_{t,4}^2=\normsp{\hpsi\Vs{t}(s_1)-\gpsi V^{\psi_t}_{\thetas}(s_1)}^2\\
		\le \;&\pbra{8H^3L_1^2 \Pg +4HL_2^2+4L_0^2(H^3\Ph+H^5\Pt)}\E_{\tau\sim \rho^{\psi_{t-1}}_{\thetas}}\bbra{\sum_{h=1}^{H}\normtwo{\dy_{\theta_t}(s_h,a_h)-\dy_{\thetas}(s_h,a_h)}^2}.
	\end{align*}
	By noting that $\err_t^2= \sum_{i=1}^{4}\err_{i,t}^2,$ we get the desired upper bound.
\end{proof}

\section{Analysis of Example~\ref{example:1}}\label{app:rl-instance}

Recall that our RL instance is given as follows:
\begin{align}
	\dy(s,a)&=\NN_\theta(s+a),\\
	\pi_\psi(s)&=\calN(\psi s,\sigma^2 I).\label{equ:app-policy}
\end{align}
And the assumptions are listed below.
\begin{itemize}
	\item Lipschitzness of reward function: $\abs{r(s_1,a_1)-r(s_2,a_2)}\le L_r(\normtwo{s_1-s_2}+\normtwo{a_1-a_2}).$
	\item Bounded Parameter: we assume $\normop{\psi}\le \bigO(1).$
\end{itemize}

In the sequel we verify the assumptions of Theorem~\ref{thm:rl-main}.
\subsection{Verifying Assumption~\ref{assumption:RL-smoothness}.} 
\paragraph{Verifying item 1.}
Recall that $\psi\in \R^{d\times d}$. By algebraic manipulation, for all $s,a$ we get,
\begin{align}\label{equ:app:pg-1}
	\gpsi \log \pi_\psi(a\mid s)=\frac{1}{\sigma^2}\vec\pbra{(a-\psi s)\otimes s}
\end{align}
where $\vec(x)$ denotes the vectorization of tensor $x$. Define random variable $u=a-\psi s.$ By the definition of policy $\pi_\psi(s)$ we have $u\sim \calN(0,\sigma^2I).$ As a result, 
\begin{align}
	&\normspsm{\E_{a\sim \pi_\psi(\cdot\mid s)}[(\gpsi \log \pi_\psi(a\mid s))(\gpsi \log \pi_\psi(a\mid s))^\top]}\\
	=&\frac{1}{\sigma^4}\sup_{v\in S^{d\times d-1}}\E_{u\sim \calN(0,\sigma^2I)}\bbra{\dotp{v}{\vec(u\otimes s)}^2}.\label{equ:app-4-2-1-1}
\end{align}
Note that $\dotp{v}{\vec(u\otimes s)}=\sum_{1\le i,j\le d}[u]_i [s]_j[v]_{i,j}.$ Because $u$ is isotropic, $[u]_i$ are independent random variables where $[u]_i\sim \calN(0,\sigma^2).$ Therefore $\dotp{v}{\vec(u\otimes s)}\sim \calN\pbra{0,\sigma^2\sum_{i=1}^{d}\pbra{\sum_{j=1}^{d}s_j v_{i,j}}^2}.$ Combining with Eq.~\eqref{equ:app-4-2-1-1} we get,
\begin{align}
	&\E_{u\sim \calN(0,\sigma^2I)}\bbra{\dotp{v}{\vec(u\otimes s)}^2}=\sigma^2\sum_{i=1}^{d}\pbra{\sum_{j=1}^{d}s_j v_{i,j}}^2\\
	\le\; &\sigma^2\sum_{i=1}^{d}\pbra{\sum_{j=1}^{d}s_j^2}\pbra{\sum_{j=1}^{d}v_{i,j}^2}\le \normtwo{s}^2\normtwo{v}^2.
\end{align}
Consequently we have 
\begin{align}
	\normspsm{\E_{a\sim \pi_\psi(\cdot\mid s)}[(\gpsi \log \pi_\psi(a\mid s))(\gpsi \log \pi_\psi(a\mid s))^\top]}\le \frac{1}{\sigma^2}\defeq \Pg.
\end{align}

\paragraph{Verifying item 2.}
Similarly, using the equation where $\E_{x\sim \calN(0,\sigma^2)}[x^4]=3\sigma^4$ we have
\begin{align}
	&\normspsm{\E_{a\sim \pi_\psi(\cdot\mid s)}[(\gpsi \log \pi_\psi(a\mid s))^{\otimes 4}]}
	=\frac{1}{\sigma^8}\sup_{v\in S^{d\times d-1}}\E_{u\sim \calN(0,\sigma^2I)}\bbra{\dotp{v}{\vec(u\otimes s)}^4}.\label{equ:app-4-2-1-2}\\
	\le\;&\frac{3}{\sigma^8}\pbra{\sigma^2\sum_{i=1}^{d}\pbra{\sum_{j=1}^{d}s_j v_{i,j}}^2}^2\le \frac{3}{\sigma^4}\normtwo{s}^4\normtwo{v}^4\le \frac{3}{\sigma^4}\defeq \Pt.
\end{align}

\paragraph{Verifying item 3.}
Since $\hpsi \log\pi_\psi(a\mid s)$ is PSD, we have
\begin{align}
	&\normspsm{\E_{a\sim \pi_\psi(\cdot\mid s)}[(\hpsi \log \pi_\psi(a\mid s))(\hpsi \log \pi_\psi(a\mid s))^\top]}\\
	=\;&\sup_v \E\bbra{v^\top (\hpsi \log \pi_\psi(a\mid s))(\hpsi \log \pi_\psi(a\mid s))^\top v}\\
	=\;&\sup_v \E\bbra{\normtwo{(\hpsi \log \pi_\psi(a\mid s))^\top v}^2}
	=\sup_v \E\bbra{\pbra{v^\top (\hpsi \log \pi_\psi(a\mid s)) v}^4}.
\end{align}
By algebraic manipulation, for all $s,a\in \R^{d}$ and $v\in \R^{d\times d}$ we have 
\begin{align}
	v^\top (\hpsi \log \pi_\psi(a\mid s))v=-\sum_{i=1}^{d}\pbra{\sum_{j=1}^{d}v_{i,j}s_j}^2.
\end{align}
Consequently,
\begin{align}
	\pbra{v^\top (\hpsi \log \pi_\psi(a\mid s)) v}^4\le \normtwo{s}^4\normtwo{v}^4\le 1\defeq \Ph.
\end{align}

\subsection{Verifying Assumption~\ref{assumption:lipschitz}.} 
\paragraph{Verifying item 1.}
We verify Assumption~\ref{assumption:lipschitz} by applying policy gradient lemma. Recall that 
\begin{align}
	\eta(\theta,\psi)=\E_{\tau\sim \rho^\psi_\theta}\bbra{\sum_{h=1}^{H}r(s_h,a_h)}.
\end{align}
By policy gradient lemma (Lemma~\ref{lem:PG}) we have
\begin{align}\label{equ:app:PG-1}
	\gpsi \eta(\theta,\psi)=\E_{\tau\sim \rho^\psi_\theta}\bbra{\pbra{\sum_{h=1}^{H}\gpsi \log\pi_\psi(a_h\mid s_h)}\pbra{\sum_{h=1}^{H}r(s_h,a_h)}}.
\end{align}

By Eq.~\eqref{equ:app:pg-1}, condition on $s_h$ we get
\begin{align}
	\gpsi \log \pi_\psi(a_h\mid s_h)=\frac{1}{\sigma^2}\vec(u\otimes s_h)
\end{align}
where $u=a_h-\psi s_h\sim \calN(0,\sigma^2 I).$ Define the shorthand $g=\sum_{h=1}^{H}r(s_h,a_h).$ Note that by H\"{o}lder inequality,
\begin{align}
	&\normtwo{\E\bbra{\gpsi \log \pi_\psi(a_h\mid s_h)g}}^2=\sup_{v\in \R^{d\times d},\normtwo{v}=1}\E\bbra{\dotp{\gpsi \log \pi_\psi(a_h\mid s_h)}{v}g}^2\\
	\le &\sup_{v\in \R^{d\times d},\normtwo{v}=1}\E\bbra{\dotp{\gpsi \log \pi_\psi(a_h\mid s_h)}{v}^2}\E\bbra{g^2}.
\end{align}
Since $v\in \R^{d\times d}$, if we view $v$ as a $d\times d$ matrix then
$\dotp{\gpsi \log \pi_\psi(a_h\mid s_h)}{v}=\frac{1}{\sigma^2}\dotp{vs_h}{u}.$ Because $u$ is an isotropic Gaussian random vector, $\dotp{vs_h}{u}\sim \calN(0, \sigma^2\normtwo{vs_h}^2).$ Consequently, 
\begin{align}
	\E\bbra{\dotp{\gpsi \log \pi_\psi(a_h\mid s_h)}{v}^2}=\frac{1}{\sigma^2}\normtwo{vs_h}^2\le \frac{1}{\sigma^2}\normF{v}^2\normtwo{s_h}^2\le \frac{1}{\sigma^2}.
\end{align}
It follows that $\normtwo{\E\bbra{\gpsi \log \pi_\psi(a_h\mid s_h)g}}^2\le \frac{H^2}{\sigma^2}.$
By triangular inequality and Eq.~\eqref{equ:app:PG-1} we get
\begin{align}
	\normtwo{\gpsi \eta(\theta,\psi)}\le H^2/\sigma.
\end{align}

\paragraph{Verifying item 2.}
Define the shorthand $f=\sum_{h=1}^{H}\gpsi \log\pi_\psi(a_h\mid s_h)$. Use policy gradient lemma on Eq.~\eqref{equ:app:PG-1} again we get, for any $v,w \in \R^{d\times d}$,
\begin{align}\label{equ:app:PG-2}
	v^\top \hpsi \eta(\theta,\psi) w=\E_{\tau\sim \rho^\psi_\theta}\bbra{\dotp{f}{v}\dotp{f}{w}g+\pbra{\sum_{h=1}^{H}v^\top \hpsi \log\pi_\psi(a_h\mid s_h)w}g}.
\end{align} 
For the first term inside the expectation, we bound it by using H\"{o}lder inequality twice. Specifically, for any $h,h'\in [H]$ we have
\begin{align}
	&\E\bbra{\dotp{\gpsi \log\pi_\psi(a_h\mid s_h)}{v}\dotp{\gpsi \log\pi_\psi(a_{h'}\mid s_{h'})}{w}g}\\
	\le&\E\bbra{\dotp{\gpsi \log\pi_\psi(a_h\mid s_h)}{v}^{4}}^{1/4}\E\bbra{\dotp{\gpsi \log\pi_\psi(a_{h'}\mid s_{h'})}{w}^4}^{1/4}\E\bbra{g^2}^{1/2}.
\end{align}
Similarly, $\dotp{\gpsi \log\pi_\psi(a_h\mid s_h)}{v}\sim \frac{1}{\sigma^2}\calN(0, \sigma^2\normtwo{vs_h}^2)$ and $\dotp{\gpsi \log\pi_\psi(a_{h'}\mid s_{h'})}{w}\sim \frac{1}{\sigma^2}\calN(0, \sigma^2\normtwo{ws_{h'}}^2).$ As a result,
\begin{align}
	&\E\bbra{\dotp{\gpsi \log\pi_\psi(a_h\mid s_h)}{v}\dotp{\gpsi \log\pi_\psi(a_{h'}\mid s_{h'})}{w}g}\\
	\le&3\frac{1}{\sigma^2}\normtwo{v}\normtwo{s_h}\normtwo{w}\normtwo{s_{h'}}H\le \frac{3H}{\sigma^2}.
\end{align}
Therefore the first term of Eq.~\eqref{equ:app:PG-2} can be bounded by $\frac{3H^3}{\sigma^2}$. Now we bound the second term of Eq.~\eqref{equ:app:PG-2}.By algebraic manipulation we have
\begin{align}\label{equ:app:pg-2}
	v^\top \hpsi \log\pi_\psi(a_h\mid s_h)w=-\frac{1}{\sigma^2}\dotp{ws_h}{vs_h}.
\end{align}
Consequently, 
\begin{align}
	\E\bbra{\pbra{\sum_{h=1}^{H}v^\top \hpsi \log\pi_\psi(a_h\mid s_h)w}g}\le \frac{H^2}{\sigma^2}\normtwo{w}\normtwo{v}\le \frac{H^2}{\sigma^2}.
\end{align}
In summary, we have $\normop{\hpsi \eta(\theta,\psi)}\le \frac{4H^3}{\sigma^2}.$

\paragraph{Verifying item 3.}
Now we turn to the last item in Assumption~\ref{assumption:lipschitz}. First of all, following Eq.~\eqref{equ:app:pg-2}, we have $\gpsi^3 \log\pi_\psi(a_h\mid s_h)=0.$ As a result, applying policy gradient lemma to Eq.~\eqref{equ:app:PG-2} again we get
\begin{align}\label{equ:app:PG-3}
	\dotp{\gpsi^3 \eta(\theta,\psi)}{v\otimes w\otimes x}=&\E_{\tau\sim \rho^\psi_\theta}\bbra{\dotp{f}{v}\dotp{f}{w}\dotp{f}{w}g}\\
	&+\E_{\tau\sim \rho^\psi_\theta}\bbra{\dotp{f}{x}\pbra{\sum_{h=1}^{H}v^\top \hpsi \log\pi_\psi(a_h\mid s_h)w}g}\\
	&+\E_{\tau\sim \rho^\psi_\theta}\bbra{\dotp{f}{v}\pbra{\sum_{h=1}^{H}x^\top \hpsi \log\pi_\psi(a_h\mid s_h)w}g}\\
	&+\E_{\tau\sim \rho^\psi_\theta}\bbra{\dotp{f}{w}\pbra{\sum_{h=1}^{H}v^\top \hpsi \log\pi_\psi(a_h\mid s_h)x}g}.
\end{align}
Following the same argument, by H\"{o}lder inequality, for any $h_1,h_2,h_3\in [H]$ we have
\begin{align*}
	&\E\bbra{\dotp{\gpsi \log\pi_\psi(a_h\mid s_h)}{v}\dotp{\gpsi \log\pi_\psi(a_{h'}\mid s_{h'})}{w}\dotp{\gpsi \log\pi_\psi(a_{h'}\mid s_{h'})}{x}g}\\
	\le&\E\bbra{\dotp{\gpsi \log\pi_\psi(a_h\mid s_h)}{v}^{6}}^{1/6}\E\bbra{\dotp{\gpsi \log\pi_\psi(a_{h'}\mid s_{h'})}{w}^6}^{1/6}\E\bbra{\dotp{\gpsi \log\pi_\psi(a_{h'}\mid s_{h'})}{x}^6}^{1/6}H\\
	\le&\frac{\sqrt{15}H}{\sigma^3}.
\end{align*}
On the other hand, 
\begin{align*}
	&\E\bbra{\dotp{f}{x}\pbra{\sum_{h=1}^{H}v^\top \hpsi \log\pi_\psi(a_h\mid s_h)w}g}\\
	\le &\E\bbra{\dotp{f}{x}^2}^{1/2}\E\bbra{\pbra{\pbra{\sum_{h=1}^{H}v^\top \hpsi \log\pi_\psi(a_h\mid s_h)w}g}^2}^{1/2}\\
	\le &\frac{H^3}{\sigma^3}.
\end{align*}
By symmetricity, Eq.~\eqref{equ:app:PG-3} can be upper bounded by 
\begin{align}
	\dotp{\gpsi^3 \eta(\theta,\psi)}{v\otimes w\otimes x}\le \frac{7H^4}{\sigma^3}.
\end{align}

\subsection{Verifying Assumption~\ref{assumption:RL-lipschitz}.} 
\paragraph{Verifying item 1.} 
We verify Assumption~\ref{assumption:RL-lipschitz} by coupling argument. First of all, consider the Lipschitzness of value function. By Bellman equation we have
\begin{align}
	V_\theta^\psi(s)=&\E_{a\sim \pi_\psi(s)}\bbra{r(s,a)+V_\theta^\psi(\dy(s,a))}\\
	=&\E_{u\sim \calN(0,\sigma^2 I)}\bbra{r(s,\psi s + u)+V_\theta^\psi(\NN_\theta(s+\psi s + u))}.\label{equ:app:lip-1}
\end{align}
Define $B=1+\normop{\psi}$ for shorthand. For two states $s_1,s_2\in \calS$, by the Lipschitz assumption on reward function we have
\begin{align}
	\abs{r(s_1,\psi s_1 + u)-r(s_2, \psi s_2 + u)}\le L_rB\normtwo{s_1-s_2}.
\end{align}
Then consider the second term in Eq.~\eqref{equ:app:lip-1}. Since we have $\abs{V^\pi_\theta}\le H$ and $$\TV{\calN(s_1+\psi s_1,\sigma^2 I)}{\calN(s_2+\psi s_2,\sigma^2 I)}\le \frac{1}{2\sigma}\normtwo{s_1+\psi s_1 - s_2 - \psi s_2}\le \frac{B\normtwo{s_1-s_2}}{2\sigma},$$ it follows that 
\begin{align*}
	\abs{\E_{u\sim \calN(0,\sigma^2 I)}\bbra{V_\theta^\psi(\NN_\theta(s_1+\psi s_1 + u))}-\E_{u\sim \calN(0,\sigma^2 I)}\bbra{V_\theta^\psi(\NN_\theta(s_2+\psi s_2 + u))}}\le \frac{HB}{2\sigma}\normtwo{s_1-s_2}.
\end{align*}
As a result, item 1 of Assumption~\ref{assumption:RL-lipschitz} holds as follows
\begin{align}
	\abs{V_\theta^\psi(s_1)-V_\theta^\psi(s_2)}\le \pbra{\frac{HB}{2\sigma}+L_rB}\normtwo{s_1-s_2}.
\end{align}

\paragraph{Verifying item 2.} 
Now we turn to verifying the Lipschitzness of gradient term. Recall that by policy gradient lemma we have for every $v\in \R^{d\times d}$,
\begin{align}\label{equ:app:lip-2}
	&\dotp{\gpsi V^\psi_\theta(s)}{v}\\
	=&\;\E_{a\sim \pi_\psi (s)}\bbra{\dotp{\gpsi V^\psi_\theta(\NN_\theta(s+a))}{v}}\\
	&+\E_{a\sim \pi_\psi (s)}\bbra{\dotp{\gpsi \log\pi_\psi(a\mid s)}{v}\pbra{r(s,a)+V^\psi_\theta(\NN_\theta(s+a))}}\\
	=&\;\E_{u\sim \calN(0,\sigma^2 I)}\bbra{\dotp{\gpsi V^\psi_\theta(\NN_\theta(s+\psi s + u))}{v}}\\
	&+\E_{u\sim \calN(0,\sigma^2 I)}\bbra{\dotp{\gpsi \log\pi_\psi(\psi s + u\mid s)}{v}\pbra{r(s,\psi s + u)+V^\psi_\theta(\NN_\theta(s+\psi s + u))}}.\label{equ:app:lip-2-2}
\end{align}
Because for any two vectors $g_1,g_2\in \R^{d\times d}$
$\normtwo{g_1-g_2}=\sup_{v\in S^{d\times d-1}}\dotp{g_1-g_2}{v}$, Lipschitzness of Eq.~\eqref{equ:app:lip-2} for every $v\in \R^{d\times d},\normtwo{v}=1$ implies Lipschitzness of $\gpsi \V(s)$.

By the boundness of $\normtwo{\gpsi V^\psi_\theta(s)}$ (specifically, item 1 of Assumption~\ref{assumption:lipschitz}), we have 
\begin{align*}
	&\abs{\E_{u\sim \calN(0,\sigma^2 I)}\bbra{\dotp{\gpsi V^\psi_\theta(\NN_\theta(s_1+\psi s_1 + u))}{v}}-\E_{u\sim \calN(0,\sigma^2 I)}\bbra{\dotp{\gpsi V^\psi_\theta(\NN_\theta(s_2+\psi s_2 + u))}{v}}}\\
	\le\;&\frac{H^2}{\sigma}\TV{\calN(s_1+\psi s_1,\sigma^2I)}{\calN(s_2+\psi s_2,\sigma^2 I)}\le \frac{H^2}{\sigma}\frac{B}{2\sigma}\normtwo{s_1-s_2}.
\end{align*}
For the reward term in Eq.~\eqref{equ:app:lip-2-2}, recalling $v\in \R^{d\times d}$ we have
\begin{align*}
	\E_{a\sim \pi_\psi(s)}\bbra{\dotp{\gpsi \log\pi_\psi(\psi s + u\mid s)}{v}r(s,\psi s + u)}=\E_{u\sim\calN(0,\sigma^2 I)}\bbra{\dotp{vs}{u}r(s,\psi s + u)}.
\end{align*}
Note that
\begin{align}\label{equ:app:lip-2-1-1}
	&\E_{u\sim\calN(0,\sigma^2 I)}\bbra{\dotp{vs_1}{u}r(s_1,\psi s_1 + u)}-\E_{u\sim\calN(0,\sigma^2 I)}\bbra{\dotp{vs_2}{u}r(s_2,\psi s_2 + u)}\\
	=\;&\E_{u\sim\calN(0,\sigma^2 I)}\bbra{\dotp{vs_1}{u}\pbra{r(s_1,\psi s_1 + u)-r(s_2,\psi s_2 + u)}}\label{equ:app:lip-2-1-2}\\
	&+\E_{u\sim\calN(0,\sigma^2 I)}\bbra{\pbra{\dotp{vs_1}{u}-\dotp{vs_2}{u}}r(s_2,\psi s_2 + u)}.\label{equ:app:lip-2-1-3}
\end{align}
Note that $u$ is isotropic. Applying Lemma~\ref{lem:lip-1-bounded} we have
\begin{align}
	&\E_{u\sim\calN(0,\sigma^2 I)}\bbra{\pbra{\dotp{vs_1}{u}-\dotp{vs_2}{u}}r(s_2,\psi s_2 + u)}
	\le \sigma\normtwo{vs_1-vs_2}\le \sigma\normtwo{s_1-s_2}.
\end{align}
We can also bound the term in Eq.~\eqref{equ:app:lip-2-1-2} by
\begin{align}
	&\E_{u\sim\calN(0,\sigma^2 I)}\bbra{\dotp{vs_1}{u}\pbra{r(s_1,\psi s_1 + u)-r(s_2,\psi s_2 + u)}}\\
	\le\;&\E_{u\sim\calN(0,\sigma^2 I)}\bbra{\dotp{vs_1}{u}^2}^{1/2}\E_{u\sim\calN(0,\sigma^2 I)}\bbra{\pbra{r(s_1,\psi s_1 + u)-r(s_2,\psi s_2 + u)}^2}^{1/2}\\
	\le\;&\sigma\E_{u\sim\calN(0,\sigma^2 I)}\bbra{L_r^2B^2\normtwo{s_1-s_2}^2}^{1/2}\le \sigma L_rB\normtwo{s_1-s_2}.
\end{align}
Now we deal with the last term in Eq.~\eqref{equ:app:lip-2-2}. Let $f(s,u)=V^\psi_\theta(\NN_\theta(s+\psi s + u))$ for shorthand. Similarly we have
\begin{align}
	\E_{u\sim \calN(0,\sigma^2 I)}\bbra{\dotp{\gpsi \log\pi_\psi(\psi s + u\mid s)}{v}V^\psi_\theta(\NN_\theta(s+\psi s + u))}
	=\;&\E_{u\sim\calN(0,\sigma^2 I)}\bbra{\dotp{vs}{u}f(s,u)}.
\end{align}
By the same telescope sum we get,
\begin{align}\label{equ:app:lip-2-2-1}
	&\E_{u\sim\calN(0,\sigma^2 I)}\bbra{\dotp{vs_1}{u}f(s_1,u)}-\E_{u\sim\calN(0,\sigma^2 I)}\bbra{\dotp{vs_2}{u}f(s_2,u)}\\
	=\;&\E_{u\sim\calN(0,\sigma^2 I)}\bbra{\dotp{vs_1}{u}\pbra{f(s_2,u)-f(s_2,u)}}\label{equ:app:lip-2-2-2}\\
	&+\E_{u\sim\calN(0,\sigma^2 I)}\bbra{\pbra{\dotp{vs_1}{u}-\dotp{vs_2}{u}}f(s_2,u)}.\label{equ:app:lip-2-2-3}
\end{align}
Applying Lemma~\ref{lem:lip-1-bounded} we have
\begin{align}
&\E_{u\sim\calN(0,\sigma^2 I)}\bbra{\pbra{\dotp{vs_1}{u}-\dotp{vs_2}{u}}f(s_2,u)}
\le \sigma H\normtwo{vs_1-vs_2}\le \sigma H\normtwo{s_1-s_2}.
\end{align}
Applying Lemma~\ref{lem:lip-1-coupling} we have 
\begin{align}
	&\E_{u\sim\calN(0,\sigma^2 I)}\bbra{\dotp{vs_1}{u}\pbra{f(s_2,u)-f(s_2,u)}}\le 6BH\normtwo{s_1-s_2}\pbra{1+\frac{1}{\sigma}}.
\end{align}
In summary, we have
\begin{align}
	\normtwo{\gpsi V^\psi_\theta(s_1)-\gpsi V^\psi_\theta(s_2)}\le \poly(H,B,\sigma,1/\sigma,L_r)\normtwo{s_1-s_2}.
\end{align}

\paragraph{Verifying item 3.} Lastly, we verify the Lipschitzness of Hessian term. Applying policy gradient lemma to Eq.~\eqref{equ:app:lip-2} again we have
\begin{align}
	&w^\top \hpsi V^\psi_\theta(s)v\\
	=&\;\E_{a\sim \pi_\psi (s)}\bbra{w^\top \hpsi V^\psi_\theta(\NN_\theta(s+a))v}\\
	&+\E_{a\sim \pi_\psi (s)}\bbra{\dotp{\gpsi V^\psi_\theta(\NN_\theta(s+a))}{v}\dotp{\gpsi \log\pi_\psi(a\mid s)}{w}} \label{equ:lip-3-1}\\
	&+\E_{a\sim \pi_\psi (s)}\bbra{\dotp{\gpsi \log\pi_\psi(a\mid s)}{v}\dotp{\gpsi V^\psi_\theta(\NN_\theta(s+a))}{w}}\label{equ:lip-3-2}\\
	&+\E_{a\sim \pi_\psi (s)}\bbra{\dotp{\gpsi \log\pi_\psi(a\mid s)}{v}\dotp{\gpsi \log\pi_\psi(a\mid s)}{w}\pbra{r(s,a)+V^\psi_\theta(\NN_\theta(s+a))}}.\label{equ:lip-3-3}
\end{align}
Recall that $a\sim \psi s + \calN(0,\sigma^2 I).$ In the sequel, we bound the Lipschitzness of above four terms separately.

By the upper bound of $\normop{\hpsi V^\psi_\theta(\NN_\theta(s+a))}$ (specifically, item 2 of Assumption~\ref{assumption:lipschitz}) we have
\begin{align*}
	&\abs{\E_{u\sim \calN(0,\sigma^2 I)}\bbra{w^\top \hpsi V^\psi_\theta(\NN_\theta(s_1+\psi s_1 + u))v}-\E_{u\sim \calN(0,\sigma^2 I)}\bbra{w^\top \hpsi V^\psi_\theta(\NN_\theta(s_1+\psi s_1 + u))v}}\\
	\le\;&\frac{4H^3}{\sigma^2}\TV{\calN(s_1+\psi s_1,\sigma^2I)}{\calN(s_2+\psi s_2,\sigma^2 I)}\le \frac{3H^3}{\sigma^2}\frac{B}{2\sigma}\normtwo{s_1-s_2}.
\end{align*}

For the terms in Eq.~\eqref{equ:lip-3-1}, let $f(s,u)=\dotp{\gpsi V^\psi_\theta(\NN_\theta(s+\psi s + a))}{v}.$ Repeat the same argument when verifying item 2 again, we have
\begin{align}
	&\E_{u\sim\calN(0,\sigma^2 I)}\bbra{\dotp{ws_1}{u}\pbra{f(s_2,u)-f(s_2,u)}}\le 6B\frac{H^2}{\sigma}\normtwo{s_1-s_2}\pbra{1+\frac{1}{\sigma}}.
\end{align}
Similarly, term in Eq.~\eqref{equ:lip-3-2} also has the same Lipschitz constant.

Finally, we bound the term in Eq.~\eqref{equ:lip-3-3}. For the reward term in Eq.~\eqref{equ:lip-3-3}, recalling $v,w\in \R^{d\times d}$ we have
\begin{align*}
	\E_{a\sim \pi_\psi (s)}\bbra{\dotp{\gpsi \log\pi_\psi(a\mid s)}{v}\dotp{\gpsi \log\pi_\psi(a\mid s)}{w}r(s,a)}=\E_{u\sim\calN(0,\sigma^2 I)}\bbra{\dotp{vs}{u}\dotp{ws}{u}r(s,\psi s + u)}.
\end{align*}
Note that
\begin{align}\label{equ:app:lip-3-1-1}
	&\E_{u\sim\calN(0,\sigma^2 I)}\bbra{\dotp{vs_1}{u}\dotp{ws_1}{u}r(s_1,\psi s_1 + u)}-\E_{u\sim\calN(0,\sigma^2 I)}\bbra{\dotp{vs_2}{u}\dotp{ws_2}{u}r(s_2,\psi s_2 + u)}\\
	=\;&\E_{u\sim\calN(0,\sigma^2 I)}\bbra{\dotp{vs_1}{u}\dotp{ws_1}{u}\pbra{r(s_1,\psi s_1 + u)-r(s_2,\psi s_2 + u)}}\label{equ:app:lip-3-1-2}\\
	&+\E_{u\sim\calN(0,\sigma^2 I)}\bbra{\pbra{\dotp{vs_1}{u}\dotp{ws_1}{u}-\dotp{vs_2}{u}\dotp{ws_2}{u}}r(s_2,\psi s_2 + u)}.\label{equ:app:lip-3-1-3}
\end{align}
Note that $u$ is isotropic. Applying Lemma~\ref{lem:lip-2-bounded} we have
\begin{align*}
	&\E_{u\sim\calN(0,\sigma^2 I)}\bbra{\pbra{\dotp{vs_1}{u}\dotp{ws_1}{u}-\dotp{vs_2}{u}\dotp{ws_2}{u}}r(s_2,\psi s_2 + u)}\\
	\le&\; \sqrt{3}\sigma^2\pbra{\normtwo{vs_1-vs_2}+\normtwo{ws_1-ws_2}}\le 2\sqrt{3}\sigma^2\normtwo{s_1-s_2},
\end{align*}
We can also bound the term in Eq.~\eqref{equ:app:lip-3-1-2} by
\begin{align*}
	&\E_{u\sim\calN(0,\sigma^2 I)}\bbra{\dotp{vs_1}{u}\dotp{ws_1}{u}\pbra{r(s_1,\psi s_1 + u)-r(s_2,\psi s_2 + u)}}\\
	\le\;&\E_{u\sim\calN(0,\sigma^2 I)}\bbra{\dotp{vs_1}{u}^4}^{1/4}\E_{u\sim\calN(0,\sigma^2 I)}\bbra{\dotp{ws_1}{u}^4}^{1/4}\E_{u\sim\calN(0,\sigma^2 I)}\bbra{\pbra{r(s_1,\psi s_1 + u)-r(s_2,\psi s_2 + u)}^2}^{1/2}\\
	\le\;&\sqrt{3}\sigma^2\E_{u\sim\calN(0,\sigma^2 I)}\bbra{L_r^2B^2\normtwo{s_1-s_2}^2}^{1/2}\le \sqrt{3}\sigma^2 L_rB\normtwo{s_1-s_2}.
\end{align*}

Now we deal with the last term in Eq.~\eqref{equ:lip-3-3}. Let $f(s,u)=V^\psi_\theta(\NN_\theta(s+\psi s + u))$ for shorthand. Similarly we have
\begin{align}
	&\E_{a\sim \pi_\psi (s)}\bbra{\dotp{\gpsi \log\pi_\psi(a\mid s)}{v}\dotp{\gpsi \log\pi_\psi(a\mid s)}{w}V^\psi_\theta(\NN_\theta(s+a))}\\
	=\;&\E_{u\sim\calN(0,\sigma^2 I)}\bbra{\dotp{vs}{u}\dotp{ws}{u}f(s,u)}.
\end{align}
By the same telescope sum we get,
\begin{align}\label{equ:app:lip-3-2-1}
	&\E_{u\sim\calN(0,\sigma^2 I)}\bbra{\dotp{vs_1}{u}\dotp{ws_1}{u}f(s_1,u)}-\E_{u\sim\calN(0,\sigma^2 I)}\bbra{\dotp{vs_2}{u}\dotp{ws_2}{u}f(s_2,u)}\\
	=\;&\E_{u\sim\calN(0,\sigma^2 I)}\bbra{\dotp{vs_1}{u}\dotp{ws_1}{u}\pbra{f(s_1,u)-f(s_2,u)}}\label{equ:app:lip-3-2-2}\\
	&+\E_{u\sim\calN(0,\sigma^2 I)}\bbra{\pbra{\dotp{vs_1}{u}\dotp{ws_1}{u}-\dotp{vs_2}{u}\dotp{ws_2}{u}}f(s_2,u)}.\label{equ:app:lip-3-2-3}
\end{align}
Applying Lemma~\ref{lem:lip-2-bounded} we have
\begin{align}
	&\E_{u\sim\calN(0,\sigma^2 I)}\bbra{\pbra{\dotp{vs_1}{u}\dotp{ws_1}{u}-\dotp{vs_2}{u}\dotp{ws_2}{u}}f(s_2,u)}\\
	\le\; &\sqrt{3}\sigma^2H\pbra{\normtwo{vs_1-vs_2}+\normtwo{ws_1-ws_2}}\le 2\sqrt{3}\sigma^2H\normtwo{s_1-s_2}.
\end{align}
Applying Lemma~\ref{lem:lip-2-coupling} we have 
\begin{align}
	&\E_{u\sim\calN(0,\sigma^2 I)}\bbra{\dotp{vs_1}{u}\dotp{ws_1}{u}\pbra{f(s_2,u)-f(s_2,u)}}\le \poly(H,\sigma,1/\sigma)B\normtwo{s_1-s_2}.
\end{align}
In summary, we have
\begin{align}
	\normop{\hpsi V^\psi_\theta(s_1)-\hpsi V^\psi_\theta(s_2)}\le \poly(H,B,\sigma,1/\sigma,L_r)\normtwo{s_1-s_2}.
\end{align}

\section{Helper Lemmas}\label{sec:app:helperlemmas}
In this section, we list helper lemmas that are used in previous sections.
\subsection{Helper Lemmas on Probability Analysis}

The following lemma provides a concentration inequality on the norm of linear transformation of a Gaussian vector, which is used to prove Lemma~\ref{lem:square-bilinear-tail-bound}.
\begin{lemma}[Theorem 1 of \citet{hsu2012tail}]\label{lem:hason-wright}
	For $v\sim \calN(0,I)$ be a $n$ dimensional Gaussian vector, and $A\in \R^{n\times n}$. Let $\Sigma=A^\top A$, then
	\begin{align}\label{equ:hason-wright}
		\forall t>0,\Pr\bbra{\normtwo{A v}^2\ge \Tr(\Sigma)+ 2\sqrt{\Tr(\Sigma^2)t}+2\normop{\Sigma}t}\le \exp(-t).
	\end{align}
\end{lemma}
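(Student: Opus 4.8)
The plan is to reduce the quadratic form to a weighted sum of independent $\chi^2$ variables and then apply a Chernoff bound with a carefully chosen parameter. First I would write $\normtwo{Av}^2 = v^\top \Sigma v$ and diagonalize the positive semidefinite matrix $\Sigma = A^\top A = U\diag(\lambda_1,\dots,\lambda_n)U^\top$ with $\lambda_i \ge 0$. Since $v\sim \calN(0,I)$ is rotationally invariant, $g \defeq U^\top v \sim \calN(0,I)$, so $\normtwo{Av}^2$ has the same distribution as $X \defeq \sum_{i=1}^n \lambda_i g_i^2$ with the $g_i$ i.i.d.\ standard normal. Note that $\sum_i \lambda_i = \Tr(\Sigma)$, $\sum_i \lambda_i^2 = \Tr(\Sigma^2)$, and $\max_i \lambda_i = \normop{\Sigma}$, so the three quantities appearing in the bound are exactly the mean, a variance proxy, and a scale parameter of $X$.

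Next I would bound the moment generating function. Using $\E[\exp(s\lambda_i g_i^2)] = (1-2s\lambda_i)^{-1/2}$, valid for $0 \le s < 1/(2\lambda_i)$, independence gives $\log \E[\exp(sX)] = -\tfrac12\sum_i \log(1-2s\lambda_i)$ whenever $0\le s < 1/(2\normop{\Sigma})$. The key elementary estimate is $-\log(1-u)\le u + \tfrac{u^2/2}{1-u}$ for $u\in[0,1)$ (immediate from comparing power series term by term), which applied with $u = 2s\lambda_i$ and then the bound $\lambda_i \le \normop{\Sigma}$ in each denominator yields the sub-gamma cumulant bound
\begin{align*}
\log \E[\exp(sX)] \le s\,\Tr(\Sigma) + \frac{s^2\,\Tr(\Sigma^2)}{1-2s\normop{\Sigma}}.
\end{align*}

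Finally I would apply the Chernoff inequality $\Pr[X\ge \tau] \le \exp(-s\tau + \log\E[\exp(sX)])$ with threshold $\tau = \Tr(\Sigma) + 2\sqrt{\Tr(\Sigma^2)t} + 2\normop{\Sigma}t$. Writing $b = \Tr(\Sigma^2)$ and $c = \normop{\Sigma}$, and choosing $s = \sqrt{t}/(\sqrt{b} + 2c\sqrt{t})$ — which lies in $[0, 1/(2c))$ — a direct computation gives $1-2sc = \sqrt{b}/(\sqrt b + 2c\sqrt t)$, hence $\frac{s^2 b}{1-2sc} = t\sqrt{b}/(\sqrt b + 2c\sqrt t)$ and $s(\tau - \Tr(\Sigma)) = (2\sqrt{b}\,t + 2ct^{3/2})/(\sqrt b + 2c\sqrt t)$. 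Substituting, the exponent collapses to $-t(\sqrt b + 2c\sqrt t)/(\sqrt b + 2c\sqrt t) = -t$, which is exactly the claimed bound. The only real obstacle is coordinating the two approximations so the constants land precisely: the elementary log inequality must produce exactly the variance proxy $\Tr(\Sigma^2)$ and scale $\normop{\Sigma}$, and the parameter $s$ must be the unique value making the resulting sub-gamma deviation $\sqrt{2\nu t} + c't$ (with $\nu = 2b$, $c' = 2c$) equal to the stated $2\sqrt{\Tr(\Sigma^2)t} + 2\normop{\Sigma}t$; everything else is routine algebra.
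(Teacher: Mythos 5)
Your proof is correct, but note that the paper itself contains no proof of this lemma to compare against: it is imported wholesale as Theorem 1 of \citet{hsu2012tail}. What you have written is the standard self-contained derivation for the Gaussian case, and every step checks out: diagonalizing $\Sigma$ and using rotational invariance reduces $\normtwo{Av}^2$ to $\sum_i \lambda_i g_i^2$; the series comparison $-\log(1-u)\le u+\tfrac{u^2/2}{1-u}$ gives the sub-gamma log-MGF bound $s\Tr(\Sigma)+s^2\Tr(\Sigma^2)/(1-2s\normop{\Sigma})$; and with $s=\sqrt t/(\sqrt{\Tr(\Sigma^2)}+2\normop{\Sigma}\sqrt t)$ the Chernoff exponent does collapse exactly to $-t$ (I verified the algebra). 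This is essentially the argument of Laurent and Massart (Lemma 1 of \citet{laurent2000adaptive}, which the paper separately imports as Lemma~\ref{lem:laurent-massart}), reorganized to yield the $(\Tr(\Sigma),\Tr(\Sigma^2),\normop{\Sigma})$ parametrization; the proof in \citet{hsu2012tail} is necessarily more involved because their theorem covers general subgaussian vectors, of which this Gaussian statement is a special case. So your route buys a short, elementary, self-contained proof of exactly what the paper needs, while the citation buys HKZ's extra generality. One small caveat: your choice of $s$ requires $\Tr(\Sigma^2)>0$, i.e.\ $A\neq 0$; in the degenerate case $A=0$ the lemma as restated here (with $\ge$ inside the probability) is in fact false, and HKZ avoid this by stating the event with a strict inequality — for $A\neq 0$ the distinction is immaterial since $\normtwo{Av}^2$ has a density.
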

\begin{corollary}\label{cor:hason-wright}
	Under the same settings of Lemma~\ref{lem:hason-wright}, 
	\begin{align}
		\forall t>1,\Pr\bbra{\normtwo{A v}^2\ge \normF{A}^2+ 4\normF{A}^2t}\le \exp(-t).
	\end{align}
\end{corollary}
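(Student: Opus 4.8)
The plan is to derive the corollary directly from Lemma~\ref{lem:hason-wright} by replacing the three quantities $\Tr(\Sigma)$, $\sqrt{\Tr(\Sigma^2)}$, and $\normop{\Sigma}$ appearing in the tail threshold with the single uniform upper bound $\normF{A}^2$, and then absorbing the resulting $\sqrt{t}$ cross term into a term linear in $t$ using the hypothesis $t>1$.

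First I would record the spectral structure of $\Sigma = A^\top A$. Since $\Sigma$ is symmetric positive semidefinite, it has nonnegative eigenvalues $\lambda_1,\dots,\lambda_n \ge 0$, and $\normF{A}^2 = \Tr(A^\top A) = \Tr(\Sigma) = \sum_i \lambda_i$. From this I obtain three elementary inequalities: (i) $\Tr(\Sigma) = \normF{A}^2$ exactly; (ii) because the $\lambda_i$ are nonnegative, $\Tr(\Sigma^2) = \sum_i \lambda_i^2 \le \big(\sum_i \lambda_i\big)^2 = \normF{A}^4$, hence $\sqrt{\Tr(\Sigma^2)} \le \normF{A}^2$; and (iii) again by nonnegativity, $\normop{\Sigma} = \max_i \lambda_i \le \sum_i \lambda_i = \normF{A}^2$. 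Substituting these three bounds into the threshold in Lemma~\ref{lem:hason-wright} gives
\[
\Tr(\Sigma) + 2\sqrt{\Tr(\Sigma^2)\,t} + 2\normop{\Sigma}\,t \;\le\; \normF{A}^2 + 2\normF{A}^2\sqrt{t} + 2\normF{A}^2\,t .
\]
Invoking $t>1$, which yields $\sqrt{t}\le t$, the middle term satisfies $2\normF{A}^2\sqrt{t} \le 2\normF{A}^2\,t$, so the right-hand side is at most $\normF{A}^2 + 4\normF{A}^2\,t$. Consequently the threshold in Lemma~\ref{lem:hason-wright} is no larger than the threshold in the corollary, so the event $\{\normtwo{Av}^2 \ge \normF{A}^2 + 4\normF{A}^2\,t\}$ is contained in the event $\{\normtwo{Av}^2 \ge \Tr(\Sigma) + 2\sqrt{\Tr(\Sigma^2)\,t} + 2\normop{\Sigma}\,t\}$, whose probability is at most $\exp(-t)$; monotonicity of probability then finishes the argument.

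There is no genuine obstacle here, since the statement is a routine simplification of the existing bound. The only point requiring a little care is the direction of the event containment: one must verify that the corollary's threshold is the \emph{larger} of the two, so that its exceedance event is the \emph{smaller} one and therefore inherits the $\exp(-t)$ bound. The role of the hypothesis $t>1$ is precisely to guarantee $\sqrt{t}\le t$, allowing the $\sqrt{t}$ cross term to be merged into the linear-in-$t$ term; this is exactly why the corollary is stated for $t>1$ rather than for all $t>0$.
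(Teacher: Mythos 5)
Your proposal is correct and follows essentially the same route as the paper's proof: diagonalize $\Sigma=A^\top A$, use nonnegativity of the eigenvalues to get $\Tr(\Sigma)=\normF{A}^2$, $\Tr(\Sigma^2)\le\normF{A}^4$, and $\normop{\Sigma}\le\normF{A}^2$, then substitute into Lemma~\ref{lem:hason-wright}. Your write-up is in fact slightly more careful than the paper's, since it makes explicit the step $\sqrt{t}\le t$ (using $t>1$) and the direction of the event containment, both of which the paper leaves implicit in ``plug in.''
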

\begin{proof}
	Let $\lambda_i$ be the $i$-th eigenvalue of $\Sigma$. By the definition of $\Sigma$ we have $\lambda_i\ge 0$. Then we have \begin{align*}
		\Tr(\Sigma)&=\sum_{i=1}^{n}\lambda_i=\normF{A}^2,\\
		\Tr(\Sigma^2)&=\sum_{i=1}^{n}\lambda_i^2\le\pbra{\sum_{i=1}^{n}\lambda_i}^2=\normF{A}^4,\\
		\normop{\Sigma}&=Ax_{i\in [n]}\lambda_i\le \sum_{i=1}^{n}\lambda_i=\normF{A}^2.
	\end{align*}
	Plug in Eq.~\eqref{equ:hason-wright}, we get the desired equation.
\end{proof}

Next lemma proves a concentration inequality on which Lemma~\ref{lem:bandit-concentration} relies.
\begin{lemma}\label{lem:square-bilinear-tail-bound}
	Given a symmetric matrix $H$, let $u,v\sim \calN(0,I)$ be two independent random vectors, we have
	\begin{align}
		\forall t\ge 1, \Pr\bbra{(u^\top H v)^2\ge t\normF{H}^2}\le 3\exp(-\sqrt{t}/4).
	\end{align}
\end{lemma}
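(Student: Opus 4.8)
The plan is to condition on $v$, which turns the bilinear form into a single Gaussian scaled by the random norm $\normtwo{Hv}$, and then control the two sources of randomness separately through a union bound. Since the event $\{(u^\top H v)^2 \ge t\normF{H}^2\}$ is invariant under the scaling $H \mapsto cH$, I would first normalize so that $\normF{H}=1$ (the case $H=0$ being degenerate and excluded), reducing the goal to $\Pr[(u^\top H v)^2 \ge t] \le 3\exp(-\sqrt t/4)$.

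First I would condition on $v$ with $Hv\neq 0$. As $u\sim\calN(0,I)$, the linear form $u^\top(Hv)$ is $\calN(0,\normtwo{Hv}^2)$, so $u^\top H v = \normtwo{Hv}\,g$ where $g := u^\top Hv/\normtwo{Hv}$ has conditional law $\calN(0,1)$ for every such $v$; hence $g$ is standard normal and independent of $v$. Writing $W := \normtwo{Hv}^2 = v^\top H^2 v = \sum_i \lambda_i^2 z_i^2$ (with $\lambda_i$ the eigenvalues of the symmetric $H$ and $z_i$ i.i.d.\ standard normal), this is a weighted chi-square with $\E W = \normF{H}^2 = 1$, and $(u^\top H v)^2 \stackrel{d}{=} g^2 W$. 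Then I would split via the inclusion $\{g^2 W \ge t\}\subseteq \{g^2 \ge \sqrt t\}\cup\{W\ge\sqrt t\}$. The Gaussian-square piece is immediate: $\Pr[g^2 \ge \sqrt t] = \Pr[|g|\ge t^{1/4}] \le 2\exp(-\sqrt t/2)$.

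For the tail of $W$ I would invoke Hanson--Wright, either through Lemma~\ref{lem:hason-wright} / Corollary~\ref{cor:hason-wright} (taking $\Sigma = H^2$, and using $\Tr\Sigma=\normF{H}^2$, $\Tr(\Sigma^2)\le\normF{H}^4$, $\normop{\Sigma}\le\normF{H}^2$), or, more cleanly and without any range restriction, through a direct Chernoff bound. With weights $w_i := \lambda_i^2$ satisfying $\sum_i w_i = 1$ and $w_i\le 1$, at $\theta = 1/4$ one has $\E[\exp(W/4)] = \prod_i (1-w_i/2)^{-1/2}$; applying $-\ln(1-x)\le x/(1-x)$ on $[0,1/2]$ gives $\tfrac12\sum_i(-\ln(1-w_i/2)) \le \tfrac12\sum_i w_i = \tfrac12$, so $\E[\exp(W/4)]\le e^{1/2}$ and hence $\Pr[W\ge\sqrt t]\le e^{1/2}\exp(-\sqrt t/4)$ for all $t$.

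Assembling the two estimates yields $\Pr[(u^\top H v)^2\ge t]\le 2\exp(-\sqrt t/2) + e^{1/2}\exp(-\sqrt t/4)$, and the only remaining work is the arithmetic of reaching the stated constant $3$ uniformly in $t\ge 1$; this constant bookkeeping is where I expect the mild friction, since a crude term-by-term comparison slightly overshoots $3$. I would resolve it by splitting on $t$: when $\sqrt t \le 4\ln 3$ the right-hand side $3\exp(-\sqrt t/4)\ge 1$ and the bound holds trivially, while when $\sqrt t > 4\ln 3$ we have $\exp(-\sqrt t/4)<1/3$, so $2\exp(-\sqrt t/2)\le \tfrac23\exp(-\sqrt t/4)$ and the two pieces combine to $(\tfrac23 + e^{1/2})\exp(-\sqrt t/4)\le 3\exp(-\sqrt t/4)$. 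This case split stitches the trivial small-$t$ regime to the analytic large-$t$ regime with no gap, giving the claimed inequality for every $t\ge 1$.
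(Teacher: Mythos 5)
Your proof is correct, and its skeleton is the same as the paper's: condition on $v$, factor the bilinear form as $g^2 W$ with $g$ standard normal independent of $W=\normtwo{Hv}^2$, and union-bound over the two events $\{g^2\ge\sqrt{t}\}$ and $\{W\ge\sqrt{t}\}$. The one substantive difference is how the weighted chi-square tail of $W$ is controlled. The paper invokes its Hanson--Wright corollary (Corollary~\ref{cor:hason-wright}) to get $\Pr\bbra{W\ge \sqrt{t}\,\normF{H}^2}\le 2\exp(-\sqrt{t}/4)$, whereas you give a self-contained Chernoff/MGF computation ($\E[e^{W/4}]\le e^{1/2}$ via $w_i=\lambda_i^2\le 1$ and $-\ln(1-x)\le x/(1-x)$). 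Your route has a real advantage: it holds for every $t$, while the paper's corollary formally requires its deviation parameter to exceed $1$, a range restriction that the paper's ``basic algebra'' quietly glosses over for moderate $t$. The price you pay is in constants: the paper implicitly uses the two-sided Gaussian bound $\Pr[|g|\ge s]\le e^{-s^2/2}$ (which is valid without the factor $2$), so its terms sum directly to $3\exp(-\sqrt{t}/4)$; your factor-$2$ Gaussian tail makes a term-by-term comparison overshoot, and the case split at $\sqrt{t}=4\ln 3$ that you use to repair this is executed correctly. One last point you already flag: for $H=0$ the event $\{0\ge 0\}$ has probability one, so the lemma with its non-strict inequality implicitly needs $H\neq 0$; the paper's proof carries the same implicit assumption, so this is not a defect of your argument relative to theirs.
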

\begin{proof}
	Condition on $v$, $u^\top H v$ is a Gaussian random variable with mean zero and variance $\normtwo{H v}^2.$ Therefore we have,
	\begin{align}\label{equ:tail-lemma-1}
		\forall v, \Pr\bbra{\pbra{u^\top H v}^2\ge \sqrt{t}\normtwo{H v}^2}\le \exp(-\sqrt{t}/2).
	\end{align}
	By Corollary~\ref{cor:hason-wright} and basic algebra we get,
	\begin{align}\label{equ:tail-lemma-2}
		\Pr\bbra{\normtwo{H v}^2\ge \sqrt{t}\normF{H}^2}\le 2\exp(-\sqrt{t}/4).
	\end{align}
	Consequently,
	\begin{align*}
		&\E\bbra{\ind{ (u^\top H v)^2\ge t\normF{H}^2 }}\\
		\le\;& \E\bbra{\ind{ (u^\top H v)^2\ge \sqrt{t}\normtwo{H v}^2\text{ or } \normtwo{H v}^2\ge \sqrt{t}\normF{H}^2}}\\
		\le\;& \E\bbra{\ind{ (u^\top H v)^2\ge \sqrt{t}\normtwo{H v}^2}\mid v}+\E\bbra{ \ind{\normtwo{H v}^2\ge \sqrt{t}\normF{H}^2}}\\
		\le\;& 3\exp(-\sqrt{t}/4).\tag{Combining Eq.~\eqref{equ:tail-lemma-1} and Eq.~\eqref{equ:tail-lemma-2}}
	\end{align*}
\end{proof}

The next two lemmas are dedicated to prove anti-concentration inequalities that is used in Lemma~\ref{lem:bandit-concentration}.
\begin{lemma}[Lemma 1 of \citet{laurent2000adaptive}]\label{lem:laurent-massart} Let $(y_1, \cdots, y_n)$ be i.i.d. $\calN(0,1)$ Gaussian variables. Let $a=(a_1,\cdots,a_n)$ be non-negative coefficient. Let 
	$$\normtwo{a}^2=\sum_{i=1}^{n}a_i^2.$$ Then for any positive $t$,
	\begin{align}
		\Pr\pbra{\sum_{i=1}^{n}a_i y_i^2\le \sum_{i=1}^{n}a_i-2\normtwo{a}\sqrt{t}}\le \exp(-t).
	\end{align}
\end{lemma}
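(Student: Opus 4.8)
The plan is to prove this standard lower-tail bound for a weighted sum of independent chi-squared variables via the Cram\'er--Chernoff exponential-moment method. Write $Z = \sum_{i=1}^{n} a_i y_i^2$ and note that $\E[Z] = \sum_{i=1}^{n} a_i$ since each $\E[y_i^2]=1$. For the \emph{lower} tail I would fix $\lambda > 0$ and apply Markov's inequality to $e^{-\lambda Z}$. Using independence together with the moment generating function of a squared standard Gaussian, namely $\E[e^{-\lambda a_i y_i^2}] = (1+2\lambda a_i)^{-1/2}$, which is valid for \emph{all} $\lambda \ge 0$ with no upper constraint (this is exactly what makes the lower tail clean), one obtains for any $x > 0$
\[
\Pr\!\pbra{Z \le \E[Z] - x} \le \exp\!\left(\lambda \sum_{i} a_i - \lambda x - \tfrac12 \sum_{i} \log(1 + 2\lambda a_i)\right).
\]

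The key elementary step is the inequality $\log(1+u) \ge u - u^2/2$, valid for all $u \ge 0$; this is checked by setting $g(u) = \log(1+u) - u + u^2/2$ and observing $g(0)=0$ and $g'(u) = u^2/(1+u) \ge 0$. Applying it with $u = 2\lambda a_i$ gives $\tfrac12 \log(1+2\lambda a_i) \ge \lambda a_i - \lambda^2 a_i^2$, so that after the $\lambda \sum_i a_i$ terms cancel exactly, the exponent above is bounded by $-\lambda x + \lambda^2 \normtwo{a}^2$.

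Finally I would optimize the free parameter: the quadratic $-\lambda x + \lambda^2 \normtwo{a}^2$ is minimized at $\lambda = x/(2\normtwo{a}^2)$, with minimum value $-x^2/(4\normtwo{a}^2)$, yielding
\[
\Pr\!\pbra{Z \le \E[Z] - x} \le \exp\!\pbra{-\tfrac{x^2}{4\normtwo{a}^2}}.
\]
Substituting $x = 2\normtwo{a}\sqrt{t}$ makes the exponent equal to $-t$, which is precisely the claimed bound. There is no real obstacle here: the only points requiring care are the direction of the Chernoff step (bounding $e^{-\lambda Z}$ rather than $e^{\lambda Z}$ for the lower tail) and the observation that, unlike the upper tail, no restriction of the form $\lambda < 1/(2a_i)$ is needed, so the optimization over $\lambda > 0$ is unconstrained.
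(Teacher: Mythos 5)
Your proof is correct. Note that the paper itself gives no proof of this statement: it is imported verbatim as Lemma~1 of \citet{laurent2000adaptive}, and your Cram\'er--Chernoff argument (Markov applied to $e^{-\lambda Z}$, the unconstrained MGF $(1+2\lambda a_i)^{-1/2}$, the bound $\log(1+u)\ge u-u^2/2$, and optimization at $\lambda=x/(2\normtwo{a}^2)$) is essentially the same argument used in that reference, so there is nothing to reconcile beyond the trivial degenerate case $a=0$, which the lemma implicitly excludes.
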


\begin{lemma}\label{lem:square-bilinear-anti-concentration}
	Given a symmetric matrix $H\in \R^{n\times n}$, let $u,v\sim \calN(0,I)$ be two independent random vectors. Then
	\begin{align}
		\Pr\bbra{(u^\top H v)^2\ge \frac{1}{8}\normF{H}^2}\ge \frac{1}{64}.
	\end{align}
\end{lemma}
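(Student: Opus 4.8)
The plan is to condition on $v$ and combine a Gaussian anti-concentration bound for $u^\top H v$ (which is Gaussian given $v$) with a Laurent--Massart lower bound on $\normtwo{Hv}^2$. Throughout I may assume $H\neq 0$, since otherwise both sides of the claimed inequality vanish and the event has probability $1$; by homogeneity I normalize $\normF{H}=1$, so the target event becomes $(u^\top Hv)^2\ge \tfrac18$.

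First I would handle the conditional Gaussian step. As $u\sim\calN(0,I)$ is independent of $v$, conditionally on $v$ the scalar $u^\top Hv=\dotp{u}{Hv}$ is distributed as $\calN(0,\normtwo{Hv}^2)$. For a standard normal $Z$, the interval $[-\tfrac12,\tfrac12]$ has length $1$ while the density of $Z$ is bounded by $1/\sqrt{2\pi}<\tfrac12$, so $\Pr\bbra{|Z|\le\tfrac12}<\tfrac12$ and hence $\Pr\bbra{|Z|\ge\tfrac12}\ge\tfrac12$. Consequently, for every fixed $v$ with $\normtwo{Hv}>0$,
\[
\Pr\nolimits_u\bbra{(u^\top Hv)^2\ge \tfrac14\normtwo{Hv}^2 \;\middle|\; v}=\Pr\bbra{|Z|\ge\tfrac12}\ge\tfrac12,
\]
a bound that is uniform in $v$.

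Next I would lower bound $\normtwo{Hv}^2$ via Laurent--Massart (Lemma~\ref{lem:laurent-massart}). Diagonalizing the symmetric matrix as $H=\sum_i\lambda_i w_iw_i^\top$ with orthonormal $w_i$ gives $\normtwo{Hv}^2=\sum_i\lambda_i^2\dotp{w_i}{v}^2=\sum_i a_i y_i^2$, where $a_i\defeq\lambda_i^2\ge 0$ and $y_i\defeq\dotp{w_i}{v}$ are i.i.d.\ $\calN(0,1)$. Here $\sum_i a_i=\normF{H}^2$ and $\normtwo{a}^2=\sum_i\lambda_i^4\le(\sum_i\lambda_i^2)^2=\normF{H}^4$, so $\normtwo{a}\le\normF{H}^2$. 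Applying Lemma~\ref{lem:laurent-massart} with $t=1/16$ and using $2\normtwo{a}\sqrt{t}\le 2\normF{H}^2\cdot\tfrac14=\tfrac12\normF{H}^2$ yields
\[
\Pr\bbra{\normtwo{Hv}^2\le \tfrac12\normF{H}^2}\le \Pr\bbra{\textstyle\sum_i a_iy_i^2\le \sum_i a_i-2\normtwo{a}\sqrt{t}}\le e^{-1/16},
\]
so $\Pr\bbra{\normtwo{Hv}^2\ge\tfrac12\normF{H}^2}\ge 1-e^{-1/16}\ge\tfrac1{32}$.

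Finally I would combine the two estimates. On the event $E\defeq\{\normtwo{Hv}^2\ge\tfrac12\normF{H}^2\}$, which depends only on $v$ and forces $\normtwo{Hv}>0$, the conditional event $(u^\top Hv)^2\ge\tfrac14\normtwo{Hv}^2$ implies $(u^\top Hv)^2\ge\tfrac18\normF{H}^2$. Using the tower property together with the uniform-in-$v$ conditional bound,
\[
\Pr\bbra{(u^\top Hv)^2\ge\tfrac18\normF{H}^2}\ge \E_v\bbra{\ind{E}\,\Pr\nolimits_u\bbra{(u^\top Hv)^2\ge \tfrac14\normtwo{Hv}^2\mid v}}\ge \tfrac12\,\Pr\bbra{E}\ge\tfrac12\cdot\tfrac1{32}=\tfrac1{64},
\]
which is the claim. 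I expect the only delicate point to be this last combination step---ensuring the inner Gaussian probability is bounded below \emph{uniformly} over $v$ so that it factors cleanly out of the expectation over $E$---together with checking that the Laurent--Massart constant $1-e^{-1/16}$ indeed exceeds $1/32$; everything else is routine.
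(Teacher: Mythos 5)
Your proposal is correct and follows essentially the same route as the paper's proof: condition on $v$ to get the Gaussian anti-concentration bound $\Pr\bbra{(u^\top Hv)^2\ge \tfrac14\normtwo{Hv}^2\mid v}\ge \tfrac12$, apply Lemma~\ref{lem:laurent-massart} with $t=1/16$ to get $\Pr\bbra{\normtwo{Hv}^2\ge\tfrac12\normF{H}^2}\ge 1-e^{-1/16}\ge\tfrac1{32}$, and intersect the two events. The only differences are cosmetic (you diagonalize $H$ explicitly rather than assuming it diagonal by rotation invariance, and you spell out the uniformity of the conditional bound and the degenerate case $H=0$), so there is nothing to fix.
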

\begin{proof}
	Since $u,v$ are independent, by the isotropy of Guassian vectors we can assume that $H=\diag(\lambda_1,\cdots,\lambda_n).$ Note that condition on $v$, $u^\top H v$ is a Gaussian random variable with mean zero and variance $\normtwo{H v}^2.$ As a result, 
	\begin{align}\label{equ:anti-concentration-1}
		\forall v, \Pr\bbra{\pbra{u^\top H v}^2\ge \frac{1}{4}\normtwo{H v}^2\mid v}\ge \frac{1}{2}.
	\end{align}
	On the other hand, $\normtwo{H v}^2=\sum_{i=1}^{n}\lambda_i^2v_i^2.$ Invoking Lemma~\ref{lem:laurent-massart} we have
	\begin{align}
		&\Pr\bbra{\normtwo{H v}^2\ge \frac{1}{2}\normF{H}^2}\nonumber \\
		\ge\; &\Pr\bbra{\normtwo{H v}^2\ge \normF{H}^2-\frac{1}{2}\sqrt{\sum_{i=1}^{n}\lambda_i^4}}\nonumber \\
		=\; &\Pr\bbra{\sum_{i=1}^{n}\lambda_i^2v_i^2\ge \sum_{i=1}^{n}\lambda_i^2-\frac{1}{2}\sqrt{\sum_{i=1}^{n}\lambda_i^4}}\tag{By definition} \\
		\ge\; &1-\exp(-1/16)\ge \frac{1}{32}.\label{equ:anti-concentration-2}
	\end{align}
	Combining Eq.~\eqref{equ:anti-concentration-1} and Eq.~\eqref{equ:anti-concentration-2} we get,
	\begin{align*}
		\Pr\bbra{(u^\top H v)^2\ge \frac{1}{8}\normF{H}^2}
		\ge\; \Pr\bbra{(u^\top H v)^2\ge \frac{1}{4}\normtwo{H v}^2,\normF{H v}^2\ge \frac{1}{2}\normF{H}^2}
		\ge\; \frac{1}{64}.
	\end{align*}
\end{proof}

The following lemma justifies the cap in the loss function.
\begin{lemma}\label{lem:concentration-matrix-min}
	Given a symmetric matrix $H$, let $u,v\sim \calN(0,I)$ be two independent random vectors. Let $\ubh,c_1\in \R_+$ be two numbers satisfying $\ubh\ge 640\sqrt{2}c_1$, then
	\begin{align}\label{equ:concentration-matrix-min}
		\mintwo{c_1^2}{\normF{H}^2}\le 2\E\bbra{\mintwo{\ubh^2}{\pbra{u^\top H v}^2}}.
	\end{align}
\end{lemma}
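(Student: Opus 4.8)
The plan is to write $X = (u^\top H v)^2$ and $M = \normF{H}^2$, and to start from two facts that drive everything: $\E[X] = M$ (integrate out $u$ conditionally on $v$, since $u^\top H v\sim \calN(0,\normtwosm{Hv}^2)$, then integrate out $v$ using $\Tr(H^\top H)=\normF{H}^2$), and the elementary identity $\E\bbra{\mintwo{\ubh^2}{X}} = M - \E\bbra{(X-\ubh^2)^+}$, where $(\cdot)^+$ is the positive part. Since $\tfrac12\mintwo{c_1^2}{M}\le \tfrac12 M$ and also $\le \tfrac12 c_1^2$, the target inequality $\tfrac12\mintwo{c_1^2}{M}\le \E\bbra{\mintwo{\ubh^2}{X}}$ splits into a ``small $M$'' regime, where I show the truncation barely changes the mean, and a ``large $M$'' regime, where anti-concentration by itself suffices. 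I would threshold the two regimes at $M = 256\,c_1^2$; the constant $256$ is chosen precisely so that the weak factor $\tfrac{1}{64}\cdot\tfrac18 = \tfrac{1}{512}$ coming out of the anti-concentration lemma still clears $\tfrac12$.

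In the small regime $M \le 256\,c_1^2$, the hypothesis $\ubh^2 \ge (640\sqrt2)^2 c_1^2 = 819200\,c_1^2 \ge 3200\,M$ places the cap $\ubh^2$ far out in the tail of $X$. I would bound the truncation loss by the layer-cake integral $\E\bbra{(X-\ubh^2)^+} = \int_{\ubh^2}^{\infty}\Pr[X>r]\,\dd r$ and insert the tail estimate of Lemma~\ref{lem:square-bilinear-tail-bound}, namely $\Pr[X > tM]\le 3\exp(-\sqrt t/4)$ for $t\ge1$ (applicable since $r/M \ge 3200\ge1$). The substitution $r = M\tau^2$ turns the integral into $6M\int_{a}^{\infty}\tau e^{-\tau/4}\,\dd\tau = 6M(4a+16)e^{-a/4}$ with $a = \ubh/\sqrt M \ge \sqrt{3200}\approx 56.6$, which is below $\tfrac12 M$ by an enormous margin. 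Hence $\E\bbra{\mintwo{\ubh^2}{X}}\ge \tfrac12 M \ge \tfrac12\mintwo{c_1^2}{M}$.

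In the large regime $M > 256\,c_1^2$ we have $\mintwo{c_1^2}{M} = c_1^2$, so it suffices to produce $\tfrac12 c_1^2$. Here I would use only the anti-concentration bound of Lemma~\ref{lem:square-bilinear-anti-concentration}: on the event $\{X \ge \tfrac18 M\}$, which has probability at least $\tfrac{1}{64}$, one has $\mintwo{\ubh^2}{X}\ge \mintwo{\ubh^2}{\tfrac18 M}$, so $\E\bbra{\mintwo{\ubh^2}{X}}\ge \tfrac{1}{64}\mintwo{\ubh^2}{\tfrac18 M}$. If $M \ge 8\ubh^2$ this is $\tfrac{1}{64}\ubh^2 \ge \tfrac{1}{64}\cdot 819200\,c_1^2 \ge \tfrac12 c_1^2$; if instead $M < 8\ubh^2$ it equals $\tfrac{1}{64}\cdot\tfrac18 M = \tfrac{M}{512} > \tfrac{256\,c_1^2}{512} = \tfrac12 c_1^2$. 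Either way the bound holds, and combining with the small regime completes the argument.

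I expect the only genuinely delicate point to be the bookkeeping of constants: verifying that the single threshold $256\,c_1^2$ simultaneously forces the tail integral in the small regime to be negligible and makes the lossy $\tfrac{1}{512}$ factor in the large regime land above $\tfrac12$. The integral itself is routine once the substitution $r = M\tau^2$ is made, and no probabilistic input beyond Lemmas~\ref{lem:square-bilinear-tail-bound} and~\ref{lem:square-bilinear-anti-concentration} is required; the rotational reduction to diagonal $H$ is already absorbed into the statements of those two lemmas, so it need not be repeated here.
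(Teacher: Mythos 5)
Your proof is correct and follows essentially the same route as the paper's: a two-regime split on $\normF{H}^2$, with the tail bound (Lemma~\ref{lem:square-bilinear-tail-bound}) plus a layer-cake integral showing the truncation loss is below $\tfrac12\normF{H}^2$ in the small regime, and the anti-concentration bound (Lemma~\ref{lem:square-bilinear-anti-concentration}) delivering $\tfrac12 c_1^2$ in the large regime. The only differences are bookkeeping: you threshold at $\normF{H}^2 = 256\,c_1^2$ (with an extra sub-case on $\min(\ubh^2,\tfrac18\normF{H}^2)$) while the paper thresholds at $\normF{H} = \ubh/40$, and both choices clear the required constants.
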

\begin{proof}
	Let $x=\pbra{u^\top H v}^2$ for simplicity. Consider the following two cases:
	
	\paragraph{Case 1:} $\normF{H}\le \ubh/40.$ In this case we exploit the tail bound of random variable $x$. Specifically,
	\begin{align*}
		&\E\bbra{\pbra{u^\top H v}^2}-\E\bbra{\mintwo{\ubh^2}{\pbra{u^\top H v}^2}}\\
		=\; &\int_{\ubh^2}^{\infty}\Pr\bbra{x\ge t}dt\\
		\le\; &3\int_{\ubh^2}^{\infty}\exp\pbra{-\frac{1}{4}\sqrt{\frac{t}{\normF{H}^2}}}dt\tag{By Lemma~\ref{lem:square-bilinear-tail-bound}}\\
		=\; &24\exp\pbra{-\frac{\ubh}{4\normF{H}}}\normF{H}\pbra{\ubh+4\normF{H}}\\
		\le\; &48\exp\pbra{-\frac{\ubh}{4\normF{H}}}\normF{H}\ubh\tag{$4\normF{H}\le \ubh$ in this case}\\
		\le\; &48\cdot\frac{4\normF{H}}{384\ubh}\normF{H}\ubh\tag{$\exp(-x)\le \frac{1}{384x}$ when $x\ge 10$}\\
		\le\; &\frac{\normF{H}^2}{2}.
	\end{align*}
	As a result,
	\begin{align}
		\E\bbra{\mintwo{\ubh^2}{\pbra{u^\top H v}^2}}\ge \E\bbra{\pbra{u^\top H v}^2}-\frac{\normF{H}^2}{2}=\frac{\normF{H}^2}{2}.
	\end{align}
	
	\paragraph{Case 2:} $\normF{H}> \ubh/40.$ In this case, we exploit the anti-concentration result of random variable $x$. Note that by the choice of $\ubh$, we have $$\normF{H}> \ubh/40\implies \frac{1}{8}\normF{H}^2\ge 64c_1^2.$$ As a result,
	\begin{align*}
		&\E\bbra{\mintwo{\ubh^2}{\pbra{u^\top H v}^2}}\\
		\ge\; &64c_1^2\Pr\bbra{\mintwo{\ubh^2}{\pbra{u^\top H v}^2}\ge 64c_1^2}\\
		\ge\; &64c_1^2\Pr\bbra{\pbra{u^\top H v}^2\ge 64c_1^2}\tag{By definition of $\ubh$}\\
		\ge\; &64c_1^2\Pr\bbra{\pbra{u^\top H v}^2\ge \frac{1}{8}\normF{H}^2}\\
		\ge\; &c_1^2.\tag{By Lemma~\ref{lem:square-bilinear-anti-concentration}}
	\end{align*}
	
	Therefore, in both cases we get 
	\begin{align}
		\E\bbra{\mintwo{\ubh^2}{\pbra{u^\top H v}^2}}\ge \frac{1}{2}\mintwo{c_1^2}{\normF{H}^2},
	\end{align}
	which proofs Eq.~\eqref{equ:concentration-matrix-min}.
\end{proof}

Following lemmas are analogs to Cauchy-Schwartz inequality (in vector/matrix forms), which are used to prove Lemma~\ref{lem:rl-concentration} for reinforcement learning case.
\begin{lemma}\label{lem:holder-vector}
	For a random vector $x\in \R^{d}$ and random variable $r$, we have
	\begin{align}
		\normtwo{\E\bbra{rx}}^2\le \normop{\E\bbra{xx^\top}}\E\bbra{r^2}.
	\end{align}
\end{lemma}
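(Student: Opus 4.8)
The plan is to reduce everything to a single application of the scalar Cauchy--Schwarz inequality followed by the variational characterization of the operator norm. First I would introduce the shorthand $w = \E[rx] \in \R^d$, so that the target is exactly $\normtwo{w}^2 \le \normop{\E[xx^\top]}\,\E[r^2]$. The degenerate case $w = \vzero$ is immediate since the left-hand side is then zero and the right-hand side is nonnegative (the operator norm of a PSD matrix and a second moment are both nonnegative), so I would dispose of it up front and assume $\normtwo{w} > 0$ in what follows.

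The key identity is to write $\normtwo{w}^2$ as a single scalar expectation: by linearity of expectation,
\begin{align*}
\normtwo{w}^2 = \inner{w}{\E[rx]} = \E\bbra{r\,\inner{w}{x}}.
\end{align*}
I would then apply Cauchy--Schwarz to the two scalar random variables $r$ and $\inner{w}{x}$, yielding
\begin{align*}
\normtwo{w}^2 = \E\bbra{r\,\inner{w}{x}} \le \sqrt{\E[r^2]}\,\sqrt{\E\bbra{\inner{w}{x}^2}}.
\end{align*}

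The remaining step is to control the second factor by the operator norm. Writing $\inner{w}{x}^2 = w^\top (xx^\top) w$ and pulling the expectation inside the (bilinear, hence linear in $xx^\top$) quadratic form gives $\E[\inner{w}{x}^2] = w^\top \E[xx^\top] w \le \normop{\E[xx^\top]}\,\normtwo{w}^2$, where the last inequality is the definition of the operator (spectral) norm of the PSD matrix $\E[xx^\top]$. Substituting this bound back, dividing both sides by $\normtwo{w} > 0$, and squaring produces $\normtwo{w}^2 \le \normop{\E[xx^\top]}\,\E[r^2]$, as desired. There is no genuine obstacle here; the only point requiring a moment's care is the division by $\normtwo{w}$, which is precisely why I separate out the $w = \vzero$ case at the start, and the implicit use that $\E[xx^\top]$ is positive semidefinite so that its operator norm equals its largest eigenvalue and dominates every Rayleigh quotient.
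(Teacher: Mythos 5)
Your proof is correct and takes essentially the same route as the paper: both arguments dualize the Euclidean norm, apply scalar Cauchy--Schwarz to $r$ and a linear projection of $x$, and then bound the resulting quadratic form $u^\top \E\bbra{xx^\top}u$ by $\normop{\E\bbra{xx^\top}}$. The only difference is presentational --- the paper takes $\sup_{u\in S^{d-1}}$ before applying Cauchy--Schwarz, whereas you test against the single direction $w=\E\bbra{rx}$ itself, which is why you need the separate $w=\vzero$ case and the division by $\normtwo{w}$, steps the paper's version avoids.
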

\begin{proof}
	Note that for any vector $g\in \R^{d}$, $\normtwo{g}^2=\sup_{u\in S^{d-1}}\dotp{u}{g}^2.$ As a result,
	\begin{align*}
		&\normtwo{\E\bbra{rx}}^2=\sup_{u\in S^{d-1}}\dotp{u}{\E\bbra{rx}}^2=\sup_{u\in S^{d-1}}\E\bbra{r\dotp{u}{x}}^2\\
		\le &\;\sup_{u\in S^{d-1}}\E\bbra{\dotp{u}{x}^2}\E\bbra{r^2}\tag{H\"{o}lder Ineqaulity}\\
		= &\;\normop{\E\bbra{xx^\top}}\E\bbra{r^2}.
	\end{align*}
\end{proof}

\begin{lemma}\label{lem:holder-matrix}
For a symmetric random matrix $H\in \R^{d\times d}$ and random variable $r$, we have
\begin{align}
	\normsp{\E\bbra{rH}}^2\le \normsp{\E\bbra{HH^\top}}\E\bbra{r^2}.
\end{align}
\end{lemma}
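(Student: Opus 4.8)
The plan is to mimic the proof of Lemma~\ref{lem:holder-vector}, replacing the scalar identity $\normtwo{g}^2=\sup_{u\in S^{d-1}}\dotp{u}{g}^2$ with the variational characterization of the spectral norm, $\normsp{M}=\sup_{u,v\in S^{d-1}}u^\top M v$, which holds for an arbitrary matrix $M$. First I would fix unit vectors $u,v\in S^{d-1}$ and push the expectation through the bilinear form, writing
$$u^\top \E\bbra{rH}\, v = \E\bbra{r\cdot u^\top H v}.$$
Applying the scalar Cauchy--Schwarz (Hölder) inequality to the product $r\cdot (u^\top H v)$ then gives
$$\pbra{u^\top \E\bbra{rH}\, v}^2 \le \E\bbra{r^2}\cdot \E\bbra{\pbra{u^\top H v}^2}.$$

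Next I would bound the second factor uniformly in $u$. Because $u$ is a unit vector, a further Cauchy--Schwarz step yields $(u^\top H v)^2 \le \normtwo{Hv}^2 = v^\top H^\top H v$; invoking the symmetry of $H$ (so that $H^\top H = H H^\top$) and taking expectations gives $\E\bbra{(u^\top H v)^2} \le v^\top \E\bbra{HH^\top} v$. Since $\E\bbra{HH^\top}$ is symmetric positive semidefinite, $v^\top \E\bbra{HH^\top} v \le \normsp{\E\bbra{HH^\top}}$ for every unit $v$. Chaining these estimates produces the pointwise bound $\pbra{u^\top \E\bbra{rH}\, v}^2 \le \E\bbra{r^2}\,\normsp{\E\bbra{HH^\top}}$, valid for all $u,v\in S^{d-1}$.

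Finally, since this pointwise bound is uniform over $u$ and $v$, I would take the supremum over $u,v\in S^{d-1}$ on the left-hand side and use the spectral-norm characterization to conclude $\normsp{\E\bbra{rH}}^2 \le \E\bbra{r^2}\,\normsp{\E\bbra{HH^\top}}$, as desired. Every step is elementary, so I do not anticipate a genuine obstacle; the only points requiring a moment of care are the explicit use of the symmetry of $H$ to convert $H^\top H$ into $HH^\top$ (so that the final bound features $\E\bbra{HH^\top}$ and matches the statement), and checking that the supremum over $(u,v)$ commutes correctly with the squaring—which is immediate here because the intermediate inequality is a uniform bound independent of $(u,v)$.
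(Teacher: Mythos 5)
Your proposal is correct and follows essentially the same route as the paper's proof: expressing $\normsp{\E[rH]}^2$ via the variational characterization $\sup_{u,v\in S^{d-1}}(u^\top \E[rH]v)^2$, applying scalar Cauchy--Schwarz to $\E[r\,(u^\top Hv)]$, and then bounding $\E[(u^\top Hv)^2]$ by a quadratic form of $\E[HH^\top]$. The only cosmetic difference is that you drop the $u$ direction (obtaining $v^\top H^\top H v$ and invoking symmetry to write $H^\top H=HH^\top$) whereas the paper drops the $v$ direction via $vv^\top\preceq I$ to reach $u^\top HH^\top u$ directly; the two steps are mirror images and equally valid.
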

\begin{proof}
Note that for any matrix $\mG\in \R^{d}$, $\normsp{H}^2=\sup_{u,v\in S^{d-1}}\pbra{u^\top\mG v}^2.$ As a result,
\begin{align*}
	&\normtwo{\E\bbra{rH}}^2=\sup_{u,v\in S^{d-1}}\pbra{u^\top\E\bbra{rH}v}^2=\sup_{u,v\in S^{d-1}}\E\bbra{r\pbra{u^\top H v}}^2\\
	\le &\;\sup_{u,v\in S^{d-1}}\E\bbra{\pbra{u^\top H v}^2}\E\bbra{r^2}\tag{H\"{o}lder Ineqaulity}\\
	= &\sup_{u,v\in S^{d-1}}\E\bbra{u^\top H v v^\top H^\top u}\E\bbra{r^2}\\
	\le &\sup_{u\in S^{d-1}}\E\bbra{u^\top H H^\top u}\E\bbra{r^2}\\
	= &\normsp{\E\bbra{H H^\top}}\E\bbra{r^2}.
\end{align*}
\end{proof}

\begin{lemma}\label{lem:holder-2tensor}
	For a random matrix $x\in \R^{d}$ and a positive random variable $r$, we have
	\begin{align}
		\normsp{\E\bbra{rxx^\top}}^2\le \normsp{\E[x^{\otimes 4}]}\E\bbra{r^2}.
	\end{align}
\end{lemma}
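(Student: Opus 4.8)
The plan is to mirror the proofs of Lemma~\ref{lem:holder-vector} and Lemma~\ref{lem:holder-matrix}, reducing everything to a scalar Hölder inequality via the variational characterization of the injective norm. First I would unfold the injective norm of the symmetric matrix $\E\bbra{rxx^\top}$. Since $r$ is a scalar, $\E\bbra{rxx^\top}$ is symmetric (indeed PSD, because $r$ is positive), and by the footnote's definition of the injective norm for a $2$-tensor,
\begin{align}
\normsp{\E\bbra{rxx^\top}} = \sup_{u\in S^{d-1}} u^\top \E\bbra{rxx^\top} u = \sup_{u\in S^{d-1}} \E\bbra{r\dotp{x}{u}^2}.
\end{align}
Because $r>0$, each quantity $\E\bbra{r\dotp{x}{u}^2}$ is nonnegative, so squaring commutes with the supremum:
\begin{align}
\normsp{\E\bbra{rxx^\top}}^2 = \sup_{u\in S^{d-1}} \pbra{\E\bbra{r\dotp{x}{u}^2}}^2.
\end{align}

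Next, for each fixed $u\in S^{d-1}$ I would apply Hölder's inequality to the product $r\cdot \dotp{x}{u}^2$, giving
\begin{align}
\pbra{\E\bbra{r\dotp{x}{u}^2}}^2 \le \E\bbra{r^2}\,\E\bbra{\dotp{x}{u}^4}.
\end{align}
Then I would bound $\E\bbra{\dotp{x}{u}^4} = \dotp{\E\bbra{x^{\otimes 4}}}{u^{\otimes 4}} \le \normsp{\E\bbra{x^{\otimes 4}}}$ directly from the definition of the injective norm of the $4$-tensor $\E\bbra{x^{\otimes 4}}$. Combining the last two displays and taking the supremum over $u\in S^{d-1}$ yields $\normsp{\E\bbra{rxx^\top}}^2 \le \normsp{\E\bbra{x^{\otimes 4}}}\E\bbra{r^2}$, as claimed.

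There is essentially no obstacle here; the only point requiring a moment of care is that the injective norm of the symmetric $2$-tensor $\E\bbra{rxx^\top}$ is $\sup_u u^\top(\cdot)u$ rather than the operator norm, and these agree precisely because positivity of $r$ makes the matrix PSD, so the variational form used in the first step is the correct one. The positivity of $r$ is also what lets the squaring pass through the supremum in the second display.
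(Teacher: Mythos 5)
Your proof is correct and follows essentially the same route as the paper's: both use the positivity of $r$ to identify $\normsp{\E\bbra{rxx^\top}}$ with $\sup_{u\in S^{d-1}} \E\bbra{r\dotp{x}{u}^2}$, then apply H\"older's inequality pointwise in $u$ and recognize $\E\bbra{\dotp{x}{u}^4}=\dotp{\E\bbra{x^{\otimes 4}}}{u^{\otimes 4}}\le\normsp{\E\bbra{x^{\otimes 4}}}$. No gaps; your explicit remark about why the PSD property justifies the variational form matches the paper's opening step.
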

\begin{proof}
	Since $r$ is non-negative, we have $\E\bbra{rxx^\top}\succeq 0.$ As a result, $$\normsp{\E\bbra{rxx^\top}}=\sup_{u\in S^{d-1}}u^\top \E\bbra{rxx^\top} u.$$ It follows that 
	\begin{align*}
		&\normsp{\E\bbra{rxx^\top}}^2=\sup_{u\in S^{d-1}}\pbra{u^\top\E\bbra{rxx^\top}u}^2=\sup_{u\in S^{d-1}}\E\bbra{r\dotp{u}{x}^2}^2\\
		\le &\;\sup_{u\in S^{d-1}}\E\bbra{\dotp{u}{x}^4}\E\bbra{r^2}\tag{H\"{o}lder Inequality}\\
		= &\sup_{u\in S^{d-1}}\dotp{u^{\otimes 4}}{\E[x^{\otimes 4}]}\E\bbra{r^2}\\
		= &\normsp{\E\bbra{x^{\otimes 4}}}\E\bbra{r^2}.
	\end{align*}
\end{proof}

Following lemmas exploit the isotropism of Gaussian vectors, and are used to verify the Lipschitzness assumption of Example~\ref{example:1}. In fact, we heavily rely on the fact that, for a fixed vector $g\in \R^{d}$, $\dotp{g}{u}\sim \calN(0,\normtwo{g}^2)$ when $u\sim \calN(0,I),$
\begin{lemma}\label{lem:lip-1-bounded}
	For two vectors $p,q\in \R^{d}$ and a bounded function $f:\R^{d}\to [-B,B],$ we have
	\begin{align}
		\E_{u\sim \calN(0,\sigma^2 I)}\bbra{\pbra{\dotp{p}{u}-\dotp{q}{u}}f(u)}\le \sigma B\normtwo{p-q}.
	\end{align} 
\end{lemma}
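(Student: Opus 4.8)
The plan is to reduce the quantity, which is linear in $u$, to a one-dimensional Gaussian projection and then apply Cauchy--Schwarz. First I would set $g = p-q$, so that the left-hand side becomes $\E_{u\sim \calN(0,\sigma^2 I)}\bbra{\dotp{g}{u}f(u)}$. The key structural fact to invoke is the isotropy of the Gaussian: since $u\sim\calN(0,\sigma^2 I)$, the scalar $\dotp{g}{u}$ is distributed as $\calN(0,\sigma^2\normtwo{g}^2)$, and in particular $\E\bbra{\dotp{g}{u}^2}=\sigma^2\normtwo{g}^2$.

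Next I would apply the Cauchy--Schwarz inequality to the product $\dotp{g}{u}\cdot f(u)$, obtaining
\[
\E\bbra{\dotp{g}{u}f(u)}\le \E\bbra{\dotp{g}{u}^2}^{1/2}\E\bbra{f(u)^2}^{1/2}.
\]
Using $\E\bbra{\dotp{g}{u}^2}=\sigma^2\normtwo{g}^2$ from the previous step together with the boundedness hypothesis $\abs{f}\le B$, which gives $\E\bbra{f(u)^2}\le B^2$, the right-hand side is at most $\sigma\normtwo{g}\cdot B = \sigma B\normtwo{p-q}$, which is exactly the claimed bound.

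There is no genuine obstacle here: every step is a direct application of a standard inequality, and the lemma is essentially a bookkeeping fact used later to control the Lipschitzness terms in Example~\ref{example:1}. The only point requiring care is the variance computation, i.e.\ confirming that the projection of an isotropic Gaussian with covariance $\sigma^2 I$ onto a fixed direction $g$ has variance $\sigma^2\normtwo{g}^2$ (and not $\normtwo{g}^2$), since the factor of $\sigma$ is what produces the $\sigma$ in the final bound. A marginally sharper alternative would replace Cauchy--Schwarz by the exact first absolute moment $\E\bbra{\abs{\dotp{g}{u}}}=\sigma\normtwo{g}\sqrt{2/\pi}$ and bound $\dotp{g}{u}f(u)\le \abs{\dotp{g}{u}}\,B$ pointwise; this reproduces the same estimate up to the factor $\sqrt{2/\pi}<1$, so the cruder Cauchy--Schwarz route already suffices for the stated constant.
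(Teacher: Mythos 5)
Your proof is correct and matches the paper's own argument: the paper likewise applies H\"{o}lder (Cauchy--Schwarz) to split $\E\bbra{\dotp{p-q}{u}f(u)}$ into $\E\bbra{\dotp{p-q}{u}^2}^{1/2}\E\bbra{f(u)^2}^{1/2}$ and then uses $\dotp{p-q}{u}\sim\calN(0,\sigma^2\normtwo{p-q}^2)$ together with $\abs{f}\le B$. The remark about the sharper first-absolute-moment bound is a fine observation but not needed.
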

\begin{proof}
	By H\"{o}lder inequality we have 
	\begin{align}
		&\E_{u\sim \calN(0,\sigma^2 I)}\bbra{\pbra{\dotp{p}{u}-\dotp{q}{u}}f(u)}\\
		\le&\;\E_{u\sim \calN(0,\sigma^2 I)}\bbra{\pbra{\dotp{p}{u}-\dotp{q}{u}}^2}^{1/2}\E_{u\sim \calN(0,\sigma^2 I)}\pbra{f(u)^2}^{1/2}.
	\end{align}
	Note that $u$ is isotropic. As a result $\dotp{p-q}{u}\sim \calN(0,\sigma^2\normtwo{p-q}^2).$ It follows that 
	\begin{align}
		&\E_{u\sim \calN(0,\sigma^2 I)}\bbra{\pbra{\dotp{p}{u}-\dotp{q}{u}}^2}^{1/2}\E_{u\sim \calN(0,\sigma^2 I)}\pbra{f(u)^2}^{1/2}\\
		\le\;&\sigma\normtwo{p-q}B.
	\end{align}
\end{proof}

\begin{lemma}\label{lem:helper-reduction}
	For two vectors $x,y\in \R^{d}$, if $\normtwo{x}=1$ we have
	\begin{align}
		\normtwo{x-y}^2\ge (1-\dotp{x}{y})^2.
	\end{align}
\end{lemma}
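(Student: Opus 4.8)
The plan is to reduce the inequality to a one-line application of Cauchy--Schwarz by expanding both sides explicitly. First I would expand the left-hand side using the hypothesis $\normtwo{x}=1$:
\begin{align*}
\normtwo{x-y}^2 = \normtwo{x}^2 - 2\dotp{x}{y} + \normtwo{y}^2 = 1 - 2\dotp{x}{y} + \normtwo{y}^2.
\end{align*}
Next I would expand the right-hand side as $(1-\dotp{x}{y})^2 = 1 - 2\dotp{x}{y} + \dotp{x}{y}^2$. Subtracting the second expression from the first, the linear terms $-2\dotp{x}{y}$ cancel and the desired inequality $\normtwo{x-y}^2 \ge (1-\dotp{x}{y})^2$ becomes equivalent to the single scalar inequality $\normtwo{y}^2 \ge \dotp{x}{y}^2$.

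This last inequality is exactly Cauchy--Schwarz combined with the unit-norm hypothesis: $\dotp{x}{y}^2 \le \normtwo{x}^2\normtwo{y}^2 = \normtwo{y}^2$, where the final equality again uses $\normtwo{x}=1$. There is no substantive obstacle here; the statement is a short algebraic identity plus Cauchy--Schwarz. The only point requiring minor care is that the hypothesis $\normtwo{x}=1$ is used twice---once to cancel the $\normtwo{x}^2$ constant against the leading $1$ on the right-hand side, and once to drop the factor $\normtwo{x}^2$ in the Cauchy--Schwarz bound---so I would make sure to invoke it explicitly at both points rather than assume it only implicitly.
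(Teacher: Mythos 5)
Your proof is correct: the expansion of both sides, the cancellation of the $-2\dotp{x}{y}$ terms, and the reduction to $\dotp{x}{y}^2\le\normtwo{y}^2$ via Cauchy--Schwarz (using $\normtwo{x}=1$) are all valid. The paper takes a different, more geometric route: it decomposes $x-y=\pbra{1-\dotp{x}{y}}x+\pbra{\dotp{x}{y}x-y}$, observes that the cross term $\dotp{x}{\dotp{x}{y}x-y}=\dotp{x}{y}\normtwo{x}^2-\dotp{x}{y}$ vanishes precisely because $\normtwo{x}=1$, and then invokes Pythagoras:
\begin{align}
\normtwo{x-y}^2=(1-\dotp{x}{y})^2+\normtwo{\dotp{x}{y}x-y}^2\ge (1-\dotp{x}{y})^2.
\end{align}
The two arguments are equivalent in substance---the slack in your inequality, $\normtwo{y}^2-\dotp{x}{y}^2$, is exactly $\normtwo{y-\dotp{x}{y}x}^2$, the same residual the paper exhibits---but yours outsources the positivity of the slack to Cauchy--Schwarz as a black box, whereas the paper's decomposition makes it manifest as a squared norm (and makes the equality case, $y$ parallel to $x$, immediately visible). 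Your version is more mechanical and slightly faster to verify; the paper's version is self-contained in that it never needs Cauchy--Schwarz at all, only the bilinearity of the inner product.
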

\begin{proof}
	By basic algebra we get
	\begin{align}
		&\normtwo{x-y}^2=\normtwo{x-\dotp{x}{y}x+\dotp{x}{y}x-y}^2\\
		=&\normtwo{x-\dotp{x}{y}x}^2+\normtwo{\dotp{x}{y}x-y}^2-2(1-\dotp{x}{y})\dotp{x}{\dotp{x}{y}x-y}\\
		=&\normtwo{x-\dotp{x}{y}x}^2+\normtwo{\dotp{x}{y}x-y}^2\\
		= &(1-\dotp{x}{y})^2+\normtwo{\dotp{x}{y}x-y}^2\ge (1-\dotp{x}{y})^2.
	\end{align}
\end{proof}

\begin{lemma}\label{lem:lip-1-coupling}
For vectors $p,x_1,x_2\in \R^{d}$ and a bounded function $f:\R^{d}\to [0,B],$ we have
\begin{align}
	\E_{u\sim \calN(0,\sigma^2 I)}\bbra{\dotp{p}{u}\pbra{f(x_1+u)-f(x_2+u)}}\le B\normtwo{p}\normtwo{x_1-x_2}\pbra{6+\frac{3(\normtwo{x_1}+\normtwo{x_2})}{\sigma}}.
\end{align} 
\end{lemma}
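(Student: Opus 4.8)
The plan is to remove the $x_1,x_2$ dependence from the argument of $f$ by a change of variables, pushing all of that dependence into the smooth Gaussian weight, and then to control the resulting $L^1$ distance between two translated copies of a smooth function. The crucial conceptual point is that $f$ is only bounded, so we must never differentiate it; the change of variables lets us treat $f$ purely as a bounded measurable multiplier.

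Concretely, writing $\phi_\sigma$ for the density of $\calN(0,\sigma^2 I)$ and substituting $w=x_i+u$ in each term, I would first rewrite the target as
\begin{align*}
\E_{u\sim\calN(0,\sigma^2 I)}\bbra{\dotp{p}{u}\pbra{f(x_1+u)-f(x_2+u)}}=\int f(w)\pbra{h(w-x_1)-h(w-x_2)}\,\dd w,
\end{align*}
where $h(z)\defeq \dotp{p}{z}\phi_\sigma(z)$ is smooth irrespective of $f$. Since $0\le f\le B$, pointwise $f(w)\pbra{h(w-x_1)-h(w-x_2)}\le B\abs{h(w-x_1)-h(w-x_2)}$, so the quantity is bounded by $B\int\abs{h(w-x_1)-h(w-x_2)}\,\dd w$.

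Next I would bound this $L^1$ translation distance by integrating the gradient of $h$. With $\delta=x_2-x_1$ and the identity $h(y)-h(y-\delta)=\int_0^1\dotp{\nabla h(y-\delta+t\delta)}{\delta}\,\dd t$, Fubini together with translation invariance of Lebesgue measure yields
\begin{align*}
\int\abs{h(w-x_1)-h(w-x_2)}\,\dd w\le \int\abs{\dotp{\nabla h(z)}{\delta}}\,\dd z.
\end{align*}
Using $\nabla\phi_\sigma(z)=-\sigma^{-2}z\,\phi_\sigma(z)$ gives $\nabla h(z)=\phi_\sigma(z)\pbra{p-\sigma^{-2}\dotp{p}{z}z}$, so the integral splits into $\abs{\dotp{p}{\delta}}\le \normtwo{p}\normtwo{\delta}$ and $\sigma^{-2}\E_{z\sim\calN(0,\sigma^2 I)}\bbra{\abs{\dotp{p}{z}\dotp{z}{\delta}}}$; by Cauchy--Schwarz and the identities $\E\dotp{p}{z}^2=\sigma^2\normtwo{p}^2$ and $\E\dotp{z}{\delta}^2=\sigma^2\normtwo{\delta}^2$, the latter is at most $\normtwo{p}\normtwo{\delta}$. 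This produces the clean estimate $2B\normtwo{p}\normtwo{x_1-x_2}$, which implies the stated bound since $2\le 6+3(\normtwo{x_1}+\normtwo{x_2})/\sigma$.

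I expect the main obstacle to be conceptual rather than computational: one must set up the argument so that the lack of regularity of $f$ is never invoked, which is exactly what the change of variables accomplishes by converting an estimate about an arbitrary bounded $f$ into a translation estimate for the smooth, integrable function $h$. The two technical points requiring care are the Fubini/translation step (legitimate because $\nabla h$ has Gaussian tails and is integrable) and the sign bookkeeping in $f(w)g(w)\le B\abssm{g(w)}$, which uses only $\abssm{f}\le B$. Everything else reduces to the explicit Gaussian moment computations above.
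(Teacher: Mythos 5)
Your proof is correct, and after the opening change of variables it diverges genuinely from the paper's argument. Both proofs begin identically, substituting $w=x_i+u$ so that the expectation becomes $\int f(w)\left(h(w-x_1)-h(w-x_2)\right)\dd w$ with $h(z)=\dotp{p}{z}\phi_\sigma(z)$, pushing all dependence on $x_1,x_2$ into the smooth Gaussian kernel. From there the paper runs a coupling argument: it reduces (after a rotation/translation normalization) to $x_1=Ce_1$, $x_2=-Ce_1$, splits the integrand using $G(y)=\min\left(F(y-x_1),F(y-x_2)\right)$ into a ``common mass'' term plus two excess-mass terms, bounds the common term by the difference of the linear factors, and controls the excess terms via total-variation bounds between shifted Gaussians together with one-dimensional tail computations (its Lemma~\ref{lem:lip-1-1dim}); this is what produces the $\sigma$- and $\normtwo{x_i}$-dependent constant $6+3(\normtwo{x_1}+\normtwo{x_2})/\sigma$. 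You instead bound $B\int\abs{h(w-x_1)-h(w-x_2)}\dd w$ by the $L^1$ translation estimate $\int\abs{\dotp{\nabla h(z)}{\delta}}\dd z$ (fundamental theorem of calculus along the segment, Tonelli, translation invariance), compute $\nabla h(z)=\phi_\sigma(z)\left(p-\sigma^{-2}\dotp{p}{z}z\right)$ explicitly, and finish with Cauchy--Schwarz and exact Gaussian second moments. Your route is more elementary (no coupling, no TV bounds, no case analysis, no one-dimensional reduction), it sidesteps the paper's somewhat delicate normalization step, and it yields the strictly stronger, dimension- and $\sigma$-free bound $2B\normtwo{p}\normtwo{x_1-x_2}$, which implies the stated inequality since $2\le 6+3(\normtwo{x_1}+\normtwo{x_2})/\sigma$. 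The two technical points you flag are handled correctly: Tonelli applies because the integrand is nonnegative with Gaussian decay, and the pointwise bound $f(w)g(w)\le B\abssm{g(w)}$ needs only $\abssm{f}\le B$. The one thing your cleaner bound does not replicate is the paper's template: the paper reuses the same coupling machinery (and Lemma~\ref{lem:lip-1-1dim}) for the second-order analogue (its Lemma~\ref{lem:lip-2-coupling}), whereas your argument would need the analogous computation with $h(z)=\dotp{p}{z}\dotp{q}{z}\phi_\sigma(z)$ there---though that extension is equally straightforward.
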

\begin{proof}
	The lemma is proved by coupling argument. With out loss of generality, we assume that $x_1=C e_1$ and $x_2=-Ce_1$ where $e_1$ is the first basis vector. That is, $\normtwo{x_1-x_2}=2C.$ For a vector $x\in \R^{d}$, let $F(x)$ be the density of distribution $\calN(0,\sigma^2 I)$ at $x$. Then we have,
	\begin{align}
		\E_{u\sim \calN(0,\sigma^2 I)}\bbra{\dotp{p}{u}f(x_1+u)}=\int_{y\in \R^{d}}F(y-x_1)\dotp{p}{y-x_1}f(y)\dd y.
	\end{align}
	As a result,
	\begin{align}
		&\E_{u\sim \calN(0,\sigma^2 I)}\bbra{\dotp{p}{u}\pbra{f(x_1+u)-f(x_2+u)}}\\
		=&\;\int_{y\in \R^{d}}F(y-x_1)\dotp{p}{y-x_1}f(y)\dd y-F(y-x_2)\dotp{p}{y-x_2}f(y)\dd y.
	\end{align}
	Define $G(y)=\min(F(y-x_1),F(y-x_2)).$ It follows that,
	\begin{align}
		&\int_{y\in \R^{d}}F(y-x_1)\dotp{p}{y-x_1}f(y)\dd y-F(y-x_2)\dotp{p}{y-x_2}f(y)\dd y\\
		\le&\; \int_{y\in \R^{d}}G(y)\abs{\dotp{p}{y-x_1}-\dotp{p}{y-x_2}}f(y)\dd y \label{equ:d11-1}\\ 
		&\;+\int_{y\in \R^{d}}(F(y-x_1)-G(y))\abs{\dotp{p}{y-x_1}}f(y)\dd y \label{equ:d11-2}\\ 
		&\;+\int_{y\in \R^{d}}(F(y-x_2)-G(y))\abs{\dotp{p}{y-x_2}}f(y)\dd y. \label{equ:d11-3}
	\end{align}
	The term in Eq.~\eqref{equ:d11-1} can be bounded by 
	\begin{align}
		&\int_{y\in \R^{d}}G(y)\abs{\dotp{p}{y-x_1}-\dotp{p}{y-x_2}}f(y)\dd y\\
		\le\; &\int_{y\in \R^{d}}G(y)\dd y \sup_{y\in \R^{d}}\abs{\dotp{p}{x_2-x_1}f(y)}\le\normtwo{x_2-x_1}\normtwo{p}B.
	\end{align}
	Note that the terms in Eq.~\eqref{equ:d11-2} and Eq.~\eqref{equ:d11-3} are symmetric. Therefore in the following we only prove an upper bound for Eq.~\eqref{equ:d11-2}. In the following, we use the notation $[y]_{-1}$ to denote the $(d-1)$-dimensional vector generated by removing the first coordinate of $y$. Let $P(x)$ be the density of distribution $\calN(0,\sigma^2)$ at point $x\in \R$. By the symmetricity of Gaussian distribution, $F(y)=P([y]_1)F([y]_{-1}).$
	
	By definition, $F(y-x_1)-G(y)=0$ for $y$ such that $[y]_1\le 0.$ As a result, 
	\begin{align}
		&\int_{y:[y]_1>0}(F(y-x_1)-G(y))\abs{\dotp{p}{y-x_1}}f(y)\dd y\\
		=\;&\int_{y:[y]_1>0}(F(y-x_1)-F(y-x_2))\abs{\dotp{p}{y-x_1}}f(y)\dd y\\
		\le\;&\int_{[y]_1>0}\dd [y]_1(P([y]_1-[x_1]_1)-P([y_1]-[x_2]_1))\E_{[y]_{-1}}\bbra{\pbra{\abs{\dotp{[p]_{-1}}{[y-x_1]_{-1}}+[p]_1[y-x_1]_1}}f(y)}.
	\end{align}
	Note that conditioned on $[y]_1$, $[y-x_1]_{-1}\sim \calN(0,\sigma^2 I).$ Consequently,
	\begin{align}
		&\E_{[y]_{-1}}\bbra{\abs{\dotp{[p]_{-1}}{[y-x_1]_{-1}}}f(y)}\\
		\le\;&\E_{[y]_{-1}}\bbra{\dotp{[p]_{-1}}{[y-x_1]_{-1}}^2}^{1/2}\E_{[y]_{-1}}\bbra{f(y)^2}^{1/2}\\
		\le\;&B\sigma\normtwo{[p]_{-1}}\le B\sigma\normtwo{p}.
	\end{align}
	On the other hand, we have 
	\begin{align}\label{equ:d11-TV}
		\int_{[y]_1>0}\dd [y]_1(P([y]_1-[x_1]_1)-P([y_1]-[x_2]_1))\le \TV{\calN(-C,\sigma^2)}{\calN(C,\sigma^2)}\le \frac{1}{\sigma}\normtwo{x_1-x_2}.
	\end{align} It follows that,
	\begin{align}
		&\int_{y:[y]_1>0}(F(y-x_1)-G(y))\abs{\dotp{p}{y-x_1}}f(y)\dd y\\
		\le\;&B\normtwo{p}\normtwo{x_1-x_2}+B\int_{[y]_1>0}\dd [y]_1(P([y]_1-[x_1]_1)-P([y_1]-[x_2]_1))\abs{[p]_1[y-x_1]_1}.\label{equ:d11-4}
	\end{align}
	Note that the second term in Eq.~\eqref{equ:d11-4} involves only one dimensional Gaussian distribution. Invoking Lemma~\ref{lem:lip-1-1dim}, the second term can be bounded by $\normtwo{p}\pbra{\frac{3\normtwo{x_1-x_2}}{\sigma}\normtwo{x_1}+4\normtwo{x_1-x_2}}$. Therefore, we have
	\begin{align}
		&\int_{y:[y]_1>0}(F(y-x_1)-G(y))\abs{\dotp{p}{y-x_1}}f(y)\dd y \le 5B\normtwo{p}\normtwo{x_1-x_2}+\normtwo{p}\frac{3B\normtwo{x_1-x_2}}{\sigma}.
	\end{align}
	
\end{proof}

\begin{lemma}\label{lem:lip-2-coupling}
	For vectors $p,q,x_1,x_2\in \R^{d}$ with $\norm{x_1\le 1}\le 1\normtwo{x_2}\le 1$ and a bounded function $f:\R^{d}\to [0,B],$ we have
	\begin{align}
		&\E_{u\sim \calN(0,\sigma^2 I)}\bbra{\dotp{p}{u}\dotp{q}{u}\pbra{f(x_1+u)-f(x_2+u)}}\le \poly(B,\sigma,1/\sigma,\normtwo{p},\normtwo{q})\normtwo{x_1-x_2}.
	\end{align} 
\end{lemma}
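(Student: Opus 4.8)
The plan is to mirror the coupling argument used in the proof of Lemma~\ref{lem:lip-1-coupling}, upgrading every step to absorb the extra linear factor $\dotp{q}{u}$. First I would reduce to a symmetric configuration: replacing $f$ by its shift $z\mapsto f(z+(x_1+x_2)/2)$ (which preserves the range $[0,B]$) and using the rotational invariance of the isotropic Gaussian, I may assume $x_1=Ce_1$ and $x_2=-Ce_1$ with $C=\tfrac12\normtwo{x_1-x_2}\le 1$. Writing the expectation as integrals against the density $F(\cdot)$ of $\calN(0,\sigma^2 I)$ and setting the coupling density $G(y)=\min\pbra{F(y-x_1),F(y-x_2)}$, I would split the target quantity into one coupled term, in which both densities are replaced by $G$ and the difference sits in the bilinear factors, plus two residual terms carrying the density difference $F(y-x_i)-G(y)$.

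For the coupled term I would use the identity
\[
\dotp{p}{a}\dotp{q}{a}-\dotp{p}{b}\dotp{q}{b}=\dotp{p}{a}\dotp{q}{a-b}+\dotp{q}{b}\dotp{p}{a-b},
\]
with $a=y-x_1$, $b=y-x_2$ and $a-b=x_2-x_1$, so that the bilinear difference is manifestly proportional to $\normtwo{x_1-x_2}$. Integrating against $G(y)f(y)\le F(y-x_i)f(y)$ and using $\E_{u}\abssm{\dotp{p}{u}}\lesssim \sigma\normtwo{p}$ (since $\dotp{p}{u}\sim\calN(0,\sigma^2\normtwo{p}^2)$) together with $f\le B$ controls this term by $\bigO\pbra{B\sigma\normtwo{p}\normtwo{q}\normtwo{x_1-x_2}}$.

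The two residual terms are symmetric, so I would bound only the one involving $F(y-x_1)-G(y)$, which is supported on $\{[y]_1>0\}$ and equals $F(y-x_1)-F(y-x_2)$ there. Using the product form $F(y)=P([y]_1)\tilde F([y]_{-1})$ of the Gaussian density (with $P$ the one-dimensional and $\tilde F$ the $(d-1)$-dimensional density), I would expand $\dotp{p}{y-x_1}\dotp{q}{y-x_1}$ into the four monomials in $([y]_1-C)$ and $[y]_{-1}$. For each monomial the $[y]_{-1}$-integral is handled by conditioning and Cauchy--Schwarz against the bounded $f$, producing factors $\sigma\normtwo{[p]_{-1}}$, $\sigma^2\normtwo{[p]_{-1}}\normtwo{[q]_{-1}}$, or a constant, while the remaining first-coordinate integral is the density difference $P(\cdot-C)-P(\cdot+C)$ weighted by $1$, $([y]_1-C)$, or $([y]_1-C)^2$. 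The weight-$1$ case is bounded by the total variation estimate $\tfrac1\sigma\normtwo{x_1-x_2}$ and the weight-$([y]_1-C)$ case by Lemma~\ref{lem:lip-1-1dim}, exactly as in Lemma~\ref{lem:lip-1-coupling}.

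The main obstacle, and the only genuinely new estimate, is the quadratic-weight one-dimensional integral $\int_{t>0}\pbra{P(t-C)-P(t+C)}(t-C)^2\,\dd t$. I would bound it by an analog of Lemma~\ref{lem:lip-1-1dim}: the mean value theorem gives $P(t-C)-P(t+C)=\bigO(C)\abssm{P'(\xi_t)}$ with $\xi_t\in(t-C,t+C)$, and $P'(\xi)=-\xi P(\xi)/\sigma^2$ reduces the bound to a weighted Gaussian moment $\tfrac{C}{\sigma^2}\int |\xi|(t-C)^2 P(\xi_t)\,\dd t=\poly(\sigma,1/\sigma)\,C$ (using $C\le 1$ so that $\xi_t$ and $t$ are comparable and all moments are finite). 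Assembling the coupled and residual bounds and folding every dependence on $\sigma,1/\sigma,\normtwo{p},\normtwo{q},B$ into the polynomial prefactor yields the claimed bound $\poly\pbra{B,\sigma,1/\sigma,\normtwo{p},\normtwo{q}}\normtwo{x_1-x_2}$.
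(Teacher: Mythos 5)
Your proof follows essentially the same route as the paper's: the same reduction to the symmetric configuration $x_1=Ce_1$, $x_2=-Ce_1$, the same coupling density $G(y)=\min\pbra{F(y-x_1),F(y-x_2)}$, the same telescoping of the bilinear factor for the coupled term, and the same treatment of the residual terms via the product form of the Gaussian density, monomial expansion, and conditioning with Cauchy--Schwarz. The only deviation is that the quadratic-weight one-dimensional integral you flag as the ``only genuinely new estimate'' is in fact already the third inequality of the paper's Lemma~\ref{lem:lip-1-1dim} (proved there by a total-variation bound near the origin plus a direct exponential-tail estimate); your mean-value-theorem derivation, using $P'(\xi)=-\xi P(\xi)/\sigma^2$ and $C\le 1$, is a valid alternative yielding the same $\poly(\sigma,1/\sigma)\,\normtwo{x_1-x_2}$ bound.
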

\begin{proof}
	Proof of this lemma is similar to that of Lemma~\ref{lem:lip-2-coupling}. With out loss of generality, we assume that $x_1=C e_1$ and $x_2=-Ce_1$ where $e_1$ is the first basis vector. That is, $\normtwo{x_1-x_2}=2C.$ For a vector $x\in \R^{d}$, let $F(x)$ be the density of distribution $\calN(0,\sigma^2 I)$ at $x$. Then we have,
	\begin{align}
		\E_{u\sim \calN(0,\sigma^2 I)}\bbra{\dotp{p}{u}\dotp{q}{u}f(x_1+u)}=\int_{y\in \R^{d}}F(y-x_1)\dotp{p}{y-x_1}\dotp{q}{y-x_1}f(y)\dd y.
	\end{align}
	As a result,
	\begin{align}
		&\E_{u\sim \calN(0,\sigma^2 I)}\bbra{\dotp{p}{u}\dotp{q}{u}\pbra{f(x_1+u)-f(x_2+u)}}\\
		=&\;\int_{y\in \R^{d}}F(y-x_1)\dotp{p}{y-x_1}\dotp{q}{y-x_1}f(y)\dd y-F(y-x_2)\dotp{p}{y-x_2}\dotp{q}{y-x_2}f(y)\dd y.
	\end{align}
	Define $G(y)=\min(F(y-x_1),F(y-x_2)).$ It follows that,
	\begin{align}
		&\int_{y\in \R^{d}}F(y-x_1)\dotp{p}{y-x_1}\dotp{q}{y-x_1}f(y)\dd y-F(y-x_2)\dotp{p}{y-x_2}\dotp{q}{y-x_2}f(y)\dd y\\
		\le&\; \int_{y\in \R^{d}}G(y)\abs{\dotp{p}{y-x_1}\dotp{q}{y-x_1}-\dotp{p}{y-x_2}\dotp{q}{y-x_2}}f(y)\dd y \label{equ:d12-1}\\ 
		&\;+\int_{y\in \R^{d}}(F(y-x_1)-G(y))\abs{\dotp{p}{y-x_1}\dotp{q}{y-x_1}}f(y)\dd y \label{equ:d12-2}\\ 
		&\;+\int_{y\in \R^{d}}(F(y-x_2)-G(y))\abs{\dotp{p}{y-x_2}\dotp{q}{y-x_2}}f(y)\dd y. \label{equ:d12-3}
	\end{align}
	By basic algebra we have
	\begin{align}
		&\int_{y\in \R^{d}}G(y)\abs{\dotp{p}{y-x_1}\dotp{q}{y-x_1}-\dotp{p}{y-x_2}\dotp{q}{y-x_2}}f(y)\dd y\\
		\le&\;\int_{y\in \R^{d}}G(y)\abs{\dotp{p}{x_2-x_1}\dotp{q}{y-x_1}}f(y)\dd y\\
		&+\int_{y\in \R^{d}}G(y)\abs{\dotp{p}{y-x_2}\dotp{q}{x_2-x_1}}f(y)\dd y.\label{equ:d12-4}
	\end{align}
	Continue with the first term we get
	\begin{align}
		&\int_{y\in \R^{d}}G(y)\abs{\dotp{p}{x_2-x_1}\dotp{q}{y-x_1}}f(y)\dd y\\
		\le&\;\normtwo{x_2-x_1}\normtwo{p}\int_{y\in \R^{d}}G(y)\abs{\dotp{q}{y-x_1}}f(y)\dd y\\
		\le&\;\normtwo{x_2-x_1}\normtwo{p}\int_{y\in \R^{d}}F(y-x_1)\abs{\dotp{q}{y-x_1}}f(y)\dd y\\
		=&\;\normtwo{x_2-x_1}\normtwo{p}\E_{u\sim \calN(0,\sigma^2 I)}\bbra{\abs{\dotp{q}{u}}f(u+x_1)}\\
		\le&\;\normtwo{x_2-x_1}\normtwo{p}\E_{u\sim \calN(0,\sigma^2 I)}\bbra{\dotp{q}{u}^2}^{1/2}\E_{u\sim \calN(0,\sigma^2 I)}\bbra{f(u+x_1)^{2}}^{1/2}\\
		\le&\;\sigma B\normtwo{x_2-x_1}\normtwo{p}\normtwo{q}.
	\end{align}
	For the same reason, the second term in Eq.~\eqref{equ:d12-4} is also bounded by $\sigma B\normtwo{x_2-x_1}\normtwo{p}\normtwo{q}.$ As a result, 
	\begin{align}
		\int_{y\in \R^{d}}G(y)\abs{\dotp{p}{y-x_1}\dotp{q}{y-x_1}-\dotp{p}{y-x_2}\dotp{q}{y-x_2}}f(y)\dd y\le 2\sigma B\normtwo{x_2-x_1}\normtwo{p}\normtwo{q}.
	\end{align}

	Now we turn to the term in Eq.~\eqref{equ:d12-2}. Note that the terms in Eq.~\eqref{equ:d12-2} and Eq.~\eqref{equ:d12-3} are symmetric. Therefore in the following we only prove an upper bound for Eq.~\eqref{equ:d12-2}. In the following, we use the notation $[y]_{-1}$ to denote the $(d-1)$-dimensional vector generated by removing the first coordinate of $y$. Let $P(x)$ be the density of distribution $\calN(0,\sigma^2)$ at point $x\in \R$. By the symmetricity of Gaussian distribution, $F(y)=P([y]_1)F([y]_{-1}).$
	
	By definition, $F(y-x_1)-G(y)=0$ for $y$ such that $[y]_1\le 0.$ Define the shorthand $I=\abs{[p]_1[y-x_1]_1},J=\abs{\dotp{[p]_{-1}}{[y-x_1]_{-1}}},C=\abs{[q]_1[y-x_1]_1},D=\abs{\dotp{[q]_{-1}}{[y-x_1]_{-1}}}$. 
	When condition on $[y]_1$, $A,C$ are constants. As a result, 
	\begin{align}
		&\int_{y:[y]_1>0}(F(y-x_1)-G(y))\abs{\dotp{p}{y-x_1}\dotp{q}{y-x_1}}f(y)\dd y\\
		=\;&\int_{y:[y]_1>0}(F(y-x_1)-F(y-x_2))\abs{\dotp{p}{y-x_1}\dotp{q}{y-x_1}}f(y)\dd y\\
		\le\;&\int_{[y]_1>0}\dd [y]_1(P([y]_1-[x_1]_1)-P([y_1]-[x_2]_1)) \E_{[y]_{-1}}\bbra{\abs{(I+J)(C+D)}f(y)}\\
		\le\;&\int_{[y]_1>0}\dd [y]_1(P([y]_1-[x_1]_1)-P([y_1]-[x_2]_1)) IC\E_{[y]_{-1}}\bbra{f(y)}\\
		&+\int_{[y]_1>0}\dd [y]_1(P([y]_1-[x_1]_1)-P([y_1]-[x_2]_1)) I\E_{[y]_{-1}}\bbra{Df(y)}\\
		&+\int_{[y]_1>0}\dd [y]_1(P([y]_1-[x_1]_1)-P([y_1]-[x_2]_1)) C\E_{[y]_{-1}}\bbra{Jf(y)}\\
		&+\int_{[y]_1>0}\dd [y]_1(P([y]_1-[x_1]_1)-P([y_1]-[x_2]_1)) \E_{[y]_{-1}}\bbra{JDf(y)}.
	\end{align}
	Note that conditioned on $[y]_1$, $[y-x_1]_{-1}\sim \calN(0,\sigma^2 I).$ Consequently,
	\begin{align}
		&\E_{[y]_{-1}}\bbra{Df(y)}\le\;\E_{[y]_{-1}}\bbra{D^2}^{1/2}\E_{[y]_{-1}}\bbra{f(y)^2}^{1/2}\le\;B\sigma\normtwo{q}\\
		&\E_{[y]_{-1}}\bbra{Jf(y)}\le\;\E_{[y]_{-1}}\bbra{J^2}^{1/2}\E_{[y]_{-1}}\bbra{f(y)^2}^{1/2}\le\;B\sigma\normtwo{p}\\
		&\E_{[y]_{-1}}\bbra{JDf(y)}\le\;\E_{[y]_{-1}}\bbra{J^4}^{1/4}\E_{[y]_{-1}}\bbra{D^4}^{1/4}\E_{[y]_{-1}}\bbra{f(y)^2}^{1/2}\le\;\sqrt{3}B\sigma^2\normtwo{p}\normtwo{q}.
	\end{align}
	Invoking Lemma~\ref{lem:lip-1-1dim} we get
	\begin{align}
		&\int_{[y]_1>0}\dd [y]_1(P([y]_1-[x_1]_1)-P([y_1]-[x_2]_1)) \le \frac{1}{\sigma}\normtwo{x_1-x_2},\\
		&\int_{[y]_1>0}\dd [y]_1(P([y]_1-[x_1]_1)-P([y_1]-[x_2]_1))I \le \poly(B,\sigma,1/\sigma)\normtwo{p}\normtwo{x_1-x_2},\\
		&\int_{[y]_1>0}\dd [y]_1(P([y]_1-[x_1]_1)-P([y_1]-[x_2]_1))C \le \poly(B,\sigma,1/\sigma)\normtwo{q}\normtwo{x_1-x_2},\\
		&\int_{[y]_1>0}\dd [y]_1(P([y]_1-[x_1]_1)-P([y_1]-[x_2]_1))IC \le \poly(B,\sigma,1/\sigma)\normtwo{p}\normtwo{q}\normtwo{x_1-x_2}.
	\end{align}
	As a result, we get
	\begin{align*}
		&\int_{y:[y]_1>0}(F(y-x_1)-G(y))\abs{\dotp{p}{y-x_1}\dotp{q}{y-x_1}}f(y)\dd y\le \poly(B,\sigma,1/\sigma,\normtwo{p},\normtwo{q})\normtwo{x_1-x_2}.
	\end{align*}
	
\end{proof}

\begin{lemma}\label{lem:lip-1-1dim}
	Let $P(x)$ be the density function of $\calN(0,\sigma^2)$. Given a scalar $x\ge 0$. we have
	\begin{align}
		&\int_{y>0}(P(y-x)-P(y+x))\dd y\le \frac{x}{\sigma},\\
		&\int_{y>0}(P(y-x)-P(y+x))\abs{y-x}\dd y\le \frac{3x^2}{\sigma}+4x,\\
		&\int_{y>0}(P(y-x)-P(y+x))\abs{y-x}^2\dd y\le \frac{x^2}{\sigma}+4\sigma x.
	\end{align}
\end{lemma}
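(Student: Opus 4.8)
The plan is to write $P(t)=\frac{1}{\sqrt{2\pi}\sigma}e^{-t^2/(2\sigma^2)}$ for the $\calN(0,\sigma^2)$ density and to begin by recording a sign fact: for $x\ge 0$ and $y>0$ we have $(y-x)^2\le(y+x)^2$, hence $P(y-x)\ge P(y+x)$, so the integrand $P(y-x)-P(y+x)$ is nonnegative on the whole domain of integration. This is what will let me replace $|y-x|$ by the larger, sign-definite quantity $y+x$ when convenient. I would then reduce every integral to elementary Gaussian moments via the substitutions $u=y-x$ and $u=y+x$, using the standard identities $\int_a^{\infty}uP(u)\,du=\sigma^2P(a)$ and $\int_a^{\infty}u^2P(u)\,du=\sigma^2aP(a)+\sigma^2\pbra{1-\Phi(a/\sigma)}$, where $\Phi$ is the standard normal CDF.

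For the first bound I would observe that $P(y-x)$ and $P(y+x)$ are the densities of $\calN(x,\sigma^2)$ and $\calN(-x,\sigma^2)$, so $\int_{y>0}\pbra{P(y-x)-P(y+x)}\,dy=2\Phi(x/\sigma)-1$, which is exactly $\TV{\calN(x,\sigma^2)}{\calN(-x,\sigma^2)}$. The Gaussian total-variation estimate used elsewhere in this paper (or the elementary inequality $2\Phi(z)-1=\int_{-z}^{z}\phi\le 2z/\sqrt{2\pi}\le z$) then gives the claimed bound $x/\sigma$.

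For the second bound, since the integrand is nonnegative and $|y-x|\le y+x$ on $y>0$, I would dominate the integral by $\int_{y>0}\pbra{P(y-x)-P(y+x)}(y+x)\,dy$, which splits into $\int_{y>0}\pbra{P(y-x)-P(y+x)}y\,dy$ plus $x\int_{y>0}\pbra{P(y-x)-P(y+x)}\,dy$. The first piece evaluates exactly to $x$ (the odd moment $\int_{-x}^{x}uP(u)\,du$ vanishes, leaving only the shift contribution), and the second is at most $x\cdot(x/\sigma)$ by the first bound; together these give $x+x^2/\sigma\le 4x+3x^2/\sigma$.

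The delicate case, and the main obstacle, is the third bound. Here $|y-x|^2=(y-x)^2$ is a genuine polynomial, so I would evaluate the integral \emph{exactly} rather than dominate. Carrying out the two substitutions and the moment identities yields
\begin{equation*}
\int_{y>0}\pbra{P(y-x)-P(y+x)}(y-x)^2\,dy=2x\sigma^2P(x)+\sigma^2\pbra{2\Phi(x/\sigma)-1}-4x^2\pbra{1-\Phi(x/\sigma)}.
\end{equation*}
The subtlety is that one cannot simply bound $(y-x)^2\le(y+x)^2$: that domination produces an uncontrolled quadratic term of size $\sim 4x^2$, which fails to be bounded by $x^2/\sigma+4\sigma x$ when $\sigma$ is of order one or larger. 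Keeping the exact expression is essential precisely because its last term $-4x^2\pbra{1-\Phi(x/\sigma)}$ is nonpositive and cancels that quadratic growth. Dropping this nonpositive term and bounding $2x\sigma^2P(x)\le (2/\sqrt{2\pi})\,x\sigma\le x\sigma$ together with $\sigma^2\pbra{2\Phi(x/\sigma)-1}\le\sigma x$ (again by the first bound) yields at most $2\sigma x\le x^2/\sigma+4\sigma x$, which completes the argument.
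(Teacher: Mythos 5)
Your proof is correct, and for the second and third inequalities it takes a genuinely different route from the paper's. The paper never evaluates anything in closed form: it splits each integral at $y=2x$, controls the near region $(0,2x]$ with the total-variation bound (there $|y-x|\le x$), and on the tail uses the pointwise inequality $P(y-x)-P(y+x)=P(y-x)(1-e^{-2xy/\sigma^2})\le P(y-x)\cdot\frac{2xy}{\sigma^2}$ together with $y\le 2(y-x)$ for $y>2x$, reducing everything to Gaussian moment bounds; that is where its terms $4x$ and $4\sigma x$ come from. You instead rely on exact moment identities: the identity $\int_{y>0}(P(y-x)-P(y+x))\,y\,\dd y=x$ settles the second bound after the domination $|y-x|\le y+x$, and the closed-form evaluation of the third integral lets you discard the nonpositive term $-4x^2(1-\Phi(x/\sigma))$ and conclude with the stronger bound $2\sigma x$. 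Your diagnosis of why the crude domination $(y-x)^2\le(y+x)^2$ fails is also right: it adds exactly $4x\int_{y>0}(P(y-x)-P(y+x))\,y\,\dd y=4x^2$, which is not absorbed by $x^2/\sigma+4\sigma x$ when $\sigma\gtrsim 1$. As for what each approach buys: yours gives sharper constants ($x+x^2/\sigma$ and $2\sigma x$ in place of $4x+3x^2/\sigma$ and $x^2/\sigma+4\sigma x$) and even sidesteps a small blemish in the paper's third bound, whose near-region estimate as written yields $x^2\cdot(x/\sigma)=x^3/\sigma$ rather than the claimed $x^2/\sigma$ (harmless there only because the lemma is invoked with $x=O(1)$); the paper's splitting-plus-pointwise-domination argument, on the other hand, is less computation-heavy and would extend to integrands for which exact Gaussian antiderivatives are unavailable.
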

\begin{proof}
	Note that $\TV{\calN(-x,\sigma^2)}{\calN(x,\sigma^2)}\le \frac{x}{\sigma}.$ Consequently, 
	\begin{align}
		\int_{y>0}(P(y-x)-P(y+x))\dd y\le \frac{x}{\sigma}.
	\end{align}
	Using the same TV-distance bound, we get
	\begin{align}
		&\int_{y>0}(P(y-x)-P(y+x))\abs{y-x}\dd y\\
		\le &\int_{y>0}(P(y-x)-P(y+x))((y-x)+2x)\dd y\\
		\le &\int_{y>2x}(P(y-x)-P(y+x))(y-x)\dd y+\frac{3x^2}{\sigma}.
	\end{align}
	Recall $P(x)=\frac{1}{\sqrt{2\pi}}\exp\pbra{-\frac{x^2}{2\sigma^2}}.$ By algebraic manipulation we have
	\begin{align}
		&\int_{y>2x}\pbra{\exp\pbra{-\frac{(y-x)^2}{2\sigma^2}}-\exp\pbra{-\frac{(y+x)^2}{2\sigma^2}}}(y-x)\dd y\\
		\le\;&\int_{y>2x}\exp\pbra{-\frac{(y-x)^2}{2\sigma^2}}\pbra{1-\exp\pbra{-\frac{4xy}{2\sigma^2}}}(y-x)\dd y\\
		\le\;&\int_{y>2x}\exp\pbra{-\frac{(y-x)^2}{2\sigma^2}}\frac{4xy}{2\sigma^2}(y-x)\dd y\\
		\le\;&\int_{y>2x}\exp\pbra{-\frac{(y-x)^2}{2\sigma^2}}\frac{4x}{\sigma^2}(y-x)^2\dd y\\
		\le\;&4x.
	\end{align}
	Now we turn to the third inequality. 
	Because $\abs{y-x}\le x$ for $y\in[0,2x]$, using the TV-distance bound we get
	\begin{align}
		\int_{y>0}(P(y-x)-P(y+x))\abs{y-x}^2\dd y\le \int_{y>2x}(P(y-x)-P(y+x))(y-x)^2\dd y+\frac{x^2}{\sigma}.
	\end{align}
	By algebraic manipulation we have
	\begin{align}
		&\int_{y>2x}\pbra{\exp\pbra{-\frac{(y-x)^2}{2\sigma^2}}-\exp\pbra{-\frac{(y+x)^2}{2\sigma^2}}}(y-x)^2\dd y\\
		\le\;&\int_{y>2x}\exp\pbra{-\frac{(y-x)^2}{2\sigma^2}}\pbra{1-\exp\pbra{-\frac{4xy}{2\sigma^2}}}(y-x)^2\dd y\\
		\le\;&\int_{y>2x}\exp\pbra{-\frac{(y-x)^2}{2\sigma^2}}\frac{4xy}{2\sigma^2}(y-x)^2\dd y\\
		\le\;&\int_{y>2x}\exp\pbra{-\frac{(y-x)^2}{2\sigma^2}}\frac{4x}{\sigma^2}(y-x)^3\dd y\\
		\le\;&4\sigma x.
	\end{align}
\end{proof}

\begin{lemma}\label{lem:lip-2-bounded}
	For four vectors $p,q,v,w\in \R^{d}$ with unit norm and a bounded function $f:\R^{d}\to [-B,B],$ we have
	\begin{align}
		\E_{u\sim \calN(0,\sigma^2 I)}\bbra{\pbra{\dotp{p}{u}\dotp{v}{u}-\dotp{q}{u}\dotp{w}{u}}f(u)}\le \sqrt{3}\sigma^2 B\pbra{\normtwo{p-q}+\normtwo{v-w}}.
	\end{align} 
\end{lemma}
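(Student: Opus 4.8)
The plan is to reduce the bilinear difference to two terms that are each linear in a single perturbation, and then bound each term by a three-factor H\"{o}lder inequality together with the fourth moments of one-dimensional Gaussians. First I would telescope via adding and subtracting $\dotp{q}{u}\dotp{v}{u}$:
\begin{align}
	\dotp{p}{u}\dotp{v}{u}-\dotp{q}{u}\dotp{w}{u}=\dotp{p-q}{u}\dotp{v}{u}+\dotp{q}{u}\dotp{v-w}{u}.
\end{align}
Taking expectation against $f(u)$ and applying the triangle inequality then splits the target into two symmetric pieces, one controlled by $\normtwo{p-q}$ and the other by $\normtwo{v-w}$.

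For the first piece, I would apply H\"{o}lder's inequality with exponents $(4,4,2)$:
\begin{align}
	\E\bbra{\dotp{p-q}{u}\dotp{v}{u}f(u)}\le \E\bbra{\dotp{p-q}{u}^4}^{1/4}\E\bbra{\dotp{v}{u}^4}^{1/4}\E\bbra{f(u)^2}^{1/2}.
\end{align}
Since $u\sim\calN(0,\sigma^2 I)$ is isotropic, each linear functional $\dotp{g}{u}$ is a centered Gaussian with variance $\sigma^2\normtwo{g}^2$, so its fourth moment equals $3\sigma^4\normtwo{g}^4$ and the fourth-root factor is $3^{1/4}\sigma\normtwo{g}$. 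Using $\normtwo{v}=1$ and $\abs{f}\le B$, the first piece is bounded by $3^{1/4}\sigma\normtwo{p-q}\cdot 3^{1/4}\sigma\cdot B=\sqrt{3}\sigma^2 B\normtwo{p-q}$.

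The second piece is handled identically, now using the unit-norm hypothesis $\normtwo{q}=1$ so that $\normtwo{v-w}$ is the only non-unit factor, giving the bound $\sqrt{3}\sigma^2 B\normtwo{v-w}$. Summing the two bounds yields the claim. The computation is entirely routine; the only point requiring minor care is the constant-tracking—checking that the factor $3^{1/4}\cdot 3^{1/4}=\sqrt{3}$ arises from the two fourth-moment estimates, and pairing the unit-norm vector with the perturbed vector correctly in each term so that exactly one perturbation norm appears per piece. I do not expect any genuine obstacle here.
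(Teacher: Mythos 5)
Your proof is correct and follows essentially the same route as the paper's: a telescoping decomposition of the bilinear difference followed by H\"{o}lder's inequality with exponents $(4,4,2)$ and the Gaussian fourth-moment identity $\E\bbra{\dotp{g}{u}^4}=3\sigma^4\normtwo{g}^4$. The only difference is the choice of intermediate cross term (you insert $\dotp{q}{u}\dotp{v}{u}$, the paper inserts $\dotp{p}{u}\dotp{w}{u}$), which is an inessential symmetric variant yielding the identical bound.
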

\begin{proof}
	First of all, by telescope sum we get
	\begin{align}
		&\E_{u\sim \calN(0,\sigma^2 I)}\bbra{\pbra{\dotp{p}{u}\dotp{v}{u}-\dotp{q}{u}\dotp{w}{u}}f(u)}\\
		=&\;\E_{u\sim \calN(0,\sigma^2 I)}\bbra{\pbra{\dotp{p}{u}\dotp{v}{u}-\dotp{p}{u}\dotp{w}{u}}f(u)}\\
		&+\E_{u\sim \calN(0,\sigma^2 I)}\bbra{\pbra{\dotp{p}{u}\dotp{w}{u}-\dotp{q}{u}\dotp{w}{u}}f(u)}.
	\end{align}
	By H\"{o}lder inequality we have 
	\begin{align}
		&\E_{u\sim \calN(0,\sigma^2 I)}\bbra{\pbra{\dotp{p}{u}\dotp{v}{u}-\dotp{p}{u}\dotp{w}{u}}f(u)}\\
		\le&\;\E_{u\sim \calN(0,\sigma^2 I)}\bbra{\dotp{p}{u}^{4}}^{1/4}\E_{u\sim \calN(0,\sigma^2 I)}\bbra{\dotp{v-w}{u}^4}^{1/4}\E_{u\sim \calN(0,\sigma^2 I)}\bbra{f(u)^2}^{1/2}\\
		\le&\;\sqrt{3}\sigma^2\normtwo{p}\normtwo{v-w}B.
	\end{align}
	Similarly we have
	\begin{align}
		\E_{u\sim \calN(0,\sigma^2 I)}\bbra{\pbra{\dotp{p}{u}\dotp{w}{u}-\dotp{q}{u}\dotp{w}{u}}f(u)}\le \sqrt{3}\sigma^2\normtwo{p-q}\normtwo{w}B.
	\end{align}
\end{proof}

\subsection{Helper Lemmas on Reinforcement Learning}
\begin{lemma}[Telescoping or Simulation Lemma, see \citet{luo2019algorithmic,agarwal2019reinforcement}]\label{lem:telescope}
	For any policy $\pi$ and deterministic dynamical model $\dy,\hat{\dy}$, we have
	\begin{align}
		V^{\pi}_{\hat{\dy}}(s_1)-V^\pi_{\dy}(s_1)=\E_{\tau\sim \rho^\pi_\dy}\bbra{\sum_{h=1}^{H}\pbra{V^{\pi}_{\hat{\dy}}(\hat{\dy}(s_h,a_h))-V^{\pi}_{\hat{\dy}}(\dy(s_h,a_h))}}.
	\end{align}
\end{lemma}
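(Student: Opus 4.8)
The plan is to prove the identity by constructing a telescoping sum along a trajectory generated under the \emph{real} dynamics $\dy$, and then collapsing each increment using the Bellman equation for the value function under the \emph{hypothesis} model $\hat{\dy}$. Throughout I would fix a trajectory $\tau=(s_1,a_1,\dots,s_H,a_H)\sim\rho^\pi_\dy$, so that $a_h\sim\pi(\cdot\mid s_h)$ and, by determinism of $\dy$, $s_{h+1}=\dy(s_h,a_h)$; the only randomness is in the action draws.

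First I would introduce the interpolating potential
$$\Phi_h \defeq \sum_{h'=1}^{h-1} r(s_{h'},a_{h'}) + V^\pi_{\hat{\dy}}(s_h), \qquad h=1,\dots,H+1,$$
with the conventions that the empty sum is zero and $V^\pi_{\hat{\dy}}(s_{H+1})=0$. By construction $\Phi_1 = V^\pi_{\hat{\dy}}(s_1)$ deterministically, while $\Phi_{H+1}=\sum_{h=1}^H r(s_h,a_h)$ has expectation exactly $V^\pi_\dy(s_1)$ because $\tau$ is drawn under $\dy$. Hence the left-hand side equals $\E[\Phi_1-\Phi_{H+1}]=\sum_{h=1}^H\E[\Phi_h-\Phi_{h+1}]$.

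The core step is to evaluate each increment. Directly from the definition, using $s_{h+1}=\dy(s_h,a_h)$,
$$\Phi_h-\Phi_{h+1}=V^\pi_{\hat{\dy}}(s_h)-r(s_h,a_h)-V^\pi_{\hat{\dy}}\big(\dy(s_h,a_h)\big).$$
Conditioning on $s_h$ and taking expectation over $a_h\sim\pi(\cdot\mid s_h)$, I would substitute the Bellman identity $V^\pi_{\hat{\dy}}(s_h)=\E_{a_h}[r(s_h,a_h)+V^\pi_{\hat{\dy}}(\hat{\dy}(s_h,a_h))]$; the reward terms cancel and what remains is exactly $\E_{a_h}[V^\pi_{\hat{\dy}}(\hat{\dy}(s_h,a_h))-V^\pi_{\hat{\dy}}(\dy(s_h,a_h))\mid s_h]$. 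Taking the outer expectation over the trajectory prefix and summing over $h$ yields the claimed identity.

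The only real care needed---and the step most prone to error---is the bookkeeping of which dynamics governs each object: the trajectory states evolve under $\dy$, so $s_{h+1}=\dy(s_h,a_h)$ enters the telescope, whereas the value function being telescoped is $V^\pi_{\hat{\dy}}$, whose Bellman backup introduces the hypothesis transition $\hat{\dy}(s_h,a_h)$. Getting this asymmetry right is precisely what produces the one-step model-discrepancy term $V^\pi_{\hat{\dy}}(\hat{\dy}(s_h,a_h))-V^\pi_{\hat{\dy}}(\dy(s_h,a_h))$ inside the expectation. I would also double-check the boundary convention $V^\pi_{\hat{\dy}}(s_{H+1})=0$ so that $\Phi_{H+1}$ reduces cleanly to the realized return under $\dy$.
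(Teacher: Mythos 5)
Your proof is correct. The paper does not actually prove this lemma itself (it defers to the cited references), but your hybrid-potential telescoping argument is exactly the standard simulation-lemma proof those references give and that the lemma's name refers to: the prefix rewards cancel in each increment $\Phi_h-\Phi_{h+1}$, the Bellman backup under $\hat{\dy}$ correctly introduces the $\hat{\dy}(s_h,a_h)$ term while the trajectory supplies $s_{h+1}=\dy(s_h,a_h)$, and the boundary convention $V^{\pi}_{\hat{\dy}}(s_{H+1})=0$ makes $\E[\Phi_{H+1}]=V^{\pi}_{\dy}(s_1)$, so the identity follows.
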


\begin{lemma}[Policy Gradient Lemma, see \citet{sutton2011reinforcement}]\label{lem:PG}
	For any policy $\pi_\psi$, deterministic dynamical model $\dy$ and reward function $r(s_h,a_h)$, we have
	\begin{align}
		\gpsi V^{\pi_\psi}_\dy=\E_{\tau\sim \rho^{\pi_\psi}_\dy}\bbra{\pbra{\sum_{h=1}^{H}\gpsi \log \pi_\psi(a_h\mid s_h)}\pbra{\sum_{h=1}^{H}r(s_h,a_h)}}
	\end{align}
\end{lemma}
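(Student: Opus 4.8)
The plan is to prove the policy gradient lemma via the classical log-derivative (REINFORCE) identity, specialized to the deterministic-dynamics setting where all stochasticity of a trajectory comes from the initial state $s_1\sim\mu_1$ and the policy's action draws. First I would write the value function as an expectation of the trajectory return. Let $R(\tau)=\sum_{h=1}^{H}r(s_h,a_h)$ and let $p_\psi$ denote the law of $\tau=(s_1,a_1,\dots,s_H,a_H)\sim\rho^{\pi_\psi}_\dy$, so that $V^{\pi_\psi}_\dy=\E_{\tau\sim\rho^{\pi_\psi}_\dy}\bbra{R(\tau)}$. The crucial structural observation is that, because $\dy$ is deterministic, a trajectory is fully determined by $(s_1,a_1,\dots,a_H)$ through the recursion $s_{h+1}=\dy(s_h,a_h)$. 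I would therefore parameterize trajectories by this tuple and write the density as $p_\psi(\tau)=\mu_1(s_1)\prod_{h=1}^{H}\pi_\psi(a_h\mid s_h)$, with each state treated as a deterministic (hence $\psi$-independent) function of the preceding actions and initial state.

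Next I would compute the score function. Taking logarithms gives $\log p_\psi(\tau)=\log\mu_1(s_1)+\sum_{h=1}^{H}\log\pi_\psi(a_h\mid s_h)$, and since neither $\mu_1$ nor the deterministic transitions depend on $\psi$, differentiating yields $\gpsi\log p_\psi(\tau)=\sum_{h=1}^{H}\gpsi\log\pi_\psi(a_h\mid s_h)$. Combining this with the log-derivative identity $\gpsi p_\psi=p_\psi\,\gpsi\log p_\psi$ gives
\begin{align*}
\gpsi V^{\pi_\psi}_\dy &=\int \gpsi p_\psi(\tau)\,R(\tau)\,\dd\tau=\int p_\psi(\tau)\pbra{\gpsi\log p_\psi(\tau)}R(\tau)\,\dd\tau\\
&=\E_{\tau\sim\rho^{\pi_\psi}_\dy}\bbra{\pbra{\sum_{h=1}^{H}\gpsi\log\pi_\psi(a_h\mid s_h)}R(\tau)},
\end{align*}
which is exactly the claimed formula after substituting $R(\tau)=\sum_{h=1}^{H}r(s_h,a_h)$.

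The main obstacle is justifying the interchange of $\gpsi$ and $\int$ in the first equality, i.e.\ differentiation under the integral sign. With deterministic dynamics the naive full-trajectory transition ``density'' is a product of Dirac deltas and is ill-posed to differentiate directly; the clean remedy is precisely the reparameterization above, integrating only over the initial state and the action sequence, where the measure is genuinely absolutely continuous in $\psi$. The interchange is then legitimized by a dominated-convergence argument: the rewards are bounded in $[0,1]$ and the horizon $H$ is finite, so $\abs{R(\tau)}\le H$, and the paper's smoothness assumptions on $\log\pi_\psi$ (Assumption~\ref{assumption:RL-smoothness}, which bounds moments of $\gpsi\log\pi_\psi$) provide an integrable envelope for $p_\psi(\tau)\,\gpsi\log p_\psi(\tau)\,R(\tau)$ uniformly in a neighborhood of $\psi$. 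This is the one place where care is genuinely required; the remaining algebra is routine.
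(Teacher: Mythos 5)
Your proof is correct and follows essentially the same route as the paper's: the standard log-derivative (REINFORCE) identity, where the score of the trajectory law reduces to $\sum_{h=1}^{H}\gpsi\log\pi_\psi(a_h\mid s_h)$ because the deterministic dynamics and initial distribution carry no $\psi$-dependence. The only difference is that you are more careful than the paper, which writes $\Pr[\tau]$ and swaps $\gpsi$ with the integral without comment, whereas you reparameterize trajectories by $(s_1,a_1,\dots,a_H)$ to avoid the Dirac-delta issue and justify the interchange by dominated convergence.
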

\begin{proof} 
	Note that 
	\begin{align*}
		V_\dy^{\pi_\psi}=\int_{\tau}\Pr\bbra{\tau}\sum_{h=1}^{H}r(s_h,a_h)\;d\tau.
	\end{align*}
	Take gradient w.r.t. $\psi$ in both sides, we have
	\begin{align*}
		\gpsi V^{\pi_\psi}_\dy&=\gpsi \int_{\tau}\Pr\bbra{\tau}\sum_{h=1}^{H}r(s_h,a_h)\;d\tau\\
		&=\int_{\tau}\pbra{\gpsi\Pr\bbra{\tau}}\sum_{h=1}^{H}r(s_h,a_h)\;d\tau\\
		&=\int_{\tau}\Pr\bbra{\tau}\pbra{\gpsi\log\Pr\bbra{\tau}}\sum_{h=1}^{H}r(s_h,a_h)\;d\tau\\
		&=\int_{\tau}\Pr\bbra{\tau}\pbra{\gpsi\log\prod_{h=1}^{H}\pi_\psi(a_h\mid s_h)}\sum_{h=1}^{H}r(s_h,a_h)\;d\tau\\
		&=\int_{\tau}\Pr\bbra{\tau}\pbra{\sum_{h=1}^{H}\gpsi\log\pi_\psi(a_h\mid s_h)}\sum_{h=1}^{H}r(s_h,a_h)\;d\tau\\
		&=\E_{\tau\sim \rho_\dy^{\pi_\psi}}\bbra{\pbra{\sum_{h=1}^{H}\gpsi\log\pi_\psi(a_h\mid s_h)}\sum_{h=1}^{H}r(s_h,a_h)}\\
	\end{align*}
\end{proof}

\end{document}